\title{Traversing Pareto Optimal Policies: Provably Efficient Multi-Objective Reinforcement Learning}
\newcounter{subroutine}
\newenvironment{subroutine}[1][htb]{%
	\let\c@algorithm\c@subroutine
	\renewcommand{\ALG@name}{Subroutine}
	\begin{algorithm}[#1]%
	}{\end{algorithm}
}
\begin{document}

\author{Shuang Qiu\thanks{Hong Kong University of Science and Technology. 
Email: \texttt{masqiu@ust.hk}} 
       \quad
        Dake Zhang\thanks{University of Chicago.
Email: \texttt{dkzhang@uchicago.edu}}    	
        \quad
		Rui Yang\thanks{Hong Kong University of Science and Technology. 
    Email: \texttt{yangrui.thu2015@gmail.com}}
	\quad    
       	Boxiang Lyu\thanks{University of Chicago.
    Email: \texttt{blyu@chicagobooth.edu}}
    \quad 
    Tong Zhang\thanks{
   University of Illinois Urbana-Champaign. 
	Email: \texttt{tongzhang@tongzhang-ml.org}}
}

\maketitle
		
	\addtocontents{toc}{\protect\setcounter{tocdepth}{-1}}
	
	\begin{abstract}
	This paper investigates multi-objective reinforcement learning (MORL), which focuses on learning Pareto optimal policies in the presence of multiple reward functions. Despite MORL's significant empirical success, there is still a lack of satisfactory understanding of various MORL optimization targets and efficient learning algorithms. Our work offers a systematic analysis of several optimization targets to assess their abilities to find all Pareto optimal policies and controllability over learned policies by the preferences for different objectives. We then identify Tchebycheff scalarization as a favorable scalarization method for MORL. Considering the non-smoothness of Tchebycheff scalarization, we reformulate its minimization problem into a new min-max-max optimization problem. Then, for the stochastic policy class, we propose efficient algorithms using this reformulation to learn Pareto optimal policies. We first propose an online UCB-based algorithm to achieve an $\varepsilon$ learning error with an $\tilde{\mathcal{O}}(\varepsilon^{-2})$ sample complexity for a single given preference. To further reduce the cost of environment exploration under different preferences, we propose a preference-free framework that first explores the environment without pre-defined preferences and then generates solutions for any number of preferences. We prove that it only requires an $\tilde{\mathcal{O}}(\varepsilon^{-2})$ exploration complexity in the exploration phase and demands no additional exploration afterward. Lastly, we analyze the smooth Tchebycheff scalarization, an extension of Tchebycheff scalarization, which is proved to be more advantageous in distinguishing the Pareto optimal policies from other weakly Pareto optimal policies based on entry values of preference vectors. Furthermore, we extend our algorithms and theoretical analysis to accommodate this optimization target.
	\end{abstract}

\section{Introduction}

Multi-objective reinforcement learning (MORL) \citep{puterman1990markov,ehrgott2005multiobjective,roijers2013survey} focuses on learning a single policy that simultaneously performs well for a collection of diverse reward functions, as opposed to one that performs well under only one reward function. This generalization of reinforcement learning (RL) has been deployed in a diverse range of tasks, including personalized recommendation systems~\citep{stamenkovic2022choosing}, grid scheduling ~\citep{perez2010multi}, 
cancer screening~\citep{yala2022optimizing}, robot control \citep{xu2020prediction,hwang2023promptable}, text generation \citep{chen2020just}, and training personalized large models for diverse human preferences~\citep{zhou2023beyond,chen2023preference,yang2024rewards,zhong2024panacea,wang2024arithmetic,guo2024controllable}. Since the multiple reward functions can be highly diverse or even in conflict, a single optimal policy for all objectives, i.e., value functions defined by different reward functions, may not exist. 
Therefore, in alignment with the general multi-objective learning problems \citep{choo1983proper,steuer1986multiple,ehrgott2005multicriteria}, MORL aims to learn Pareto optimal policies, under which no other policies can improve at least one objective's value without making other objectives worse off.

At a high level, MORL is related to multi-objective optimization~\citep{choo1983proper,steuer1986multiple,ehrgott2005multicriteria,caramia2020multi, gunantara2018review, deb2016multi, giagkiozis2015methods, riquelme2015performance,liu2021profiling, liu2021conflict, chen2023preference, mahapatra2023multi, sener2018multi, fernando2022mitigating,hu2024revisiting,chen2024three,mahapatra2020multi,xiao2024direction,lin2024smooth,jiang2023multiobjective}, which focuses on learning Pareto optimal solutions based on various optimization techniques such as the first-order methods. But these approaches are difficult to apply to MORL due to the special problem structures of RL. In addition, there is a line of research concentrating on the multi-objective bandit problems, including multi-arm bandit,  contextual bandit, or the generalized linear bandit \citep{drugan2013designing,turgay2018multi, lu2019multi}. These works apply the Pareto suboptimality gap as their optimization target. However, we note that its implication on Pareto optimality remains vague in theory. Moreover, the learned solution is uncontrollable and thus can be an arbitrary Pareto optimal one.  
Recently, there has been a line of research studying MORL by proposing provable algorithms with theoretical guarantees. \citep{yu2021provably} studies a competitive MORL setting, which is different from the topic of this paper. \citep{zhou2022anchor,wu2021offline} considers general scalarization functions integrating the multiple value functions in online and offline settings respectively. \citep{wu2021accommodating} studies linear scalarization of objectives but considers time-varying learner preferences. Nevertheless, none of these works investigates MORL from the perspective of Pareto optimal policy learning, and thus, their methods lack guarantees of achieving Pareto optimality. Therefore, it remains elusive how to design provable MORL algorithms that can approximate all Pareto optimal policies. In practice, the empirical MORL works focus on learning all Pareto optimal policies with constructing a mapping from a preference for different objectives to the learned solutions such that the learning process is controllable. The recent work \citet{lu2022multi} theoretically shows that optimizing the linearly scalarized objective via different weights can find all Pareto optimal policies for MORL. We note that this result is only restricted to the stochastic policy class and will not generally hold for the deterministic policy class. Moreover, as discussed later in our work, even within a stochastic policy, the solutions to the maximization problem of the linearly scalarized objective are less controllable in some situations.
Motivated by both theoretical and practical considerations, our work aims to answer the following crucial open question:

\vspace{-0.2cm}
\begin{center}
    \emph{Can we design provably efficient multi-objective reinforcement learning algorithms \\that can traverse all Pareto optimal policies 
    	in a controllable way?}
\end{center}
\vspace{-0.2cm}

The above question poses several critical challenges: First, the demand of learning toward Pareto optimality necessitates the investigation of a suitable optimization target to guarantee the full coverage of Pareto optimal policies; Second, it remains unclear what the practically and theoretically sound approach will be to integrate the learner-specified preference on different reward functions into the model to guide the learning process; Finally, it is even challenging to design a provably efficient algorithm that can learn Pareto optimal policies associated with all learner preferences but via exploring the environment only once. As an initial step toward answering the above question via tackling those challenges, our work conducts a systematic analysis of several primary optimization targets and identifies a favorable scalarization method for MORL by which we can traverse all Pareto optimal policies controlled by learner preferences. We reformulate this optimization target and propose efficient algorithms that can learn all Pareto policies with environment exploration even only once.

\vspace{5pt}
\noindent\textbf{Contribution.} Our major contributions are summarized below:
\begin{itemize}[itemsep=1pt, topsep=1pt,parsep=1pt,leftmargin=*]
	\item Our work first systematically analyzes three major multi-objective optimization targets: linear scalarization, Pareto suboptimality gap, and Tchebycheff scalarization. We rigorously show that: \textbf{(1)} Linear scalarization cannot always find all Pareto optimal policies for the deterministic policy class. In addition, for the deterministic policy class, despite the coverage of all Pareto optimal policies, the maximizers of linear scalarization are less controllable w.r.t. learners' preferences. \textbf{(2)} The zeros of the Pareto suboptimality gap correspond to (weak) Pareto optimality, but the Pareto suboptimality gap metric lacks control of its solutions by learners' preferences. \textbf{(3)} The minimizers of Tchebycheff scalarization can be better controlled by learners' preferences, and those minimizers under different preferences can cover all (weakly) Pareto optimal policies. These findings motivate us to apply Tchebycheff scalarization as a suitable metric for MORL. 
	\item Although Tchebycheff scalarization has such favorable properties, it is formally non-smooth and thus hinders its direct optimization. To address this issue, we reformulate the minimization of Tchebycheff scalarization into a min-max-max problem. For the stochastic policy class, we propose an upper confidence bound (UCB)-based algorithm featuring an alternating update of policies and intermediate weights to solve this min-max-max problem in the online setting. Under a given learner preference, we prove that the proposed algorithm can efficiently find a (weakly) Pareto optimal policy with an $\tilde{\cO}(\varepsilon^{-2})$ sample complexity for achieving $\varepsilon$-minimization error of Tchebycheff scalarization.
	\item Nevertheless, the online algorithm needs to explore the environment whenever a new preference is introduced, which can be significantly costly when there are numerous preferences to consider, as interacting with the environment is typically expensive in real-world situations. To address this issue, we propose a preference-free framework featuring decoupled exploration and planning phases to learn Pareto optimal stochastic policies. The agent first thoroughly explores the environment to gather trajectories guided by both reward and transition estimation uncertainty without relying on any pre-defined preferences. Then, using the pre-collected data, solutions can be generated with any number of preferences, requiring no further exploration. We show that with only $\tilde \cO(\varepsilon^{-2})$ rounds of environment exploration, the $\varepsilon$-learning error can be achieved for any given preferences.
	\item Finally, we analyze an extension of the Tchebycheff scalarization, named smooth Tchebycheff scalarization. 
	We prove that smooth Tchebycheff scalarization exhibits a more advantageous property compared to the original Tchebycheff scalarization for MORL, i.e., the Pareto optimal policies can be differentiated from other weakly Pareto optimal policies based on the entry values of the preference vectors.
	We further reformulate smooth Tchebycheff scalarization into a new form, which better fits the UCB-based algorithmic design. Based on this reformulation, we propose an efficient online algorithm and a preference-free framework for MORL inspired by our algorithms for Tchebycheff scalarization. We prove that the proposed algorithms exhibit faster learning rates in the non-dominating terms compared to those for Tchebycheff scalarization.
\end{itemize}
Overall, our work contributes to an improved understanding of scalarization methods for MORL and offers efficient learning algorithms with theoretical guarantees. Furthermore, some of our theoretical analyses are sufficiently general to be extended to other multi-objective learning problems beyond MORL, such as multi-objective stochastic optimization.



\vspace{5pt}
\noindent\textbf{Related Work.} 
Our work is related to a long line of works on multi-objective optimization, e.g., \citep{choo1983proper,steuer1986multiple,geoffrion1968proper,ehrgott2005multicriteria,bowman1976relationship,miettinen1999nonlinear,caramia2020multi,gunantara2018review, deb2016multi, giagkiozis2015methods, riquelme2015performance,das1997closer,liu2021profiling, liu2021conflict, chen2023preference,mahapatra2023multi, sener2018multi,klamroth2007constrained,kasimbeyli2019comparison,fernando2022mitigating,hu2024revisiting,chen2024three,mahapatra2020multi,xiao2024direction,lin2024smooth}, which have explored various scalarization methods including linear scalarization and Tchebycheff scalarization. 
However, these works on multi-objective optimization cannot be applied to the MORL setting that our work considers. Among these works, the recent work \citep{lin2024smooth} studies a smoothed version of Tchebycheff scalarization, which is different from the original Tchebycheff scalarization formulation, and proposes a gradient-based optimization algorithm. In our work, we adapt the smooth Tchebycheff scalarization to MORL and further propose a reformulation that can better fit the algorithmic design and theoretical analysis of RL. Moreover, A strand of literature extends multi-objective optimization to the online learning setting, including online convex optimization and bandit problems~\citep{drugan2013designing, yahyaa2014scalarized, turgay2018multi, lu2019multi, tekin2018multi, busa2017multi, yahyaa2014annealing, jiang2023multiobjective}. Specifically, \citep{drugan2013designing,turgay2018multi, lu2019multi} consider learning Pareto optimal arms in the multi-objective multi-armed bandit, contextual bandit, and the generalized linear bandit settings, respectively, utilizing the Pareto suboptimality gap as an optimization target.~\citep{jiang2023multiobjective} further generalizes these works to online convex optimization. In spite of the wide application of the Pareto suboptimality gap, its implication on (weak) Pareto optimality remains vague in theory. Our work further provides rigorous proof to justify this implication.

In addition, there have been a rich body of works studying MORL ~\citep{roijers2013survey,ehrgott2005multiobjective,puterman1990markov,agarwal2022multi,van2013hypervolume,natarajan2005dynamic,wang2013hypervolume,barrett2008learning,pirotta2015multi,van2013scalarized,xu2020prediction,hayes2022practical,van2013scalarized, van2014multi,chen2019meta,yang2019generalized, wiering2014model,zhu2023scaling,wu2021offline, yu2021provably, wu2021accommodating,zhou2022anchor,li2020deep,lu2022multi}, which have studied different scalarization methods including linear scalarization and Tchebycheff scalarization. From a theoretical perspective, \citep{yu2021provably} studies multi-objective reinforcement learning in a competitive setup, which is beyond the scope of this paper. In addition, \citep{wu2021offline,zhou2022anchor} considers a general optimization target that scalarizes the multiple value functions together in either online or offline settings, which, nevertheless, are not capable of not covering the study of Tchebycheff scalarization as in our work. Moreover, \citep{wu2021accommodating} studies linear scalarization of objectives but considers a time-varying setting with adversarial learner preferences. However, these theoretical works do not investigate MORL from the perspective of Pareto optimal policy learning. Thus, there is no guarantee that their solutions are approximately Pareto optimal. In addition, the work \citet{lu2022multi} shows that for MORL with a stochastic policy class, linear scalarization is able to find all Pareto optimal policies. However, when the policy class is a deterministic policy class, our work shows by a concrete example (Appendix \ref{sec:proof-linear-comb}) that linear scalarization is not sufficient. In addition, as discussed in Section \ref{sec:metric}, even for a stochastic policy class, the solutions to the maximization of linear scalarization are less controllable w.r.t. learners' preferences on objectives in some situations. Our work steps forward to analyze the application of several common scalarization methods in MORL and identify (smooth) Tchebycheff scalarization as a favorable method that can find all Pareto optimal policies in a more controllable manner for both stochastic and deterministic policy classes.

Our preference-free framework is closely related to the reward-free RL approach \citep{wang2020reward,qiu2021reward,jin2020reward,zhang2023optimal,zhang2021near,qiao2022near,chen2022statistical,miryoosefi2022simple,cheng2023improved,modi2024model}. The reward-free RL studies a framework where the agent conducts the exploration first without any reward function, and then the full reward function is given in the planning phase for policy learning. The MORL work \citep{wu2021accommodating} also proposes a preference-free algorithm. However, the algorithm is very similar to the reward-free method as the full reward function is also directly given in the planning phase. In contrast, reward functions in our preference-free method are estimated through data collected in the exploration phase, which thus generalizes the reward-free framework. Please see Remark \ref{re:reward-free} for a detailed discussion.

\vspace{5pt}
\noindent\textbf{Notation.} Define $[n]:=\{1,2,\ldots,n\}$. Let $(x_i)_{i=1}^n:=(x_1,x_2, \cdots,x_n)_{i=1}^n$ be a vector with its entries indexed from $1$ to $n$ and $\{x_i\}_{i=1}^n:=\{x_1,x_2, \cdots,x_n\}_{i=1}^n$ be a set with its elements. We define $\bx \odot\by:=(x_1y_1, x_2 y_2, \cdots,x_n y_n)$ for any two vectors $\bx=(x_1,\cdots,x_n),\by=(y_1,\cdots,y_n)\in\RR^n$. Define $\{\cdot\}_{[x,y]}:=\max\{\min\{\cdot,y\},x\}$ if $x\leq y$, i.e., casting a value between $x$ and $y$. We let $\Delta_n:=\{\bx=(x_1,\cdots,x_n)\in\RR^n\given \sum_{i=1}^n x_i = 1, 0\leq x_i\leq1\}$ be a probability simplex in $\RR^n$. In addition, we let $\Delta_m^o:=\{\bx=(x_1,\cdots,x_n)\in\RR^n\given \sum_{i=1}^n x_i = 1,  x_i>0\}$, which is the relative interior of the probability simplex $\Delta_n$. Across this paper, we let $\Pi^*_{\mathrm{P}}$ be the set of all Pareto optimal policies and $\Pi^*_{\mathrm{W}}$ be the set of all weakly Pareto optimal policies.

\section{Problem Formulation} \label{sec:formulate}

\textbf{Multi-Objective Markov Decision Process.} We consider an episodic multi-objective Markov decision process (MOMDP) characterized by a tuple $(\cS,\cA, H, m, \PP, \br)$, where $\cS$ is a finite state space, $\cA$ is a finite action space, $H$ is the length of an episode, $m$ is the number of objectives. We define the transition kernel by $\PP:=\{\PP_h\}_{h=1}^H$ with $\PP_h:\cS\times\cA\times\cS\mapsto[0, 1]$ such that $\PP_h(s'|s,a)$ denotes the probability of the agent transitioning to state $s'\in\cS$ from state $s\in\cS$ by taking action $a\in\cA$ at step $h\in[H]$. The reward function $\br$ is comprised of $m$ components, i.e., $\br=(r_1,r_2,\cdots,r_m)$, which are $m$ reward functions associated with $m$ learning objectives. We further define $r_i := \{r_{i,h}\}_{h=1}^H$ where $r_{i,h}:\cS\times\cA\mapsto [0,1]$ such that $r_{i,h}(s,a)$ denotes the reward for the $i$-th objective when the agent takes action $a\in\cA$ at state $s\in\cS$ at step $h\in[H]$. For simplicity, we assume that the interaction with the environment always starts from a fixed initial state $s_1$. When $m=1$, the MOMDP reduces to the single-objective MDP. This work assumes that the true reward function $\br$ and transition $\PP$ are \emph{unknown} and should be learned from observations. The observed reward $r_{i,h}^t\in[0,1]$ at time $t$ is assumed to stochastic and has an expectation of $r_{i,h}$, i.e., $\EE[r_{i,h}^t] = r_{i,h}$.

\vspace{5pt}
\noindent\textbf{Value Function.} We define a policy as $\pi:=\{\pi_h\}_{h=1}^H$ with $\pi_h:\cS\times\cA\mapsto [0,1]$ so that $\pi_h(a|s)$ represents the probability of taking an action $a\in\cA$ given state $s\in\cS$ at step $h\in[H]$. The policy $\pi$ lies in a policy space $\Pi$, which can be either a stochastic policy space or a deterministic policy space.
If $\pi$ is in a deterministic policy space, then the agent at each state takes a certain action with probability $1$ and others with probability $0$.
Next, we define the value function $V_{i,h}^{\pi}:\cS\mapsto[0,H-h+1]$ for the $i$-th objective as $V_{i,h}^{\pi}(s):=\EE [ \sum_{h'=h}^H r_{i,h'}(s_{h'},a_{h'}) \given s_h=s,\pi,\PP ]$. The associated Q-function $Q_{i,h}^{\pi}:\cS\times\cA\mapsto[0,H-h+1]$ is defined as $Q_{i,h}^{\pi}(s,a):=\EE [\sum_{h'=h}^H r_{i,h'}(s_{h'},a_{h'}) \given s_h=s,a_h=a,\pi,\PP ]$. Letting $\bV_h^{\pi}(s) = (V_{1,h}^{\pi}(s), V_{2,h}^{\pi}(s), \cdots, V_{m,h}^{\pi}(s))$ and	$\bQ_h^{\pi}(s,a) = (Q_{1,h}^{\pi}(s,a), Q_{2,h}^{\pi}(s,a), \cdots, Q_{m,h}^{\pi}(s,a))$ be the value function and Q-function vectors for MOMDPs, we have the following Bellman equation:
\begin{align}
	\begin{aligned}\label{eq:bellman}
	\bV_h^{\pi}(s) = \sum_{a\in\cA}\bQ_h^{\pi}(s,a)\pi_h(a|s), \quad   \bQ_h^{\pi}(s,a) = \br_h(s,a) + \sum_{s'\in \cS}\PP_h(s'|s,a) \bV_{h+1}^{\pi}(s') , 		
	\end{aligned} 
\end{align}
where $\br_h(s,a) := (r_{1,h}(s,a), r_{2,h}(s,a), \cdots,r_{m,h}(s,a))$. Hereafter, for abbreviation, we denote $\PP_h V(s,a) := \sum_{s'\in\cS}\PP_h(s'|s,a)V(s')$ for any $V:\cS\mapsto[0,H]$ throughout this paper. In addition, under the MORL setting, we refer to the ($i$-th) objective as the ($i$-th) value function $V_{i,1}^\pi(s_1)$ associated with the reward function $r_i$. 

\section{Learning Goal of Multi-Objective RL} \label{sec:pareto}


In this section, we revisit some fundamental definitions and properties in multi-objective optimization and MORL.
\vspace{5pt}
\noindent\textbf{Pareto Optimality.}
In a single-objective RL problem, we aim to find an optimal policy to maximize the value function $V_1^\pi(s_1)$ defined under a single reward function, i.e.,  $\max_{\pi\in\Pi} V_1^\pi(s_1)$. Following this intuition, a straightforward extension from single-objective RL to MORL could be finding a single optimal policy $\pi$ which is expected to simultaneously maximize all objectives, i.e., 
\begin{align*}
	\max_{\pi\in\Pi} \Big\{\bV_1^\pi(s_1):= \big(V_{1,1}^\pi(s_1), V_{2,1}^\pi(s_1), \cdots, V_{m,1}^\pi(s_1) \big)\Big\}.
\end{align*} 
However, such a single optimal policy in general does not exist since those reward functions are typically diverse and even conflicting. Hence the optimal policy for each objective could be largely different from each other and eventually no single policy will maximize all objectives concurrently. 
In general multi-objective learning, finding the Pareto optimal solutions rather than a (possibly nonexistent) single global optimum solution becomes the learning goal. Thus, we turn to finding the \emph{Pareto optimal policies}, which is regarded as the learning goal for MORL.

Formally, the Pareto optimal policy for MORL based on an MOMDP is defined as follows:
\begin{definition}[Pareto Optimal Policy] \label{def:pareto} For any two policies $\pi\in\Pi$ and $\pi'\in\Pi$ , we say $\pi'$ \emph{dominates} $\pi$ if and only if $V_{i,1}^{\pi'}(s_1) \geq V_{i,1}^{\pi}(s_1)$ for all $i\in[m]$ and there exists at least one $j\in [m]$ such that $V_{j,1}^{\pi'}(s_1) > V_{j,1}^\pi(s_1)$. A policy $\pi$ is a Pareto optimal policy if and only if no other policies dominate $\pi$.
\end{definition}
Intuitively, the domination of $\pi'$ over $\pi$ indicates that $\pi'$ would be a better solution than $\pi$ as it can strictly improve the value of at least one objective without making others worse off. \emph{By this definition, a policy $\pi$ is Pareto optimal when no other policies can improve the value of an objective under $\pi$ without hurting other objectives' values}. The set of all Pareto optimal policies is called the \emph{Pareto set} or \emph{Pareto front}. Across this paper, we denote the Pareto set as $\Pi^*_{\mathrm{P}}$. 

In particular, the Pareto set $\Pi^*_{\mathrm{P}}$ has the following fundamental properties. We revisit these properties as follows and further present their proof under the MORL setting for completeness in the appendix.
\begin{property}\label{pro:property} The Pareto set $\Pi_{\mathrm{P}}^*$ satisfies the following properties:
	\begin{itemize} [topsep=-2pt,itemsep=2pt,parsep=0pt]
		\item[\textbf{(a)}] For any policy $\pi \notin \Pi_{\mathrm{P}}^*$, there always exists a Pareto optimal policy $\pi^*\in\Pi_{\mathrm{P}}^*$ dominating $\pi$.
		\item[\textbf{(b)}] A policy $\pi\in\Pi_{\mathrm{P}}^*$ if and only if $\pi$ is not dominated by any Pareto optimal policy $\pi^*\in\Pi_{\mathrm{P}}^*$.
	\end{itemize}
\end{property}
The above properties show that Pareto optimal policies dominate non-Pareto-optimal ones but cannot dominate each other themselves, characterizing the relation between a Pareto optimal policy and any other policies. Property \ref{pro:property} indicates that Pareto optimal policies are ``mutually independent'' in a sense. These properties pave the way to proving the Pareto optimality via a policy's relation to only other Pareto optimal policies. Based on Property \ref{pro:property}, we are able to study crucial properties of an MORL optimization target named the Pareto suboptimality gap in the next section. On the other hand, when reducing to the single-objective setting where $\Pi_{\mathrm{P}}^*$ becomes an optimal policy set, this proposition indicates all optimal policies lead to the same optimal value that is larger than values under other suboptimal policies, matching the fact in single-objective learning. 


In addition to Pareto optimality, we introduce a relatively weaker notion named weak Pareto optimality as follows:
\begin{definition}[Weakly Pareto Optimal Policy]\label{def:weakpareto} A policy $\pi\in\Pi$ is a weakly Pareto optimal policy if and only if there are no other policies $\pi'\in\Pi$ satisfying $V_{i,1}^\pi(s_1) < V_{i,1}^{\pi'}(s_1)$ for all $i\in[m]$.
\end{definition} 
Comparing Definition \ref{def:pareto} with Definition \ref{def:weakpareto}, the weak Pareto optimality is achieved when no other policies can strictly improve \emph{all} objective functions instead of \emph{at least one} objective function as in the definition of Pareto optimality.  According to this definition, all optimal policies $\nu_i^*\in\argmax_\nu V_{i,1}^\nu(s_1), \forall i\in[m],$ for each objective are weakly Pareto optimal. We consider a multi-objective multi-arm bandit example, a simple and special MOMDP 
whose state space size $|\cS|=1$, episode length $H=1$, with a deterministic policy, to illustrate definitions and propositions in this section.
\begin{example}\label{ex:pareto} We consider a multi-objective multi-arm bandit problem with $m=2$ reward functions $r_1$ and $r_2$ and $|\cA|=5$ actions. We define
	\vspace{-0.2cm}
	\begin{align*}
		&r_1(a_1)=0.1, \quad r_1(a_2)=0.8,\quad r_1(a_3)=0.3,\quad r_1(a_4)=0.8,\quad r_1(a_5)=0.1,\\[-2pt]
		&r_2(a_1)=0.1, \quad r_2(a_2)=0.2,\quad r_2(a_3)=0.5,\quad r_2(a_4)=0.7,\quad r_2(a_5)=0.2.\\[-22pt]
	\end{align*}
	By definitions, the Pareto optimal arm is $a_4$, while the weakly Pareto optimal arms are both $a_4$ and $a_2$ since $r_1(a_2)=r_1(a_4)$ in spite of $r_2(a_2)<r_2(a_4)$. The optimal arms for $r_1$ are $a_2$ and $a_4$, which are weakly Pareto optimal. The optimal arm for $r_2$ is $a_4$, which is Pareto optimal and hence also weakly Pareto optimal.
\end{example}
The example above demonstrates that when values are equal under some policies for one objective, it can result in the presence of weakly Pareto optimal policies that are not Pareto optimal. In this paper, the set of all weakly Pareto optimal policies is denoted as $\Pi^*_{\mathrm{W}}$. By the definitions of (weakly) Pareto optimal policies, the Pareto set is the subset of the weak Pareto set, i.e.,
\begin{align*}
	\Pi_{\mathrm{P}}^* \subseteq \Pi_{\mathrm{W}}^*,
\end{align*}
which can also be verified by Example \ref{ex:pareto}.
Furthermore, according to their definitions, we can show that under certain conditions, all weakly Pareto optimal policies are Pareto optimal. 
\begin{proposition} \label{cond:subopt-iff} If for each $\pi\notin\Pi_{\mathrm{P}}^*$, there always exists a Pareto optimal policy $\pi^*\in\Pi_{\mathrm{P}}^*$ such that $V_{i,1}^{\pi}(s_1) <  V_{i,1}^{\pi^*}(s_1)$ for all $i\in [m]$, then we have $\Pi_{\mathrm{W}}^* = \Pi_{\mathrm{P}}^*$.
\end{proposition}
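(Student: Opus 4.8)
The plan is to prove the two inclusions $\Pi_{\mathrm{W}}^* \subseteq \Pi_{\mathrm{P}}^*$ and $\Pi_{\mathrm{P}}^* \subseteq \Pi_{\mathrm{W}}^*$ separately. The second inclusion is already established in the excerpt (it holds unconditionally and was noted right before Example~\ref{ex:pareto}), so the real content is the first inclusion, which is where the hypothesis of the proposition is used. I would argue by contraposition: take any policy $\pi \notin \Pi_{\mathrm{P}}^*$ and show $\pi \notin \Pi_{\mathrm{W}}^*$, i.e., exhibit a policy $\pi'$ with $V_{i,1}^{\pi}(s_1) < V_{i,1}^{\pi'}(s_1)$ for all $i \in [m]$.

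This is almost immediate from the stated hypothesis: if $\pi \notin \Pi_{\mathrm{P}}^*$, then by assumption there exists $\pi^* \in \Pi_{\mathrm{P}}^*$ with $V_{i,1}^{\pi}(s_1) < V_{i,1}^{\pi^*}(s_1)$ for all $i \in [m]$. Taking $\pi' = \pi^*$ in Definition~\ref{def:weakpareto} shows exactly that $\pi$ is not weakly Pareto optimal. Hence $\Pi_{\mathrm{W}}^* \subseteq \Pi_{\mathrm{P}}^*$. Combined with $\Pi_{\mathrm{P}}^* \subseteq \Pi_{\mathrm{W}}^*$, we conclude $\Pi_{\mathrm{W}}^* = \Pi_{\mathrm{P}}^*$.

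Since the argument is so short, there is not really a ``main obstacle'' in the usual sense; the only subtlety is making sure the logic of the contrapositive is stated cleanly and that one invokes the hypothesis (strict improvement on \emph{all} coordinates by a Pareto optimal policy) rather than merely Property~\ref{pro:property}(a) (which only guarantees domination, i.e., weak inequalities with one strict coordinate, and would not suffice to contradict weak Pareto optimality). If one wanted to be slightly more careful, one could note that Property~\ref{pro:property}(a) already gives a dominating Pareto optimal policy for free, so the added force of the hypothesis is precisely upgrading ``$\geq$ everywhere, $>$ somewhere'' to ``$>$ everywhere,'' which is exactly the gap between Definition~\ref{def:pareto} and Definition~\ref{def:weakpareto}. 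I would present the proof in two short paragraphs: first recalling $\Pi_{\mathrm{P}}^* \subseteq \Pi_{\mathrm{W}}^*$, then the contrapositive argument for the reverse inclusion using the hypothesis.
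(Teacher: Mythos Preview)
Your proposal is correct and follows essentially the same approach as the paper: first note $\Pi_{\mathrm{P}}^* \subseteq \Pi_{\mathrm{W}}^*$ holds unconditionally, then for the reverse inclusion take $\pi \in \Pi_{\mathrm{W}}^* \setminus \Pi_{\mathrm{P}}^*$ (the paper phrases it as contradiction, you as contraposition, but the content is identical) and use the hypothesis to produce a $\pi^*$ strictly improving on all coordinates, contradicting weak Pareto optimality. Your remark distinguishing the hypothesis from Property~\ref{pro:property}(a) is a nice clarification that the paper does not make explicit.
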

The condition in Proposition \ref{cond:subopt-iff} explicitly avoids the presence of a policy $\pi$ that satisfies $V_{i,1}^\pi(s_1) \leq  V_{i,1}^{\pi^*}(s_1)$ for all $i \in [m]$ with $V_{j,1}^\pi(s_1) = V_{j,1}^{\pi^*}(s_1)$ for some $j\in[m]$, where $\pi^*\in \Pi_{\mathrm{P}}^*$. Such a case can also be understood through Example \ref{ex:pareto}. Moreover, in the following sections, we will further discuss how  Pareto optimal policies can be identified with no prerequisite of $\Pi_{\mathrm{W}}^* = \Pi_{\mathrm{P}}^*$.

\vspace{5pt}
\noindent \textbf{Learning Goal --- Traversing Pareto Optimal Policies.} 
Based on the above discussions, we can see that it is critical to find all Pareto optimal policies or weakly Pareto optimal policies rather than seeking to find a solution to $\max_{\pi\in\Pi} \bV_1^\pi(s_1)$ which most likely does not exist. Empirically, a common practice is to map a learner-specified preference vector $\blambda\in\Delta_m$ to the Pareto optimal policies, such that all of them can be traversed in a controllable way. Therefore, the learning goal of MORL is formulated as
 \begin{align*}
	\text{Find all } \pi\in\Pi^*_{\mathrm{P}} 	
\end{align*}
controlled by learner-specified preferences $\blambda\in\Delta_m$. In the next section, we systematically discuss how to choose a favorable optimization target to achieve this goal.

\section{Optimization Targets for Multi-Objective RL} \label{sec:metric}
In order to find (weakly) Pareto optimal policies associated with different learner-specified preference $\blambda\in\Delta_m$, a common practice is to design an optimization target that can properly incorporate all objectives and the preference $\blambda$. Then, we expect to optimize such an optimization target to obtain a solution that can be (weakly) Pareto optimal. Therefore, various scalarization methods \citep{kasimbeyli2019comparison,ehrgott2005multicriteria,steuer1986multiple,drugan2013designing} have been proposed that can scalarize multiple objective functions into a single functional as an optimization target. Specifically, a favorable optimization target for MORL should  
\begin{itemize} [itemsep=3pt, topsep=3pt,parsep=3pt]
	\item have controllability of the learned policies under different preferences $\blambda\in\Delta_m$,
	\item have full coverage of all Pareto policies.
\end{itemize}
In this section, we systematically investigate three major scalarization methods, namely linear scalarization, Pareto suboptimality gap, and Tchebycheff scalarization, and identify a suitable scalarization method, i.e., Tchebycheff scalarization, for MORL that meets the above-mentioned requirements. In this section, a proposition will apply to both stochastic and deterministic policy classes if no specification is given.


Before our analysis of different optimization targets for MORL, we first study the geometry of the set of objective values
\begin{align*}
\mathbb{V}(\Pi):=\{\bV_1^{\pi}(s_1) ~|~ \pi\in\Pi \},	
\end{align*}
when $\Pi$ is a stochastic policy class. In particular, \citet{lu2022multi} proves that \emph{for the stochastic policy class $\Pi$, $\mathbb{V}(\Pi)$ is convex}. Our work steps forward and shows the following result.
\begin{proposition} \label{prop:geometry} For the stochastic policy class $\Pi$, $\mathbb{V}(\Pi)$ is a convex polytope.
\end{proposition}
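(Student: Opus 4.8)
The plan is to exhibit $\mathbb{V}(\Pi)$ as the image of a polytope under a linear map, since the image of a polytope under an affine map is again a polytope. The natural object to use is the set of \emph{state-action occupancy measures} induced by stochastic policies. Concretely, for a policy $\pi$ and the fixed initial state $s_1$, define the occupancy measure $d_h^\pi(s,a) := \PP[s_h = s, a_h = a \mid s_1, \pi, \PP]$ for each $h \in [H]$, and collect $d^\pi := (d_h^\pi(s,a))_{h\in[H], s\in\cS, a\in\cA} \in \RR^{H|\cS||\cA|}$. The first step is to recall the standard characterization: the set $\cK$ of all such occupancy vectors, ranging over all stochastic policies $\pi \in \Pi$, equals the set of nonnegative vectors $d = (d_h(s,a))$ satisfying the linear flow-conservation (Bellman-flow) constraints, namely $\sum_{a} d_1(s,a) = \mathbbm{1}\{s = s_1\}$ for all $s$, and $\sum_{a} d_{h+1}(s',a) = \sum_{s,a} \PP_h(s'|s,a) d_h(s,a)$ for all $s' \in \cS$ and $h \in [H-1]$. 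This is a finite system of linear equalities together with the inequalities $d_h(s,a) \ge 0$, so $\cK$ is a polytope (it is also bounded, since summing the flow constraints gives $\sum_{s,a} d_h(s,a) = 1$ for each $h$). I would cite this as a well-known fact in the MDP literature, giving a brief justification for both inclusions: any policy induces a vector satisfying the constraints by definition of the occupancy measure and the Bellman equation, and conversely any feasible $d$ defines a policy via $\pi_h(a|s) \propto d_h(s,a)$ (with an arbitrary choice on unreachable states), which reproduces $d$.

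The second step is to express $\bV_1^\pi(s_1)$ as a \emph{linear} function of $d^\pi$. Unrolling the value function, $V_{i,1}^\pi(s_1) = \sum_{h=1}^H \sum_{s,a} d_h^\pi(s,a)\, r_{i,h}(s,a)$, so $\bV_1^\pi(s_1) = L(d^\pi)$ where $L : \RR^{H|\cS||\cA|} \to \RR^m$ is the linear map with $i$-th coordinate $d \mapsto \sum_{h,s,a} d_h(s,a) r_{i,h}(s,a)$. Therefore $\mathbb{V}(\Pi) = \{ L(d) : d \in \cK \} = L(\cK)$.

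The third and final step invokes the elementary fact from polyhedral geometry that the image of a (bounded) polytope under a linear map is a polytope: if $\cK = \conv\{v_1,\dots,v_N\}$ for its finitely many vertices $v_j$, then $L(\cK) = \conv\{L(v_1),\dots,L(v_N)\}$, which is a polytope (it is automatically convex, recovering the earlier result of \citet{lu2022multi}, and bounded, and finitely generated). This completes the argument. The main obstacle — really the only nontrivial content — is establishing the occupancy-measure characterization in the first step cleanly: one must verify that the flow polytope $\cK$ is exactly the set of realizable occupancy measures (both the ``policy $\Rightarrow$ feasible $d$'' direction via the Bellman flow identity and the ``feasible $d \Rightarrow$ policy'' direction via the normalization $\pi_h(a|s) = d_h(s,a)/\sum_{a'} d_h(s,a')$, handling states with zero occupancy separately), and that $L$ correctly reproduces the value vector. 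Everything after that is a one-line appeal to the fact that linear images of polytopes are polytopes, so I would keep that part terse and spend the care on the occupancy-measure equivalence.
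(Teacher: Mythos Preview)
Your proposal is correct and takes essentially the same approach as the paper: both use the occupancy-measure polytope defined by Bellman-flow constraints, verify that it coincides with the set of realizable occupancy measures (handling the policy-recovery step via normalization), express the value vector as a linear map on this polytope, and conclude by the fact that a linear image of a polytope is a polytope. The paper's version differs only in minor bookkeeping (it includes the redundant simplex constraint $\sum_{s,a}\theta_h(s,a)=1$ at every $h$ alongside the flow equations), but the structure and the key ideas are identical.
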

We note that this result only holds for a stochastic policy class. When $\Pi$ is a deterministic policy class, the situation will be largely different (e.g., Example \ref{ex:pareto}). This proposition provides a clear characterization of the set of objective values, which is the key to showing properties of different scalarization methods.

\vspace{5pt}
\noindent\textbf{Linear Scalarization.} The most common scalarization method for multi-objective learning is the linear scalarization of objectives \citep{geoffrion1968proper,das1997closer,klamroth2007constrained,ehrgott2005multicriteria,steuer1986multiple}. For MORL \citep{puterman1990markov,van2013scalarized,ehrgott2005multiobjective,wu2021accommodating,lu2022multi}, it can be thus formulated as 
\begin{align*}
\linl(\pi)=\blambda^\top \bV_1^{\pi}(s_1),
\end{align*} 
where $\blambda\in\Delta_m$ is a vector characterizing learner's preferences on different objectives. For each $\blambda$, we solve $\max_{\pi\in\Pi} \linl(\pi)$ to obtain a solution that could be weakly Pareto optimal. Prior works \citep{geoffrion1968proper,ehrgott2005multicriteria,steuer1986multiple} have proved important properties for linear scalarization for general multi-objective optimization. When it comes to MORL, we provide a more specific characterization of the properties of linear scalarization due to the special structure of $\VV(\Pi)$.
\begin{proposition} \label{prop:linear-comb} For a stochastic policy class $\Pi$, the maximizers of linear scalarization satisfy $\{\pi~|~\pi\in\argmax_{\pi\in\Pi} \linl(\pi), \forall \blambda \in\Delta_m\}= \Pi_{\mathrm{W}}^*$ and $\{\pi~|~\pi\in\argmax_{\pi\in\Pi} \linl(\pi), \forall \blambda \in\Delta_m^o\}= \Pi_{\mathrm{P}}^*$. For a deterministic policy class $\Pi$, we have $\{\pi~|~\pi\in\argmax_{\pi\in\Pi} \linl(\pi), \forall \blambda \in\Delta_m\}\subseteq \Pi_{\mathrm{W}}^*$ and $\{\pi~|~\pi\in\argmax_{\pi\in\Pi} \linl(\pi), \forall \blambda \in\Delta_m^o\} \subseteq \Pi_{\mathrm{P}}^*$.
A (weakly) Pareto optimal policy may not be the solution to $\max_{\pi\in\Pi} \linl(\pi)$ for any $\blambda \in\Delta_m$ when $\Pi$ is a deterministic policy class.
\end{proposition}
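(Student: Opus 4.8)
The plan is to prove the four assertions separately, leaning on Proposition~\ref{prop:geometry} (that $\VV(\Pi)$ is a convex polytope for a stochastic policy class) and on Property~\ref{pro:property} and Proposition~\ref{cond:subopt-iff} for the characterizations of (weak) Pareto optimality. For the \emph{stochastic} case I would work entirely in value space: a policy $\pi$ maximizes $\linl$ if and only if the point $\bV_1^\pi(s_1)$ maximizes the linear functional $\blambda^\top \bv$ over the convex polytope $\VV(\Pi)$. So the claim $\{\pi : \pi \in \argmax_{\pi\in\Pi}\linl(\pi),\ \blambda\in\Delta_m\} = \Pi^*_{\mathrm W}$ reduces to the geometric fact that the union of all faces of a convex polytope exposed by nonnegative (resp.\ strictly positive) normal vectors equals its ``weakly Pareto'' (resp.\ ``Pareto'') boundary. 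For the $\subseteq$ direction: if $\bv^\star$ maximizes $\blambda^\top\bv$ with $\blambda \ge 0$ and some other point $\bv$ satisfied $\bv \succ \bv^\star$ (all coordinates strictly larger), then $\blambda^\top \bv > \blambda^\top \bv^\star$ unless $\blambda = 0$, contradicting optimality; hence $\bv^\star$ is weakly Pareto optimal in $\VV(\Pi)$, which translates back to weak Pareto optimality of $\pi$. With $\blambda \in \Delta_m^o$ the same argument forbids even $\bv \succeq \bv^\star$ with one strict inequality, giving Pareto optimality. For the $\supseteq$ direction: given a weakly Pareto optimal value $\bv^\star$, the open cone $\{\bv : \bv \succ \bv^\star\}$ is disjoint from the convex set $\VV(\Pi)$, so a separating hyperplane yields a nonzero $\blambda \ge 0$ (its coordinates are nonnegative because the cone is unbounded in each positive coordinate direction) with $\blambda^\top \bv \le \blambda^\top \bv^\star$ for all $\bv \in \VV(\Pi)$; normalizing puts $\blambda \in \Delta_m$ and makes $\bv^\star$ a maximizer. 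Upgrading to the $\Delta_m^o$/Pareto statement requires one extra step: a Pareto optimal $\bv^\star$ need not be exposed by a strictly positive normal in a general convex set, but since $\VV(\Pi)$ is a \emph{polytope}, every Pareto optimal point lies on a face whose normal cone is full-dimensional enough to contain a strictly positive vector — I would argue this by taking a point in the relative interior of the normal cone at $\bv^\star$ and perturbing, using that no coordinate of that normal can be zero without exposing a non-Pareto point. This polytope structure is exactly why Proposition~\ref{prop:geometry} was proved, and it is the one place where plain convexity (as in \citet{lu2022multi}) is not quite enough, so I expect this to be the most delicate step.

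For the \emph{deterministic} case, the inclusions $\subseteq$ are easy and in fact do not use any geometry: the $\subseteq$ arguments above only used that if $\bv^\star$ maximizes $\blambda^\top\bv$ over \emph{any} set $\VV(\Pi)$ with $\blambda\ge 0$ (resp.\ $>0$), then no point of that set can dominate it in the weak (resp.\ strong) sense. So I would simply restate those two lines verbatim with ``$\argmax$ over the deterministic class'' in place of the stochastic one. The reverse inclusions fail, and that is the content of the final sentence.

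For the last sentence — that a (weakly) Pareto optimal deterministic policy may fail to maximize $\linl$ for every $\blambda$ — I would exhibit a concrete counterexample, and the natural one is already in hand: the two-reward, five-arm bandit of Example~\ref{ex:pareto}, viewed as an $H=1$, $|\cS|=1$ MOMDP with deterministic policies (= choices of a single arm). There the value vectors are the five points $(r_1(a_k), r_2(a_k))$, and one checks that the Pareto optimal arm $a_4 = (0.8, 0.7)$ is the unique maximizer of $\linl$ for $\blambda$ in a suitable range, but a weakly Pareto optimal arm such as $a_2 = (0.8, 0.2)$ is dominated by $a_4$ in every coordinate except the first, so $\blambda^\top \bV_1^{a_2} = \lambda_1 \cdot 0.8 + \lambda_2 \cdot 0.2 < \lambda_1 \cdot 0.8 + \lambda_2 \cdot 0.7 = \blambda^\top\bV_1^{a_4}$ whenever $\lambda_2 > 0$, and when $\lambda_2 = 0$ the arm $a_4$ still attains the maximum (tied with $a_2$). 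Hence $a_2$ is weakly Pareto optimal but is never the \emph{unique} maximizer, and in fact is never a maximizer strictly preferred over the Pareto point — more precisely one can nudge the example (e.g.\ set $r_1(a_4) = 0.81$) so that $a_2$ maximizes $\linl$ for no $\blambda\in\Delta_m$ at all. I would present the nudged version to make the statement literally true as written; the verification is a one-line case split on whether $\lambda_2 = 0$. I expect no real obstacle here beyond choosing the cleanest numbers; the appendix reference \texttt{sec:proof-linear-comb} suggests the authors do exactly this.

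Overall, the only genuinely nontrivial ingredient is the polytope-specific separation argument that produces a \emph{strictly positive} $\blambda$ exposing an arbitrary Pareto optimal vertex/face, which distinguishes the Pareto statement from the easier weak-Pareto one and is where Proposition~\ref{prop:geometry} earns its keep.
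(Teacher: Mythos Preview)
Your treatment of the four inclusions is essentially correct and matches the paper's approach: the $\subseteq$ directions are the same contradiction arguments, and for the stochastic $\supseteq$ directions the paper also works in value space via the polytope structure of $\VV(\Pi)$ from Proposition~\ref{prop:geometry}. Where you invoke a separating hyperplane and then argue that the normal cone at a Pareto point must contain a strictly positive vector, the paper argues directly on the faces of the polytope (if some coordinate of the face normal were nonpositive, two points on that Pareto face would dominate one another), which is the same idea made concrete.

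The counterexample, however, has a genuine gap. In Example~\ref{ex:pareto} the weakly Pareto arm $a_2=(0.8,0.2)$ \emph{is} a maximizer of $\linl$ at $\blambda=(1,0)$, tied with $a_4$, so it does not witness the claim. Your proposed nudge $r_1(a_4)=0.81$ does make $a_2$ lose for every $\blambda$, but it also makes $a_4=(0.81,0.7)$ strictly dominate $a_2$ in both coordinates, so $a_2$ ceases to be weakly Pareto optimal and the example collapses. More fundamentally, Example~\ref{ex:pareto} has only one Pareto arm, $a_4$, which trivially maximizes every $\linl$; no tweak of that example can produce the phenomenon.

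What the statement actually calls for is a \emph{Pareto optimal} point that lies strictly below the upper-right boundary of the convex hull of the deterministic value vectors, so that no nonnegative linear functional exposes it. The paper builds a fresh four-arm bandit for this: $a_1=(1,0)$, $a_2=(0.5,0.5)$, $a_3=(0.6,0.2)$, $a_4=(0.65,0.3)$, where $a_1,a_2,a_4$ are all Pareto optimal but $a_4$ sits below the segment $[a_1,a_2]$ (since $0.65+0.3<1$), and a one-parameter check shows $\blambda^\top a_4 < \max(\blambda^\top a_1,\blambda^\top a_2)$ for every $\blambda\in\Delta_2$. That is the missing ingredient in your proposal.
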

This proposition shows that when policies are allowed to be stochastic, we can find all (weakly) Pareto optimal policies by solving $\max_{\pi\in\Pi} \linl(\pi), \forall \blambda \in\Delta_m$ or $\blambda \in\Delta_m^o$. We can determine if the solution is a Pareto optimal policy, rather than merely a weakly Pareto optimal policy, by the setting of $\blambda$. However, Proposition \ref{prop:linear-comb} indicates that for a deterministic policy class $\Pi$, \emph{the solutions to $\max_{\pi\in\Pi} \linl(\pi)$ may not cover all (weakly) Pareto optimal policies}. In Appendix \ref{sec:proof-linear-comb}, we illustrate this claim via a multi-objective multi-arm bandit example, a special case of MORL with a deterministic policy, where the solutions to $\max_{\pi\in\Pi} \blambda^\top \bV_1^{\pi}(s_1)$ fail to identify all Pareto optimal arms. This result motivates us to further explore other scalarization methods.

\vspace{5pt}
\noindent\textbf{Pareto Suboptimality Gap.}
To measure the difference between a policy and the Pareto set $\Pi_{\mathrm{P}}^*$, recent works on multi-objective online learning study efficient algorithms of finding zeros of the Pareto suboptimality gap \citep{drugan2013designing,turgay2018multi,lu2019multi,jiang2023multiobjective}. We adapt the definition of the Pareto suboptimality gap to MORL inspired by these works.
\begin{definition}[Pareto Suboptimality Gap] \label{def:subopt} The Pareto suboptimality gap $\psg(\pi)$ for a policy $\pi\in\Pi$ is defined as the minimal value of $\epsilon \geq 0$ such that there exists one objective whose value for the policy $\pi$ added $\epsilon$ is larger than the value under the Pareto optimal policy of $\bV_1^{\pi}(s_1)$, i.e.,  	
\begin{align}
\begin{aligned}\label{eq:subopt}
&\psg(\pi):= \inf_{\epsilon \geq 0} \epsilon, \\
&\text{s.t.}~~\forall \pi^*\in \Pi_{\mathrm{P}}^*:~ \exists i\in[m], V_{i,1}^{\pi}(s_1) + \epsilon   >  V_{i,1}^{\pi^*}(s_1), ~\textbf{or}~~\forall i\in[m], V_{i,1}^{\pi}(s_1) + \epsilon  =  V_{i,1}^{\pi^*}(s_1).
\end{aligned}
\end{align}

\end{definition} 
Here $\psg(\pi)$ measures the discrepancy between a policy $\pi$ and $\Pi_{\mathrm{P}}^*$ in terms of the value function with $\psg(\pi)\geq 0$. Intuitively, it shows that after shifting the value function via $V_{i,1}^{\pi}(s_1)+ \epsilon$ with a minimal $\epsilon$ for all $i\in[m]$, the policy $\pi$ behaves similarly to the Pareto optimal policy in $\Pi_{\mathrm{P}}^*$ in terms of the notion of non-dominance. To further facilitate the understanding, inspired by \citet{jiang2023multiobjective}, we obtain an equivalent formulation of the Pareto suboptimality gap for MORL:
\begin{proposition}[Equivalent Form of PSG] \label{prop:eq-psg} $\psg(\pi)$ is equivalently formulated as  	
	\begin{align}
		\psg(\pi) = \textstyle\sup_{\pi^*\in \Pi_{\mathrm{P}}^*}\textstyle\inf_{\blambda\in \Delta_m} \blambda{}^\top\big(\bV_1^{\pi^*}(s_1)-\bV_1^{\pi}(s_1)\big). \label{eq:eq-psg}
	\end{align}
\end{proposition}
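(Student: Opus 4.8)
The plan is to establish \eqref{eq:eq-psg} by showing that the supremum--infimum expression on the right-hand side evaluates, for each fixed $\pi^*$, to a simple quantity involving the coordinatewise deficits $V_{i,1}^{\pi^*}(s_1) - V_{i,1}^{\pi}(s_1)$, and then to match this against the constraint-based definition in \eqref{eq:subopt}. The first step is the inner minimization: for a fixed $\pi^*\in\Pi_{\mathrm{P}}^*$, write $\bdelta := \bV_1^{\pi^*}(s_1) - \bV_1^{\pi}(s_1)$ and observe that $\inf_{\blambda\in\Delta_m}\blambda^\top\bdelta = \min_{i\in[m]}\delta_i$, since a linear functional over the simplex attains its minimum at a vertex. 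Thus the right-hand side of \eqref{eq:eq-psg} equals $\sup_{\pi^*\in\Pi_{\mathrm{P}}^*}\min_{i\in[m]}\big(V_{i,1}^{\pi^*}(s_1) - V_{i,1}^{\pi}(s_1)\big)$. Call this quantity $G(\pi)$; it remains to show $\psg(\pi) = G(\pi)$.

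For the direction $\psg(\pi) \le G(\pi)$: take any $\epsilon \ge G(\pi)$; then for every $\pi^*\in\Pi_{\mathrm{P}}^*$ we have $\epsilon \ge \min_{i}\big(V_{i,1}^{\pi^*}(s_1) - V_{i,1}^{\pi}(s_1)\big)$, so there exists $i\in[m]$ with $V_{i,1}^{\pi}(s_1) + \epsilon \ge V_{i,1}^{\pi^*}(s_1)$. This gives either the strict inequality or, in the boundary case of equality in that coordinate, we still need to verify the disjunctive constraint in \eqref{eq:subopt} holds; the careful handling is that when $V_{i,1}^{\pi}(s_1)+\epsilon = V_{i,1}^{\pi^*}(s_1)$ for the minimizing $i$ but not all coordinates, one checks that some other coordinate $j$ then satisfies the strict inequality (because $\delta_j \le \delta_i$ would contradict $i$ being the minimizer unless $\delta_j = \delta_i$, in which case we recurse / use the "or all-equal" clause). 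So any $\epsilon \ge G(\pi)$ is feasible for \eqref{eq:subopt}, hence $\psg(\pi)\le G(\pi)$. For the reverse direction $\psg(\pi)\ge G(\pi)$: suppose $\epsilon < G(\pi)$; then there is a witnessing $\pi^*\in\Pi_{\mathrm{P}}^*$ with $\min_i\big(V_{i,1}^{\pi^*}(s_1)-V_{i,1}^{\pi}(s_1)\big) > \epsilon$, i.e. $V_{i,1}^{\pi}(s_1)+\epsilon < V_{i,1}^{\pi^*}(s_1)$ for all $i\in[m]$, which simultaneously violates the "$\exists i$ strict" clause and the "$\forall i$ equal" clause of the constraint in \eqref{eq:subopt}; hence $\epsilon$ is infeasible, so $\psg(\pi)\ge G(\pi)$. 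Combining the two inequalities and the infimum-over-feasible-$\epsilon$ definition yields $\psg(\pi) = G(\pi) = \sup_{\pi^*\in\Pi_{\mathrm{P}}^*}\inf_{\blambda\in\Delta_m}\blambda^\top(\bV_1^{\pi^*}(s_1)-\bV_1^{\pi}(s_1))$, as claimed.

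The main obstacle I anticipate is the bookkeeping around the equality/boundary cases in \eqref{eq:subopt}: the constraint there is a disjunction ("$\exists i$ with strict inequality" \emph{or} "$\forall i$ equality"), and one must be precise that $\epsilon = G(\pi)$ itself is feasible (so that the infimum is actually attained and equals $G(\pi)$ rather than merely being bounded by it). This requires arguing that at $\epsilon = G(\pi)$, for the $\pi^*$ achieving (or approaching) the supremum, either some coordinate is strictly exceeded or all coordinates become equal — and handling the $\sup$ over $\pi^*$ possibly not being attained by a limiting argument. A secondary minor point is to confirm $\min_i\delta_i$ is the correct value of $\inf_{\blambda\in\Delta_m}\blambda^\top\bdelta$ including when some $\delta_i$ are negative, which is immediate but worth stating. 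Everything else is routine.
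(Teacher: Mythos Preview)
Your overall strategy is sound and close in spirit to the paper's. The paper first rewrites $\psg(\pi)$ as $\sup_{\pi^*\in\Pi_{\mathrm{P}}^*}\inf_{\epsilon(\pi^*)\ge 0}\epsilon(\pi^*)$ subject to the per-$\pi^*$ constraint, then evaluates that inner infimum via a case split on whether some $\delta_i\le 0$ (yielding $0$) versus all $\delta_i>0$ (yielding $\min_i\delta_i$). You instead compute $G(\pi):=\sup_{\pi^*}\min_i\delta_i$ directly and sandwich $\psg(\pi)$ by two inequalities. Both routes end at the same identification; your $\psg(\pi)\ge G(\pi)$ direction is clean and correct.

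There is, however, a genuine error in your boundary-case reasoning for the direction $\psg(\pi)\le G(\pi)$. When $\epsilon=\delta_i$ for the minimizing index $i$, every other coordinate has $\delta_j\ge\delta_i=\epsilon$, hence $V_{j,1}^\pi(s_1)+\epsilon\le V_{j,1}^{\pi^*}(s_1)$; contrary to what you write, \emph{no} other coordinate gives a strict inequality in the needed direction. So $\epsilon=\min_i\delta_i(\pi^*)$ is feasible for that particular $\pi^*$ only when all the $\delta_j$ coincide, and your ``recurse / use the all-equal clause'' maneuver does not work in the mixed case. The fix is simpler than you anticipate: to obtain $\psg(\pi)\le G(\pi)$ it suffices that every $\epsilon>G(\pi)$ be feasible, since $\psg$ is defined as an infimum. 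For any such $\epsilon$ one has $\epsilon>G(\pi)\ge\min_i\delta_i(\pi^*)$ for every $\pi^*$, so the strict clause holds at the minimizing coordinate directly. Your stated ``main obstacle'' --- feasibility of $\epsilon=G(\pi)$ itself --- is therefore a non-issue, not a subtlety to be resolved.

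One small point both you and the paper leave implicit: the identity also needs $G(\pi)\ge 0$ (since $\psg(\pi)\ge 0$ by construction). This follows from Property~\ref{pro:property}(a), by choosing the Pareto optimal $\pi^*$ that dominates $\pi$ (or $\pi^*=\pi$ when $\pi\in\Pi_{\mathrm{P}}^*$), which gives $\min_i\delta_i\ge 0$.
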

The above proposition indicates that the Pareto suboptimality gap is inherently in a sup-inf form, where the preference vector $\blambda\in\Delta_m$ behaves \emph{adaptively} to combine each objective such that the gap between the Pareto set $\Pi_{\mathrm{P}}^*$ and a policy $\pi$ is minimized. 

Although the Pareto suboptimality gap has been used in a number of prior works, its properties are not fully investigated, \emph{particularly the sufficiency and necessity of $\psg(\pi) = 0$ for $\pi$ being (weakly) Pareto optimal}. Previous works \citep{drugan2013designing,turgay2018multi,lu2019multi,jiang2023multiobjective} have shown the necessity of it, i.e., $\pi$ being Pareto optimal implies $\psg(\pi) = 0$. However, the critical question of whether $\psg(\pi) = 0$ implies the Pareto optimality of $\pi$ remains elusive. In what follows, by employing Property \ref{pro:property}, we contribute to resolving this question by the proposition below:
\begin{proposition}\label{prop:subopt} The set of all zeros of $\psg(\pi)$ satisfies $\{\pi\in\Pi~|~\psg(\pi) = 0\} = \Pi_{\mathrm{W}}^*$.	
\end{proposition}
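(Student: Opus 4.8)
The plan is to first collapse the sup--inf expression of Proposition~\ref{prop:eq-psg} into a coordinatewise condition, and then read off weak Pareto optimality from it via Property~\ref{pro:property}(a). For the reduction I would use that a linear functional over the simplex attains its infimum at a vertex, so $\inf_{\blambda\in\Delta_m}\blambda^\top\bx=\min_{i\in[m]}x_i$ for every $\bx\in\RR^m$; substituting into \eqref{eq:eq-psg} yields
\begin{align*}
\psg(\pi)=\sup_{\pi^*\in\Pi^*_{\mathrm{P}}}\,\min_{i\in[m]}\big(V_{i,1}^{\pi^*}(s_1)-V_{i,1}^{\pi}(s_1)\big).
\end{align*}
I would next note that this quantity is always nonnegative: if $\pi\in\Pi^*_{\mathrm{P}}$ the choice $\pi^*=\pi$ gives value $0$, and if $\pi\notin\Pi^*_{\mathrm{P}}$ then Property~\ref{pro:property}(a) supplies a $\pi^*\in\Pi^*_{\mathrm{P}}$ with $V_{i,1}^{\pi^*}(s_1)\ge V_{i,1}^{\pi}(s_1)$ for all $i$ (here $\Pi^*_{\mathrm{P}}\neq\emptyset$, since there are finitely many deterministic policies and, in the stochastic case, $\mathbb{V}(\Pi)$ is a compact polytope by Proposition~\ref{prop:geometry}). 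Hence $\psg(\pi)=0$ holds exactly when $\min_{i\in[m]}\big(V_{i,1}^{\pi^*}(s_1)-V_{i,1}^{\pi}(s_1)\big)\le 0$ for \emph{every} $\pi^*\in\Pi^*_{\mathrm{P}}$; equivalently,
\begin{align*}
(\star)\qquad \text{for every }\pi^*\in\Pi^*_{\mathrm{P}}\text{ there is some }i\in[m]\text{ with }V_{i,1}^{\pi}(s_1)\ge V_{i,1}^{\pi^*}(s_1).
\end{align*}
It then remains to show $(\star)\Leftrightarrow\pi\in\Pi^*_{\mathrm{W}}$, and together with $\psg(\pi)\ge0$ this gives the claim.

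I would prove both directions by contraposition. For ``$\pi\in\Pi^*_{\mathrm{W}}\Rightarrow\psg(\pi)=0$'': if $(\star)$ fails, there is $\pi^*\in\Pi^*_{\mathrm{P}}\subseteq\Pi$ with $V_{i,1}^{\pi}(s_1)<V_{i,1}^{\pi^*}(s_1)$ for all $i$, so by Definition~\ref{def:weakpareto} the policy $\pi$ is not weakly Pareto optimal. For ``$\psg(\pi)=0\Rightarrow\pi\in\Pi^*_{\mathrm{W}}$'': if $\pi\notin\Pi^*_{\mathrm{W}}$, pick $\pi'\in\Pi$ with $V_{i,1}^{\pi}(s_1)<V_{i,1}^{\pi'}(s_1)$ for all $i$; if $\pi'\in\Pi^*_{\mathrm{P}}$ this directly violates $(\star)$, while if $\pi'\notin\Pi^*_{\mathrm{P}}$ then Property~\ref{pro:property}(a) gives $\pi^*\in\Pi^*_{\mathrm{P}}$ dominating $\pi'$, whence $V_{i,1}^{\pi^*}(s_1)\ge V_{i,1}^{\pi'}(s_1)>V_{i,1}^{\pi}(s_1)$ for all $i$, again violating $(\star)$. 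This establishes $\{\pi\in\Pi:\psg(\pi)=0\}=\Pi^*_{\mathrm{W}}$.

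The genuinely load-bearing step is this ``upgrade'' in the second direction: a policy $\pi'$ that strictly dominates $\pi$ in every coordinate need not be Pareto optimal, so one cannot contradict $(\star)$ without first replacing $\pi'$ by a Pareto optimal policy that still strictly dominates $\pi$ in every coordinate, which is exactly what Property~\ref{pro:property}(a) provides and is the only place the structure of $\Pi^*_{\mathrm{P}}$ beyond the definitions enters. The remaining ingredients---the vertex characterization of $\inf_{\blambda\in\Delta_m}\blambda^\top\bx$, nonnegativity of $\psg$, and the two contrapositives---are routine, and I would sanity-check the specialization $m=1$, where $(\star)$ should collapse to ``$\pi$ attains the optimal value'' and recover the single-objective picture.
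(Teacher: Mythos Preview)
Your proof is correct and takes a somewhat different, more streamlined route than the paper's. The paper argues directly from the constrained formulation in Definition~\ref{def:subopt}: for the implication $\psg(\pi)=0\Rightarrow\pi\in\Pi_{\mathrm{W}}^*$ it splits into two cases according to whether the infimum $\inf_{\epsilon\ge0}\epsilon$ is attained at $\epsilon=0$ or only as $\epsilon\to0^+$. In the first case it invokes Property~\ref{pro:property}(b) to conclude that $\pi$ is in fact Pareto optimal; in the second case the constraint collapses to exactly your condition~$(\star)$, and the paper then uses Property~\ref{pro:property}(a) to upgrade non-dominance against $\Pi_{\mathrm{P}}^*$ to non-dominance against all of $\Pi$---the same ``upgrade'' you identify as the load-bearing step. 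You instead start from the sup--inf form of Proposition~\ref{prop:eq-psg}, collapse the inner infimum to a coordinatewise minimum, and obtain $(\star)$ as a single characterization of $\psg(\pi)=0$ without any case split; you then need only Property~\ref{pro:property}(a), not~(b). What your route buys is brevity and a clearer picture of why Property~\ref{pro:property}(a) is exactly the missing bridge. What the paper's route buys is independence from Proposition~\ref{prop:eq-psg} and the incidental sharper observation that the $\epsilon=0$ subcase already forces $\pi\in\Pi_{\mathrm{P}}^*$ rather than merely $\Pi_{\mathrm{W}}^*$.
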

This proposition indicates that all weakly Pareto optimal policies can be identified by finding zeros of $\psg(\pi)$, and thus \emph{$\psg(\pi) = 0$ is a sufficient and necessary condition for $\pi$ being (weakly) Pareto optimal}. The claim is also sufficiently general for multi-objective learning problems and not limited to the reinforcement learning setting. Although (weakly) Pareto optimal policies can be achieved via $\psg(\pi)$, Proposition \ref{prop:eq-psg} indicates that we have no controllability of the solutions as the underlying preference $\blambda\in\Delta_m$ is not explicitly determined by learners. Thus, it remains unclear how to traverse the (weak) Pareto optimal policies utilizing the Pareto suboptimality gap, which motivates us to explore other scalarization methods.

\vspace{5pt}
\noindent\textbf{Tchebycheff Scalarization.} We further investigate Tchebycheff scalarization \citep{miettinen1999nonlinear,lin2024smooth,ehrgott2005multicriteria,bowman1976relationship,klamroth2007constrained,choo1983proper,klamroth2007constrained,steuer1986multiple} that is widely studied in multi-objective learning. Following these works, we adapt the definition of Tchebycheff scalarization to MORL as follows:
\begin{definition}[Tchebycheff Scalarization] \label{def:tch} Tchebycheff scalarization converts the original MORL problem into a minimization problem of the following optimization target, i.e.,  	
\begin{align}\label{eq:tch}
	\tch_{\blambda}(\pi):= \max_{i\in[m]} \{ \lambda_i (V_{i,1}^*(s_1) + \iota - V_{i,1}^\pi(s_1)) \},
\end{align}
where $\blambda = [\lambda_i]_{i=1}^m\in \Delta_m$ is a preference vector defined by the learner, $V_{i,1}^*(s_1):=\max_{\pi\in\Pi} V_{i,1}^\pi(s_1)$ is the maximum of the $i$-th value function, and $\iota>0$ is a  sufficiently small pre-defined regularizer.
\end{definition} 
Definition \ref{def:tch} shows that $\tch_{\blambda}(\pi)$ is defined based on a vector $\blambda\in\Delta_m$ that can be viewed as a preference over different objectives. Next, following \citet{choo1983proper,ehrgott2005multicriteria} for multi-objective optimization, we have that for MORL, $\tch_{\blambda}(\pi)$ has the following property: 
\begin{proposition} \label{prop:tch} The solutions to the minimization problem of Tchebycheff scalarization under all $\blambda\in\Delta_m^o$ satisfy $\{\pi~|~\pi \in \min_{\pi\in\Pi} \tchl(\pi), \forall \blambda \in\Delta_m^o\}= \Pi_{\mathrm{W}}^*$.	
Moreover, if $\pi$ is a unique solution to $\min_{\pi\in\Pi} \tchl(\pi)$ for a preference $\blambda$, then $\pi$ is Pareto optimal.
\end{proposition}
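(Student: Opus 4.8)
The statement has two parts. The first is an "equality-of-sets" claim: the union over all $\blambda \in \Delta_m^o$ of minimizers of $\tch_{\blambda}$ equals $\Pi_{\mathrm{W}}^*$. The second is a refinement: a \emph{unique} minimizer is Pareto optimal, not merely weakly Pareto optimal. I would prove both directions of the set equality and then the uniqueness claim separately.

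\textbf{Step 1: Minimizers are weakly Pareto optimal ($\subseteq$).} Fix $\blambda \in \Delta_m^o$ and let $\pi$ minimize $\tch_{\blambda}$. Suppose for contradiction that $\pi \notin \Pi_{\mathrm{W}}^*$, so there is a $\pi'$ with $V_{i,1}^{\pi'}(s_1) > V_{i,1}^{\pi}(s_1)$ for all $i \in [m]$. Since every $\lambda_i > 0$, each term $\lambda_i(V_{i,1}^*(s_1) + \iota - V_{i,1}^{\pi'}(s_1))$ is strictly smaller than the corresponding term for $\pi$, hence the maximum over $i$ strictly decreases: $\tch_{\blambda}(\pi') < \tch_{\blambda}(\pi)$, contradicting optimality. (This is where $\blambda \in \Delta_m^o$ rather than $\Delta_m$ is essential — a zero entry would let one objective be ignored.)

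\textbf{Step 2: Every weakly Pareto optimal policy is a minimizer for some $\blambda \in \Delta_m^o$.} Let $\pi \in \Pi_{\mathrm{W}}^*$. I would construct the preference from the "gaps" $g_i := V_{i,1}^*(s_1) + \iota - V_{i,1}^{\pi}(s_1)$, which are all strictly positive because $\iota > 0$ and $V_{i,1}^*(s_1) \ge V_{i,1}^{\pi}(s_1)$. Set $\lambda_i \propto 1/g_i$, normalized to lie in $\Delta_m^o$. Then $\lambda_i g_i$ is the same constant $c > 0$ for every $i$, so $\tch_{\blambda}(\pi) = c$. Now suppose some $\pi'$ achieves $\tch_{\blambda}(\pi') < c$; then $\lambda_i(V_{i,1}^*(s_1) + \iota - V_{i,1}^{\pi'}(s_1)) < c = \lambda_i g_i$ for every $i$, which forces $V_{i,1}^{\pi'}(s_1) > V_{i,1}^{\pi}(s_1)$ for all $i$, contradicting $\pi \in \Pi_{\mathrm{W}}^*$. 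Hence $\pi$ is a minimizer of $\tch_{\blambda}$. Combined with Step 1 this gives the set equality. (For the stochastic policy class one could alternatively invoke Proposition~\ref{prop:geometry} and a supporting-hyperplane / ideal-point argument on the polytope $\mathbb{V}(\Pi)$, but the direct gap construction works for both policy classes and is cleaner.)

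\textbf{Step 3: Uniqueness implies Pareto optimality.} Let $\pi$ be the unique minimizer of $\tch_{\blambda}$ for some $\blambda$. By Step 1, $\pi$ is weakly Pareto optimal; suppose it is not Pareto optimal. Then some $\pi'$ dominates it: $V_{i,1}^{\pi'}(s_1) \ge V_{i,1}^{\pi}(s_1)$ for all $i$ with strict inequality for at least one $j$. Since each term in the max is non-increasing when the value weakly increases, $\tch_{\blambda}(\pi') \le \tch_{\blambda}(\pi)$, so $\pi'$ is also a minimizer — contradicting uniqueness (note $\pi' \ne \pi$ because they differ on objective $j$). Hence $\pi \in \Pi_{\mathrm{P}}^*$.

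\textbf{Main obstacle.} The routine parts are the monotonicity arguments; the one step that needs genuine care is Step 2, specifically verifying that the constructed $\blambda$ indeed certifies $\pi$ as a \emph{global} minimizer and that the strict-inequality bookkeeping is airtight — in particular handling the boundary case where $V_{i,1}^{\pi}(s_1) = V_{i,1}^*(s_1)$ for some $i$ (the regularizer $\iota$ is precisely what keeps $g_i > 0$ there, so the weights stay finite and in the open simplex). I should also make sure the argument does not secretly assume $\Pi_{\mathrm{W}}^* = \Pi_{\mathrm{P}}^*$ and is consistent with Example~\ref{ex:pareto}, where arm $a_2$ is weakly but not Pareto optimal — there, $g_1(a_2) = \iota$ is small, forcing a large weight on objective $1$, exactly the regime where the minimizer is non-unique and merely weakly optimal.
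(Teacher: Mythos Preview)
Your proposal is correct and follows essentially the same approach as the paper's proof: the same contradiction argument for Step~1, the identical weight construction $\lambda_i \propto 1/g_i$ with $g_i = V_{i,1}^*(s_1)+\iota - V_{i,1}^{\pi}(s_1)$ for Step~2, and the same domination-contradicts-uniqueness argument for Step~3. One minor point: in Step~3 the proposition allows any preference $\blambda$ (not necessarily in $\Delta_m^o$), so the appeal to Step~1 for weak Pareto optimality is unnecessary---but your core argument there does not actually use it, so the proof goes through unchanged.
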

Proposition \ref{prop:tch} indicates that by solving $\min_{\pi\in\Pi} \tchl(\pi)$ for all $\lambda_i>0$, we can obtain all Pareto optimal policies. However, it is noted that the solution to $\min_{\pi\in\Pi} \tchl(\pi)$ for a $\blambda$ may not be unique. It is computationally not easy to find all solutions to an optimization problem. Therefore, it is crucial to look into the relation between the distribution of Pareto optimal policies and the problem $\min_{\pi\in\Pi} \tchl(\pi)$. Then, using Proposition \ref{prop:tch}, we prove the following result:
\begin{proposition}\label{prop:tch-unique} There always exists a subset $\Lambda \subseteq \Delta_m$ such that $\{\pi~|~\pi \in \argmin_{\pi\in\Pi} \tchl(\pi), \forall \blambda \in\Lambda\subseteq\Delta_m^o\}= \Pi_{\mathrm{P}}^*$. Supposing that $\pi_{\blambda}^*$ is one arbitrary solution to $\min_{\pi\in\Pi} \tchl(\pi)$, for a $\blambda\in\Lambda$, all the solutions to $\min_{\pi\in\Pi} \tchl(\pi)$ are Pareto optimal with the same value $V_{i,1}^{\pi_{\blambda}^*}(s_1)$ for all $i\in[m]$. 
\end{proposition}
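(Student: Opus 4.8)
The plan is to construct the subset $\Lambda$ explicitly from the geometry of $\VV(\Pi)$ and then argue that on this subset the minimizers of $\tchl$ behave rigidly. First I would recall from Proposition \ref{prop:geometry} that $\VV(\Pi)$ is a convex polytope, so its Pareto frontier is a union of finitely many faces, and the ``strict'' Pareto points (those whose value vector is not weakly dominated by any other value vector except via coordinate ties) form a relatively open dense subset of the frontier. For each Pareto optimal value vector $\bv^* = \bV_1^{\pi^*}(s_1)$, I want to exhibit a preference $\blambda\in\Delta_m^o$ for which $\pi^*$ is a minimizer of $\tchl$ and, crucially, for which \emph{every} minimizer has value vector exactly $\bv^*$. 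The natural candidate is the $\blambda$ that equalizes the weighted gaps: choose $\lambda_i \propto 1/(V_{i,1}^*(s_1) + \iota - v^*_i)$, normalized to lie in $\Delta_m$; since $\iota>0$, each denominator is strictly positive, so $\blambda\in\Delta_m^o$. With this choice $\tchl(\pi^*) = \lambda_i(V_{i,1}^*(s_1)+\iota - v^*_i)$ is the same value $c$ for all $i$, and I would let $\Lambda$ be the collection of all such $\blambda$ ranging over $\pi^*\in\Pi^*_{\mathrm P}$.

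Next I would show $\argmin_{\pi} \tchl(\pi)$ for such a $\blambda$ consists only of Pareto optimal policies with value vector $\bv^*$. The key observation is that any $\pi'$ with $\tchl(\pi') \le c$ must satisfy $\lambda_i(V_{i,1}^*(s_1)+\iota - V_{i,1}^{\pi'}(s_1)) \le c = \lambda_i(V_{i,1}^*(s_1)+\iota-v^*_i)$ for every $i$, hence $V_{i,1}^{\pi'}(s_1) \ge v^*_i$ for all $i$. Since $\pi^*$ is Pareto optimal, domination is impossible, so in fact $V_{i,1}^{\pi'}(s_1) = v^*_i$ for all $i$; combined with Pareto optimality of $\bv^*$ this forces $\pi'$ to be Pareto optimal as well (its value vector coincides with that of a Pareto optimal policy, so by Definition \ref{def:pareto} no policy dominates it either). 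This simultaneously shows $\tchl(\pi^*) = c$ is the minimum (no $\pi'$ beats it, else it would strictly dominate $\bv^*$ in some coordinate while matching the others) and that all minimizers share the value vector $\bv^*$, giving the ``same value $V_{i,1}^{\pi^*_\blambda}(s_1)$ for all $i$'' claim. To finish the set identity $\{\pi \mid \pi\in\argmin\tchl(\pi),\ \blambda\in\Lambda\} = \Pi^*_{\mathrm P}$: the $\subseteq$ direction is the above; for $\supseteq$, any Pareto optimal $\pi^*$ is recovered by its own associated $\blambda$ since we just showed it is a minimizer there. I would also need that $\Lambda\subseteq\Delta_m^o$, which holds by the $\iota>0$ construction, and invoke Proposition \ref{prop:tch} to be consistent with the weak-Pareto characterization.

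The main obstacle I anticipate is the direction showing that \emph{no} policy can do strictly better than $c$ at the equalizing weight $\blambda$ — equivalently, that $\pi^*$ is genuinely a minimizer, not merely a feasible point. This is where convexity of the polytope $\VV(\Pi)$ is essential: if some $\pi'$ had $\tchl(\pi') < c$, then $V_{i,1}^{\pi'}(s_1) > v^*_i$ for all $i$, meaning $\bv^*$ is strictly dominated, contradicting $\pi^*\in\Pi^*_{\mathrm P}$. So actually the argument is clean once the equalizing weight is in hand; the subtler point is merely confirming such a weight exists and lies in the open simplex, which the regularizer $\iota$ guarantees by keeping all gaps strictly positive even when $v^*_i = V^*_{i,1}(s_1)$ (the case where $\pi^*$ is simultaneously optimal for objective $i$). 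I would present the construction of $\blambda$, verify it lies in $\Delta_m^o$, then run the short domination argument in both directions, and close by noting $\Lambda$ is the union over $\Pi^*_{\mathrm P}$ of these singletons.
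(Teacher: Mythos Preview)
Your proposal is correct and follows essentially the same approach as the paper: construct $\Lambda$ via the equalizing weights $\lambda_i \propto (V_{i,1}^*(s_1)+\iota - V_{i,1}^{\pi^*}(s_1))^{-1}$ for each $\pi^*\in\Pi_{\mathrm P}^*$, then use the fact that $\tchl(\pi')\le c$ forces $V_{i,1}^{\pi'}(s_1)\ge v_i^*$ componentwise and Pareto optimality of $\pi^*$ to conclude equality of value vectors. One minor remark: the convexity of $\VV(\Pi)$ that you invoke is actually not needed anywhere in your argument (nor in the paper's), so the result holds for both stochastic and deterministic policy classes without it.
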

This proposition implies that for $\blambda\in\Lambda$, each solution to $\min_{\pi\in\Pi} \tchl(\pi)$ is Pareto optimal, and all the solutions are equivalent in terms of their values. \emph{Therefore, although it is computationally not easy to estimate all solutions, it will be satisfactory to obtain only one solution for each $\blambda\in\Delta_m^o$. In this way, the subset $\Lambda$ is covered, and we obtain all representative Pareto optimal policies that are equivalent to others in terms of their values on each objective.} Please see more discussion from the algorithmic perspective in Remark \ref{rm:tch-unique-Pareto}. Such favorable properties of both the controllability of the solutions and the full coverage of Pareto optimal policies motivate us to develop efficient algorithms to solve the problem of $\min_{\pi\in\Pi} \tchl(\pi)$. 
In the following sections, we propose such efficient algorithms and prove their sample efficiency.

\begin{figure}[t!] 
	
	\centering
	\begin{subfigure}[b]{0.32\textwidth}
		\centering
		\includegraphics[height=1.8in]{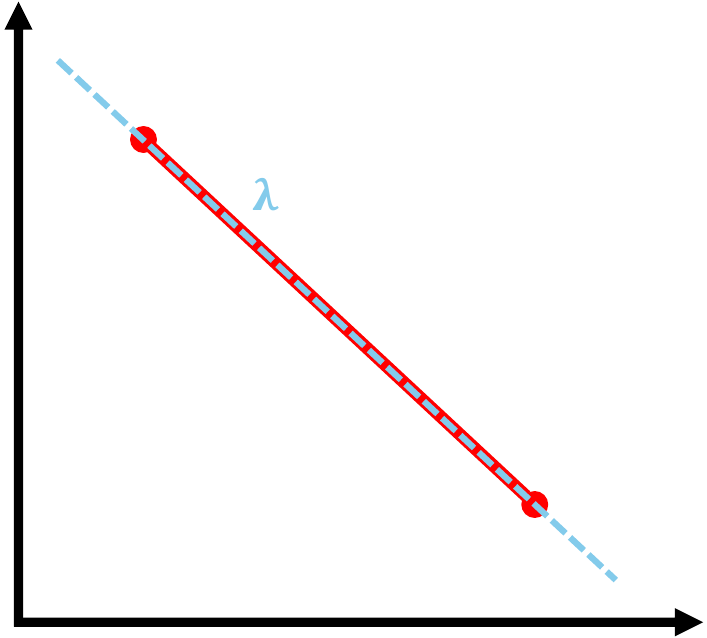}
		\caption{\small Linear}
	\end{subfigure}%
	~ 
	\begin{subfigure}[b]{0.32\textwidth}
		\centering
		\includegraphics[height=1.8in]{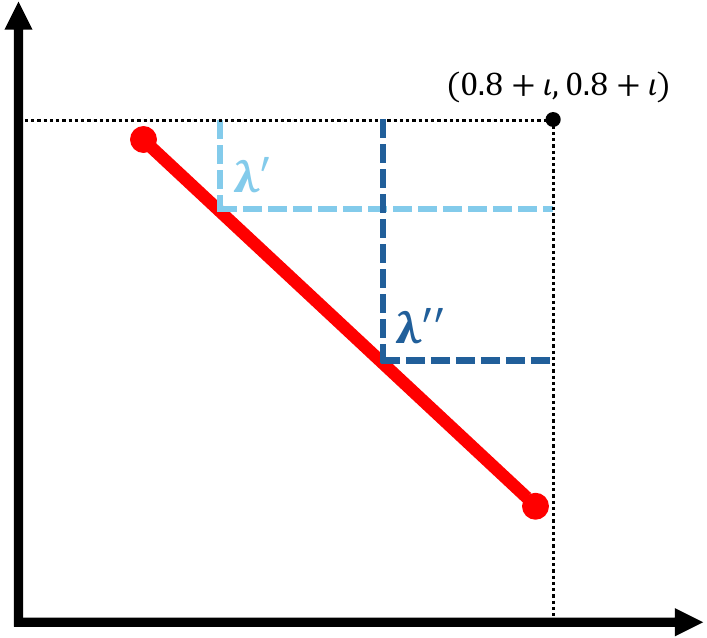}
		\caption{\small Tchebycheff}
	\end{subfigure}
	\begin{subfigure}[b]{0.32\textwidth}
		\centering
		\includegraphics[height=1.8in]{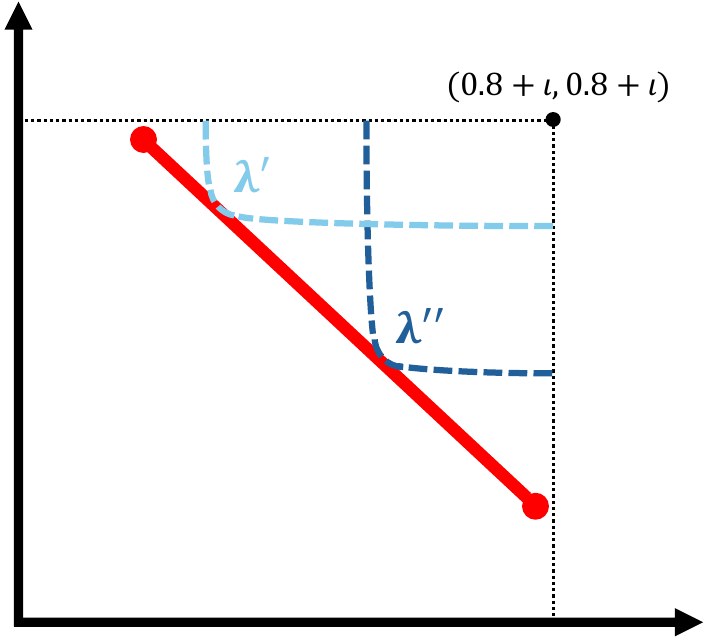}
		\caption{\small Smooth Tchebycheff}
	\end{subfigure}
	\caption{\small Example of Scalarization Methods for Stochastic Policy Class. Consider a bi-objective multi-arm bandit problem with stochastic policies, whose reward functions are $r_1(a_1)=0.2, r_1(a_2)=0.8$ and $r_2(a_1)=0.8, r_2(a_2)=0.2$. The red dots in the figures are $(r_1(a_1),r_2(a_1))$ and $(r_1(a_2),r_2(a_2))$. The red lines represent reward values under all stochastic Pareto optimal policies. We show the level sets of different scalarization functions in terms of $(r_1,r_2)$, i.e., Linear scalarization $\sum_{i=1}^2 \lambda_i r_i$, Tchebycheff scalarization $\max_i \lambda_i (0.8+\iota-r_i)$, and smooth Tchebycheff scalarization $\mu\log\sum_{i=1}^2 e^{\frac{\lambda_i (0.8+\iota-r_i)}{\mu}}$. The blue dotted lines are the level sets for the optimal (maximal or minimal) scalarization function values with $(r_1,r_2)$ defined on the red lines. Then, linear scalarization does not differentiate different Pareto values, but (smooth) Tchebycheff scalarization can identify each point by setting different $\blambda$, showing better solution controllability.}
	\label{fig:comp}
\end{figure}

\vspace{5pt}

\begin{remark}[Linear Scalarization \emph{vs.} Tchebycheff Scalarization.] \label{re:LinvsTch} From the above propositions, we can observe that for the deterministic policy class, Tchebycheff scalarization can find all Pareto optimal policies, whereas linear scalarization fails to do so. Additionally, for the stochastic policy class, we discover that it is computationally difficult to identify all representative Pareto optimal policies utilizing linear scalarization under certain situations. Particularly, when $\bV_1^\pi(s_1)$ for a set of Pareto optimal policies $\pi$ lie on a hyperplane in the form of $\overline{\blambda}^\top \bV_1^\pi(s_1)=C$ for some $C>0$ and $\overline{\blambda}$, then we need to find all the solutions to $\max_\pi \mathrm{LIN}_{\overline{\blambda}}(\pi)$, since all these Pareto optimal policies have the same preference $\overline{\blambda}$ defined based on linear scalarization.
In contrast, Proposition \ref{prop:tch-unique} indicates that for each of $\bV_1^\pi(s_1)$ on $\overline{\blambda}^\top \bV_1^\pi(s_1)=C$ under different $\pi$'s, we can use different $\blambda$'s that are associated with distinct $\pi$'s and solve $\min_\pi \tch_{\blambda}(\pi)$ to obtain one representative policy for each $\blambda$. In Figure \ref{fig:comp}, we illustrate the above discussion via a bi-objective multi-arm bandit example. The discussion above implies that Tchebycheff scalarization has better controllability over its solution and thus is recognized as a favorable scalarization method for MORL. Furthermore, in Section \ref{sec:stch}, we study an extension of Tchebycheff scalarization named smooth Tchebycheff scalarization, which also has a better controllability of its solution compared to the linear scalarization. We refer readers to Section \ref{sec:stch} for more details, where we prove that the smooth Tchebycheff scalarization offers a more advantageous property compared to the original Tchebycheff scalarization in finding Pareto optimal policies. In Table \ref{tab:comparison}, we provide a detailed comparison of different scalarization methods. 	
\end{remark}

\section{MORL via Tchebycheff Scalarization} \label{sec:tchs}

In this section, we propose an efficient algorithm to solve $\min_{\pi\in\Pi} \tchl(\pi)$ for any given $\blambda$. We note that since all our proposed algorithms in this paper allow mixture policies and utilize the minimax theorem \citep{v1928theorie}, according to \citet{wu2021offline,miryoosefi2022simple}, the proposed algorithms only solve the problem within a \emph{stochastic} policy class $\Pi$. However, for the deterministic policy class, one may need different algorithmic design ideas to solve this problem. We leave designing efficient algorithms for the deterministic policy class as an open question.

\vspace{5pt}
\noindent\textbf{Reformulation of Tchebycheff Scalarization.} We note that although \eqref{eq:tch} has favorable properties, it is a non-smooth function, and thus $\min_{\pi\in\Pi} \tchl(\pi)$ is difficult to solve directly. To resolve this challenge, we propose the following equivalent form for Tchebycheff scalarization loss.

\begin{proposition}[Equivalent form of TCH] \label{prop:eq-tch}  Letting $\bw=(w_1,w_2,\cdots, w_m) \in\Delta_m$, the Tchebycheff scalarization is reformulated as $\tchl(\pi) = \max_{\bw \in\Delta_m} \max_{\{\nu_i\in\Pi\}_{i=1}^m} \sum_{i=1}^m w_i\lambda_i (V_{i,1}^{\nu_i}(s_1) + \iota - V_{i,1}^\pi(s_1))$. Then, the minimization problem of Tchebycheff scalarization is reformulated as	
	\begin{align}
		 \min_{\pi\in\Pi} \tchl(\pi) = \min_{\pi\in\Pi} \max_{\bw \in\Delta_m} \max_{\{\nu_i\in\Pi\}_{i=1}^m} \sum_{i=1}^m w_i\lambda_i \big(V_{i,1}^{\nu_i}(s_1) + \iota - V_{i,1}^\pi(s_1)\big). \label{eq:eq-tch}
	\end{align}
\end{proposition}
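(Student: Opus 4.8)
The plan is to establish the inner identity
$$\tch_{\blambda}(\pi) = \max_{\bw\in\Delta_m}\max_{\{\nu_i\in\Pi\}_{i=1}^m} \sum_{i=1}^m w_i\lambda_i\big(V_{i,1}^{\nu_i}(s_1)+\iota-V_{i,1}^\pi(s_1)\big),$$
after which the reformulation of the minimization problem follows by taking $\min_{\pi\in\Pi}$ on both sides. The key observation is twofold. First, for each fixed $i$, since $\Pi$ contains the policy achieving $V_{i,1}^*(s_1)=\max_{\nu\in\Pi}V_{i,1}^\nu(s_1)$, we have $\max_{\nu_i\in\Pi} V_{i,1}^{\nu_i}(s_1) = V_{i,1}^*(s_1)$; and because $\lambda_i\geq 0$ and $w_i\geq 0$, the coefficient of $V_{i,1}^{\nu_i}(s_1)$ in the sum is nonnegative, so the inner maximization over $\{\nu_i\}_{i=1}^m$ decouples across $i$ and is attained by plugging in $V_{i,1}^*(s_1)$ for each term. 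This reduces the right-hand side to $\max_{\bw\in\Delta_m}\sum_{i=1}^m w_i\,\lambda_i\big(V_{i,1}^*(s_1)+\iota-V_{i,1}^\pi(s_1)\big)$.

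Second, I would invoke the standard fact that maximizing a linear functional $\bw\mapsto\sum_i w_i c_i$ over the probability simplex $\Delta_m$ attains its maximum at a vertex, so $\max_{\bw\in\Delta_m}\sum_i w_i c_i = \max_{i\in[m]} c_i$. Applying this with $c_i := \lambda_i\big(V_{i,1}^*(s_1)+\iota-V_{i,1}^\pi(s_1)\big)$ yields exactly $\max_{i\in[m]}\{\lambda_i(V_{i,1}^*(s_1)+\iota-V_{i,1}^\pi(s_1))\}=\tch_{\blambda}(\pi)$ by Definition~\ref{def:tch}. Composing the two steps gives the claimed inner identity; taking $\min_{\pi\in\Pi}$ then produces \eqref{eq:eq-tch}.

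One subtlety worth addressing carefully is the order and legitimacy of interchanging the two $\max$ operators: we want $\max_{\bw}\max_{\{\nu_i\}}\sum_i w_i\lambda_i(V_{i,1}^{\nu_i}(s_1)+\iota-V_{i,1}^\pi(s_1))$ to equal $\max_{\{\nu_i\}}\max_{\bw}(\cdots)$. Since both feasible sets $\Delta_m$ and $\Pi^m$ are compact and the objective is continuous (indeed jointly continuous, being multilinear in the relevant arguments), the double maximum is unambiguous and the order is irrelevant — this is just the fact that $\max$ over a product set equals the iterated $\max$ in either order. The decoupling argument in the first step is most transparent if we first fix $\bw$, maximize over $\{\nu_i\}$ term by term (legitimate because each term $w_i\lambda_i V_{i,1}^{\nu_i}(s_1)$ depends only on $\nu_i$ and has a nonnegative coefficient), and only then maximize over $\bw$.

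I do not anticipate a genuine obstacle here; the proposition is essentially a bookkeeping identity combining the variational representation of a max-of-$m$-terms as a linear program over the simplex with the fact that $\Pi$ realizes each individual per-objective optimum $V_{i,1}^*(s_1)$. The only thing to be slightly careful about is the sign bookkeeping — ensuring $w_i\lambda_i\geq 0$ so that substituting the maximizer $V_{i,1}^*(s_1)$ (rather than a minimizer) is correct — and noting explicitly that $\iota$ and $V_{i,1}^\pi(s_1)$ are constants with respect to the $\bw$ and $\{\nu_i\}$ maximizations, so they pass through untouched.
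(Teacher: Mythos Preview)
Your proposal is correct and matches the paper's own proof essentially step for step: the paper uses the identity $\max_{i\in[m]}\{x_i\}=\max_{\bw\in\Delta_m}\sum_i w_i x_i$ and then substitutes $V_{i,1}^*(s_1)=\max_{\nu_i\in\Pi}V_{i,1}^{\nu_i}(s_1)$, pulling the inner $\max$ outside using $w_i\lambda_i\ge 0$. Your write-up merely reverses the order of these two steps and adds a (harmless) remark on interchanging the two $\max$ operators; otherwise the arguments are identical.
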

Proposition \ref{prop:eq-tch} converts the original problem into a min-max-max problem, which can avoid a non-smooth objective function and thus is expected to have a nice convergence performance. This formulation can better fit the UBC-based algorithmic framework, which motivates us to design efficient optimistic MORL algorithms. Furthermore, we remark that the above proposition also offers a feasible reformulation of Tchebycheff scalarization that can be better applied to multi-objective stochastic optimization or multi-objective online learning, which are not restricted to MORL.

\vspace{5pt}
\noindent\textbf{Algorithm.} We now propose an algorithm named \texttt{TchRL} to solve \eqref{eq:eq-tch}, as summarized in Algorithm \ref{alg:online-tch}, that can efficiently learn the (weakly) Pareto optimal solution for a given preference vector $\blambda$. 
 According to \eqref{eq:eq-tch}, to solve such a min-max-max optimization problem, Algorithm \ref{alg:online-tch} features three alternate updating procedures to update $\bw^t$, auxiliary policies $\nu_i^t$ for $i\in[m]$, and the main policy $\pi^t$, along with optimistic estimation of Q-functions to facilitate exploration. 

Specifically, as \eqref{eq:eq-tch} aims to maximize $\bw$, Line \ref{line:tch-w} performs a mirror ascent step, which can keep any of its iterates in $\Delta_m$ without an extra projection step. The update of $\bw^t$ is equivalent to solving a maximization problem as 
\begin{align*}
\max_{\bw\in\Delta_m} \eta\langle \bw, \blambda \odot (\tilde{\bV}_1^{t-1}(s_1)+\biota -\bV_1^{t-1}(s_1)) \rangle -  D_{\mathrm{KL}}( \bw, \bw^{t-1}),	
\end{align*}
where $D_{\mathrm{KL}}(\bx,\by):=\sumi x_i \log(x_i/y_i)$ is the Kullback–Leibler (KL) divergence, $\biota$ is an all-$\iota$ vector, and $\tilde{\bV}_1^{t-1}:=(\tilde{V}_{1,1}^{t-1}, \cdots,\tilde{V}_{m,1}^{t-1})$, and $\bV_1^{t-1}:=(V_{1,1}^{t-1}, \cdots,V_{m,1}^{t-1})$ from the last round. In addition, Algorithm \ref{alg:online-tch} updates two types of policies, i.e., the auxiliary policies $\tilde\nu_i^t, \forall i\in[m],$ and the main policy $\pi^t$, respectively in Line \ref{line:tch-auxpolicy} and Line \ref{line:tch-mainpolicy}. The auxiliary policy $\tilde\nu_i^t$ tracks the individual optimal policy $\nu_i^*:=\argmax_{\nu_i} V_{i,1}^{\nu_i}(s_1)$ for the $i$-th objective associated with the reward function $r_i$, while $\pi^t$ is the estimate of $\pi_{\blambda}^*:=\min_{\pi\in\Pi} \tch_{\blambda}(\pi)$, namely, the (weakly) Pareto optimal policy under the given preference $\blambda$. We further employ two Q-functions, $\tilde Q_{i,h}^t$ and $Q_{i,h}^t$, as estimates of $Q_{i,h}^{\nu_i^*}$ and $Q_{i,h}^{\pi_{\blambda}^*}$. Thus, $\tilde\nu_i^t$ is obtained solely based on $\tilde Q_{i,h}^t$, while $\pi^t$ is updated by a linear combination of $\tilde Q_{i,h}^t$ over all $i\in[m]$ by $\bw^t \odot \blambda$, incorporating the preference $\blambda$ and the current weight $\bw^t$. Due to the greedy updating rules for both policies,  $\tilde\nu_{i,h}^t$ and $\pi_h^t$ are hence deterministic policies based on $\tilde Q_{i,h+1}^t$ and $(\bw^t\odot\blambda)^\top Q_h^t$. Here, the Q-functions $\tilde Q_{i,h}^t$ and $Q_{i,h}^t$ are constructed via ``the principle of optimism in the face of uncertainty'' by calling Subroutine \ref{alg:opt-lstd}, which will be elaborated below. In Lines \ref{line:tch-auxsample} and \ref{line:tch-mainsample}, we further sample trajectories following $\tilde{\nu}_i^t$ and $\pi_i^t$ to update the datasets $\tilde\cD_{i,h}$ and $\cD_{i,h}$ respectively for the construction of $\tilde Q_{i,h}^{t+1}$ and $Q_{i,h}^{t+1}$ in the next round.

The function \texttt{OptQ} in Subroutine \ref{alg:opt-lstd} performs optimistic Q-function construction based on the collected trajectories $\{s_h^\tau,a_h^\tau,s_{h+1}^\tau, r_{i,h}^\tau(s_h^\tau,a_h^\tau)\}_{h,\tau=1}^{H,\ell-1}$, and returns the optimistic Q-function $Q_{i,h}$. It first estimates the reward functions $\hat{r}_i^\ell$ and transition model $\hat{\PP}^\ell$ as well as bonus terms $\Psi_{i,h}^\ell$ and $\Phi_h^\ell$ for such a construction. The bonus terms will then have larger values for less-explored state-action pairs and thus facilitate the exploration of those state-action pairs during the learning process. Our work considers a tabular case such that the reward functions and transition are estimated by 
\small
\begin{align}
	\hat{r}_{i,h}^\ell(s,a)= \frac{\sum_{\tau=1}^{\ell-1}\mathbf{1}_{\{(s,a)=(s_h^\tau,a_h^\tau)\}}r_{i,h}^\tau}{ N_h^{\ell-1}(s,a)\vee 1}, \quad \hat{\PP}_h^\ell(s'|s,a) = \frac{N_h^{\ell-1}(s,a,s')}{ N_h^{\ell-1}(s,a)\vee 1}, \label{eq:estimate}
\end{align}
\normalsize
where $\mathbf{1}_{\{\cdot\}}$ is an indicator function, $N_h^{\ell-1}(s,a,s') = \sum_{\tau=1}^{\ell-1}\mathbf{1}_{\{(s,a,s')=(s_h^\tau,a_h^\tau,s_{h+1}^\tau)\}}$   and  $N_h^{\ell-1}(s,a) = \sum_{\tau=1}^{\ell-1}\mathbf{1}_{\{(s,a)=(s_h^\tau,a_h^\tau)\}}$. Denoting $x\vee y=\max\{x,y\}$ and $x\wedge y=\min\{x,y\}$, the reward bonus and transition bonus terms are constructed by 
\begin{align}
	\Psi_{i,h}^\ell(s,a)=\sqrt{\frac{2\log(8m|\cS||\cA|HT/\delta)}{ N_h^{\ell-1}(s,a)\vee 1}} \wedge 1,	\quad  \Phi_h^\ell(s,a)=\sqrt{\frac{2H^2|\cS|\log(8m|\cS||\cA|HT/\delta)}{N_h^{\ell-1}(s,a)\vee 1}} \wedge H, \label{eq:bonus}
\end{align}
such that the true reward and transition satisfy $|\hat{r}_{i,h}^{\ell-1}(s,a) - r_{i,h}(s,a)| \leq \Psi_{i,h}^\ell(s,a)$ and $|\hat{\PP}_h^\ell V(s,a) - \PP_h V(s,a)| \leq \Phi_h^\ell(s,a)$ for any function $V:\cS\mapsto [0,H]$ with probability at least $1-\delta$. This relation can be verified by applying Hoeffding's inequality, as shown in our appendix.

\begin{algorithm}[tb]
	\small
	\caption{\texttt{TchRL}: Multi-Objective RL via Tchebycheff Scalarization}
	\label{alg:online-tch}
	\setstretch{1.15}
	\begin{algorithmic}[1]
		\State {\bfseries Initialize:} $\blambda$, $\pi_h^0(\cdot|\cdot)=\nu_{i,h}^0(\cdot|\cdot)=\Unif(\cA)$, $\cD_i = \hat{\cD}_i =\emptyset$, $w_i^0 = 1/m$, $V_{i,h}^0(\cdot)=\tilde{V}_{i,h}^0(\cdot)=0, \forall (i,h)$
		
		\For{$t=1,2,\ldots,T$}
		\State\label{line:tch-w}Update $\bw$ via $w_i^t \propto w_i^{t-1} \cdot \exp\{ \eta \cdot \lambda_i  (\tilde{V}_{i,1}^{t-1}(s_1)+\iota -V_{i,1}^{t-1}(s_1))\},~\forall i\in[m]$

		\For{$h=H,H-1,\ldots,1$}  
		\State\label{line:tch-auxQ}$\tilde{Q}_{i,h}^t = \texttt{OptQ}\big(\tilde{\cD}_{i,h}, \tilde{V}_{i,h+1}^t \big)$ where $\tilde{V}_{i,h+1}^t(\cdot)=\langle  \tilde{Q}_{i,h+1}^t(\cdot, \cdot), \tilde\nu_{i,h+1}^t(\cdot|\cdot) \rangle_\cA, ~\forall i\in[m]$
		
		\State\label{line:tch-auxpolicy}Update auxiliary policies $\tilde\nu_{i,h}^t=\argmax_{\nu_h}\langle \tilde{Q}_{i,h}^t(\cdot, \cdot), \nu_h(\cdot|\cdot) \rangle_\cA, ~\forall i\in[m]$ 
		
		\State\label{line:tch-mainQ}$Q_{i,h}^t = \texttt{OptQ}\big(\cD_{i,h}, V_{i,h+1}^t\big)$ where $V_{i,h+1}^t(\cdot)=\langle  Q_{i,h+1}^t(\cdot, \cdot), \pi_{h+1}^t(\cdot|\cdot) \rangle_\cA, \forall i\in[m]$
		
		\State\label{line:tch-mainpolicy}Update main policy $\pi_h^t=\argmax_{\pi_h}\langle (\bw^t\odot\blambda)^\top \bQ_h^t(\cdot, \cdot), \pi_h(\cdot|\cdot) \rangle_\cA$ 
		
		\EndFor
		\State\label{line:tch-auxsample}Sample data $\{\tilde{s}_{i,h}^t,\tilde{a}_{i,h}^t, \tilde{s}_{i,h+1}^t, \tilde{r}_{i,h}^t\}_{h=1}^H\sim (\tilde\nu_i^t, \PP)$ and append it to $\tilde{\cD}_{i,h}, ~\forall (i,h)$
		
		\State\label{line:tch-mainsample}Sample data $\{s_h^t,a_h^t, s_{h+1}^t, (r_{i,h}^t)_{i=1}^m\}_{h=1}^H\sim (\pi^t, \PP)$ and append $(s_h^t,a_h^t, s_{h+1}^t, r_{i,h}^t)$ to $\cD_{i,h}, ~\forall (i,h)$
		
		\EndFor 
		\State {\bfseries Return:} $\{\pi^t\}_{t=1}^T$
	\end{algorithmic}
	\normalsize
\end{algorithm}

\begin{subroutine}[t]
	\small
	\caption{\texttt{OptQ:} Optimistic Q-Function Construction}
	\label{alg:opt-lstd}
	\setstretch{1.15}
	\begin{algorithmic}[1]
		\Statex\hspace{-0.6cm} {\bfseries Function:} \texttt{OptQ}$\big(\{s_h^\tau,a_h^\tau,s_{h+1}^\tau, r_{i,h}^\tau(s_h^\tau,a_h^\tau)\}_{h,\tau=1}^{H,\ell-1}, V_{i,h+1}\big)$ \Comment{{\color{gray}Construct optimistic Q-function}}
		\State Calculate reward and transition estimates $\hat{r}_h^\ell$, $\hat{\PP}_h^\ell$ via \eqref{eq:estimate}
		\State Calculate reward and transition bonuses $\Psi_{i,h}^\ell$, $\Phi_h^\ell$ via \eqref{eq:bonus}
		
		\State $Q_{i,h}(\cdot,\cdot) = \{ (\hat{r}_{i,h}^\ell + \hat{\PP}_h^\ell V_{i,h+1} + \Phi_h^\ell+\Psi_{i,h}^\ell)(\cdot,\cdot)\}_{[0,H-h+1]}$   
		\State {\bfseries Return:} 
		$\{Q_{i,h}\}_{h=1}^H$	
	\end{algorithmic}
	\normalsize
\end{subroutine}

\vspace{5pt}
\noindent\textbf{Theoretical Result.} The following theorem provides a theoretical guarantee for Algorithm \ref{alg:online-tch}.
\begin{theorem}[Theoretical Guarantee for \texttt{TchRL}] \label{thm:tch}
	Setting $\eta = \sqrt{\log m /(H^2T)}$, letting $\hat \pi:= \Mix(\pi^1,\pi^2,\cdots, \pi^T)$ be a mixture of the learned policies, $\pi_{\blambda}^*\in\argmin_{\pi\in\Pi} \tchl (\pi)$ for any preference vector $\blambda\in\Delta_m$, then with probability at least $1-\delta$, after $T$ rounds of Algorithm \ref{alg:online-tch}, we obtain
	\begin{align*}
		&\tchl(\hat{\pi}) - \tchl(\pi_{\blambda}^*) \leq \cO\Big(\sqrt{H^4 |\cS|^2|\cA| \varrho/T} \Big),
	\end{align*}
	where $\varrho:=\log (m|\cS||\cA|HT/\delta)$.
\end{theorem}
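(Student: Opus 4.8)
The plan is to run the standard optimism-plus-online-learning argument inside the min-max-max reformulation \eqref{eq:eq-tch}: UCB-style optimism controls the model-estimation error, the exponential-weights update of $\bw^t$ contributes a no-regret term, and linearity of the value function in the mixture policy $\hat\pi$ lets us collapse everything onto per-round quantities. First I would fix the good event $\mathcal{G}$ on which, for all $\ell\le T$, $h\in[H]$, $(s,a)\in\cS\times\cA$, $i\in[m]$, one has $|\hat r_{i,h}^{\ell}(s,a)-r_{i,h}(s,a)|\le\Psi_{i,h}^{\ell}(s,a)$ and $|\hat\PP_h^{\ell}V(s,a)-\PP_h V(s,a)|\le\Phi_h^{\ell}(s,a)$ uniformly over $V:\cS\to[0,H]$; $\PP(\mathcal{G})\ge 1-\delta$ follows from Hoeffding for the rewards, an $\ell_1$-ball (Weissman-type) concentration for the transition rows — this is what yields the $\sqrt{|\cS|}$ factor in $\Phi_h^{\ell}$ rather than $|\cS|$ — and a union bound matching the $\log(8m|\cS||\cA|HT/\delta)$ in \eqref{eq:bonus}. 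All subsequent claims are on $\mathcal{G}$. By backward induction on $h$ I would prove the two optimism statements: (i) $\tilde Q_{i,h}^{t}\ge Q_{i,h}^{\nu_i^{*}}$ for every $(i,h,t)$, hence $\tilde V_{i,1}^{t}(s_1)\ge V_{i,1}^{*}(s_1)$ by the greedy choice of $\tilde\nu_{i}^{t}$; and (ii) $\sum_{i}w_i^{t}\lambda_i Q_{i,h}^{t}(s,a)\ge\sum_{i}w_i^{t}\lambda_i Q_{i,h}^{\pi}(s,a)$ for every $\pi$, hence $\sum_{i}w_i^{t}\lambda_i V_{i,1}^{t}(s_1)\ge\max_{\pi}\sum_{i}w_i^{t}\lambda_i V_{i,1}^{\pi}(s_1)$ by the greedy choice of $\pi^{t}$ — the clipping to $[0,H-h+1]$ never hurts because the comparator Q-functions are bounded by $H-h+1$. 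In parallel, the one-step recursion gives the one-sided ``overestimation $\le$ accumulated bonus'' bounds $0\le\tilde V_{i,1}^{t}(s_1)-V_{i,1}^{*}(s_1)\le\tilde\xi_i^{t}$ (using also $V_{i,1}^{\tilde\nu_i^{t}}\le V_{i,1}^{*}$) and $0\le V_{i,1}^{t}(s_1)-V_{i,1}^{\pi^{t}}(s_1)\le\xi_i^{t}$, where $\tilde\xi_i^{t}:=\sum_h\EE[\,2\Psi_{i,h}^{t}+2\Phi_h^{t}\mid\tilde\nu_i^{t},\PP\,]$ and $\xi_i^{t}$ is the same with $\pi^{t}$.

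Next I would collapse to per-round terms. Since $\hat\pi=\Mix(\pi^1,\dots,\pi^T)$ has $V_{i,1}^{\hat\pi}(s_1)=\frac1T\sum_{t}V_{i,1}^{\pi^{t}}(s_1)$, Definition \ref{def:tch} gives $\tchl(\hat\pi)=\max_{i}\frac1T\sum_{t}\lambda_i(V_{i,1}^{*}(s_1)+\iota-V_{i,1}^{\pi^{t}}(s_1))=\max_{\bw\in\Delta_m}\langle\bw,\frac1T\sum_{t}\mathbf{g}^{t}\rangle$ with $\mathbf{g}^{t}:=\blambda\odot(\bV_1^{*}(s_1)+\biota-\bV_1^{\pi^{t}}(s_1))$; the maximizer $\bw^{\star}$ is a single, time-independent point of $\Delta_m$, so $T\cdot\tchl(\hat\pi)=\sum_{t}\langle\bw^{\star},\mathbf{g}^{t}\rangle$. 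Let $\hat{\mathbf{g}}^{t}:=\blambda\odot(\tilde\bV_1^{t}(s_1)+\biota-\bV_1^{t}(s_1))$ be the exact vector fed into the mirror-ascent update of $\bw^{t+1}$; the optimism bounds above give $\|\mathbf{g}^{t}-\hat{\mathbf{g}}^{t}\|_\infty\le\Delta^{t}:=\max_i\max(\xi_i^{t},\tilde\xi_i^{t})$, since in each coordinate the two deviations $V_{i,1}^{*}-\tilde V_{i,1}^{t}\in[-\tilde\xi_i^t,0]$ and $V_{i,1}^{t}-V_{i,1}^{\pi^{t}}\in[0,\xi_i^t]$ are individually bonus-bounded. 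Because $\|\bw^{\star}\|_1=1$ and the exponential-weights iterate satisfies $\sum_{t}\langle\bw^{\star}-\bw^{t},\hat{\mathbf{g}}^{t}\rangle\le\frac{\log m}{\eta}+\frac{\eta}{2}\sum_{t}\|\hat{\mathbf{g}}^{t}\|_\infty^{2}=\cO(H\sqrt{T\log m})$ for the stated $\eta$ (using $\|\hat{\mathbf{g}}^{t}\|_\infty\le H+\iota$), I obtain $T\cdot\tchl(\hat\pi)\le\sum_{t}\langle\bw^{t},\hat{\mathbf{g}}^{t}\rangle+\cO(H\sqrt{T\log m})+\sum_{t}\Delta^{t}$.

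The comparator step closes the argument. Expanding $\langle\bw^{t},\hat{\mathbf{g}}^{t}\rangle=\sum_i w_i^{t}\lambda_i\tilde V_{i,1}^{t}(s_1)+\iota\langle\bw^{t},\blambda\rangle-\sum_i w_i^{t}\lambda_i V_{i,1}^{t}(s_1)$, then bounding $\tilde V_{i,1}^{t}(s_1)\le V_{i,1}^{*}(s_1)+\tilde\xi_i^{t}$ and applying (ii) with the fixed comparator $\pi=\pi_{\blambda}^{*}$, i.e. $\sum_i w_i^{t}\lambda_i V_{i,1}^{t}(s_1)\ge\sum_i w_i^{t}\lambda_i V_{i,1}^{\pi_{\blambda}^{*}}(s_1)$, yields $\langle\bw^{t},\hat{\mathbf{g}}^{t}\rangle\le\sum_i w_i^{t}\bigl[\lambda_i(V_{i,1}^{*}(s_1)+\iota-V_{i,1}^{\pi_{\blambda}^{*}}(s_1))\bigr]+\max_i\tilde\xi_i^{t}\le\tchl(\pi_{\blambda}^{*})+\max_i\tilde\xi_i^{t}$, because the bracket is nonnegative with maximum over $i$ equal to $\tchl(\pi_{\blambda}^{*})$ and $\bw^{t}$ is a distribution. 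Summing over $t$, dividing by $T$, and invoking the standard pigeonhole/Cauchy--Schwarz bounds on cumulative bonuses along any policy sequence — $\sum_{t}\sum_h\EE[\Psi_{i,h}^{t}\mid\cdot]=\tilde\cO(\sqrt{H^{2}|\cS||\cA|T\varrho})$ and $\sum_{t}\sum_h\EE[\Phi_h^{t}\mid\cdot]=\tilde\cO(\sqrt{H^{4}|\cS|^{2}|\cA|T\varrho})$, the latter dominating (and with the transition counts shared across objectives for the main policy) — gives $\tchl(\hat\pi)-\tchl(\pi_{\blambda}^{*})\le\cO(\sqrt{H^{4}|\cS|^{2}|\cA|\varrho/T})$ after noting that the $\cO(H\sqrt{\log m/T})$ no-regret term is lower order.

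The step I expect to be the main obstacle is the comparator/optimism interplay together with controlling $\|\mathbf{g}^{t}-\hat{\mathbf{g}}^{t}\|_\infty$: the weight player is driven by the optimistic surrogate $\hat{\mathbf{g}}^{t}$ rather than the true gradient $\mathbf{g}^{t}$, and since $\pi^{t}$ over-estimates its own value, $\hat{\mathbf{g}}^{t}$ is neither a coordinatewise upper nor a lower bound on $\mathbf{g}^{t}$; the argument only goes through because the auxiliary values are over-estimated while the main combined value is over-estimated relative to the fixed comparator $\pi_{\blambda}^{*}$, so the two one-sided errors are each bonus-bounded and feed into exactly the UCB sums one wants. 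A secondary technical point is making the transition concentration uniform over all $V:\cS\to[0,H]$ with only a $\sqrt{|\cS|}$ (not $|\cS|$) factor in $\Phi_h^{\ell}$, which forces an $\ell_1$-ball concentration inequality rather than a naive coordinatewise union bound over $\cS$.
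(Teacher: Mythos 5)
Your overall route is the paper's route: a good event from Hoeffding for rewards and an $\ell_1$/covering bound for transition rows (giving the $\sqrt{|\cS|}$ scaling in $\Phi_h^\ell$), optimism of $\tilde Q_{i,h}^t$ and of the $(\bw^t\odot\blambda)$-weighted $Q_{i,h}^t$ with the comparator $\pi_{\blambda}^*$, the exponential-weights regret for $\bw^t$ driven by the surrogate $\hat{\mathbf g}^t=\blambda\odot(\tilde\bV_1^t(s_1)+\biota-\bV_1^t(s_1))$, and pigeonhole bounds on cumulative bonuses. Your packaging through $\mathbf g^t$, $\hat{\mathbf g}^t$ and $\Delta^t$ is a rearrangement of the paper's Err(I)/Err(II)/Err(III) decomposition, and your step $\sum_i w_i^t\lambda_i V_{i,1}^t(s_1)\ge\sum_i w_i^t\lambda_i V_{i,1}^{\pi_{\blambda}^*}(s_1)$ is exactly how the paper uses the greedy update of $\pi^t$.

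The one step that, as written, does not deliver the stated bound is the aggregation of the model-error terms over rounds. You drop the preference weights before summing over $t$: you bound $\sum_i w_i^t\lambda_i\tilde\xi_i^t\le\max_i\tilde\xi_i^t$ and define $\Delta^t=\max_i\max(\xi_i^t,\tilde\xi_i^t)$, and then invoke the per-objective cumulative bonus bounds $\sum_t\xi_i^t,\ \sum_t\tilde\xi_i^t=\tilde\cO(\sqrt{H^4|\cS|^2|\cA|T\varrho})$ to conclude. But $\sum_t\max_i(\cdot)$ is not controlled by $\max_i\sum_t(\cdot)$; the only generic relation is $\max_i\le\sum_i$, which can cost a factor of up to $m$ (and this loss cannot be ruled out from the per-objective bounds alone, since each auxiliary policy has its own dataset and the argmax objective may rotate across rounds). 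So the argument as stated only yields $\tilde\cO\big(m\sqrt{H^4|\cS|^2|\cA|\varrho/T}\big)$, not the theorem's $m$-free rate. The fix is exactly the paper's bookkeeping: keep the weights attached, bound $\max_i\lambda_i(\cdot)\le\sum_i\lambda_i(\cdot)$ and $\sum_i w_i^t\lambda_i(\cdot)\le\sum_i\lambda_i(\cdot)$ for the nonnegative per-round brackets, swap the order of summation over $i$ and $t$, and use $\sum_i\lambda_i=1$ together with the bonus-sum bounds, which hold uniformly over $i$ on the good event. A secondary, minor point: your $\xi_i^t,\tilde\xi_i^t$ are expected bonuses under the round-$t$ policies, whereas the pigeonhole argument controls realized bonuses on the sampled trajectories; passing between the two needs one Azuma step (lower order), and "along any policy sequence" should be "along the policies generating the corresponding datasets" — the paper's on-policy error lemma handles both in a single stroke by putting the martingale on the value side.
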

When we set $T=\tilde{\cO}(H^4 |\cS|^2|\cA|/\varepsilon^2)$, we are able to achieve $\tchl(\hat{\pi}) - \tchl(\pi_{\blambda}^*)\leq \varepsilon$, where $\tilde\cO$ hides logarithmic dependence on $\varepsilon^{-1}$, $H$, $|\cS|$, $|\cA|$, and $m$. It implies that for one $\blambda$, Algorithm \ref{alg:online-tch} requires $\tilde{\cO}(m H^4 |\cS|^2|\cA|/\varepsilon^2)$ sample complexity to achieve an $\varepsilon$-error, where the extra factor $m$ is from the $m$ rounds of exploration in Lines \ref{line:tch-auxsample} and \ref{line:tch-mainsample}.  

Denoting $\hat\pi$ as a mixture of $\{\pi^1,\pi^2,\cdots, \pi^T\}$, we let $\hat\pi$ be the output of this algorithm, following prior RL works \citet{miryoosefi2022simple,wu2021offline,altman2021constrained}. The policy $\hat{\pi}$ is executed by randomly selecting a policy $\pi^t$ from $\{\pi^1,\pi^2,\cdots, \pi^T\}$ with equal probability beforehand and then exclusively following $\pi^t$ thereafter, which implies $V^{\hat\pi}_{i,1}(s_1)= \frac{1}{T} \sumt V^{\pi^t}_{i,1}(s_1)$ \citep{wu2021offline,miryoosefi2022simple} for theoretical analysis. 

Moreover, we can also use the notion of the occupancy measure to estimate $\hat\pi$. Specifically, denoting by $\theta_h(s,a;\pi)$ the joint distribution of $(s,a)$ at step $h$ induced by $\pi$ and $\PP$, we have $V^{\pi}_{i,1}(s_1)=\sumh\sum_{s\in\cS}\sum_{a\in\cA} \theta_h(s,a;\pi) r_{i,h}(s,a)$. The occupancy measure can be dynamically calculated as $\theta_{h+1}(s',a';\pi) = \sum_{s\in\cS}\sum_{a\in\cA}\theta_h(s,a;\pi)\PP_h(s'|s,a)\pi_{h+1}(a'|s')$. The policy $\pi$ can be recovered as $\pi_h(a|s)=\frac{\theta_h(s,a;\pi)}{\sum_{a\in\cA}\theta_h(s,a;\pi)}$. By the definition of the policy mixture, we have $\theta_h(s,a;\hat\pi) =\frac{1}{T}\sumt \theta_h(s,a;\pi^t)$. If we know the true transition $\PP$, we can recover $\hat\pi$ by $\theta_h(s,a;\hat\pi)$. Practically, we can substitute $\PP$ with an estimate of $\hat{\PP}^{T-1}$. When $T$ is large enough, the estimate of $\hat\pi$ will be sufficiently accurate. For the multi-arm bandit problem, since it has no transition model, we directly have $\hat{\pi}=\frac{1}{T}\sumt \pi^t$.

To avoid the policy mixture, a more intriguing future direction is to further explore the theoretical guarantee of the last-iterate convergence along with using variants of policy optimization methods, which demands a more complex algorithmic design.

\begin{remark}\label{re:off-policy}Algorithm \ref{alg:online-tch} constructs optimistic Q-functions based on trajectories generated by their own corresponding policies $\pi^t$ and $\tilde\nu_i^t, \forall i\in[m],$ as in Lines \ref{line:tch-auxsample} and \ref{line:tch-mainsample}. It is intriguing to investigate off-policy algorithmic designs \citep{andrychowicz2017hindsight,yu2021conservative}, such as hindsight experience replay and importance sampling, to improve sample efficiency by utilizing data across policies. Such a topic is of independent interest and the related theoretical analysis can be left as a future research direction. However, in the next section, our new algorithm can avoid such multiple exploration rounds by an uncertainty-guided preference-free exploration method. 
\end{remark}

\begin{remark}\label{rm:tch-unique-Pareto} Tchebycheff scalarization does not explicitly distinguish between Pareto optimal policies and other weakly Pareto optimal policies based on the setting of $\blambda$. One may consider using $\frac{1}{T} \sumt V^t_{i,1}(s_1)$ to estimate $V^{\hat\pi}_{i,1}(s_1)= \frac{1}{T} \sumt V^{\pi^t}_{i,1}(s_1)$ for a sufficiently large $T$ and comparing the estimated values of $V^{\hat\pi}_{i,1}(s_1)$ under different $\blambda$ to rule out non-Pareto-optimal policies within certain error tolerance, which, however, might be numerically unstable. In practice, weakly Pareto optimal policies are often considered acceptable solutions as they can also capture the trade-off of learners' preferences across different objectives, especially when $\mathbb{V}(\Pi)$ is convex for a stochastic policy $\Pi$.
\end{remark}

\section{Preference-Free MORL via Tchebycheff Scalarization}\label{sec:pre-free}


\textbf{Algorithm.} Although Algorithm \ref{alg:online-tch} has the controllability of the policy learning that can incorporate learner's preference $\blambda$, it always requires exploration of the environment once a new preference vector $\blambda$ is given, which will incur high costs of environment interaction if many different preferences are considered. This motivates us to further design a new preference-free algorithmic framework named \texttt{PF-TchRL}. This new algorithm features decoupled exploration and planning phases as summarized in Algorithms \ref{alg:pure-explore} and \ref{alg:exploit-tch}. Once the exploration stage, i.e.,  Algorithm \ref{alg:pure-explore}, thoroughly explores the environment to collect trajectories $\cD$ without any designated preference $\blambda$, we can then execute the planning phase, Algorithm \ref{alg:exploit-tch}, using $\cD$ for any diverse input preferences $\blambda\in\Delta_m$, requiring no additional environment interaction. 

Specifically, in Algorithm \ref{alg:pure-explore}, Line \ref{line:pf-estimate} estimates the transition model and bonus terms, where $\hat\PP_h^t$, $\Psi_{i,h}^t$, and $\Phi_h^t$ are instantiated according to \eqref{eq:estimate} and \eqref{eq:bonus} with collected trajectories before the $t$-th exploration round, i.e., $\{s_h^\tau,a_h^\tau,s_{h+1}^\tau\}_{h,\tau=1}^{H,t-1}$. Furthermore, in Line \ref{line:pf-reward}, we define the exploration reward $\overline{r}_h^t$ based on the reward and transition bonus terms. Such a reward design can guide the agent to explore the most uncertain state-action pairs, where the uncertainty is characterized by both reward and transition bonus terms. There are two construction options for the exploration reward: option [I] employs a maximum operation on bonuses, and option [II] takes summation over all bonuses, leading to different theoretical results presented below. Then, Line \ref{line:pf-Q} constructs an optimistic Q-function $\overline Q_h^t$ based on the estimated transition, the transition bonus that guarantees optimism, and the exploration reward. Line \ref{line:pf-policy} generates the exploration policy $\overline \pi^t$, via which we further sample data as in Line \ref{line:pf-sample}. Note that although we collect rewards in Line \ref{line:pf-sample}, they are not used to construct $\overline Q_h^t$ as the exploration is only guided by uncertainty-based explore reward $\overline r^t$ rather than the estimated true reward, such that the collected data can have a sufficiently wide coverage of all policies. 

Finally, in the planning phase, i.e., Algorithm \ref{alg:exploit-tch}, we first estimate the reward and transition as 
\begin{align}
\begin{aligned} \label{eq:plan-est}
\hat{r}_{i,h}(s,a)= \frac{\sum_{\tau=1}^T\mathbf{1}_{\{(s,a)=(s_h^\tau,a_h^\tau)\}}r_{i,h}^\tau}{ N_h^T(s,a)\vee 1}, \quad \hat{\PP}_h(s'|s,a) = \frac{N_h^T(s,a,s')}{ N_h^T(s,a)\vee 1},    
\end{aligned}
\end{align}
and calculate their bonus terms as 
\begin{align}
\begin{aligned}\label{eq:plan-bonus}
\Psi_{i,h}(s,a)=\sqrt{\frac{2\log(8m|\cS||\cA|HT/\delta)}{ N_h^T(s,a)\vee 1}} \wedge 1,\quad \Phi_h(s,a)=\sqrt{\frac{2H^2|\cS|\log(8m|\cS||\cA|HT/\delta)}{N_h^T(s,a)\vee 1}} \wedge H,
\end{aligned}
\end{align}
based on the pre-collected data $\cD$ in Algorithm \ref{alg:pure-explore}. Then, for any input preference $\blambda$, we construct optimistic Q-functions, calculate $\tilde \nu_i$ to estimate $\tilde \nu_i^*$, and iteratively update $\bw^k$ and $\pi^k$ for $K$ rounds without further exploration, which significantly saves the exploration cost. The updating steps of $\bw^k$ and $\pi^k$ share a similar spirit as the ones in Algorithm \ref{alg:online-tch}. The updating rule of $\bw^k$ is a mirror ascent step, equivalent to solving the following problem.
\begin{align*}
	\max_{\bw\in\Delta_m} \eta_{k-1}\langle \bw, \blambda \odot (\tilde{\bV}_1^{k-1}(s_1)+\biota -\bV_1^{k-1}(s_1)) \rangle -  D_{\mathrm{KL}}( \bw, \bw^{k-1}).	
\end{align*}

\begin{algorithm}[tb]
	\small
	\caption{Preference-Free Exploration Stage for Multi-Objective RL}
	\label{alg:pure-explore}
	\setstretch{1.15}
	\begin{algorithmic}[1] 			
		\State {\bfseries Initialize:} $\overline{\pi}_h^0(\cdot|\cdot)=\Unif(\cA), \forall h\in [H]$, $\cD =\emptyset$
		\For{$t=1,\ldots, T$}
		\For{$h=H, H-1,\ldots, 1$}	
		
		\State\label{line:pf-estimate}Calculate the transition estimate $\hat\PP_h^t$, and reward and transition bonuses $\Psi_{i,h}^t$, $\Phi_h^t$, $\forall (i,h)$
		\State\label{line:pf-reward}Uncertainty-guided exploration reward $
		\overline{r}_h^t(\cdot,\cdot)=
		\begin{cases}
			\max\{\Phi_h^t(\cdot,\cdot)/H, \Psi_{1,h}^t(\cdot,\cdot),\cdots,\Psi_{m,h}^t(\cdot,\cdot) \} ~~\text{[\textbf{I}]}\\
			\Phi_h^t(\cdot,\cdot)/H + \sumi \Psi_{i,h}^t(\cdot,\cdot)\qquad \qquad\qquad~ \text{[\textbf{II}]}
		\end{cases}
		$
		\State\label{line:pf-Q}$\overline{Q}_h^t(\cdot,\cdot) = \{(\overline{r}_h^t + \hat\PP_h^t \overline{V}_{h+1}^t +\Phi_h^t)(\cdot,\cdot)\}_{[0,H-h+1]}$ where  $\overline{V}_{h+1}^t(\cdot)= \langle \overline{Q}_{h+1}^t(\cdot, \cdot), \overline{\pi}_{h+1}^t(\cdot|\cdot) \rangle_\cA$
		\State\label{line:pf-policy}Generate exploration policy $\overline{\pi}_h^t=\argmax_{\pi_h}\big\langle \overline{Q}_h^t(\cdot, \cdot), \pi_h(\cdot|\cdot) \big\rangle_\cA$ 
		\EndFor   
		
		\State\label{line:pf-sample}Uncertainty-guided sampling $\{s_h^t,a_h^t, s_{h+1}^t, (r_{i,h}^t)_{i=1}^m\}_{h=1}^H\sim (\overline{\pi}^t, \PP)$ and append it to $\cD$ 
		\EndFor 
		\State {\bfseries Return:} $\cD$
	\end{algorithmic}
	\normalsize
\end{algorithm}
\setlength{\floatsep}{3pt}
\begin{algorithm}[tb]
	\small
	\caption{Planning Stage for \texttt{PF-TchRL}}
	\label{alg:exploit-tch}
	\setstretch{1.15}
	\begin{algorithmic}[1]
		\State {\bfseries Input:} Preference $\blambda$ and pre-collected data $\cD:=\{s_h^t,a_h^t, s_{h+1}^t, (r_{i,h}^t)_{i=1}^m\}_{t=1}^T$ by Algorithm \ref{alg:pure-explore} 	
		
		\State {\bfseries Initialize:} $\eta_0 = 0$, $V_{i,h}^0(\cdot)=0$,  $w_i^0 = 1/m$, $\tilde{V}_{i,H+1}(\cdot)=0, ~\forall (i,h)\in[m]\times[H]$
		\State\label{line:plan-est}Calculate reward and transition estimates $\hat{r}_{i,h}$, $\hat{\PP}_h$ and bonus terms $\Psi_{i,h}$, $\Phi_h$ via \eqref{eq:plan-est} \eqref{eq:plan-bonus}, $\forall (i,h)$
		
		\For{$h = H,H-1,\ldots,1$}  
		
		\State $\tilde{Q}_{i,h}(\cdot,\cdot) = \{ (\hat{r}_{i,h} + \hat{\PP}_h \tilde{V}_{i,h+1} + \Phi_h + \Psi_{i,h}) (\cdot,\cdot)\}_{[0,H-h+1]},~ \forall i\in[m]$ 
		
		\State  $\tilde{\nu}_{i,h}=\argmax_{\nu_h}\langle \tilde{Q}_{i,h}(\cdot, \cdot), \nu_h(\cdot|\cdot) \rangle_\cA$ and let $\tilde{V}_{i,h}(\cdot)=\langle  \tilde{Q}_{i,h}(\cdot, \cdot), \tilde{\nu}_{i,h}(\cdot|\cdot) \rangle_\cA, \forall i\in[m]$
		\EndFor

		\For{$k=1,\ldots,K$}
		
		\State Update $\bw$ via $w_i^k \propto w_i^{k-1} \cdot \exp\{ \eta_{k-1} \cdot \lambda_i  (\tilde{V}_{i,1}(s_1)+\iota -V_{i,1}^{k-1}(s_1))\},~\forall i\in[m]$
		
		\For{$h = H,H-1,\ldots,1$} 
		
		\State $Q_{i,h}^k(\cdot,\cdot) = \{ (\hat{r}_{i,h} + \hat{\PP}_h V_{i,h+1}^k + \Phi_h + \Psi_{i,h}) (\cdot,\cdot)\}_{[0,H-h+1]},~ \forall i\in[m]$
		
		\State $\pi_h^k=\argmax_{\pi_h}\langle (\bw^k\odot\blambda)^\top \bQ_h^k(\cdot, \cdot), \pi_h(\cdot|\cdot) \rangle_\cA$ and let  $V_{i,h}^k(\cdot)=\langle  Q_{i,h}^k(\cdot, \cdot), \pi_h^k(\cdot|\cdot) \rangle_\cA, \forall i\in[m]$
		\EndFor
		
		\EndFor 
		\State {\bfseries Return:} $\{\pi^k\}_{k=1}^K$
	\end{algorithmic}
	\normalsize
\end{algorithm}

\begin{remark}[Exploration Reward]
	Line \ref{line:pf-reward} in Algorithm \ref{alg:pure-explore} defines the exploration reward $\overline r^t$ by taking maximum over bonus terms in option [I]. We note that option [I] fits the case where bonus terms share the same structure, e.g., $1/\sqrt{N_h^{t-1}(s,a)\vee 1}$ in the tabular case. Thus, the maximum is indeed taken over the factors other than $1/\sqrt{N_h^{t-1}(s,a)\vee 1}$ in 
	bonus terms. For general cases, when there is no such similar structure, e.g., different feature representations in function approximation settings, we can apply a summation of all bonuses as in option [II], which, however, will introduce extra an $\cO(m)$ term to the exploration complexity as we show in our theoretical result. It remains an intriguing direction to study our proposed algorithms under various function approximation settings.
\end{remark}

\begin{remark}[Comparison with Reward-Free RL]\label{re:reward-free} Algorithm \ref{alg:pure-explore} shares a similar spirit to the prior reward-free exploration methods (e.g., \citet{wang2020reward,qiu2021reward,jin2020reward,zhang2023optimal,zhang2021near}) but has two differences: \textbf{(1)} Algorithm \ref{alg:pure-explore} defines exploration rewards $\overline r^t$ using both reward and transition uncertainties, and \textbf{(2)} it also collects rewards for reward estimation in the planning phase. Reward-free RL defines $\overline r^t$ solely based on the transition bonus, and the reward function is fully given instead of being estimated. Hence, Algorithm \ref{alg:pure-explore} generalizes these prior reward-free exploration methods. The MORL work \citep{wu2021accommodating} also proposes a preference-free algorithm. But its algorithm is very similar to the prior reward-free methods as the reward function is not estimated and is directly given in the planning phase. We remark that Algorithm \ref{alg:pure-explore} is not limited to the Tchebycheff scalarization, but can be applied to any MORL scalarization technique. As shown in the following section, Algorithm \ref{alg:pure-explore} is also utilized for the smooth Tchebycheff scalarization.
\end{remark}

\vspace{5pt}
\noindent\textbf{Theoretical Result.} The following theorem provides theoretical guarantee for Algorithms \ref{alg:pure-explore} and \ref{alg:exploit-tch}.

\begin{theorem}[Theoretical Guarantee for \texttt{PF-TchRL}] \label{thm:pre-free}
		Setting $\eta_k = \sqrt{\log m /(H^2K)}$ if $k>0$ and $0$ otherwise, letting $\hat \pi:= \Mix(\pi^1,\pi^2,\cdots, \pi^K)$ be a mixture of the learned policies and $\pi_{\blambda}^*\in\argmin_{\pi\in\Pi} \tchl (\pi)$ for any preference vector $\blambda\in\Delta_m$, then with probability at least $1-\delta$, after $T$ rounds of exploration via Algorithm \ref{alg:pure-explore} and $K$ rounds of planning via Algorithm \ref{alg:exploit-tch}, we obtain
		\begin{align*}
			\mathrm{[\mathbf{I}]:}&\quad \tchl(\hat{\pi}) - \tchl(\pi_{\blambda}^*) \leq  \cO\Big( \sqrt{H^2\log m/K}+ \sqrt{ H^6 |\cS|^2 |\cA| \varrho/T} \Big),\\
			\mathrm{[\mathbf{II}]:}&\quad \tchl(\hat{\pi}) - \tchl(\pi_{\blambda}^*) \leq  \cO\Big( \sqrt{H^2\log m/K} +\sqrt{ (H^2|\cS|+m) H^4 |\cS| |\cA| \varrho/T} \Big).  
		\end{align*}
		where $\varrho:=\log (m|\cS||\cA|HT/\delta)$, and [I] and [II] correspond to different options in Algorithm \ref{alg:pure-explore}. 
\end{theorem}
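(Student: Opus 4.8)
The plan is to combine a saddle-point / online-learning argument for the coupled updates of $(\bw^k,\pi^k)$ in the planning stage with a reward-free-style analysis of the exploration stage. Throughout I work on the good event (probability $\ge 1-\delta$) on which $|\hat r_{i,h}-r_{i,h}|\le\Psi_{i,h}$ and $|\hat\PP_hV-\PP_hV|\le\Phi_h$ for all relevant $V$, which the excerpt already establishes via Hoeffding's inequality and a union bound. Using the reformulation in Proposition~\ref{prop:eq-tch} and the fact that $\hat\pi=\Mix(\pi^1,\dots,\pi^K)$ gives $V_{i,1}^{\hat\pi}(s_1)=\tfrac1K\sum_{k=1}^K V_{i,1}^{\pi^k}(s_1)$, I first write
\[
\tchl(\hat\pi)=\max_{\bw\in\Delta_m}\frac1K\sum_{k=1}^K\sum_{i=1}^m w_i\lambda_i\big(V_{i,1}^*(s_1)+\iota-V_{i,1}^{\pi^k}(s_1)\big),
\]
and let $\bar\bw$ attain the maximum. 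Replacing $V_{i,1}^*$ by the optimistic estimate $\tilde V_{i,1}$ (valid since $\tilde V_{i,1}(s_1)\ge V_{i,1}^*(s_1)$ by a backward induction with the bonuses) and $-V_{i,1}^{\pi^k}$ by $-V_{i,1}^k+(V_{i,1}^k-V_{i,1}^{\pi^k})$, and using the per-objective optimism $V_{i,1}^k(s_1)\ge V_{i,1}^{\pi^k}(s_1)$, splits $\tchl(\hat\pi)$ into an ``estimated'' objective $\hat g_k(\bw):=\sum_i w_i\lambda_i(\tilde V_{i,1}(s_1)+\iota-V_{i,1}^k(s_1))$, averaged over $k$ at $\bar\bw$, plus a one-sided value-estimation error $\tfrac1K\sum_k\sum_i\bar w_i\lambda_i(V_{i,1}^k(s_1)-V_{i,1}^{\pi^k}(s_1))\ge0$.

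For the estimated objective, the $\bw^k$ are exactly the iterates of multiplicative weights run on the linear functions $\hat g_k$, whose per-coordinate ``gradients'' $\lambda_i(\tilde V_{i,1}(s_1)+\iota-V_{i,1}^{k-1}(s_1))$ have magnitude $\cO(H)$; the standard regret bound with the chosen step size gives $\tfrac1K\sum_k\hat g_k(\bar\bw)\le\tfrac1K\sum_k\hat g_k(\bw^k)+\cO(\sqrt{H^2\log m/K})$ (the one-round lag in the $\bw$-update only affects constants). Next I use that $\pi^k$ is greedy for the scalarized optimistic Q-function $(\bw^k\odot\blambda)^\top\bQ^k$: a backward induction, carrying the bonus-validity bounds through the scalarized Bellman recursion and handling the per-objective clipping via $Q_{i,h}^{\pi}\le H-h+1$, yields $\sum_i w_i^k\lambda_i V_{i,1}^k(s_1)\ge\max_{\pi}\sum_i w_i^k\lambda_i V_{i,1}^{\pi}(s_1)\ge\sum_i w_i^k\lambda_i V_{i,1}^{\pi_\blambda^*}(s_1)$. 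Hence $\hat g_k(\bw^k)\le\sum_i w_i^k\lambda_i(\tilde V_{i,1}(s_1)+\iota-V_{i,1}^{\pi_\blambda^*}(s_1))\le\tchl(\pi_\blambda^*)+\sum_i w_i^k\lambda_i(\tilde V_{i,1}(s_1)-V_{i,1}^*(s_1))$, where the last step uses $\sum_i w_i^k\lambda_i(V_{i,1}^*(s_1)+\iota-V_{i,1}^{\pi_\blambda^*}(s_1))\le\tchl(\pi_\blambda^*)$. Collecting these three facts,
\[
\tchl(\hat\pi)-\tchl(\pi_\blambda^*)\le\cO\!\big(\sqrt{H^2\log m/K}\big)+\frac1K\sum_{k=1}^K\Big[\sum_i w_i^k\lambda_i\big(\tilde V_{i,1}(s_1)-V_{i,1}^*(s_1)\big)+\sum_i\bar w_i\lambda_i\big(V_{i,1}^k(s_1)-V_{i,1}^{\pi^k}(s_1)\big)\Big].
\]

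The two remaining terms are value-estimation errors to be absorbed by the exploration phase. Each is nonnegative, and by the usual ``optimism gap equals expected cumulative bonus along the policy's occupancy measure'' identity (together with $\tilde V_{i,1}-V_{i,1}^*\le\tilde V_{i,1}-V_{i,1}^{\tilde\nu_i}$), is bounded by $2\sum_h\sum_{s,a}\theta_h(s,a;\pi)\big(\Phi_h(s,a)+\Psi_{i,h}(s,a)\big)$ with $\pi=\tilde\nu_i$ resp.\ $\pi=\pi^k$, where $\Phi_h,\Psi_{i,h}$ use the final counts $N_h^T$. Since $\sum_i w_i\lambda_i\le1$ and $\Phi_h+\Psi_{i,h}\le H\,\overline r_h$ pointwise for both constructions of the exploration reward (in option [\textbf{I}] the $\max$ is attained at $\Phi_h/H$ because $|\cS|\ge1$), both terms are at most $\cO(H)\cdot\max_\pi\sum_h\sum_{s,a}\theta_h(s,a;\pi)\,\overline r_h(s,a)$ with $\overline r$ at counts $N_h^T$. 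I then close the loop with the reward-free analysis of Algorithm~\ref{alg:pure-explore}: optimism of $\overline Q^t$ gives $\overline V_1^t(s_1)\ge\max_\pi\sum_h\sum_{s,a}\theta_h(s,a;\pi)\,\overline r_h^t(s,a)=:W^t$, while a simulation-lemma bound gives $\overline V_1^t(s_1)\le\sum_h\sum_{s,a}\theta_h(s,a;\overline\pi^t)\big(\overline r_h^t(s,a)+2\Phi_h^t(s,a)\big)$; summing over $t$, applying a martingale concentration to pass from expected visitation to empirical counts, and using the pigeonhole bound $\sum_{t=1}^T\sum_h\EE_{\overline\pi^t}[1/\sqrt{N_h^{t-1}\vee1}]\le\cO(H\sqrt{|\cS||\cA|T})$, yields $\max_\pi\sum_h\theta_h(\cdot,\cdot;\pi)\cdot\overline r_h(N^T)\le\min_tW^t\le\tfrac1T\sum_tW^t$, which is $\tilde\cO(\sqrt{H^4|\cS|^2|\cA|\varrho/T})$ under option [\textbf{I}] and the stated slower rate under option [\textbf{II}] (the $\sum_i\Psi_{i,h}$ term contributes the extra $m$-dependence inside the root). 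Multiplying by the $\cO(H)$ factor and combining with the $\cO(\sqrt{H^2\log m/K})$ term gives exactly the two displayed bounds.

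I expect the main obstacle to be the exploration-stage analysis: showing that a \emph{single} preference-free run, driven by the combined reward/transition uncertainty over all $m$ objectives, simultaneously controls the reward- and transition-estimation errors for \emph{every} policy and \emph{every} objective, with the tight dependence — in particular with no $m$ factor under option [\textbf{I}] and only the advertised $(H^2|\cS|+m)$ inside the root under option [\textbf{II}]; this requires carefully choosing which bonus dominates and tracking how the $\max$/$\sum$ structure of $\overline r^t$ propagates through the cumulative-bonus estimate. The scalarized-optimism backward induction with per-objective clipping, and the one-round lag in the multiplicative-weights update, are comparatively routine technical points.
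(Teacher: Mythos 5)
Your proposal is correct and follows essentially the same route as the paper's proof: the same decomposition into a mirror-ascent/multiplicative-weights regret term of order $\sqrt{H^2\log m/K}$, an optimism-plus-greedy-scalarized-policy comparison against $\pi_{\blambda}^*$, and on-policy estimation errors reduced to expected cumulative bonuses, which are then transferred to the exploration phase via optimism of $\overline{V}_1^t$ with respect to the uncertainty-guided reward, a martingale bound, and the pigeonhole count argument, with the option [\textbf{I}]/[\textbf{II}] split handled exactly as in the paper. The only differences (fixing the comparator $\bar\bw$, bounding $\tilde V_{i,1}-V_{i,1}^*$ through $\tilde V_{i,1}-V_{i,1}^{\tilde\nu_i}$, and using monotonicity of the counts in place of the paper's averaging lemma for the planning-stage bonuses) are cosmetic and equivalent to the paper's Lemmas for Theorem \ref{thm:pre-free}.
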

Setting the numbers of the planning steps $K=\tilde\cO(H^2 / \varepsilon^2)$ and the exploration rounds $T=\tilde\cO(H^6|\cS|^2|\cA|/\varepsilon^2)$ for option [I] or $T=\tilde\cO((H^2|\cS|+m) H^4 |\cS| |\cA| /\varepsilon^2)$ for option [II], we can achieve $\tchl(\hat{\pi}) - \tchl(\pi_{\blambda}^*) \leq \varepsilon$. We can see option [II] has an extra dependence on $m$, stemming from the sum operation of all bonus terms. Moreover, compared with Theorem \ref{thm:tch}, Theorem \ref{thm:pre-free} has an additional factor $H^2$ which is also observed in the prior reward-free RL works when compared with online algorithms (see, e.g., \citet{jin2020provably,wang2020reward,yang2020provably,qiu2021reward}). However, although such an extra factor $H^2$ exists, when a learner expects to have sufficient coverage of the (weakly) Pareto optimal set under a large number of preferences $\blambda\in\Delta_m$, Algorithm \ref{alg:online-tch} requires exploration for every $\blambda$, while Algorithm \ref{alg:pure-explore} only explore once, significantly saving the exploration cost.

\section{Extension to Smooth Tchebycheff Scalarization}\label{sec:stch}

Most recently, inspired by smoothing techniques for non-smooth optimization problems \citep{nesterov2005smooth,beck2012smoothing,chen2012smoothing}, \citet{lin2024smooth} proposes a smoothed version of Tchebycheff scalarization for multi-objective optimization by utilizing the infimal convolution smoothing method \citet{beck2012smoothing}. In this section, we adapt the definition of the smooth Tchebycheff scalarization \citep{lin2024smooth} to the MORL scenario.
\begin{definition}[Smooth Tchebycheff Scalarization]\label{def:stch} Smooth Tchebycheff scalarization converts the original MORL problem into a minimization problem of the scalarized optimization target,
\begin{align}\label{eq:stch}
	\stchlm(\pi):= \mu \log \left[ \sumi \exp\left(\frac{\lambda_i (V_{i,1}^*(s_1) + \iota - V_{i,1}^\pi(s_1))}{\mu}\right)\right]   ,
\end{align}
where $\blambda = [\lambda_i]_{i=1}^m\in \Delta_m$ is a preference vector defined by the learner, $V_{i,1}^*(s_1):=\max_{\pi\in\Pi} V_{i,1}^\pi(s_1)$ is the maximum of the $i$-th value function, $\iota$ is a  sufficiently small pre-defined regularizer with $\iota > 0$, and $\mu$ is the smoothing parameter.
\end{definition} 
Specifically, according to \citet{lin2024smooth}, we note that $\stchlm(\pi)$ and $\tchl(\pi)$ satisfy the following relation,
\begin{align} \label{eq:stch-err}
\stchlm(\pi) - \mu \log m \leq  \tchl(\pi) \leq  \stchlm(\pi),
\end{align}
which indicates that the approximation error of $\tchl(\pi)$ by $\stchlm(\pi)$ is $\mu \log m$. If $\mu$ is set to be sufficiently small, then we obtain that $\tchl(\pi)\simeq \stchlm(\pi)$.  Moreover, if we set $\mu = \varepsilon/(2\log m)$, then $\tchl(\pi) - \min_{\pi\in\Pi} \tchl(\pi)\leq \stchlm(\pi) - \min_{\pi\in\Pi} \stchlm(\pi) + \varepsilon/2$. This implies that an $\varepsilon/2$-approximate optimal solution to $\min_{\pi\in\Pi} \stchlm(\pi)$ is also an $\varepsilon$-approximate optimal solution to $\min_{\pi\in\Pi} \tchl(\pi)$. 

We further study the properties of the solutions to $\min_{\pi\in\Pi} \stchlm(\pi)$ in the next proposition. 

	\begin{table*}[t] 
		\renewcommand{\arraystretch}{1.3}
		\centering
\begin{small}		
		\begin{tabular}{ | >{\centering\arraybackslash}m{3.98cm} 
				 | >{\centering\arraybackslash}m{1.15cm} |
				>{\centering\arraybackslash}m{1.15cm} |
				>{\centering\arraybackslash}m{1.15cm} | 
				>{\centering\arraybackslash}m{1.15cm} | >{\centering\arraybackslash}m{2.1cm} | >{\centering\arraybackslash}m{2.6cm} | } 
			\hline
			\rowcolor{lightgray}   & \multicolumn{2}{c|}{\textbf{Weak Pareto}}  & \multicolumn{2}{c|}{\textbf{Pareto}} & &  \\\hhline{|>{\arrayrulecolor{lightgray}}->{\arrayrulecolor{black}}|*4{-}|>{\arrayrulecolor{lightgray}}->{\arrayrulecolor{black}}|>{\arrayrulecolor{lightgray}}->{\arrayrulecolor{black}}|}
			\rowcolor{lightgray} \multirow{-2}{*}{\textbf{Scalarization Method}}  & \textbf{Sto.} & \textbf{Det.} & \textbf{Sto.} & \textbf{Det.} & \multirow{-2}{*}{\textbf{Controllable}} & \multirow{-2}{*}{\textbf{Discriminating}} \\ \hline
			Linear  & {\color{green}All} &{\color{red}Subset} & {\color{green}All} &{\color{red}Subset} &  \cmark{\color{red}$(*)$}&\cmark \\\hline
			Pareto Suboptimality Gap   & {\color{green}All}& {\color{green}All}& {\color{green}All}& {\color{green}All} & \xmark & \xmark  \\ \hline
			Tchebycheff& {\color{green}All}&{\color{green}All} & {\color{green}All} &{\color{green}All} &  \cmark & \xmark  \\ \hline
			Smooth Tchebycheff & {\color{green}All}&{\color{red}Subset} & {\color{green}All}&{\color{green}All}  &\cmark & \cmark  \\ \hline
		\end{tabular}
		\caption{\small Comparison of Scalarization Methods.  ``Controllable'' refers to whether a scalarization method has the controllability of its solution by the preference $\blambda$. 
			``Discriminating'' indicates whether the scalarization method is able to determine if its solution is a Pareto optimal policy, rather than merely a weakly Pareto optimal policy, based solely on whether the preference vector $\blambda$ is in $\Delta_m^o$ or not. 
			``Subset'' indicates that the scalarization method is capable of finding a subset of (weakly) Pareto optimal policies. ``All'' indicates that the scalarization method can identify all (weakly) Pareto optimal policies. ``Sto.'' and ``Det.'' denote the stochastic policy class and the deterministic policy class respectively. The notation {\color{red}$(*)$} indicates less controllability compared with (smooth) Tchebycheff scalarization, as discussed in Remark \ref{re:LinvsTch}.}
		\label{tab:comparison}
		\end{small}
	\end{table*}

\begin{proposition}\label{prop:stch}
For a stochastic policy class $\Pi$, the minimizers of smooth Tchebycheff scalarization satisfy $\{\pi~|~\pi\in\argmax_{\pi\in\Pi}\stchlm(\pi), \forall \blambda \in\Delta_m\}= \Pi_{\mathrm{W}}^*$ and $\{\pi~|~\pi\in\argmin_{\pi\in\Pi} \stchlm(\pi), \allowbreak \forall \blambda \in\Delta_m^o\}= \Pi_{\mathrm{P}}^*$ for any $\mu>0$.
For a deterministic policy class $\Pi$, we have $\{\pi~|~\pi\in\argmin_{\pi\in\Pi} \stchlm(\pi), \forall \blambda \in\Delta_m\}\subseteq \Pi_{\mathrm{W}}^*$ and that there exists $\mu^*>0$ such that for any $0<\mu<\mu^*$, $\{\pi~|~\pi\in\argmin_{\pi\in\Pi} \stchlm(\pi), \forall \blambda \in\Delta_m^o\} = \Pi_{\mathrm{P}}^*$. A weakly Pareto optimal policy may not be the solution to $\max_{\pi\in\Pi} \stchlm(\pi)$ for any $\blambda \in\Delta_m$ and $\mu>0$ when $\Pi$ is a deterministic policy class.

\end{proposition}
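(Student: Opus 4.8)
The plan is to split the proof into: the ``$\subseteq$'' inclusions (valid for both policy classes); the stochastic ``$\supseteq$'' inclusions; the deterministic ``$=\Pi_{\mathrm{P}}^*$'' claim; and the final ``subset'' assertion (cf.\ Table~\ref{tab:comparison}). It is convenient to regard $\stchlm$ as a function of the objective value vector, $\stchlm(\pi)=g_{\blambda,\mu}(\bV_1^\pi(s_1))$ with $g_{\blambda,\mu}(\mathbf{v})=\mu\log\sum_{i\in[m]}\exp(\lambda_i(V_{i,1}^*(s_1)+\iota-v_i)/\mu)$, which is smooth and convex in $\mathbf{v}$, non-increasing in every coordinate, and \emph{strictly} decreasing in coordinate $i$ whenever $\lambda_i>0$. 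The ``$\subseteq$'' inclusions are then immediate: if $\pi\in\argmin_{\pi\in\Pi}\stchlm(\pi)$ were not weakly Pareto optimal, some $\pi'$ with $\bV_1^{\pi'}(s_1)>\bV_1^\pi(s_1)$ componentwise would give $\stchlm(\pi')<\stchlm(\pi)$ (at least one $\lambda_i>0$ since $\blambda\in\Delta_m$); and if $\blambda\in\Delta_m^o$ and such a minimizer $\pi$ were dominated by $\pi'$, the strict improvement at some coordinate $j$ with $\lambda_j>0$ again yields $\stchlm(\pi')<\stchlm(\pi)$, a contradiction.

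For the stochastic ``$\supseteq$'' inclusions, fix $\pi^*\in\Pi_{\mathrm{W}}^*$ and $\mathbf{z}=\bV_1^{\pi^*}(s_1)$. Since $\pi^*$ is weakly Pareto optimal, $\mathbf{z}+\RR_{>0}^m$ is disjoint from the convex set $\VV(\Pi)$ (convex by Proposition~\ref{prop:geometry}); separation gives $\mathbf{q}\neq\mathbf{0}$ with nonnegative entries (else $\mathbf{q}^\top\mathbf{y}\to-\infty$ along a coordinate ray of $\mathbf{z}+\RR_{>0}^m$), normalized to $\mathbf{q}\in\Delta_m$, satisfying $\mathbf{q}^\top\mathbf{v}\le\mathbf{q}^\top\mathbf{z}$ for all $\mathbf{v}\in\VV(\Pi)$. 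Writing $d_i=V_{i,1}^*(s_1)+\iota-z_i>0$ (positive thanks to $\iota>0$), for each $i$ with $q_i>0$ the map $\lambda\mapsto\lambda\exp(\lambda d_i/\mu)$ is an increasing bijection of $[0,1]$ onto $[0,\exp(d_i/\mu)]$, so $\lambda\exp(\lambda d_i/\mu)=\gamma q_i$ has a unique root $\lambda_i(\gamma)$ while $\gamma q_i\le\exp(d_i/\mu)$; set $\lambda_i=0$ when $q_i=0$. As $\sum_{i:q_i>0}\lambda_i(\gamma)$ grows continuously from $0$ and attains $1$ before any root saturates, the intermediate value theorem supplies $\gamma$ with $\blambda\in\Delta_m$, and for this $\blambda$ one computes $-\nabla g_{\blambda,\mu}(\mathbf{z})=(\gamma/Z)\mathbf{q}$ with $Z=\sum_j\exp(\lambda_j d_j/\mu)>0$. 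Hence $-\nabla g_{\blambda,\mu}(\mathbf{z})$ lies in the normal cone of $\VV(\Pi)$ at $\mathbf{z}$, so by convexity of $g_{\blambda,\mu}$ and $\VV(\Pi)$ the first-order condition gives $\mathbf{z}\in\argmin_{\mathbf{v}\in\VV(\Pi)}g_{\blambda,\mu}(\mathbf{v})$, i.e.\ $\pi^*\in\argmin_{\pi\in\Pi}\stchlm(\pi)$. When $\pi^*\in\Pi_{\mathrm{P}}^*$, Proposition~\ref{prop:linear-comb} lets us choose the separating normal $\mathbf{q}\in\Delta_m^o$, which forces every $\lambda_i>0$, i.e.\ $\blambda\in\Delta_m^o$; together with the ``$\subseteq$'' inclusions this gives both stochastic equalities.

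For the deterministic case, ``$\subseteq\Pi_{\mathrm{W}}^*$'' is already shown. For the Pareto equality, given $\pi^*\in\Pi_{\mathrm{P}}^*$ with value vector $\mathbf{z}$, Proposition~\ref{prop:tch-unique} furnishes $\blambda\in\Delta_m^o$ under which every minimizer of $\tchl$ has value vector exactly $\mathbf{z}$; since the deterministic policy class is finite, $\delta_0:=\min\{\tchl(\pi)-\tchl(\pi^*): \bV_1^\pi(s_1)\neq\mathbf{z}\}>0$. For $\mu<\delta_0/\log m$, any $\pi$ with $\bV_1^\pi(s_1)\neq\mathbf{z}$ satisfies, using \eqref{eq:stch-err}, $\stchlm(\pi)\ge\tchl(\pi)\ge\tchl(\pi^*)+\delta_0>\tchl(\pi^*)+\mu\log m\ge\stchlm(\pi^*)$, whereas policies with value vector $\mathbf{z}$ tie with $\pi^*$; hence $\pi^*\in\argmin_{\pi\in\Pi}\stchlm(\pi)$. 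Taking $\mu^*$ to be the minimum of these thresholds over the finitely many Pareto-optimal value vectors and combining with the ``$\subseteq$'' inclusion for $\Delta_m^o$ yields the equality for all $0<\mu<\mu^*$. For the ``subset'' assertion I would use a deterministic two-objective multi-armed bandit (a special MOMDP with $|\cS|=1$, $H=1$) with arm reward vectors $(0.5,0.5)$, $(0.5,0.8)$, $(0.9,0.3)$: the first arm is weakly Pareto optimal but is dominated by the second, hence is not a minimizer of $\stchlm$ for any $\blambda\in\Delta_m^o$; for $\blambda=(0,1)$ the unique minimizer is the second arm and for $\blambda=(1,0)$ the unique minimizer is the third arm, so the first arm minimizes $\stchlm$ for no $\blambda\in\Delta_m$ and no $\mu>0$.

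I expect the main obstacle to be the ``$\supseteq$'' direction. In the stochastic case one cannot simply carry over the classical Tchebycheff weight $\lambda_i\propto 1/d_i$: it only gives $\stchlm(\pi^*)=c+\mu\log m$ (with $c=\min_\pi\tchl(\pi)$), a level the $\mu\log m$ smoothing slack may undercut at other policies, so the implicit, gradient-aligned construction of $\blambda$ above is genuinely needed, and its justification reduces to verifying the first-order optimality condition for the convex $g_{\blambda,\mu}$ over the convex $\VV(\Pi)$. In the deterministic case the same issue resurfaces as the need to pick a \emph{non-degenerate} preference — precisely the one provided by Proposition~\ref{prop:tch-unique}, under which $\pi^*$ is the unique-in-value Tchebycheff minimizer — and to obtain a uniform $\mu^*$; both steps lean on finiteness of the deterministic policy class, since there $\VV(\Pi)$ is no longer convex and the clean first-order argument is unavailable.
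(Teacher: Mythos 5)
Your proposal is correct and follows essentially the same route as the paper's proof: the inclusions via monotonicity of the log-sum-exp scalarization, the stochastic reverse inclusion via a supporting hyperplane of the convex polytope $\VV(\Pi)$ (Propositions \ref{prop:geometry} and \ref{prop:linear-comb}) together with solving $\lambda_i e^{\lambda_i d_i/\mu}\propto q_i$ and checking first-order optimality (the paper phrases this via Lagrange multipliers, which is the same computation), the deterministic Pareto equality via the Tchebycheff-optimal preference from Proposition \ref{prop:tch-unique} plus the $\mu\log m$ sandwich \eqref{eq:stch-err} and a finite gap defining $\mu^*$, and a small deterministic bandit counterexample for the final claim. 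Your handling of the uniform $\mu^*$ (minimizing over the finitely many Pareto value vectors) and the explicit IVT step for $\gamma$ are slightly more careful write-ups of the same argument, not a different approach.
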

This proposition indicates that under certain conditions, \emph{we can identify all Pareto optimal policies by minimizing the smooth Tchebycheff scalarization with the preference $\blambda\in\Delta_m^o$ with excluding other weakly Pareto optimal policies that are not Pareto optimal}. This is a more favorable property than the one for the ordinary Tchebycheff scalarization as shown in Proposition \ref{prop:tch}. Minimizing the ordinary Tchebycheff scalarization for a $\blambda\in\Delta_m^o$ may find weakly Pareto optimal policies that are not Pareto optimal, as discussed in Proposition \ref{prop:tch}. We summarized each scalarization method in Table \ref{tab:comparison}. This demonstrates that the smooth Tchebycheff scalarization is more advantageous in finding Pareto optimal policies, which offers better controllability and discriminating capability.

We then have the following proposition to show the distribution of the Pareto optimal policies:
\begin{proposition}\label{prop:stch-unique} 
	For a policy class $\Pi$ (either deterministic or stochastic), there exists $0<\mu^*\leq \infty$ such that for any $0<\mu<\mu^*$, the Pareto optimal policies that are the solutions to $\min_{\pi\in\Pi}\stchlm(\pi)$ for a $\blambda\in \Delta_m^o$ have the same values on all objectives.
\end{proposition}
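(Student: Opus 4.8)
The plan is to reduce Proposition \ref{prop:stch-unique} to Proposition \ref{prop:tch-unique} via the approximation relation \eqref{eq:stch-err}, together with a careful argument that for sufficiently small $\mu$ the set of smooth-Tchebycheff minimizers is ``squeezed'' onto the same value as the ordinary Tchebycheff minimizers. First I would fix a preference $\blambda \in \Delta_m^o$ and, following Proposition \ref{prop:stch}, recall that in this regime every solution $\pi$ to $\min_{\pi\in\Pi}\stchlm(\pi)$ is Pareto optimal (for $\mu$ below the threshold $\mu^*$ in the deterministic case, and for all $\mu>0$ in the stochastic case). So the only thing left to prove is that all such Pareto optimal minimizers share the same value vector $\bV_1^\pi(s_1)$ on all $m$ objectives.

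Next I would exploit the strict convexity structure of the log-sum-exp function. The key observation is that $\stchlm$, viewed as a function of the vector $\bu(\pi) := \blambda \odot (\bV_1^*(s_1) + \biota - \bV_1^\pi(s_1)) \in \RR^m_{>0}$, is $g(\bu) = \mu \log \sum_i e^{u_i/\mu}$, which is strictly convex along any direction that is not parallel to the all-ones vector $\mathbf{1}$. For the stochastic policy class, $\VV(\Pi)$ is a convex polytope by Proposition \ref{prop:geometry}, hence the set of attainable $\bu$-vectors is also a convex polytope $\mathcal{U}$; if two minimizers $\pi_1,\pi_2$ had distinct value vectors, their $\bu$-images $\bu_1 \ne \bu_2$ would lie in $\mathcal{U}$, and I claim $\bu_1 - \bu_2$ cannot be parallel to $\mathbf{1}$: if it were, say $\bu_1 = \bu_2 + c\mathbf{1}$ with $c>0$, then $\stchlm(\pi_1) = \stchlm(\pi_2) + c$, contradicting that both are minima; if $c<0$ symmetric. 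Hence $\bu_1-\bu_2 \not\parallel \mathbf{1}$, and by strict convexity of $g$ along $[\bu_1,\bu_2]$ the midpoint would give strictly smaller value, but the midpoint is attained by the mixture policy $\frac12\pi_1 + \frac12\pi_2$ (in the occupancy-measure sense, using convexity of $\VV(\Pi)$), contradicting minimality. This settles the stochastic case with $\mu^* = \infty$.

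For the general (possibly deterministic) policy class the convexity argument is unavailable, so I would instead argue by a limiting/counting argument as $\mu \to 0$. Since $\Pi$ and hence $\VV(\Pi)$ is finite (tabular, finite deterministic policy class), there are finitely many candidate value vectors; let $T^* := \min_{\pi\in\Pi}\tchl(\pi)$ and let $\mathcal{P}_0$ be the finite set of value vectors attaining it. By \eqref{eq:stch-err}, any minimizer of $\stchlm$ has $\tchl(\pi) \le \stchlm(\pi) \le \stchlm(\pi') \le \tchl(\pi') + \mu\log m = T^* + \mu\log m$ for $\pi'$ an ordinary-Tchebycheff minimizer; choosing $\mu$ small enough that $\mu\log m$ is below the spectral gap $\min\{\tchl(\pi) - T^* : \tchl(\pi) > T^*\}$ (which is positive since finitely many values) forces every $\stchlm$-minimizer to already lie in $\mathcal{P}_0$. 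On $\mathcal{P}_0$ the ordinary Tchebycheff value is constant, so $\max_i u_i$ is constant $=T^*$; I then need to show the full vector $\bu$ — equivalently the full value vector — is constant on the $\stchlm$-minimizing subset of $\mathcal{P}_0$. Here I invoke Proposition \ref{prop:tch-unique}: for $\blambda$ in the distinguished subset $\Lambda$, all Tchebycheff minimizers already share the same value on all objectives, so $\mathcal{P}_0$ is a singleton and we are done; for $\blambda \in \Delta_m^o \setminus \Lambda$, I would show that among the vectors in $\mathcal{P}_0$, the log-sum-exp is strictly minimized (for small $\mu$) by the unique one that is componentwise dominated, i.e.\ the Pareto optimal one, and that Proposition \ref{prop:stch} guarantees this is the only $\stchlm$-minimizer, which is automatically a single value vector.

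\textbf{Main obstacle.} The delicate part is the deterministic case: passing from ``$\max_i u_i$ constant on $\mathcal{P}_0$'' to ``$\bu$ constant on the $\stchlm$-minimizing subset.'' One must rule out two distinct Pareto optimal value vectors in $\mathcal{P}_0$ with the same $\tchl$ value but different log-sum-exp values for every small $\mu$; this requires quantifying how $\operatorname{LSE}_\mu$ ranks vectors sharing a common maximum coordinate as $\mu\to 0^+$ (it is governed by the second-largest coordinate, then the third, etc.), and checking this ranking is strict unless the vectors coincide — then combining with Proposition \ref{prop:stch}'s characterization to conclude the minimizer set collapses to one value vector. Getting the threshold $\mu^*$ to be uniform (depending only on the finitely many gaps, not on $\blambda$) is the bookkeeping one has to be careful about, but finiteness of $\Pi$ makes it go through.
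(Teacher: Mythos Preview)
Your stochastic-case argument is essentially the paper's own proof: both use convexity of $\VV(\Pi)$ (Proposition~\ref{prop:geometry}), strict convexity of log-sum-exp off the all-ones direction, and a midpoint contradiction. Your way of ruling out the $\mathbf{1}$-direction (if $\bu_1 = \bu_2 + c\mathbf{1}$ then the two $\stchlm$ values differ by exactly $c$) is slightly more direct than the paper's (which argues such a shift would make one policy dominate the other, contradicting that both are Pareto optimal), but either works and the overall structure is identical.

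For the deterministic case your route diverges from the paper's and contains a genuine gap. The paper does not pass through the approximation bound~\eqref{eq:stch-err} or any spectral-gap argument here; it simply invokes Part~2 of its proof of Proposition~\ref{prop:stch}: for each Pareto optimal $\breve\pi$ one takes the canonical preference $\lambda_i \propto (V_{i,1}^*(s_1)+\iota-V_{i,1}^{\breve\pi}(s_1))^{-1}$, and the $\mu^*$ constructed there already forces any policy with a value vector different from $\breve\pi$'s to have strictly larger $\stchlm$. That is the entire deterministic argument in the paper --- a one-line callback that effectively treats only $\blambda$ of this canonical form (the set $\Lambda$ from Proposition~\ref{prop:tch-unique}).

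Your more ambitious plan to handle every $\blambda\in\Delta_m^o$ via a lexicographic ranking of $\operatorname{LSE}_\mu$ does not go through as stated. The claim that ``this ranking is strict unless the vectors coincide'' is false: $\operatorname{LSE}_\mu(\bu)$ depends only on the \emph{multiset} $\{u_1,\dots,u_m\}$, so two distinct $\bu$-vectors that are permutations of one another have identical $\stchlm$ values for every $\mu>0$. Concretely, take $m=2$, two deterministic arms with rewards $(0.8,0.2)$ and $(0.2,0.8)$, and $\blambda=(\tfrac12,\tfrac12)$: both arms are Pareto optimal, both minimize $\stchlm$ for all $\mu$, yet their value vectors differ --- no positive $\mu^*$ separates them. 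So your lexicographic step cannot close the argument for $\blambda\notin\Lambda$; the scope that actually survives is precisely the one the paper proves, namely $\blambda$ tied canonically to some Pareto optimal $\breve\pi$, where uniqueness of the minimizing value vector follows directly from the $\mu^*$ construction in the proof of Proposition~\ref{prop:stch}.
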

This proposition implies that all the solutions to $\min_{\pi\in\Pi}\stchlm(\pi)$ for a $\blambda\in \Delta_m^o$ are equivalent in terms of their values. \emph{Therefore, together with Proposition \ref{prop:stch}, the proposition above indicates that obtaining a single solution for each $\blambda\in\Delta_m^o$ can achieve every point in $\VV(\Pi)$ that corresponds to the Pareto front}. By solving for a single solution to $\min_{\pi\in\Pi}\stchlm(\pi)$ for each $\blambda\in\Delta_m^o$, we obtain representative Pareto optimal policies.

We note that \citet{lin2024smooth} proposes a gradient-based method to minimize the smooth Tchebycheff scalarization, which is difficult to generalize to the UCB-based RL methods. Therefore, we contribute to identifying the following equivalent optimization problem of $\min_{\pi\in\Pi} \stchlm(\pi)$, which better fits the MORL scenario. The following proposition also offers a favorable reformulation of smooth Tchebycheff scalarization that can be applied to multi-objective stochastic optimization or multi-objective online learning problems, which are not restricted to MORL.

\begin{proposition}[Equivalent Form of STCH] \label{prop:eq-stch} Letting $\bw=(w_1,w_2,\cdots, w_m) \in\Delta_m$, the smooth Tchebycheff scalarization is reformulated as $\stchlm(\pi) = \max_{\bw \in\Delta_m} \max_{\nu_i\in\Pi} \sumi [ w_i \lambda_i (V_{i,1}^{\nu_i}(s_1) + \iota - V_{i,1}^\pi(s_1)) -\mu  w_i\log w_i ]$. Then, the minimization problem of the smooth Tchebycheff scalarization can be reformulated as 
	\begin{align}
		\min_{\pi\in\Pi} \stchlm(\pi) = \min_{\pi\in\Pi} \max_{\bw \in\Delta_m} \max_{\{\nu_i\in\Pi\}_{i=1}^m} \sumi \left[ w_i \lambda_i \big(V_{i,1}^{\nu_i}(s_1) + \iota - V_{i,1}^\pi(s_1)\big) -\mu  w_i\log w_i\right]. \label{eq:eq-stch}
	\end{align}
\end{proposition}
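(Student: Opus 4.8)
The plan is to derive \eqref{eq:eq-stch} from the Gibbs variational principle (the Donsker--Varadhan representation) for the log-sum-exp function, combined with the fact that the $m$ auxiliary policies $\nu_i$ enter the objective in separate summands, so that maximization over them can be pulled outside the sum. Once this reformulation of $\stchlm(\pi)$ is established, applying $\min_{\pi\in\Pi}$ to both sides immediately gives the min-max-max form.

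First I would record the elementary identity that for any $\ba=(a_1,\ldots,a_m)\in\RR^m$ and any $\mu>0$,
\[
\mu\log\Big[\textstyle\sumi \exp(a_i/\mu)\Big]=\max_{\bw\in\Delta_m}\Big[\textstyle\sumi w_i a_i-\mu\sumi w_i\log w_i\Big],
\]
with the maximum attained uniquely at the Gibbs weights $q_i:=\exp(a_i/\mu)/\sum_{j}\exp(a_j/\mu)$. This follows from a one-line computation: for every $\bw\in\Delta_m$,
\[
\textstyle\sumi w_i a_i-\mu\sumi w_i\log w_i-\mu\log\big[\textstyle\sumi\exp(a_i/\mu)\big]=-\mu\textstyle\sumi w_i\log(w_i/q_i)\le 0,
\]
since $\sumi w_i\log(w_i/q_i)$ is a Kullback--Leibler divergence, hence nonnegative, vanishing iff $\bw$ equals the Gibbs distribution; here I use the convention $0\log 0=0$. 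Equivalently, this is convex duality between $\mu\log\sum\exp(\cdot/\mu)$ and the $\mu$-scaled negative Shannon entropy on $\Delta_m$.

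Next I would instantiate this with $a_i:=\lambda_i\big(V_{i,1}^*(s_1)+\iota-V_{i,1}^\pi(s_1)\big)$, which by Definition \ref{def:stch} gives
\[
\stchlm(\pi)=\max_{\bw\in\Delta_m}\Big[\textstyle\sumi w_i\lambda_i\big(V_{i,1}^*(s_1)+\iota-V_{i,1}^\pi(s_1)\big)-\mu\textstyle\sumi w_i\log w_i\Big].
\]
Since $V_{i,1}^*(s_1)=\max_{\nu_i\in\Pi}V_{i,1}^{\nu_i}(s_1)$ and each coefficient $w_i\lambda_i\ge 0$, for fixed $\bw$ we have $w_i\lambda_i V_{i,1}^*(s_1)=\max_{\nu_i\in\Pi}w_i\lambda_i V_{i,1}^{\nu_i}(s_1)$; because the policies $\nu_1,\ldots,\nu_m$ appear in distinct summands and the entropy term $-\mu\sumi w_i\log w_i$ does not involve them, the sum of these per-coordinate maxima equals $\max_{\{\nu_i\in\Pi\}_{i=1}^m}$ of the whole sum. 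Substituting back gives precisely the claimed expression for $\stchlm(\pi)$, and applying $\min_{\pi\in\Pi}$ to both sides yields \eqref{eq:eq-stch}.

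I do not anticipate a genuine obstacle here: the content is essentially bookkeeping once the Gibbs variational principle is in hand. The only points requiring a little care are (i) the routine interchange of maximization with summation — valid because the $\nu_i$'s are separate variables, one per summand — and (ii) the boundary of the simplex, i.e., when some $w_i=0$: there the summand $w_i\lambda_i(V_{i,1}^{\nu_i}(s_1)+\iota-V_{i,1}^\pi(s_1))$ equals $0$ for every $\nu_i$, so the interchange is still valid, and $-\mu\cdot 0\log 0=0$ is consistent; moreover, since all value functions are bounded the optimal $\bw$ in fact lies in $\Delta_m^o$, so this edge case never affects the optimum.
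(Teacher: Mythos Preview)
Your proposal is correct and follows essentially the same approach as the paper: both establish the variational representation $\mu\log\sum_i\exp(a_i/\mu)=\max_{\bw\in\Delta_m}[\langle\bw,\ba\rangle-\mu\sum_i w_i\log w_i]$ and then substitute $a_i=\lambda_i(V_{i,1}^*(s_1)+\iota-V_{i,1}^\pi(s_1))$. The only cosmetic difference is that the paper derives the variational identity via the Lagrangian and KKT conditions, whereas you obtain it more directly from the nonnegativity of KL divergence; your treatment of the separability in $\{\nu_i\}$ and the $w_i=0$ boundary is also slightly more explicit than the paper's.
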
 
Comparing Proposition \ref{prop:eq-stch} to Proposition \ref{prop:eq-tch}, we note that there is only an extra regularization term $\sumi \mu  w_i\log w_i$ in \eqref{eq:eq-stch}, which indicates that we can solve the above problem \eqref{eq:eq-stch} using similar algorithms to the ones for $\min_{\pi\in\Pi} \tchl(\pi)$ with slight modification to the update of $\bw$. Moreover, we can show that such a regularization term can lead to a faster rate $\tilde\cO(1/(\mu T))$ (if $\mu$ is not too small) for learning the optimal $\bw$ rather than an $\tilde\cO(1/\sqrt{T})$ rate. 

\vspace{5pt}
\noindent\textbf{Algorithm.}  We propose algorithms to solve the problem in \eqref{eq:eq-stch}. Similar to the algorithms in Section \ref{sec:tchs} and Section \ref{sec:pre-free}, to learn all stochastic Pareto optimal policies, we also propose an online optimistic MORL algorithm, named \texttt{STchRL}, and a preference-free algorithm, named \texttt{PF-STchRL}. 

The online algorithm \texttt{STchRL} is summarized in Algorithm \ref{alg:online-stch}. In addition, the preference-free algorithm \texttt{PF-STchRL} consists of two stages, i.e., the preference-free exploration stage and the planning stage. Specifically, the exploration stage here uses the same algorithm as the one for the original Tchebycheff scalarization, e.g. Algorithm \ref{alg:online-stch}. We further propose a new algorithm for the planning stage for the smooth Tchebycheff scalarization as in Algorithm \ref{alg:exploit-stch}. The design of Algorithm \ref{alg:online-stch} and Algorithm \ref{alg:exploit-stch} is inspired by Algorithm \ref{alg:online-tch} and Algorithm  \ref{alg:exploit-tch}. 

The difference between Algorithm \ref{alg:online-stch} and Algorithm \ref{alg:online-tch} is their updating rules for $\bw$. Line \ref{line:stch-w} of Algorithm \ref{alg:online-stch} updates $\bw$ via a mirror ascent step, which is equivalent to solving the following maximization problem
\begin{align*}
\max_{\bw\in\Delta_m} \eta_{t-1}\langle \bw, \blambda \odot (\tilde{\bV}_1^{t-1}(s_1)+\biota -\bV_1^{t-1}(s_1)) - \mu \log \tilde \bw^{t-1} \rangle -  D_{\mathrm{KL}}( \bw, \tilde \bw^{t-1}),	
\end{align*}
where we let $\log \bw := (\log w_1, \log w_2, \cdots, \log w_m)$ be element-wise logarithmic operation with a slight abuse of the $\log$ operator. Here $\tilde \bw^{t-1}$ is obtained via a mixing step as in Line \ref{line:stch-mix} of Algorithm \ref{alg:online-stch} such that $\tilde w_i^{t-1}$ is a weighted average of the update from the last step, $w_i^{t-1}$, and $1/m$ with the weights $1-\alpha_{t-1}$ and $\alpha_{t-1}$. This mixing technique, which is employed in some prior works \citep{wei2019online,qiu2023gradient}, can guarantee the boundedness of $\log \tilde w_i^{t-1}$, which is critical to obtain an $\tilde\cO(1/(\mu T))$ rate, instead of $\tilde\cO(1/\sqrt{T})$, for learning the optimal $\bw$.

Moreover, the difference between Algorithm \ref{alg:exploit-stch} and Algorithm \ref{alg:exploit-tch} also lies in their updating rules for $\bw$. We first compute a weighted average of $w_i^{k-1}$ and $1/m$ to obtain $\tilde w_i^{k-1}$ in Line \ref{line:pf-stch-mix} of Algorithm \ref{alg:exploit-stch}. We then run a mirror ascent step in Line \ref{line:pf-stch-w} of this algorithm, which is equivalent to solving 
\begin{align*}
	\max_{\bw\in\Delta_m} \eta_{k-1}\langle \bw, \blambda \odot (\tilde{\bV}_1^{k-1}(s_1)+\biota -\bV_1^{k-1}(s_1)) - \mu \log \tilde \bw^{k-1} \rangle -  D_{\mathrm{KL}}( \bw, \tilde \bw^{k-1}),	
\end{align*}
which leads to an $\tilde\cO(1/(\mu K))$ rate, instead of $\tilde\cO(1/\sqrt{K})$, for learning the optimal $\bw$.

\begin{algorithm}[tb]
	\small
	\caption{\texttt{STchRL}: Multi-Objective RL via Smooth Tchebycheff Scalarization}
	\label{alg:online-stch}
	\setstretch{1.15}
	\begin{algorithmic}[1]
		\State {\bfseries Initialize:} $\blambda$, $\eta_0=\alpha_0=0$, $\pi_h^0(\cdot|\cdot)=\nu_{i,h}^0(\cdot|\cdot)=\Unif(\cA)$, $\cD_i = \hat{\cD}_i =\emptyset$, $w_i^0 = 1/m$, $V_{i,h}^0(\cdot)=\tilde{V}_{i,h}^0(\cdot)=0$
		
		\For{$t=1,2,\ldots,T$}
		\State\label{line:stch-mix}Weighted average: $\tilde w_i^{t-1}=(1-\alpha_{t-1})w_i^{t-1} + \alpha_{t-1}/m,~\forall i\in[m]$
		\State\label{line:stch-w}Update $\bw$ via $w_i^t \propto (\tilde w_i^{t-1})^{1-\mu\eta_{t-1}} \cdot \exp\{ \eta_{t-1} \cdot \lambda_i  (\tilde{V}_{i,1}^{t-1}(s_1)+\iota -V_{i,1}^{t-1}(s_1))\},~\forall i\in[m]$

		\For{$h=H,H-1,\ldots,1$}  
		\State\label{line:stch-auxQ}$\tilde{Q}_{i,h}^t = \texttt{OptQ}\big(\tilde{\cD}_{i,h}, \tilde{V}_{i,h+1}^t \big)$ where $\tilde{V}_{i,h+1}^t(\cdot)=\langle  \tilde{Q}_{i,h+1}^t(\cdot, \cdot), \tilde\nu_{i,h+1}^t(\cdot|\cdot) \rangle_\cA, ~\forall i\in[m]$
		
		\State\label{line:stch-auxpolicy}Update auxiliary policies $\tilde\nu_{i,h}^t=\argmax_{\nu_h}\langle \tilde{Q}_{i,h}^t(\cdot, \cdot), \nu_h(\cdot|\cdot) \rangle_\cA, ~\forall i\in[m]$ 
		
		\State\label{line:stch-mainQ}$Q_{i,h}^t = \texttt{OptQ}\big(\cD_{i,h}, V_{i,h+1}^t\big)$ where $V_{i,h+1}^t(\cdot)=\langle  Q_{i,h+1}^t(\cdot, \cdot), \pi_{h+1}^t(\cdot|\cdot) \rangle_\cA, \forall i\in[m]$
		
		\State\label{line:stch-mainpolicy}Update main policy $\pi_h^t=\argmax_{\pi_h}\langle (\bw^t\odot\blambda)^\top \bQ_h^t(\cdot, \cdot), \pi_h(\cdot|\cdot) \rangle_\cA$ 
		
		\EndFor
		\State\label{line:stch-auxsample}Sample data $\{\tilde{s}_{i,h}^t,\tilde{a}_{i,h}^t, \tilde{s}_{i,h+1}^t, \tilde{r}_{i,h}^t\}_{h=1}^H\sim (\tilde\nu_i^t, \PP)$ and append it to $\tilde{\cD}_{i,h}, ~\forall (i,h)$
		
		\State\label{line:stch-mainsample}Sample data $\{s_h^t,a_h^t, s_{h+1}^t, (r_{i,h}^t)_{i=1}^m\}_{h=1}^H\sim (\pi^t, \PP)$ and append $(s_h^t,a_h^t, s_{h+1}^t, r_{i,h}^t)$ to $\cD_{i,h}, ~\forall (i,h)$
		
		\EndFor 
		\State {\bfseries Return:} $\{\pi^t\}_{t=1}^T$
	\end{algorithmic}
	\normalsize
\end{algorithm}

\vspace{5pt}
\noindent\textbf{Theoretical Result.} Next, we present the theoretical results for \texttt{STchRL} and \texttt{PF-STchRL}.

\begin{theorem}[Theoretical Guarantee for \texttt{STchRL}] \label{thm:stch}
Setting $\eta_t = 1/(\mu t)$ and $\alpha_t = 1/t^2$ for $t\geq 1$ and $\eta_0 = \alpha_0 = 0$, letting $\hat \pi:= \Mix(\pi^1,\pi^2,\cdots, \pi^T)$ be a mixture of the learned policies, $\pi_{\mu,\blambda}^*\in \argmin_{\pi\in\Pi} \stchlm (\pi)$ for any preference vector $\blambda\in\Delta_m$ and smoothing parameter $\mu>0$, then with probability at least $1-\delta$, after $T$ rounds of Algorithm \ref{alg:online-tch}, we obtain	
\begin{align*}
	\stchlm(\hat{\pi}) - \stchlm(\pi_{\mu,\blambda}^*) \leq 	\cO\Big(H^2\log T/(\mu T) +  \mu\log^3 ( mT)/T + \sqrt{H^4 |\cS|^2|\cA| \varrho/T} \Big),
\end{align*}
where $\varrho:=\log (m|\cS||\cA|HT/\delta)$.
\end{theorem}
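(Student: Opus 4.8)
The plan is to bound the per-iteration suboptimality $\stchlm(\pi^t)-\stchlm(\pi_{\mu,\blambda}^*)$ and then pass to the mixture $\hat\pi$. Since the output policy satisfies $\bV_1^{\hat\pi}(s_1)=\frac1T\sum_{t}\bV_1^{\pi^t}(s_1)$ and $\stchlm(\pi)=\mu\log\sum_i\exp(\lambda_i(V_{i,1}^*(s_1)+\iota-V_{i,1}^{\pi}(s_1))/\mu)$ is a convex (log-sum-exp of affine) function of the value vector $\bV_1^{\pi}(s_1)$, Jensen's inequality gives $\stchlm(\hat\pi)\le\frac1T\sum_t\stchlm(\pi^t)$, so it suffices to control $\frac1T\sum_t[\stchlm(\pi^t)-\stchlm(\pi_{\mu,\blambda}^*)]$. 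Using the reformulation in Proposition \ref{prop:eq-stch}, I would introduce the optimistic per-round surrogate $\phi_t(\bw):=\sum_i[w_i\lambda_i(\tilde V_{i,1}^t(s_1)+\iota-V_{i,1}^t(s_1))-\mu w_i\log w_i]$, where $\tilde V^t$ and $V^t$ are the values of the auxiliary policies $\tilde\nu_i^t$ and of $\pi^t$ under the \emph{optimistic} Q-functions built by \texttt{OptQ}, and argue the decomposition
\[
\stchlm(\pi^t)-\stchlm(\pi_{\mu,\blambda}^*)\le \big[\max_{\bw\in\Delta_m}\phi_t(\bw)-\phi_t(\bw^t)\big] + \max_{i}\big(V_{i,1}^t(s_1)-V_{i,1}^{\pi^t}(s_1)\big) + \sum_i w_i^t\lambda_i\big(\tilde V_{i,1}^t(s_1)-V_{i,1}^*(s_1)\big).
\]

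The three terms are: the online-learning regret of the $\bw$-update, the optimism gap of the main policy, and the estimation error of the auxiliary policies for the individual optima $V_{i,1}^*$. To obtain this bound, for the upper side I would use the variational identity $\stchlm(\pi)=\max_{\bw\in\Delta_m}[\sum_i w_i\lambda_i(V_{i,1}^*+\iota-V_{i,1}^{\pi})-\mu\sum_i w_i\log w_i]$ (the inner maximum in Proposition \ref{prop:eq-stch}), together with $\tilde V_{i,1}^t\ge V_{i,1}^*$ and $V_{i,1}^{\pi^t}\ge V_{i,1}^t-\max_j(V_{j,1}^t-V_{j,1}^{\pi^t})$, both of which hold on the good event by optimism; for the lower side I would plug $\bw=\bw^t$ and $\nu_i=\nu_i^*$ into the reformulation, giving $\stchlm(\pi_{\mu,\blambda}^*)\ge\phi_t(\bw^t)-\sum_i w_i^t\lambda_i(\tilde V_{i,1}^t-V_{i,1}^*)-\big[\sum_i w_i^t\lambda_i V_{i,1}^{\pi_{\mu,\blambda}^*}-\sum_i w_i^t\lambda_i V_{i,1}^t\big]$, and the last bracket is $\le0$ since $\pi^t$ is greedy with respect to the combined optimistic Q-function $(\bw^t\odot\blambda)^\top\bQ^t$, which (after handling the clipping of the individual $Q^t_{i,h}$ to $[0,H-h+1]$) is optimistic for the combined value $\sum_i w_i^t\lambda_i V_{i,1}^{\pi}$ of every $\pi$.

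For the two RL error terms I would run the standard optimistic-RL machinery. On the event of probability $\ge1-\delta$ on which $|\hat r_{i,h}^\ell-r_{i,h}|\le\Psi_{i,h}^\ell$ and $|\hat\PP_h^\ell V-\PP_h V|\le\Phi_h^\ell$ (Hoeffding plus a union bound, as the excerpt indicates), a backward induction shows the constructed Q-functions are optimistic and that $V_{i,1}^t(s_1)-V_{i,1}^{\pi^t}(s_1)\le\sum_h\EE_{\pi^t}[2\Psi_h+2\Phi_h]$ and $\tilde V_{i,1}^t(s_1)-V_{i,1}^*(s_1)\le\tilde V_{i,1}^t(s_1)-V_{i,1}^{\tilde\nu_i^t}(s_1)\le\sum_h\EE_{\tilde\nu_i^t}[2\Psi_h+2\Phi_h]$; crucially the bonus terms do not depend on the objective index, so these bounds are objective-uniform. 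Summing over $t$, converting realized to expected visitation via Azuma, and applying the pigeonhole bound $\sum_{t}1/\sqrt{N_h^{t-1}(s_h^t,a_h^t)\vee1}=O(\sqrt{|\cS||\cA|T})$ (and likewise for the auxiliary datasets) shows that the accumulated main and auxiliary optimism gaps are $O(\sqrt{H^4|\cS|^2|\cA|\varrho\,T})$, which yields the $\sqrt{H^4|\cS|^2|\cA|\varrho/T}$ term after dividing by $T$.

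For the $\bw$-regret $\sum_t[\max_{\bw\in\Delta_m}\phi_t(\bw)-\phi_t(\bw^t)]$ I would carry out an online-mirror-descent analysis on the concave rewards $\phi_t$: Line \ref{line:stch-w} is precisely one KL-mirror-ascent step on a linearization of $\phi_{t-1}$ started from the mixed iterate $\tilde\bw^{t-1}$, and the negative-entropy part of $\phi_t$ is $\mu$-strongly concave, so with $\eta_t=1/(\mu t)$ the strongly-convex regret bound gives $O(\mu^{-1}G^2\log T)$ where $G$ bounds the gradient of $\phi_t$, whose $i$-th entry is $\lambda_i(\tilde V_{i,1}^t+\iota-V_{i,1}^t)-\mu(\log w_i^t+1)$. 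Since $\|\tilde\bV_1^t+\biota-\bV_1^t\|_\infty=O(H)$ and the mixing step with $\alpha_t=1/t^2$ forces $\tilde w_i^t\ge1/(mt^2)$, hence $|\log\tilde w_i^t|=O(\log(mT))$, we get $G=O(H+\mu\log(mT))$ and therefore $\sum_t[\cdots]=O\big(H^2\log T/\mu+\mu\log^2(mT)\log T\big)$, plus the $O(1)$ bias from $\sum_t\alpha_t<\infty$ and the first-round term; dividing by $T$ and combining with a union bound over the high-probability events gives the stated bound. The step I expect to be the main obstacle is exactly this $\bw$-regret analysis and its coupling to the RL side: $\phi_t$ is a random, non-oblivious loss sequence produced within the same round with a one-step update lag, the entropy gradient blows up near the simplex boundary, and the mixing schedule must be tuned so its bias stays lower order while keeping the $1/(\mu t)$ fast-rate step sizes — this is the role of the mixing trick of \citep{wei2019online,qiu2023gradient} and the source of the $\mu\log^3(mT)/T$ term; a secondary technicality is verifying that clipping the individual $Q^t_{i,h}$ does not break the combined-optimism inequality used in the decomposition.
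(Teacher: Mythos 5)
Your decomposition is essentially valid on the good event, and your treatment of the RL error terms (optimism of the auxiliary and main Q-functions, index-uniform bonuses, Azuma plus pigeonhole) and of the clipping issue matches what the paper does via its value-decomposition and optimism lemmas. The genuine gap is in the $\bw$-term. By applying Jensen per round, you reduce the problem to controlling $\frac{1}{T}\sum_{t}[\max_{\bw\in\Delta_m}\phi_t(\bw)-\phi_t(\bw^t)]$, i.e.\ the sum of \emph{per-round} suboptimalities of the $\bw$-iterates — a dynamic-regret quantity whose comparator (the softmax maximizer of $\phi_t$) moves with the algorithm's own optimistic value estimates. You then invoke the strongly-concave mirror-ascent bound $\cO(G^2\log T/\mu)$ for step sizes $\eta_t=1/(\mu t)$, but that bound controls only the \emph{static} regret against a single fixed comparator; it says nothing about $\sum_t\max_{\bw}\phi_t(\bw)-\sum_t\phi_t(\bw^t)$. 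The loss vectors $\blambda\odot(\tilde{\bV}_1^t(s_1)+\biota-\bV_1^t(s_1))$ are produced by greedy policies and freshly updated optimistic Q-functions, so they can change abruptly between rounds; the per-round maximizers can then oscillate while the $1/(\mu t)$ step sizes freeze $\bw^t$, and without an additional path-length or stability argument the dynamic regret need not be $\tilde\cO(1/\mu)$ — it need not even be $o(T)$. In particular, even an optimistic path-length analysis would not recover the claimed fast-rate terms $H^2\log T/(\mu T)+\mu\log^3(mT)/T$, so this step fails as stated.

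The fix is to avoid the per-round Jensen step, which is what the paper does: because the inner objective of the variational form is linear in the value vector and $\bV_1^{\hat\pi}(s_1)=\frac{1}{T}\sum_t\bV_1^{\pi^t}(s_1)$, one has exactly $\stchlm(\hat\pi)=\max_{\bw\in\Delta_m}\frac{1}{T}\sum_t[\sum_i w_i\lambda_i(V_{i,1}^*(s_1)+\iota-V_{i,1}^{\pi^t}(s_1))-\mu\sum_i w_i\log w_i]$, so the maximization over $\bw$ stays outside the time average with a single fixed comparator. Adding and subtracting $\frac{1}{T}\sum_t\sum_i w_i^t\lambda_i(\cdot)$ then yields the paper's Term(II), $\max_{\bw}\frac{1}{T}\sum_t[\langle\bw-\bw^t,\blambda\odot(\tilde{\bV}_1^t(s_1)+\biota-\bV_1^t(s_1))\rangle-\mu\langle\bw,\log\bw\rangle+\mu\langle\bw^t,\log\bw^t\rangle]$, which is precisely the static-regret quantity bounded in Lemma \ref{lem:oco-stch} by $18H^2\log T/(\mu T)+11\mu\log^3(mT)/T$, using exactly the ingredients you describe (strong concavity from the entropy term, the mixing step $\alpha_t=1/t^2$ keeping $|\log\tilde w_i^t|=\cO(\log(mT))$ with summable bias). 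Your estimates of $G$, your use of the mixing trick, and your handling of the RL terms are correct, so rerouting the decomposition through this exact max–average exchange rather than per-round Jensen recovers the theorem.
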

Compared with Theorem \ref{thm:tch} for the algorithm \texttt{TchRL}, Theorem \ref{thm:stch} shows that when $\mu$ is not too small, we have a faster $\tilde \cO(1/T)$ rate for learning $\bw$, thanks to the extra regularization term $\mu\sumi w_i \log w_i$ in \eqref{eq:eq-stch}. (For the case that $\mu\rightarrow 0^+$, i.e., $\mu$ is too small, we provide a detailed discussion in Remark \ref{re:mu}.) Since the bound in Theorem \ref{thm:stch} is still dominated by the leading term $\tilde\cO(1/\sqrt{T})$, to  achieve $\stchlm(\hat{\pi}) - \stchlm(\pi_{\mu,\blambda}^*) \leq \varepsilon$, we still need $T = \tilde\cO(\varepsilon^{-2})$. According to \eqref{eq:stch-err}, we have that if we set $\mu = \varepsilon/(2\log m)$, then $\tchl(\hat \pi) - \tchl(\pi_{\blambda}^*)\leq \stchlm(\hat \pi) - \stchlm(\pi_{\mu,\blambda}^*) + \varepsilon/2$. This implies that if $T=\tilde\cO(\varepsilon^{-2})$, then we have $\stchlm(\hat{\pi}) - \stchlm(\pi_{\mu,\blambda}^*) \leq \cO(\varepsilon)$, which further yields $\tchl(\hat \pi) - \tchl(\pi_{\blambda}^*)\leq \cO(\varepsilon)$, i.e., $\hat\pi$ is also an $\cO(\varepsilon)$-approximate optimal solution to $\min_{\pi\in\Pi} \tchl(\pi)$.

\begin{algorithm}[tb]
	\small
	\caption{Planning Stage for \texttt{PF-STchRL}}
	\label{alg:exploit-stch}
	\setstretch{1.15}
	\begin{algorithmic}[1]
		\State {\bfseries Input:} Preference $\blambda$ and pre-collected data $\cD:=\{s_h^t,a_h^t, s_{h+1}^t, (r_{i,h}^t)_{i=1}^m\}_{t=1}^T$ by Algorithm \ref{alg:pure-explore} 	
		\State {\bfseries Initialize:} $\eta_0=\alpha_0=0$, $w_i^0 = 1/m$,  $V_{i,h}^0(\cdot)=0$, $\tilde{V}_{i,H+1}(\cdot)=0, ~\forall (i,h)\in[m]\times[H]$
		\State\label{line:plan-est-stch}Calculate reward and transition estimates $\hat{r}_{i,h}$, $\hat{\PP}_h$ and bonus terms $\Psi_{i,h}$, $\Phi_h$ using data $\cD$, $\forall (i,h)$
		
		\For{$h = H,H-1,\ldots,1$}  
		
		\State $\tilde{Q}_{i,h}(\cdot,\cdot) = \{ (\hat{r}_{i,h} + \hat{\PP}_h \tilde{V}_{i,h+1} + \Phi_h + \Psi_{i,h}) (\cdot,\cdot)\}_{[0,H-h+1]},~ \forall i\in[m]$ 
		
		\State  $\tilde{\nu}_{i,h}=\argmax_{\nu_h}\langle \tilde{Q}_{i,h}(\cdot, \cdot), \nu_h(\cdot|\cdot) \rangle_\cA$ and let $\tilde{V}_{i,h}(\cdot)=\langle  \tilde{Q}_{i,h}(\cdot, \cdot), \tilde{\nu}_{i,h}(\cdot|\cdot) \rangle_\cA, \forall i\in[m]$
		\EndFor

		\For{$k=1,\ldots,K$}
		\State\label{line:pf-stch-mix}Weighted average: $\tilde w_i^{k-1}=(1-\alpha_{k-1})w_i^{k-1} + \alpha_{k-1}/m,~\forall i\in[m]$
		\State\label{line:pf-stch-w}Update $\bw$ via $w_i^k \propto (\tilde w_i^{k-1})^{1-\mu\eta_{k-1}} \cdot \exp\{ \eta_{k-1} \cdot \lambda_i  (\tilde{V}_{i,1}(s_1)+\iota -V_{i,1}^{k-1}(s_1))\},~\forall i\in[m]$
		
		\For{$h = H,H-1,\ldots,1$} 
		
		\State $Q_{i,h}^k(\cdot,\cdot) = \{ (\hat{r}_{i,h} + \hat{\PP}_h V_{i,h+1}^k + \Phi_h + \Psi_{i,h}) (\cdot,\cdot)\}_{[0,H-h+1]},~ \forall i\in[m]$
		
		\State $\pi_h^k=\argmax_{\pi_h}\langle (\bw^k\odot\blambda)^\top \bQ_h^k(\cdot, \cdot), \pi_h(\cdot|\cdot) \rangle_\cA$ and let  $V_{i,h}^k(\cdot)=\langle  Q_{i,h}^k(\cdot, \cdot), \pi_h^k(\cdot|\cdot) \rangle_\cA, \forall i\in[m]$
		\EndFor
		
		\EndFor 
		\State {\bfseries Return:} $\{\pi^k\}_{k=1}^K$
	\end{algorithmic}
	\normalsize
\end{algorithm}

\begin{theorem}[Theoretical Guarantee for \texttt{PF-STchRL}]\label{thm:pf-stch} Setting $\eta_k = 1/(\mu k)$ and $\alpha_k = 1/k^2$ for $k\geq 1$ and $\eta_0 = \alpha_0 = 0$, letting $\hat \pi:= \Mix(\pi^1,\pi^2,\cdots, \pi^K)$ be a mixture of the learned policies and $\pi_{\mu,\blambda}^*:=\argmin_{\pi\in\Pi} \stchlm (\pi)$ for any preference vector $\blambda\in\Delta_m$, then with probability at least $1-\delta$, after $T$ rounds of exploration via Algorithm \ref{alg:pure-explore} and $K$ rounds of planning via Algorithm \ref{alg:exploit-stch}, we obtain
\begin{small}
	\begin{align*}
		\mathrm{[\mathbf{I}]:}&\quad \stchlm(\hat{\pi}) - \stchlm(\pi_{\mu,\blambda}^*) \leq  \cO\bigg( H^2\log K/(\mu K) +  \mu\log^3 ( mK)/K +  \sqrt{ H^6 |\cS|^2 |\cA| \varrho/T} \bigg),\\
		\mathrm{[\mathbf{II}]:}&\quad \stchlm(\hat{\pi}) - \stchlm(\pi_{\mu,\blambda}^*) \leq  \cO\Big( H^2\log K/(\mu K) +  \mu\log^3 ( mK)/K + \sqrt{ (H^2|\cS|+m) H^4 |\cS| |\cA| \varrho/T} \Big),
	\end{align*}
\end{small}	
where $\varrho:=\log (m|\cS||\cA|HT/\delta)$, and [I] and [II] correspond to different options in Algorithm \ref{alg:pure-explore}. 
\end{theorem}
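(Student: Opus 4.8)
The plan is to combine the dual-variable regret analysis developed for \texttt{STchRL} (Theorem~\ref{thm:stch}) with the decoupled exploration--planning decomposition used for \texttt{PF-TchRL} (Theorem~\ref{thm:pre-free}): the exploration term is inherited essentially verbatim from the latter, while the faster $\bw$-rate comes from the entropy regularizer as in the former. \emph{First}, using the equivalent form of Proposition~\ref{prop:eq-stch}, write $\stchlm(\pi)=\max_{\bw\in\Delta_m}\max_{\{\nu_i\in\Pi\}}F(\pi,\bw,\{\nu_i\})$ with $F(\pi,\bw,\{\nu_i\}):=\sum_{i=1}^m[w_i\lambda_i(V_{i,1}^{\nu_i}(s_1)+\iota-V_{i,1}^{\pi}(s_1))-\mu w_i\log w_i]$. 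Then I decompose $\stchlm(\hat\pi)-\stchlm(\pi_{\mu,\blambda}^*)$ into: (i) $\stchlm(\hat\pi)-\frac1K\sum_{k=1}^K F(\pi^k,\bw^k,\{\tilde\nu_i\})$; (ii) the planning optimization error $\frac1K\sum_k\bigl[F(\pi^k,\bw^k,\{\tilde\nu_i\})-\min_{\pi}F(\pi,\bw^k,\{\tilde\nu_i\})\bigr]$; and (iii) $\frac1K\sum_k\min_{\pi}F(\pi,\bw^k,\{\tilde\nu_i\})-\stchlm(\pi_{\mu,\blambda}^*)$, plus the slack between the optimistic models and the true MOMDP. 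Because $\hat\pi=\Mix(\pi^1,\dots,\pi^K)$ gives $V_{i,1}^{\hat\pi}(s_1)=\frac1K\sum_k V_{i,1}^{\pi^k}(s_1)$ and $\bV\mapsto\mu\log\sum_i\exp(\lambda_i(V_{i,1}^*(s_1)+\iota-V_i)/\mu)$ is convex, Jensen yields $\stchlm(\hat\pi)\le\frac1K\sum_k\stchlm(\pi^k)$, which controls (i).

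\emph{Second}, on the good event where the Hoeffding bounds $|\hat r_{i,h}-r_{i,h}|\le\Psi_{i,h}$ and $|\hat\PP_hV-\PP_hV|\le\Phi_h$ hold for all $(i,h)$ and all value functions in the relevant finite class (probability $\ge1-\delta$ with $\varrho=\log(m|\cS||\cA|HT/\delta)$), the planning-stage constructions $\tilde Q_{i,h}$ and $Q_{i,h}^k$ are valid upper confidence bounds on $Q_{i,h}^{\nu_i^*}$ and $Q_{i,h}^{\pi}$. Hence the greedy updates of $\tilde\nu_i$ and $\pi^k$ are ``optimistically optimal'', and the overestimation of each $V_{i,1}^{\pi}$ telescopes to a sum $\sum_h\EE_{\pi}[\Phi_h+\Psi_{i,h}]$ along the corresponding occupancy measure. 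This is what reduces terms (iii) and the policy part of (ii) to accumulated bonuses, mirroring the argument behind Theorem~\ref{thm:pre-free}.

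\emph{Third}, two quantitative pieces remain. (a) \emph{Dual regret with mixing.} For fixed $\{\tilde\nu_i\}$ and the optimistic models, $\bw\mapsto F$ is concave and $\mu$-strongly concave relative to the KL divergence on $\Delta_m$, thanks to $-\mu\sum_iw_i\log w_i$. The update in Lines~\ref{line:pf-stch-mix}--\ref{line:pf-stch-w} is a mirror-ascent step warm-started at $\tilde\bw^{k-1}$; the mixing $\tilde w_i^{k-1}=(1-\alpha_{k-1})w_i^{k-1}+\alpha_{k-1}/m$ ensures $-\log\tilde w_i^{k-1}\le\log m+2\log k$, so the $\mu\log\tilde\bw^{k-1}$ correction stays bounded. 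With $\eta_k=1/(\mu k)$ and $\alpha_k=1/k^2$, the strongly-convex online-mirror-descent analysis (as in the proof of Theorem~\ref{thm:stch}, cf.\ \citet{wei2019online,qiu2023gradient}) makes the cumulative dual regret telescope to $\tilde\cO(H^2/\mu)$ and the mixing bias sum to $\cO(\mu\log^3(mK))$; dividing by $K$ gives the $H^2\log K/(\mu K)+\mu\log^3(mK)/K$ terms. (b) \emph{Exploration guarantee.} Exactly as in Theorem~\ref{thm:pre-free}, running the UCB policy $\overline\pi^t$ for $T$ rounds against the uncertainty reward $\overline r^t$ makes $\frac1T\sum_t V_1^{\overline\pi^t}(s_1)$ small via a standard optimistic-regret bound, and the construction of $\overline r^t$ (option~[I]: a max over $\Phi_h/H,\Psi_{1,h},\dots,\Psi_{m,h}$; option~[II]: their sum) upper-bounds $\sup_\pi\sum_h\EE_\pi[\Phi_h+\Psi_{i,h}]$ by $\cO(\sqrt{H^6|\cS|^2|\cA|\varrho/T})$ in case~[I] and $\cO(\sqrt{(H^2|\cS|+m)H^4|\cS||\cA|\varrho/T})$ in case~[II], the extra $m$ in~[II] being the cost of summing $m$ reward bonuses. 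Plugging (a) and (b) into the decomposition of the first step yields the stated bound.

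\emph{Main obstacle.} The delicate step is (a): making the strongly-convex dual regret, the bias from the mixing step, and the warm-start KL terms telescope cleanly under the precise schedule $\eta_k=1/(\mu k),\ \alpha_k=1/k^2$, while the ``losses'' fed to the mirror step are themselves optimistic estimates that drift with the policy iterates $\pi^k$ and must be handled on the good event above --- and all of this has to hold simultaneously for every input preference $\blambda\in\Delta_m$, since the planning stage reuses a single preference-free dataset $\cD$ and no additional exploration is performed.
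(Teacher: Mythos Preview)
Your overall plan---reuse the exploration analysis of Theorem~\ref{thm:pre-free} for the bonus terms and plug in the strongly-concave mirror-ascent bound (Lemma~\ref{lem:oco-pf-stch}) for the $\bw$-part---is exactly what the paper does. The exploration piece~(b) and the identification of the obstacle are correct.

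There is, however, one genuine gap in your decomposition step. You invoke Jensen to get $\stchlm(\hat\pi)\le\frac1K\sum_k\stchlm(\pi^k)$ and say this ``controls~(i)''. But after Jensen, term~(i) is bounded by
\[
\frac1K\sum_{k=1}^K\Bigl[\max_{\bw\in\Delta_m}F(\pi^k,\bw,\{\nu_i^*\})-F(\pi^k,\bw^k,\{\tilde\nu_i\})\Bigr],
\]
i.e.\ the \emph{per-round} dual gap with the maximizer $\bw$ allowed to change with $k$. The mirror-ascent lemma you want to use (Lemma~\ref{lem:oco-pf-stch}) only controls the \emph{static} regret $\max_{\bw}\frac1K\sum_k[\cdots]$, with the $\max_{\bw}$ outside the average; it says nothing about the dynamic regret your Jensen step produces. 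So as written the argument would not close.

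The paper avoids this entirely by \emph{not} using Jensen: it substitutes the mixture identity $V_{i,1}^{\hat\pi}(s_1)=\frac1K\sum_kV_{i,1}^{\pi^k}(s_1)$ directly inside the log-sum-exp (equivalently inside the $\max_{\bw}$ form of Proposition~\ref{prop:eq-stch}), obtaining
\[
\stchlm(\hat\pi)=\max_{\bw\in\Delta_m}\frac1K\sum_{k=1}^K\Bigl[\textstyle\sum_i w_i\lambda_i(V_{i,1}^*(s_1)+\iota-V_{i,1}^{\pi^k}(s_1))-\mu\sum_i w_i\log w_i\Bigr],
\]
so that after adding and subtracting $\frac1K\sum_kF(\pi^k,\bw^k,\{\tilde\nu_i\})$ the $\bw$-term is precisely the static-regret quantity of Lemma~\ref{lem:oco-pf-stch}. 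In other words, drop Jensen and use the mixture equality; the rest of your proposal then goes through exactly as in the paper. The remark about uniformity over $\blambda$ is a non-issue: the high-probability event concerns only the reward/transition concentration from the exploration data and is preference-independent, so once it holds the planning bound is deterministic for every $\blambda$.
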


Compared with Theorem \ref{thm:pre-free} for \texttt{PF-TchRL}, Theorem \ref{thm:pf-stch} shows that when $\mu$ is not too small, we have a faster $\tilde \cO(1/K)$ rate for the planning stage, thanks to the extra regularization term $\mu\sumi w_i \log w_i$ in \eqref{eq:eq-stch}. Thus, to achieve $\stchlm(\hat{\pi}) - \stchlm(\pi_{\mu,\blambda}^*) \leq \varepsilon$, we need only $K = \tilde\cO((\mu\varepsilon)^{-1})$ for the planning stage but still $T = \tilde\cO(\varepsilon^{-2})$ for the preference-free exploration stage.

\begin{remark}[Discussion on $\mu$] \label{re:mu}
We note that when $\mu$ is too small with $\mu\rightarrow 0^+$, the results in Theorem \ref{thm:stch} and Theorem \ref{thm:pf-stch} can be worse as they have a dependence on $1/\mu$. In Appendix \ref{sec:additional}, we can further show that the term $\tilde\cO(1/(\mu T)+ \mu/T)$ in Theorem \ref{thm:stch} (or $\tilde\cO(1/(\mu K)+ \mu/K)$ in Theorem \ref{thm:pf-stch}) can be replaced by $\tilde\cO(1/\sqrt{T})$ (or $\tilde\cO(1/\sqrt{K})$) without the dependence on $1/\mu$ under different settings of the step sizes $\eta_t$ (or $\eta_k$), which can be better rates for the case of $\mu\rightarrow 0^+$. Such results are associated with the learning guarantees for the update of $\bw$ in both algorithms. We refer readers to Appendix \ref{sec:additional} for detailed analysis. We remark that under the different settings of the step sizes, the weighted average steps for $\bw$ in both algorithms are not necessary. 
\end{remark}


\section{Theoretical Analysis} \label{sec:theory}

\subsection{Proof Sketch of Theorem \ref{thm:tch}} 
	Defining $\pi_{\blambda}^*:=\minimize_{\pi\in\Pi}\tchl(\pi)$, we can decompose $\tchl(\hat{\pi}) - \tchl(\pi_{\blambda}^*)$ into three error terms as 
	\begin{align}
		&\tchl(\hat{\pi}) - \tchl(\pi_{\blambda}^*) \nonumber\\[-3pt]
		&\quad \leq \underbrace{\max_{\bw\in\Delta_m}   \frac{1}{T}\sumt \sumi w_i\lambda_i \Big(V_{i,1}^*(s_1)  -  V_{i,1}^{\tilde\nu_i^t}(s_1)\Big)}_{\text{Err(I)}} + \underbrace{\max_{\bw \in\Delta_m} \frac{1}{T} \sum_{t=1}^T \sum_{i=1}^m (w_i-w_i^t)\lambda_i \Big(V_{i,1}^{\tilde \nu_i^t}(s_1) + \iota - V_{i,1}^{\pi^t}(s_1)\Big)}_{\text{Err(II)}}	\nonumber\\[-3pt]
		&\qquad + \underbrace{\frac{1}{T}\sum_{t=1}^T\sum_{i=1}^m w_i^t\lambda_i \Big(V_{i,1}^{\pi_{\blambda}^*}(s_1) - V_{i,1}^{\pi^t}(s_1)\Big)}_{\text{Err(III)}}, \label{eq:pf-sk-onlinetch}
	\end{align}
	where we use the definition of $\hat\pi$ in the theorem such that $V^{\hat\pi}_{i,1}(s_1)= \frac{1}{T} \sumt V^{\pi^t}_{i,1}(s_1)$.
	Specifically, Err(I) depicts the learning error for $V_{i,1}^*(s_1)$, i.e., the optimal values for each objective. Err(II) is the error of learning $\bw^t$ toward the optimal one. Err(III) is associated with the learning error for achieving the value under the (weakly) Pareto optimal policy corresponding to the preference $\blambda$, which is $V_{i,1}^{\pi_{\blambda}^*}(s_1)$. This decomposition also reflects the fundamental idea of algorithm design for Algorithm \ref{alg:online-tch}, which features three alternating update steps.
	 
	By the construction of optimistic Q-functions and the policy update for $\tilde \nu_{i,h}^t$ and $\pi_h^t$, we have 
	\begin{align*}
	V_{i,1}^*(s_1) - \tilde V_{i,1}^t(s_1)\leq 0,\quad V_{i,1}^{\pi^t}(s_1) - V_{i,1}^t(s_1)\leq 0,  \quad \text{and} \quad  \sum_{i=1}^m w_i^t\lambda_i \Big(V_{i,1}^{\pi_{\blambda}^*}(s_1) - V_{i,1}^t(s_1)\Big) \leq 0,	
	\end{align*}
	which further leads to 
	\begin{align*}
	\text{Err(I)} &\leq  \max_{\bw\in\Delta_m}   \frac{1}{T}\sumt \sumi w_i\lambda_i \Big(V_{i,1}^t(s_1)  -  V_{i,1}^{\tilde \nu_i^t}(s_1)\Big),\\
	\text{Err(II)} &\leq \max_{\bw \in\Delta_m} \frac{1}{T} \sum_{t=1}^T \sum_{i=1}^m (w_i-w_i^t)\lambda_i \Big(\tilde V_{i,1}^t(s_1) + \iota - V_{i,1}^t(s_1)\Big) \\
	&\quad + \frac{1}{T}\sum_{t=1}^T \sum_{i=1}^m \lambda_i  \Big( \tilde V_{i,1}^t(s_1)-V_{i,1}^{\tilde\nu_i^t}(s_1) + V_{i,1}^t(s_1) - V_{i,1}^{\pi^t}(s_1)\Big),\\
	\text{Err(III)} &\leq \frac{1}{T}\sum_{t=1}^T\sum_{i=1}^m w_i^t\lambda_i \Big(V_{i,1}^t(s_1) - V_{i,1}^{\pi^t}(s_1)\Big),
	\end{align*}
	with high probability. Now the upper bounds of Err(I) and Err(III) are associated with the average of on-policy bonus values over $T$ rounds under the policies $\tilde \nu_i^t$ and $\pi^t$ respectively, which can be bounded by $\tilde\cO(T^{-1/2})$ as shown in our proof, i.e.,
	\begin{align*}
		\text{Err(I)} + \text{Err(III)} &\leq  \tilde \cO\bigg(\frac{1}{\sqrt{T}} +  \frac{1}{T}\sum_{t=1}^T\sum_{h=1}^H [\tilde \Psi_{i,h}^t(\tilde s_{i,h}^t, \tilde a_{i,h}^t)  +   \tilde \Phi_{i,h}^t(\tilde s_{i,h}^t, \tilde a_{i,h}^t)]\bigg) \leq \tilde \cO\bigg(\frac{1}{\sqrt{T}}\bigg),
	\end{align*}
	Moreover, the upper bound of Err(II) corresponds to the mirror accent step and the average of on-policy bonus values over $T$ rounds under both policies $\tilde \nu_i^t$ and $\pi^t$,
	which thus can be bounded as
	\begin{align*}
		\text{Err(II)} \leq \tilde \cO\bigg(\frac{1}{\sqrt{T}}\bigg).
	\end{align*}
	 Combining the above bounds, we eventually obtain $\tchl(\hat{\pi}) - \tchl(\pi_{\blambda}^*) \leq  \tilde\cO(T^{-1/2})$ with high probability.

\subsection{Proof Sketch of Theorem \ref{thm:pre-free}}
 
We can apply a similar decomposition as above for the proof of Theorem \ref{thm:tch}. By optimism and policy updating rules in Algorithm \ref{alg:exploit-tch} for $\tilde\nu_i$ and $\pi^k$, we directly obtain
\begin{align}
	\begin{aligned}\label{eq:pf-sk-pftch}
&\tchl(\hat{\pi}) - \tchl(\pi_{\blambda}^*)\\[-3pt]
&\qquad \lesssim \underbrace{\sumi\lambda_i \Big(\tilde V_{i,1}(s_1)  -  V_{i,1}^{\tilde \nu_i}(s_1)\Big)}_{\text{Err(IV)}} + \underbrace{\frac{1}{K}\sumk\sum_{i=1}^m \lambda_i \Big(V_{i,1}^k(s_1) - V_{i,1}^{\pi^k}(s_1)\Big)}_{\text{Err(V)}} \\[-3pt]
&\qquad\quad+ \underbrace{\max_{\bw \in\Delta_m} \frac{1}{K}\sumk \sum_{i=1}^m (w_i-w_i^k)\lambda_i \Big(\tilde V_{i,1}(s_1) + \iota - V_{i,1}^k(s_1)\Big)}_{\text{Err(VI)}}
\end{aligned}
\end{align}
with high probability. Furthermore, we can show that Err(IV) and Err(V) connect to the exploration phase in Algorithm \ref{alg:pure-explore}, and thus the following relations hold with high probability
\begin{align*}
&\text{Err(IV)} + \text{Err(V)} \\
&\qquad \leq \cO\bigg(\frac{1}{T}\sumt \overline{V}_1^t(s_1)\bigg) \leq \tilde\cO\bigg(\frac{1}{\sqrt{T}}  + \frac{1}{T} \sumt\sumh [\overline r_h^t(s_h^t, a_h^t) +\Phi_h^t(s_h^t, a_h^t)]\bigg) \leq \tilde\cO\bigg(\frac{1}{\sqrt{T}}\bigg),
\end{align*}
where we can see that Err(IV) and Err(V) are bounded by the average of the exploration reward and the transition bonus defined on the data collected in Algorithm \ref{alg:pure-explore}. Thus, we obtain that both terms are bounded by $\tilde\cO(T^{-1/2})$ by the design of the exploration reward using the bonus terms of reward and transition estimation. Moreover, different options in Algorithm \ref{alg:pure-explore} will further lead to different dependence on $|\cS|$, $m$, and $H$, consequently resulting in two convergence guarantees in Theorem \ref{thm:pre-free}. On the other hand, we have
\begin{align*}
\text{Err(VI)}\leq \tilde\cO\bigg(\frac{1}{\sqrt{K}}\bigg),	
\end{align*}
due to the mirror ascent step in the planning phase. Therefore, combining the above results, with high probability, we have $\tchl(\hat{\pi}) - \tchl(\pi_{\blambda}^*)\leq \tilde\cO(T^{-1/2} + K^{-1/2})$.

\subsection{Proof of Sketch of Theorem \ref{thm:stch}}
Defining $\pi_{\mu,\blambda}^*:=\minimize_{\pi\in\Pi}\stchlm(\pi)$, we can decompose $\stchlm(\hat{\pi}) - \stchlm(\pi_{\mu,\blambda}^*)$ into three error terms as 
\begin{align}
	\begin{aligned}\label{eq:pf-sk-onlinestch}
	&\stchlm(\hat{\pi}) - \stchlm(\pi_{\mu,\blambda}^*) \\[-3pt]
	&\qquad \leq \underbrace{\max_{\bw\in\Delta_m}   \frac{1}{T}\sumt \sumi w_i\lambda_i \Big(V_{i,1}^*(s_1)  -  V_{i,1}^{\tilde\nu_i^t}(s_1)\Big)}_{\text{Err(VII)}} \\[-3pt]
	&\quad\qquad + \underbrace{\max_{\bw \in\Delta_m} \frac{1}{T} \sum_{t=1}^T \sum_{i=1}^m \Big[(w_i-w_i^t)\lambda_i \Big(V_{i,1}^{\tilde \nu_i^t}(s_1) + \iota - V_{i,1}^{\pi^t}(s_1)\Big) -w_i\log w_i +w_i^t\log w_i^t\Big]}_{\text{Err(VIII)}}	\\[-3pt]
	&\quad \qquad + \underbrace{\frac{1}{T}\sum_{t=1}^T\sum_{i=1}^m w_i^t\lambda_i \Big(V_{i,1}^{\pi_{\blambda}^*}(s_1) - V_{i,1}^{\pi^t}(s_1)\Big)}_{\text{Err(IX)}}.
\end{aligned}
\end{align}
As we can observe, the decomposition \eqref{eq:pf-sk-onlinestch} is similar to \eqref{eq:pf-sk-onlinetch} except for the term Err(VIII). Comparing Err(VIII) with Err(II) in \eqref{eq:pf-sk-onlinetch}, Err(VIII) has extra regularization terms $w_i\log w_i$, $w_i^t\log w_i^t$, which will lead to an $\tilde\cO(1/(\mu T)+\mu/T)$ rate associated with the mirror ascent step for $\bw$ instead of $\tilde\cO(1/\sqrt{T})$. For other terms in \eqref{eq:pf-sk-onlinestch}, they admit the same $\tilde\cO(1/\sqrt{T})$ rate as in \eqref{eq:pf-sk-onlinetch}. Thus, combining these results gives the $\tilde\cO(1/\sqrt{T} + 1/(\mu T) +\mu/T)$ rate as in Theorem \ref{thm:stch}.

\subsection{Proof of Sketch of Theorem \ref{thm:pf-stch}}
For Theorem \ref{thm:pf-stch}, we can apply a similar decomposition as above, and we obtain
\begin{align}
	\begin{aligned}\label{eq:pf-sk-pfstch}
	&\tchl(\hat{\pi}) - \tchl(\pi_{\blambda}^*)\\[-3pt]
	&\qquad \lesssim \underbrace{\sumi\lambda_i \Big(\tilde V_{i,1}(s_1)  -  V_{i,1}^{\tilde \nu_i}(s_1)\Big)}_{\text{Err(X)}} + \underbrace{\frac{1}{K}\sumk\sum_{i=1}^m \lambda_i \Big(V_{i,1}^k(s_1) - V_{i,1}^{\pi^k}(s_1)\Big)}_{\text{Err(XI)}} \\[-3pt]
	&\qquad\quad+ \underbrace{\max_{\bw \in\Delta_m} \frac{1}{K}\sumk \sum_{i=1}^m \Big[ (w_i-w_i^k)\lambda_i \Big(\tilde V_{i,1}(s_1) + \iota - V_{i,1}^k(s_1)\Big)-w_i\log w_i +w_i^k\log w_i^k\Big]}_{\text{Err(XII)}}.
	\end{aligned}
\end{align}
The decomposition \eqref{eq:pf-sk-pfstch} is similar to \eqref{eq:pf-sk-pftch} except for the term Err(XII). Comparing Err(XII) with Err(VI) in \eqref{eq:pf-sk-pftch}, Err(XII) also has extra regularization terms $w_i\log w_i$, $w_i^t\log w_i^t$, which thus leads to an $\tilde\cO(1/(\mu K)+\mu/K)$ rate associated with the mirror ascent step for $\bw$ instead of $\tilde\cO(1/\sqrt{K})$ in Theorem \ref{thm:pre-free}. For other terms above, they admit the same $\tilde\cO(1/\sqrt{T})$ rate as in \eqref{eq:pf-sk-pftch}. Thus, combining these results gives the $\tilde\cO(1/\sqrt{T} + 1/(\mu K) +\mu/K)$ rate in Theorem \ref{thm:pf-stch}.

\section{Conclusion}
This paper investigates MORL, which focuses on learning a Pareto optimal policy in the presence of multiple reward functions. Our work first systematically analyzes several multi-objective optimization targets and identifies Tchebycheff scalarization as a favorable scalarization method for MORL. We then reformulate its minimization problem into a new min-max-max optimization problem and propose an online UCB-based algorithm and a preference-free framework to learn all Pareto optimal policies with provable sample efficiency. Finally, we analyze an extension of Tchebycheff scalarization named smooth Tchebycheff scalarization and extend our algorithms and theoretical analysis to this optimization target.

\bibliography{example_paper}
\bibliographystyle{abbrvnat}

\newpage
\appendix

{\centering
	{\LARGE Appendix}
	\par }
\vspace{0.6cm}
{
	\hypersetup{linkcolor=black}
	\tableofcontents
}


\addtocontents{toc}{\protect\setcounter{tocdepth}{2}}

\newpage

 	\onecolumn
 \vspace{1em}
 \renewcommand{\thesection}{\Alph{section}}

 \vspace{0.5em}
 
\section{Proofs for Section \ref{sec:pareto}}\label{supp:subgap}

\subsection{Proof of Property \ref{pro:property}}

\begin{proof} The proof of this proposition is divided into two parts: the proof of property (a) and the proof of property (b). We first develop a technique from the perspective of analysis to prove the property (a) of this proposition. Then, based on property (a), we further develop the proof of property (b).
	
\vspace{3pt}
\noindent\textbf{Part 1)} Prove property (a). As show in Definition \ref{def:pareto}, $\pi'$ dominates $\pi$ if and only if 
\begin{align*}
&\forall i\in[m], ~~V_{i,1}^{\pi'}(s_1) \geq V_{i,1}^{\pi}(s_1),\\
\text{and}~~~~&\exists j\in[m], ~~V_{j,1}^{\pi'}(s_1) > V_{j,1}^\pi(s_1).	
\end{align*}
For a clear presentation, we let $\pi^0$ be a policy not in $\Pi_{\mathrm{P}}^*$. Through our proof, we need to find one $\pi^* \in \Pi_{\mathrm{P}}^*$ dominating $\pi^0$. 

One might think of a straightforward argument that if $\pi^0$ is dominated by some $\pi$ not in $\Pi_{\mathrm{P}}^*$, then we can always find another policy $\pi'$ dominating $\pi$ such that a policy $\pi^*\in\Pi_{\mathrm{P}}^*$ dominating $\pi^0$ will be eventually found until this process stops, using the fact that the dominance relation is transitive. However, we note that only applying this argument is not sufficient to prove property (a) of Property \ref{pro:property} as 1) we have no guarantee whether the sequence generated by the above procedure will converge, and 2) even if the sequence converges to some point, we have no guarantee this point is in Pareto set $\Pi_{\mathrm{P}}^*$. Thus, the proof of property (a) is non-trivial and challenging. 

In what follows, we manage to construct such a sequence of policies with controllable and quantifiable value difference between two consecutive policies starting from $\pi^0$. We further prove that the sequence of policies will converge, and the point it converges to is a Pareto optimal policy.

Inspired by the above definition of dominance, we first define a set $\cB(\pi,\epsilon)$ for $\epsilon>0$ as follows,
\begin{align*}
	\cB(\pi,\epsilon) := \bigg\{\pi'~\bigg|~ \pi' \text{ dominates } \pi \text{, and } \sum_{i=1}^m V_{i,1}^{\pi'}(s_1)\ge \sum_{i=1}^m V_{i,1}^{\pi}(s_1)+\epsilon\bigg\}.	
\end{align*}
Policies in this set can dominate $\pi$, and we have a difference of at least $\epsilon$ between summations of values under $\pi$ and any policy in this set. Therefore, any policy in $\cB(\pi,\epsilon)$ is strictly better than $\pi$ with at least an $\epsilon$ improvement in their value summations over $m$ objectives. Then, we can use the set $\cB(\pi,\epsilon)$ to control and characterize the value differences.

Starting with any $\pi^0$, we can create a sequence $\pi^0_{(\epsilon)},\pi^1_{(\epsilon)},\pi^2_{(\epsilon)},\cdots$ with $\pi^k_{(\epsilon)}\in \cB(\pi^{k-1}_{(\epsilon)},\epsilon)$ and $\pi^0_{(\epsilon)} = \pi^0$. If $\cB(\pi^k_{(\epsilon)},\epsilon) = \emptyset$, then the process stops. Overall, following this idea, we construct of the sequence that is proved to converge to a Pareto optimal by the following scheme: 1) starting from $\pi^0$, set $\epsilon=1$ to generate a sequence by the above process until it stops; 2) shrink $\epsilon$ to $1/2$ and starting from the stopping point of the last sequence, generate a subsequent sequence until it stops; 3) iteratively shrink $\epsilon$ in a rate of $1/n$ to generate subsequent sequences, where $n$ is the number of the current sequence; 4) we further construct a subsequence based on the above sequences and show that we can find the Pareto optimal point that dominating $\pi^0$ with $n\rightarrow\infty$. Next, we present our formal proof of the aforementioned scheme. 

Note that we have $0\leq V_{i,1}^{\pi}(s_1)\leq H$ for any $i$ and any policy $\pi$. Then, we have $\sum_{i=1}^m V_{i,1}^{\pi}(s_1) \le mH$ for any $\pi$ and $\sum_{i=1}^m V_{i,1}^{\pi^0}(s_1) \geq 0$. As we require at least an $\epsilon$ increment of value summations, the constructing process should stop within finite steps, and the last point is some $\pi^k_{(\epsilon)}$ with some $k\leq mH/\epsilon$.

We use $S(\pi,\epsilon)$ to denote such a finite sequence starting from a point $\pi$. Then, we can construct our sequence starting with $\pi^0$ in the following way. We first set $\epsilon = 1$ and obtain a finite sequence $S(\pi^0_{(1)},1) = \{\pi^0_{(1)},\pi^1_{(1)},\pi^2_{(1)},\cdots, \pi^{k_1}_{(1)}\}$ with $\pi^0_{(1)}= \pi^0$. We note that $k_1$ can be zero since $\epsilon=1$ might be too large for the construction. 
Then, by setting $\epsilon=1/2$, starting with $\pi^{0}_{(1/2)} = \pi^{k_1}_{(1)}$, we construct a sequence $S(\pi^{0}_{(1/2)}, 1/2) = \{\pi^{0}_{(1/2)},\pi^1_{(1/2)},\pi^2_{(1/2)},\cdots, \pi^{k_2}_{(1/2)}\}$. After that, we start with $\pi^{0}_{(1/3)}=\pi^{k_2}_{(1/2)}$ and $\epsilon=1/3$ to create another sequence. In other words, we construct $S(\pi^{0}_{(1/n)}, 1/n)$ with the starting point $\pi^{0}_{(1/n)} = \pi^{k_{n-1}}_{(1/(n-1))}$ for $n=1,2,3,\cdots$.
From the construction, we can see that a policy $\pi^i_{(1/n)}$ (if $i\neq k_n$) is dominated by $\pi^{i'}_{(1/n)}$ for $i'>i$ and $\pi^{j}_{(1/n')}$ for any $j$ and $n'>n$.

We construct another sequence $\{\overline{\pi}^n\}_{n\geq 1}$ with $\overline{\pi}^n:= \pi^{k_n}_{(1/n)}$ (if $k_n=0$, $y_n = y_{n-1}$), i.e., using all the last points in $S(\pi^{0}_{(1/n)}, 1/n)$ for $n \geq 1$. This sequence is well defined as each $S(\pi,\epsilon)$ is non-empty. By Bolzano-Weierstrass Theorem, a convergent subsequence of $\{\overline{\pi}^n\}_{n\geq 1}$ can exist, which is denoted as $\{\overline{\pi}^{b_n}\}_{n\geq 1}$. We are aiming to show that $\overline{\pi}^*=\lim_{n\to\infty} \overline{\pi}^{b_n}\in \Pi_{\mathrm{P}}^*$.

Firstly, the compactness of the set of all policies, which is denoted as $\Pi$, guarantees $\overline{\pi}^*\in\Pi$. Then, we prove by contradiction that $\overline{\pi}^*$ is Pareto optimal. 

If $\overline{\pi}^*$ is not Pareto optimal, there exists some policy $\tilde{\pi}$ dominating $\overline{\pi}^*$. Then, by the definition of dominance, we have $C := \sum_{i=1}^m V_{i,1}^{\tilde{\pi}}(s_1) -\sum_{i=1}^m V_{i,1}^{\overline{\pi}^*}(s_1) >0$. From the construction of $\overline{\pi}^{b_n}$, we have $\overline{\pi}^{b_n} = \overline{\pi}^{b_{n-1}}$ or $\overline{\pi}^{b_n}$ dominates $\overline{\pi}^{b_{n-1}}$. Hence, we always have
\begin{align*}
&\sum_{i=1}^m V_{i,1}^{\overline{\pi}^{b_n}}(s_1)\ge \sum_{i=1}^m V_{i,1}^{\overline{\pi}^{b_{n-1}}}(s_1)~~\text{ for any } n.	
\\
&V_{i,1}^{\overline{\pi}^{b_n}}(s_1)\ge  V_{i,1}^{\overline{\pi}^{b_{n-1}}}(s_1)~~\text{ for any } n, i.	
\end{align*}
Moreover, the above inequality also implies that
\begin{align}
	V_{i,1}^{\overline{\pi}^*}(s_1)\ge  V_{i,1}^{\overline{\pi}^{b_n}}(s_1)~~\text{ for any } n, i. \label{eq:contr0}
\end{align}
As $V_{i,1}^{\pi}(s_1)$ is continuous in $\pi$, we further have 
\begin{align}\label{eq:contr1}
\sum_{i=1}^m V_{i,1}^{\tilde{\pi}}(s_1)-C=\sum_{i=1}^m V_{i,1}^{\overline{\pi}^*}(s_1)\ge \sum_{i=1}^m V_{i,1}^{\overline{\pi}^{b_{n}}}(s_1) ~~\text{ for any }	n.
\end{align}
Combining \eqref{eq:contr0} and \eqref{eq:contr1}, we know that $V_{i,1}^{\overline{\pi}^*}(s_1)\ge  V_{i,1}^{\overline{\pi}^{b_n}}(s_1)$ for all $i\in[m]$ and at least one $j\in[m]$ satisfies $V_{j,1}^{\overline{\pi}^*}(s_1)>  V_{j,1}^{\overline{\pi}^{b_n}}(s_1)$. This implies $\tilde{\pi}$ dominates any $\overline{\pi}^{b_n}$ if $\tilde{\pi}$ dominates $\overline{\pi}^*$.

From the definition of $\overline{\pi}^n:= \pi^{k_n}_{(1/n)}$, we know that $\cB(\overline{\pi}^n,1/n)=\emptyset$, which implies $\sum_{i=1}^m V_{i,1}^{\pi'}(s_1)\le \sum_{i=1}^m V_{i,1}^{\overline{\pi}^n}(s_1)+1/n$ if there exists any $\pi'$ dominating $\overline{\pi}^n$. By similar argument, if there exists any $\pi'$ dominating $\overline{\pi}^{b_n}$, then $\sum_{i=1}^m V_{i,1}^{\pi'}(s_1)\le \sum_{i=1}^m V_{i,1}^{\overline{\pi}^{b_n}}(s_1)+1/b_n$. Since $\tilde{\pi}$ dominates any $\overline{\pi}^{b_n}$, then $\sum_{i=1}^m V_{i,1}^{\tilde{\pi}}(s_1)\le \sum_{i=1}^m V_{i,1}^{\overline{\pi}^{b_n}}(s_1)+1/b_n$.
Then, for large enough $n$, we have $1/b_n \le C/2 $. And consequently, for any $\pi'$ dominating $\overline{\pi}^{b_n}$, we should have 
\begin{align*}
\sum_{i=1}^m V_{i,1}^{\tilde{\pi}}(s_1)\le \sum_{i=1}^m V_{i,1}^{\overline{\pi}^{b_n}}(s_1)+C/2.	
\end{align*}
However, according to \eqref{eq:contr1}, we have  $\sum_{i=1}^m V_{i,1}^{\tilde{\pi}}(s_1)=\sum_{i=1}^m V_{i,1}^{\overline{\pi}^*}(s_1)+C$, which contradicts the above result. Therefore, $\overline{\pi}^*\in\Pi_{\mathrm{P}}^*$.

Now we have shown that $\overline{\pi}^*=\lim_{n\to\infty} \overline{\pi}^{b_n}\in \Pi_{\mathrm{P}}^*$. Finally, to prove property (a), the remaining thing we need to prove is that if $\pi^0$ is not Pareto optimal, it is dominated by $\overline{\pi}^*$. 

From the construction of $\overline{\pi}^*$ and $\overline{\pi}^{b_n}$, we have $V_{i,1}^{\overline{\pi}^*}(s_1)\geq \cdots \geq V_{i,1}^{\overline{\pi}^{b_n}}(s_1)\geq ..\cdots\geq V_{i,1}^{\overline{\pi}^{b_1}}(s_1) \geq V_{i,1}^{\pi^0}(s_1)$ for any $i$. If $\pi^0$ is not dominated by $\overline{\pi}^*$, then by the definition of non-dominance, it is not difficult to show  $V_{i,1}^{\overline{\pi}^*}(s_1)= \cdots = V_{i,1}^{\overline{\pi}^{b_n}}(s_1)= ..\cdots = V_{i,1}^{\pi^0}(s_1)$ for any $n$ and thus show $\overline{\pi}^*= \cdots = \overline{\pi}^{b_n}= ..\cdots = \pi^0$ using the definition of $\cB$ and $\overline{\pi}^{b_n}$.
Therefore, we can obtain $\cB(\pi^0,1/b_n)=\emptyset$ for any $n$. However, as $\pi^0$ is not Pareto optimal, we can find an $\pi^{0*}$ dominating $\pi^0$ and define $C_0:= \sum_{i=1}^m V_{i,1}^{\pi^{0*}}(s_1)-\sum_{i=1}^m V_{i,1}^{\pi^0}(s_1) >0$. For $1/C_0 < b_n$, we have $\pi^{0*}\in \cB(\pi^0,1/b_n)$ which implies $\cB(\pi^0,1/b_n)\neq\emptyset$. We reach a contradiction. Therefore, $\overline{\pi}^*$ should dominate $\pi^0$ with $\overline{\pi}^*=\lim_{n\to\infty} \overline{\pi}^{b_n}\in \Pi_{\mathrm{P}}^*$. Combining all the above results, we prove property (a) of Property \ref{pro:property}.

\vspace{3pt}
\noindent\textbf{Part 2)} Prove property (b).  We first prove that $\pi\in\Pi_{\mathrm{P}}^*$ implies that $\pi$ is not dominated by any Pareto optimal policy $\pi^*\in\Pi_{\mathrm{P}}^*$. By the definition of Pareto optimality in Definition \ref{def:pareto}, $\pi\in\Pi_{\mathrm{P}}^*$ indicates that $\pi$ is not dominated by any other policies, which including the Pareto optimal policies. On the other hand, $\pi$ is also not dominated by itself according to the definition of non-dominance. Therefore, we know that $\pi\in\Pi_{\mathrm{P}}^*$ implies that $\pi$ is not dominated by any Pareto optimal policy.

Next, we prove that if $\pi$ is not dominated by any Pareto optimal policy $\pi^*\in\Pi_{\mathrm{P}}^*$, then $\pi\in\Pi_{\mathrm{P}}^*$. Assuming that $\pi\notin\Pi_{\mathrm{P}}^*$, then by property (a), there exists some Pareto optimal policy $\pi^*$ dominating $\pi$, which contradicts that $\pi$ is not dominated by any Pareto optimal policy $\pi^*\in\Pi_{\mathrm{P}}^*$. Thus, we have $\pi\in\Pi_{\mathrm{P}}^*$. This completes the proof of property (b) of this proposition.

The proof of Property \ref{pro:property} is completed by combining the results in \textbf{Part 1)} and \textbf{Part 2)}.
\end{proof}

 \subsection{Proof of Proposition \ref{cond:subopt-iff}}
 
%
 
 \begin{proof}
 	According to the definitions of Pareto optimality and weak Pareto optimality, we have 
 	\begin{itemize}
 		\item a policy $\pi$ is Pareto optimal if and only if for all $\pi'\in\Pi$, $\exists i\in[m], V_{i,1}^\pi(s_1) > V_{i,1}^{\pi'}(s_1)$ or $\forall i\in[m], V_{i,1}^\pi(s_1) = V_{i,1}^{\pi'}(s_1)$.
 		
 		\item a policy $\pi$ is weakly Pareto optimal if and only if for all $\pi'\in\Pi$, $\exists i\in[m], V_{i,1}^\pi(s_1) \geq V_{i,1}^{\pi'}(s_1)$.
 		
 	\end{itemize}
 	Therefore, it is not difficult to observe that $\Pi_{\mathrm{P}}^* \subset \Pi_{\mathrm{W}}^*$ without Condition \ref{cond:subopt-iff}.
 	
 	Next, we show that under the condition in Proposition \ref{cond:subopt-iff}, $\Pi_{\mathrm{W}}^* \subset  \Pi_{\mathrm{P}}^* $. Let $\pi\in \Pi_{\mathrm{W}}^*$. If $\pi$ is not Pareto optimal, by the condition, we obtain
 	\begin{align}
 		\exists\pi^*\in\Pi_{\mathrm{P}}^*, \forall i\in [m], V_{i,1}^{\pi}(s_1) <  V_{i,1}^{\pi^*}(s_1).\label{eq:contraditct-cond}
 	\end{align}
 	On the other hand, by the definition of weak Pareto optimality, we have
 	\begin{align*}
 		\forall \pi'\in\Pi, \exists i\in[m], V_{i,1}^\pi(s_1) \geq V_{i,1}^{\pi'}(s_1),
 	\end{align*}
 	implying that
 	\begin{align*}
 		\exists i\in[m], V_{i,1}^\pi(s_1) \geq V_{i,1}^{\pi^*}(s_1),
 	\end{align*}
 	which contradicts \eqref{eq:contraditct-cond}. Therefore, $\pi$ must be Pareto optimal. Then we prove $\Pi_{\mathrm{W}}^* \subset  \Pi_{\mathrm{P}}^*$ under the condition in Proposition \ref{cond:subopt-iff}. This completes the proof.
 \end{proof}

\section{Proofs for Section \ref{sec:metric}}

\subsection{Proof of Proposition \ref{prop:geometry}}
	\begin{proof}
		We use the notion of the occupancy measure. Specifically, denoting by $\breve\theta_h(s,a;\pi)$ the joint distribution of $(s,a)$ at step $h$ induced by $\pi$ and $\PP$, the value function can be written as
		\begin{align}\label{polytope:eqn3}
			V^{\pi}_{i,1}(s_1)=\sumh\sum_{s\in\cS}\sum_{a\in\cA} \breve\theta_h(s,a;\pi) r_{i,h}(s,a).
		\end{align} 
		The occupancy measure for the next step can be calculated by
		\begin{align}\label{polytope:eqn1}
			\breve\theta_{h+1}(s',a';\pi) = \sum_{s\in\cS}\sum_{a\in\cA}\breve\theta_h(s,a;\pi)\PP_h(s'|s,a)\pi_{h+1}(a'|s').
		\end{align}
        Let $\bm \theta := (\theta_h(s,a))_{s\in \cS, a\in \cA, h \le H}$ be a point in $\mathbb R^{|\cS|\times |\cA| \times H} $. Consider the constraints
        	\begin{align}\label{polytope:eqn2}
			\sum_{s\in\cS}\sum_{a\in\cA}\theta_{h}(s,a)=1,\quad \theta_h(s,a) \ge 0, \text{ and } \theta_1(s,a) =0 \text{ for $s\neq s_1$.}
		\end{align}
		and 
		\begin{align}\label{polytope:eqn4}
			\sum_{s\in\cS}\sum_{a\in\cA}\theta_h(s,a)\PP_h(s'|s,a) = \sum_{a'\in\cA} \theta_{h+1}(s',a').
		\end{align}
        and denote
		\begin{align*}
			\Theta = \{
			\bm \theta\in \mathbb R^{|\cS|\times |\cA| \times H} ~~|~~ \text{$\bm \theta$ satisfies \eqref{polytope:eqn2} and \eqref{polytope:eqn4}} \}
		\end{align*} and 
		\begin{align*}
		\mathbb V_{\Theta} = \left\{\sumh\sum_{s\in\cS}\sum_{a\in\cA} \theta_h(s,a) \bm r_{h}(s,a)~~\Big|~~\bm \theta\in \Theta\right\},
		\end{align*}
		for any joint distribution $\breve\theta_h(s,a;\pi)$ induced by policy $\pi$, we have $\breve\theta_h(s,a;\pi)\in \Theta$ and thus $\mathbb V(\Pi) \subseteq \mathbb V_\Theta$.
  
		On the other hand, for any $\bm \theta'\in \Theta$, we construct the policy $\pi'$ such that  $\pi_h'(a|s)=\frac{\theta_h'(s,a)}{\sum_{a\in\cA}\theta_h'(s,a)}$ when $\sum_{a\in\cA}\theta_h'(s,a)\neq 0$. Then, with \eqref{polytope:eqn2}, we have $\breve\theta_1(s,a;\pi') =\theta_1'(s,a)$. Next, we prove by induction. Assume that $\breve\theta_h(s,a;\pi')  =\theta_h'(s,a)$ holds for $h$, with $\pi'$ and \eqref{polytope:eqn4}, we have
		\begin{align}
			\begin{aligned}
				\breve\theta_{h+1}(s',a';\pi') 
				&=\sum_{s\in\cS}\sum_{a\in\cA}\breve\theta_{h}(s',a';\pi')\PP_h(s'|s,a)\pi'_{h+1}(a'|s') \\&=\sum_{s\in\cS}\sum_{a\in\cA}\theta_h'(s,a)\PP_h(s'|s,a)\pi'_{h+1}(a'|s') \\
				&=\sum_{a\in \cA} \theta_{h+1}'(s',a) \pi'_{h+1}(a'|s') =\theta_{h+1}'(s',a'). 
			\end{aligned}
		\end{align}
		By induction, we have $\breve \theta_h(s,a;\pi') = \theta'_h(s,a)$ holds for any $h$. Consequently, we have $\mathbb{V}(\Pi) \supseteq \mathbb V_{\Theta}$. Therefore, $\mathbb{V}(\Pi) = \mathbb V_{\Theta}$ and it remains to show that $\mathbb V_{\Theta}$ is a convex polytope. 
		
		From the definition of $\Theta$, it is clear that $\Theta$ is a convex polytope in $\mathbb R^{|\cS|\times |\cA| \times H}$ given by the constraints \eqref{polytope:eqn2} and \eqref{polytope:eqn4}. Let $\hat{\theta}_1,\dots, \hat{\theta}_D$ denote the extreme points of the convex polytope. In other words, $\Theta$ is the convex hull of $\hat{\theta}_1,\dots, \hat{\theta}_D$ and can be written as 
		\begin{align*}
			\Theta = \left\{ \sum_{d=1}^D\alpha_d\hat{\theta}_d ~~\Big|~~ \text{$\alpha_d\ge 0$ and $\sum_{d=1}^D \alpha_d =1$ }\right\}.
		\end{align*}
		Then, we have
		\begin{align*}
		V_\Theta = \left\{\sum_{d=1}^D \alpha_d \hat  V_d ~~\Big|~~\text{$\alpha_d\ge 0$ and $\sum_{i=d}^D \alpha_d =1$ }\right\},	
		\end{align*}
		where $\hat V_d = \sumh\sum_{s\in\cS}\sum_{a\in\cA} \hat \theta_{d,h}(s,a) \bm r_{h}(s,a)$ is the value function with $\hat \theta_d$. Therefore, $V_\Theta$ is the convex hull of $\hat V_1,\ldots,\hat V_d$, which is a convex polytope.
	\end{proof}

\subsection{Proof of Proposition \ref{prop:linear-comb}} \label{sec:proof-linear-comb}

\begin{proof}
	In this proof, we first prove that for any policy class $\Pi$ (either stochastic or deterministic), we have $\{\pi~|~\pi\in\argmax_{\pi\in\Pi} \linl(\pi), \forall \blambda \in\Delta_m\}\subseteq \Pi_{\mathrm{W}}^*$ and $\{\pi~|~\pi\in\argmax_{\pi\in\Pi} \linl(\pi), \forall \blambda \in\Delta_m^o\} \subseteq \Pi_{\mathrm{P}}^*$. Then, we show that for a stochastic policy class $\Pi$, we have $\Pi_{\mathrm{W}}^* \subseteq \{\pi~|~\pi\in\argmax_{\pi\in\Pi} \linl(\pi), \forall \blambda \in\Delta_m\}$ and $\Pi_{\mathrm{P}}^* \subseteq \{\pi~|~\pi\in\argmax_{\pi\in\Pi} \linl(\pi), \forall \blambda \in\Delta_m^o\}$ utilizing Proposition \ref{prop:geometry}.  Finally, we provide an example to show that a (weakly) Pareto optimal policy may not be the solution to $\max_{\pi\in\Pi} \linl(\pi)$ for any $\blambda \in\Delta_m$ when $\Pi$ is a deterministic policy class.

	\vspace{5pt}
	\noindent\textbf{Part 1)} Prove $\{\pi~|~\pi\in\argmax_{\pi\in\Pi} \linl(\pi), \forall \blambda \in\Delta_m\}\subseteq \Pi_{\mathrm{W}}^*$ and $\{\pi~|~\pi\in\argmax_{\pi\in\Pi} \linl(\pi), \forall \blambda \in\Delta_m^o\} \subseteq \Pi_{\mathrm{P}}^*$ for any policy class $\Pi$ (either stochastic or deterministic). We revisit the proofs in prior works \citep{geoffrion1968proper,ehrgott2005multicriteria,steuer1986multiple,ehrgott2005multiobjective} on linear scalarization for multi-objective optimization and adapt their proofs to the MORL setting.

	 We now prove that all the solutions to $\max_{\pi\in\Pi} \blambda^\top \bV_1^{\pi}(s_1), \forall \blambda \in\Delta_m$, are weakly Pareto optimal.
	This claim is proved by contradiction. Define $\pi_{\blambda}^\dag:=\max_{\pi\in\Pi} \blambda^\top \bV_1^{\pi}(s_1), \allowbreak \forall \blambda \in\Delta_m$. By this definition of the solution to this problem, we know that
	\begin{align}
		\blambda^\top \bV_1^{\pi_{\blambda}^\dag}(s_1)\geq \blambda^\top \bV_1^{\pi}(s_1), ~\forall \pi\in\Pi, \label{eq:linear-comb-contra}
	\end{align}
	Now assume that $\pi_{\blambda}^\dag$ is not a weakly Pareto optimal policy. Then by Definition \ref{def:weakpareto}, there must exists a policy $\breve\pi\in\Pi$ such that $\breve\pi$ satisfies
	\begin{align*}	
		&\forall i\in[m], \quad V_{i,1}^{\breve\pi}(s_1)> V_{i,1}^{\pi_{\blambda}^\dag}(s_1), 
	\end{align*}
	which further leads to
	\begin{align*}
		\blambda^\top \bV_1^{\pi_{\blambda}^\dag}(s_1)< \blambda^\top \bV_1^{\breve\pi}(s_1), \quad \forall \blambda\in\Delta_m.
	\end{align*}
	The above inequality contradicts \eqref{eq:linear-comb-contra}, which implies $\pi_{\blambda}^\dag$ is a weakly Pareto optimal policy. This completes the proof.
	
	Following the above proof, we can further prove that all the solutions to $\max_{\pi\in\Pi} \blambda^\top \bV_1^{\pi}(s_1), \forall \blambda \in\Delta_m^o,$ are Pareto optimal, where $\Delta_m^o$ is the relative interior of $\Delta_m$. And $\Delta_m^o$ is defined as $\Delta_m^o:=\{\bx: x_i>0, \sumi x_i=1\}$.
	We similarly prove this claim by contradiction. Let $\pi_{\blambda}^\ddag:=\max_{\pi\in\Pi} \blambda^\top \bV_1^{\pi}(s_1), \forall \blambda \in\Delta_m^o$. By this definition, we have
	\begin{align}
		\blambda^\top \bV_1^{\pi_{\blambda}^\ddag}(s_1)\geq \blambda^\top \bV_1^{\pi}(s_1), ~\forall \pi, \label{eq:linear-comb-contra2}
	\end{align}
	Assume that $\pi_{\blambda}^\ddag$ is not a Pareto optimal policy. Then by Definition \ref{def:pareto}, there must exist a policy $\breve\pi^o\in\Pi$ such that $\breve\pi^o$ satisfies
	\begin{align*}	
		&\forall i\in[m], \quad V_{i,1}^{\breve\pi^o}(s_1)\geq V_{i,1}^{\pi_{\blambda}^\ddag}, \\
		&\exists j\in[m], \quad V_{j,1}^{\breve\pi^o}(s_1)> V_{j,1}^{\pi_{\blambda}^\ddag}, 
	\end{align*}
	which further leads to
	\begin{align*}
		\blambda^\top \bV_1^{\pi_{\blambda}^\ddag}(s_1)< \blambda^\top \bV_1^{\breve\pi^o}(s_1), \quad \forall \blambda\in\Delta_m^o.
	\end{align*}
	since all $\lambda_i>0$ if $\blambda\in\Delta_m^o$. The above inequality contradicts \eqref{eq:linear-comb-contra2}, which thus implies $\pi_{\blambda}^\ddag$ is a Pareto optimal policy. This completes the proof of the claim.

	\setlength{\abovecaptionskip}{-3pt}
	\setlength{\belowcaptionskip}{-15pt}
	\begin{figure}[t!] 
		\centering
		\includegraphics[height=2.2in]{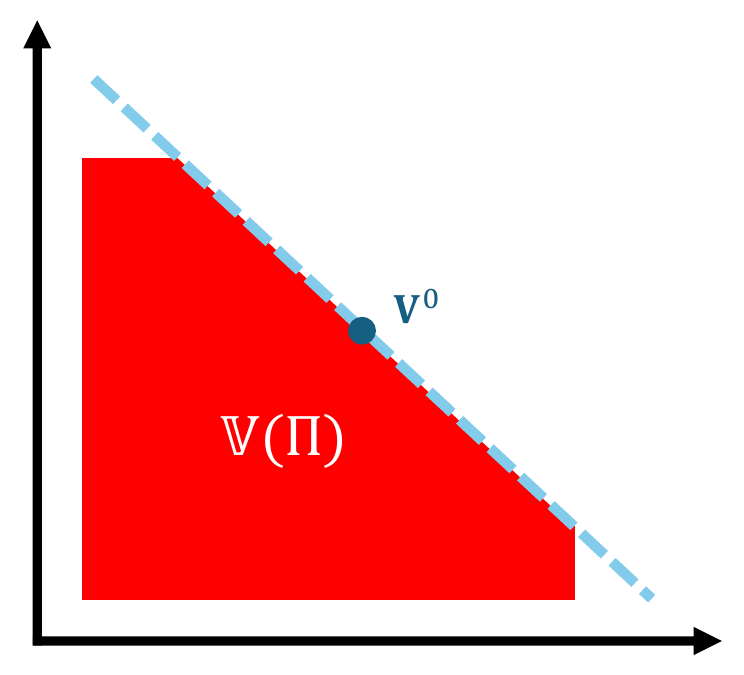}
		\caption{\small Illustration of \textbf{Part 2)}}
		\label{fig:proof1}
	\end{figure}
	
	\vspace{5pt}
	\noindent \textbf{Part 2)} Prove for a stochastic policy class $\Pi$, we have $\Pi_{\mathrm{P}}^* \subseteq \{\pi~|~\pi\in\argmax_{\pi\in\Pi} \linl(\pi), \forall \blambda \in\Delta_m^o\}$ and $\Pi_{\mathrm{W}}^* \subseteq \{\pi~|~\pi\in\argmax_{\pi\in\Pi} \linl(\pi), \forall \blambda \in\Delta_m\}$. According to Proposition \ref{prop:geometry}, we know that $\mathbb V(\Pi)$ is a convex polytope when $\Pi$ is a stochastic policy class.
	
	Now we prove that if $\mathbb V(\Pi)$ is a convex polytope, any Pareto optimal policy can be the solution of $\max_{\pi} \mathrm{LIN}_{\bm\lambda}(\pi)$ for some $\bm\lambda\in \Delta_m^o$.
	It is equivalent to show that for any vector of value functions $\bm V^0 = (V^0_1,\dots,V^0_m) \in \VV(\Pi)$ with $\bm V^0 = \bV_1^{\pi_0}(s_1)$ for a Pareto optimal policy $\pi_0\in\Pi_{\mathrm{P}}^*$, we have $\bm V^0 = \argmax_{\bm V \in \mathbb V(\Pi)} \blambda_0^\top \bV$ and $\max_{\bm V \in \mathbb V(\Pi)} \blambda_0^\top \bV = \max_{\pi\in\Pi} \sumi\blambda_0^\top \bV_1^\pi(s_1)$ for some $\blambda_0 \in\Delta_m^o$. If there exists a hyperplane $\{\bm V\in\RR^m~|~\sumi a_i V_i = C \}$ satisfying $C \geq 0$, $a_i>0$, $\sumi a_i V^0_i = C$ and $\mathbb V(\Pi)\subseteq \{\bm V\in\mathbb R^m ~|~ \sumi a_i V_i \le C \}$, then we have $\bm V^0 =\argmax_{\bm V\in V(\Pi)} \sumi a_i V_i$. Without loss of generality, we can assume that $\sumi a_i =1$. Then, setting $\lambda_i = a_i$ leads to the desired result directly. As $\VV(\Pi)$ is a convex polytope for the stochastic policy class $\Pi$, the boundary of it must be the combinations of some faces of this polytope. We denote these faces by $F_i,i=1,\dots,n_F$, where $n_F$ is the number of faces. Correspondingly, the boundary of $\VV(\Pi)$ is $\cup_{i=1}^{n_F} F_i$. As values for the Pareto front must be a subset of the boundary, for any $\bm V^0\in\VV(\Pi_\mathrm{P}^*) =\{\bV_1^\pi(s_1)~|~\pi\in\Pi_\mathrm{P}^*\}$, it must be on some face $F_i$. In other words, $\bm V^0 \in F_i^* := F_i\cap \VV(\Pi_\mathrm{P}^*)$ and $ F_i^*$ is a face with dimension at most $m-1$. Consider the case that $ F_i^*$ is an $(m-1)$-dimension face first. Suppose that the normal vector of the face is $(a_1,a_2,\dots, a_m)$. Without loss of generality, we assume that $a_1>0$. Then, we must have $a_i>0$ for any $i>0$, leading to $C = \sumi a_i V_i^0 \geq 0$. Otherwise, supposing that $a_2\le0$, we can find two points $\bm V^1$ and $\bm V^2$ on $F_i^*$ such that $\sumi a_iV_i^1 = \sumi a_iV_i^2$ with $ V^1_1\ge V^2_1$, $ V^1_2 >  V^2_2$ and $ V^1_i=  V^2_i$ for $i>2$, implying $\bV^2$ does not correspond to a Pareto optimal policy. It contradicts the fact that the face corresponds to the Pareto front. Therefore, when the dimension of $F_i^*$ is $m-1$, $F_i^*$ satisfies $a_i>0$ and the convexity of the polytope implies that $ \{\bm V\in\mathbb R^m ~|~ \sumi a_i V_i \le C \}$. Thus, the hyperplane we need is directly the one that $F_i^*$ lies on. If the dimension of $F_i^*$ is smaller than $m-1$, similarly, we could find a $m-1$-dimension hyperplane such that the $F_i^*$ is on that plane and $\mathbb V(\Pi)\subseteq \{\bm V\in\mathbb R^m|\sumi a_i V_i \le C\}$. The above construction is illustrated in Figure \ref{fig:proof1}. This completes the proof of $\Pi_{\mathrm{P}}^* \subseteq \{\pi~|~\pi\in\argmax_{\pi\in\Pi} \linl(\pi), \forall \blambda \in\Delta_m^o\}$.
 
 Following the same spirit, we could show that $\Pi_{\mathrm{W}}^* \subseteq \{\pi~|~\pi\in\argmax_{\pi\in\Pi} \linl(\pi), \forall \blambda \in\Delta_m\}$. In fact, it is sufficient to show that for any weak Pareto optimal policy $\pi_0\in \Pi_{\mathrm{W}}^*$ and the corresponding value function $\bV^0$, there exists a hyperplane $\{\bV\in\RR^m~|~\sumi a_i V_i = C \}$ satisfying $C \geq 0$, $a_i\ge 0$, $\sumi a_i V^0_i = C$ and $\mathbb V(\Pi)\subseteq \{\bm V\in\mathbb R^m ~|~ \sumi a_i V_i \le C \}$. In other words, we relax the condition $a_i > 0$ in the above proof to $a_i\ge 0$ for weak Pareto points. The proof is similar to that of $\Pi_{\mathrm{P}}^* \subseteq \{\pi~|~\pi\in\argmax_{\pi\in\Pi} \linl(\pi), \forall \blambda \in\Delta_m^o\}$. Specifically, for a weak Pareto point $\bm V^0\in \breve F_i^* := F_i\cap \mathbb V(\Pi_{\mathrm{W}})^*$, if $\breve F_i^*$ is a face of dimension $m-1$, the normal vector of it satisfies $a_i\ge 0$. If we assume that $a_i\geq 0$ for $i\in\cI^+$ and $a_j<0$ for $j\in[m]\backslash\cI^+$, we can find two points $\bV^1$ and $\bV^2$ on $\breve F_i^*$ such that $\sumi a_i V_i^1 = \sumi a_i V_i^2$ with $V_i^2 = V_i^1 + \epsilon_i$ for $i\in\cI^+$ and $V_j^2 = V_j^1 + \epsilon_j$ for $j\in[m]\backslash\cI^+$ where  $\epsilon_i \geq 0$ for all $i \in [m]$. This requires $\sum_{i\in\cI^+} a_i \epsilon_i = \sum_{j\in[m]\backslash\cI^+} (-a_j) \epsilon_j$. Supposing we have at least one $i\in\cI^+$ such that $a_i>0$ (we let $\sumi a_i = 1$ without loss of generality), we can find sufficiently small positive $\epsilon_i$ for all $i\in[m]$ that satisfy $\sum_{i\in\cI^+} a_i \epsilon_i = \sum_{j\in[m]\backslash\cI^+} (-a_j) \epsilon_j$, which implies $V_i^1 > V_i^2$ for all $i\in[m]$, or equivalently $\bV^2$ does not correspond to a weakly Pareto optimal policy according to Definition \ref{def:weakpareto}. This contradicts the fact that all the values on the face correspond to weak Pareto optimal policies. The remaining proof is exactly the same as that for $\Pi_{\mathrm{P}}^* \subseteq \{\pi~|~\pi\in\argmax_{\pi\in\Pi} \linl(\pi), \forall \blambda \in\Delta_m^o\}$.

	\vspace{5pt}
	\noindent \textbf{Part 3)}
	Now we use a concrete example to illustrate that for a deterministic policy class $\Pi$, a Pareto optimal policy may not be the solution of $\max_{\pi\in\Pi} \blambda^\top \bV_1^{\pi}(s_1)$ for any $\blambda \in\Delta_m$. We consider a multi-objective multi-arm bandit problem, a simple and special MOMDP whose state space size $|\cS|=1$, episode length $H=1$, with a deterministic policy. Here we assume this bandit problem has $m=2$ reward functions $r_1$ and $r_2$ and $|\cA|=4$ actions. We define
	\begin{align*}
		&r_1(a_1)=1, \quad r_1(a_2)=0.5,\quad r_1(a_3)=0.6,\quad r_1(a_4)=0.65,\\
		&r_2(a_1)=0, \quad r_2(a_2)=0.5,\quad r_2(a_3)=0.2,\quad r_2(a_4)=0.3,
	\end{align*}
	which are the reward values of the four actions for each reward function. Via the definition of the Pareto optimal policy, we can identify that $a_1$, $a_2$, and $a_4$ are the Pareto optimal policy, or Pareto optimal arms in a multi-arm bandit problem.
	
	Letting $0\leq \lambda \leq 1$, we generate a reward function as a linear combination of $r_1$ and $r_2$ via $\lambda$ and $1-\lambda$, i.e., $r_\lambda(a):=\lambda r_1(a) + (1-\lambda) r_2(a)$, such that
	\begin{align*}
		r_\lambda(a_1)=\lambda, \quad r_\lambda(a_2)=0.5,\quad r_\lambda(a_3)=0.2+0.4\lambda,\quad r_\lambda(a_4)=0.3+0.35\lambda.
	\end{align*}
	Then we have:
	\begin{itemize}
		\item when $\lambda< 0.5$,  we have $a_2=\argmax_a r_\lambda(a)$,
		\item when $\lambda= 0.5$,  we have $a_1,a_2=\argmax_a r_\lambda(a)$,	
		\item when $\lambda> 0.5$,  we have $a_1=\argmax_a r_\lambda(a)$.
	\end{itemize}
	Therefore, we only find $a_1$ and $a_2$ as the solutions for any $\lambda$, which are also Pareto optimal arms. However, the Pareto optimal arm $a_3$ is not identified by solving $\max_a r_\lambda(a)$, for all $0\leq \lambda\leq 1$, which completes the example for the second claim. The proof of this proposition is completed.
\end{proof}

\subsection{Proof of Proposition \ref{prop:eq-psg}}

Following the proof in \citep{jiang2023multiobjective} that focuses on a general online convex minimization problem, we present our proof specializing in the multi-objective RL maximization problem. 
\begin{proof} By Definition \ref{def:subopt}, we can rewrite the Pareto suboptimality gap $\psg$ as follows:
	\begin{align}
		\begin{aligned}\label{eq:subopt-eqform}
			&\psg(\pi):= \sup_{\pi^*\in \Pi_{\mathrm{P}}^*}\newinf_{\epsilon(\pi^*) \geq 0} \epsilon(\pi^*), \\
			&\text{s.t.}~~~~~~ \exists i\in[m], ~V_{i,1}^{\pi}(s_1) + \epsilon(\pi^*)   >  V_{i,1}^{\pi^*}(s_1), \\
			& ~~ ~~~\textbf{or}~~ \forall i\in[m], ~V_{i,1}^{\pi}(s_1) + \epsilon(\pi^*)  =  V_{i,1}^{\pi^*}(s_1).
		\end{aligned}
	\end{align} 
	If $\exists i\in[m]$, $V_{i,1}^{\pi}(s_1) \geq  V_{i,1}^{\pi^*}(s_1)$ for a policy $\pi^*\in\Pi_{\mathrm{P}}^*$, then the policy $\pi$ satisfies either of the following cases:
	\begin{itemize}[itemsep = 0pt, topsep=0pt]
		\item[(a)] $\exists i\in[m]$, $V_{i,1}^{\pi}(s_1) =  V_{i,1}^{\pi^*}(s_1)$,
		\item[(b)] $\forall i\in\cI \subset [m]$, $V_{i,1}^{\pi}(s_1) >  V_{i,1}^{\pi^*}(s_1)$ and $\forall j\in [m]\backslash \cI$, $V_{j,1}^{\pi}(s_1) <  V_{i,1}^{\pi^*}(s_1)$.
	\end{itemize}
	Case (a) indicates that $\epsilon(\pi^*)>0$ satisfies the above constraint in \eqref{eq:subopt-eqform}, implying that $\inf_{\epsilon(\pi^*) > 0} \epsilon(\pi^*)= 0$. On the other hand, case (b) indicates that $\epsilon(\pi^*)=0$ satisfies the constraint in \eqref{eq:subopt-eqform}, thus implying that $\inf_{\epsilon(\pi^*) = 0} \epsilon(\pi^*) = 0$. 
	
	If $\forall i\in[m]$, $V_{i,1}^{\pi}(s_1) <  V_{i,1}^{\pi^*}(s_1)$ for a policy $\pi^*\in\Pi_{\mathrm{P}}^*$, then we have $\epsilon(\pi^*) = \min_i   (V_{i,1}^{\pi^*}(s_1)-V_{i,1}^{\pi}(s_1))$ such that  $\inf_{\epsilon(\pi^*) \geq 0} \epsilon(\pi^*) = \min_i   (V_{i,1}^{\pi^*}(s_1)-V_{i,1}^{\pi}(s_1)) =\inf_{\blambda^*\in \Delta_m} \blambda^*{}^\top(\bV_1^{\pi^*}(s_1)-\bV_1^{\pi}(s_1))>0$, where $\Delta_m:=\{\blambda = (\lambda_i)_{i=1}^m \in\RR^m ~|~ \sum_{i=1}^m \lambda_i = 1, \lambda_i\geq 0 \}$ is a simplex in $\RR^m$.
	
	Overall, we know that for any policy $\pi\in\Pi$, the Pareto suboptimality gap is equivalent to
	\begin{align*}
		\psg(\pi):= \sup_{\pi^*\in \Pi_{\mathrm{P}}^*}\newinf_{\epsilon(\pi^*) \geq 0} \epsilon(\pi^*) = \sup_{\pi^*\in \Pi_{\mathrm{P}}^*}\newinf_{\blambda^*\in \Delta_m} \blambda^*{}^\top(\bV_1^{\pi^*}(s_1)-\bV_1^{\pi}(s_1)).
	\end{align*}
	This completes the proof of Proposition \ref{prop:eq-psg}
\end{proof}

\subsection{Proof of Proposition \ref{prop:subopt}}

\begin{proof} The proof of this proposition is divided into two parts: we first prove that $\pi\in \Pi_{\mathrm{W}}^*$ implies $\psg(\pi) = 0$; then we show that $\psg(\pi) = 0$ implies $\pi$ is weakly Pareto optimal.
Recall that the definition of $\psg$ is
\begin{align}
	\begin{aligned}\label{eq:so-recall}
		&\psg(\pi):= \inf_{\epsilon \geq 0} \epsilon, \\
		&\text{s.t.}~~\forall \pi^*\in \Pi_{\mathrm{P}}^*, \exists i\in[m], V_{i,1}^{\pi}(s_1) + \epsilon   >  V_{i,1}^{\pi^*}(s_1),\\
		&\qquad\qquad~~~ \text{or}~~ \forall i\in[m], V_{i,1}^{\pi}(s_1) + \epsilon  =  V_{i,1}^{\pi^*}(s_1).
	\end{aligned}
\end{align}
And we say a policy $\pi$ is not dominated by $\pi'$ if and only if 
\begin{align}
	&\exists i \in [m],~~   V_{i,1}^{\pi}(s_1) > V_{i,1}^{\pi'}(s_1), \label{eq:nd_cond1} \\
	\text{or} ~~&\forall i \in [m], ~~ V_{i,1}^{\pi}(s_1)=V_{i,1}^{\pi'}(s_1). \label{eq:nd_cond2}
\end{align}
	
%
%

\vspace{5pt}
\noindent\textbf{Part 1)} {Prove $\pi\in \Pi_{\mathrm{W}}^* \Rightarrow \psg(\pi) = 0$.} By the definition of weak Pareto optimality, for $\pi\in\Pi_{\mathrm{W}}^*$, we have
 for all $\pi'$,
\begin{align*}	
	&\exists i\in[m], \quad V_{i,1}^{\pi}(s_1)\geq   V_{i,1}^{\pi'}(s_1).
\end{align*}
Letting $\pi'=\pi^*\in\Pi_{\mathrm{P}}^*$, we have
for all $\pi^*\in\Pi_{\mathrm{P}}^*$, $\exists i\in[m], V_{i,1}^{\pi}(s_1)\geq V_{i,1}^{\pi^*}(s_1)$. Therefore, we have for any $\epsilon>0$, $\forall \pi^*\in\Pi_{\mathrm{P}}^*$, $\exists i\in[m], V_{i,1}^{\pi}(s_1) + \epsilon > V_{i,1}^{\pi^*}(s_1)$, which satisfies the constraint in \eqref{eq:so-recall}. Then, for $\pi\in\Pi_{\mathrm{W}}^*$, $\psg(\pi)=\inf_{\epsilon>0}\epsilon = 0$, which completes the proof of \textbf{Part 1)}.

\vspace{5pt}
\noindent
\textbf{Part 2)} {Prove $\psg(\pi) = 0  \Rightarrow \pi$ is weakly Pareto optimal.} $\psg(\pi) = 0$ indicates that the zero value is achieved at either $\epsilon = 0$ or arbitrarily small $\epsilon\rightarrow 0^+$ with $\epsilon > 0$ (due to $\inf_{\epsilon > 0}\epsilon = 0$). 

When it is achieved at $\epsilon = 0$, by \eqref{eq:so-recall}, we have that $\pi$ satisfies  
\begin{align*}
\forall \pi^*\in \Pi_{\mathrm{P}}^*, ~~\exists i\in[m], V_{i,1}^{\pi}(s_1) >  V_{i,1}^{\pi^*}(s_1),\quad \text{or}\quad \forall i\in[m], V_{i,1}^{\pi}(s_1)  =  V_{i,1}^{\pi^*}(s_1). 	
\end{align*}
By \eqref{eq:nd_cond1} and \eqref{eq:nd_cond2}, it implies that $\pi$ is not dominated by any $\pi^*\in \Pi_{\mathrm{P}}^*$. Further employing the property of the Pareto set that $\pi\in\Pi_{\mathrm{P}}^*$ if and only if $\pi$ is not dominated by any $\pi^*\in\Pi_{\mathrm{P}}^*$ as in Property \ref{pro:property}, we know that the policy $\pi$ is a Pareto optimal policy in $\Pi_{\mathrm{P}}^*$ for when $\epsilon = 0$. 

On the other hand, when $\psg(\pi) = 0$ is achieved at arbitrarily small $\epsilon$ with $\epsilon > 0$, the constraint in \eqref{eq:so-recall} is equivalent to 
\begin{align}
	\forall \pi^*\in \Pi_{\mathrm{P}}^*, \exists i\in[m], V_{i,1}^{\pi}(s_1) \geq  V_{i,1}^{\pi^*}(s_1), \label{eq:issue-recall}	
\end{align}
since no policy $\pi$ satisfies the second constraint in \eqref{eq:so-recall} that $\forall i\in[m], V_{i,1}^{\pi}(s_1) + \epsilon  =  V_{i,1}^{\pi^*}(s_1)$ for an arbitrarily small $\epsilon$. In addition, according to \textbf{(a)} of Property \ref{pro:property}, we have that 
\begin{align*}
\forall \pi'\notin \Pi_{\mathrm{P}}^*, \exists \pi^*\in \Pi_{\mathrm{P}}^*, \forall i\in[m],  V_{i,1}^{\pi^*}(s_1) \geq  V_{i,1}^{\pi'}(s_1),		
\end{align*}
according to the definition of dominance in Definition \ref{def:pareto}. Therefore, we construct a minimal subset $\tilde{\Pi}^*\subset\Pi_{\mathrm{P}}^*$ that all policies in $\tilde{\Pi}^*$ can dominate at least one policy outside $\Pi_{\mathrm{P}}^*$ and the rest of policies in $\Pi_{\mathrm{P}}^*\backslash\tilde{\Pi}^*$ dominates no policy, which gives
\begin{align*} 
\forall \pi'\notin \Pi_{\mathrm{P}}^*, \exists \pi^*\in \tilde{\Pi}^*, \forall i\in[m],  V_{i,1}^{\pi^*}(s_1) \geq  V_{i,1}^{\pi'}(s_1).	
\end{align*}
Thus, combining it with \eqref{eq:issue-recall}, if $\psg(\pi) = 0$ is achieved at arbitrarily small $\epsilon$, we have
\begin{align*}
	\forall \pi'\notin \Pi_{\mathrm{P}}^*, \exists i\in[m],  V_{i,1}^{\pi}(s_1) \geq  V_{i,1}^{\pi'}(s_1).
\end{align*}
Furthermore, combining \eqref{eq:issue-recall} with the above equation, we have
\begin{align*}
	\forall \pi', \exists i\in[m],  V_{i,1}^{\pi}(s_1) \geq  V_{i,1}^{\pi'}(s_1),
\end{align*}
which indicates that $\pi$ satisfies
\begin{align*}
	\nexists \pi', \forall i\in[m],  V_{i,1}^{\pi}(s_1) <  V_{i,1}^{\pi'}(s_1).
\end{align*}	
Further by Definition \ref{def:weakpareto}, $\pi$ is weakly Pareto optimal.

Jointly considering the cases of $\epsilon=0$ and $\epsilon$ being arbitrarily small,  we eventually prove that $\psg(\pi) = 0$ implies $\pi$ is weakly Pareto optimal. The proof of Proposition \ref{prop:subopt} is completed by combining the results in \textbf{Part 1)} and \textbf{Part 2)}.
\end{proof}

\subsection{Proof of Proposition \ref{prop:tch}}

\begin{proof} We revisit the proofs in prior works on Tchebycheff scalarization for multi-objective optimization, e.g., \citet{choo1983proper,ehrgott2005multicriteria}. We adapt their proofs to the MORL setting and present them here for completeness.
	 
The proof of the first claim in this proposition is divided into two parts: we first prove $\pi_{\blambda}^*\in\argmin_{\pi\in\Pi} \tchl(\pi)$ for a $\blambda\in\Delta_m$ $\Rightarrow \pi_{\blambda}^*$ is weakly Pareto optimal; then we prove $\breve{\pi}$ is weakly Pareto optimal $\Rightarrow \breve{\pi}\in\argmin_{\pi\in\Pi} \tchl(\pi)$ for some $\blambda\in\Delta_m$.

	\vspace{5pt}
\noindent\textbf{Part 1)} Prove $\pi_{\blambda}^*\in\argmin_{\pi\in\Pi} \tchl(\pi)$ for a $\blambda\in\Delta_m^o$ $\Rightarrow \pi_{\blambda}^*$ is weakly Pareto optimal. This is equivalent to showing that the solutions to $\minimize_{\pi\in\Pi} \tchl(\pi)$ for all $\blambda$ are weakly Pareto optimal. 
	
	Since for a given $\blambda$ we have $\pi_{\blambda}^*\in\argmin_{\pi\in\Pi} \tchl(\pi)$, then by the definition of $\tchl(\pi)$ in Definition \ref{def:tch}, we know that
	\begin{align*}
		\max_i \{\lambda_i (V_{i,1}^*(s_1) + \iota - V_{i,1}^{\pi_{\blambda}^*}(s_1))\}= \min_{\pi\in\Pi} \max_i \{\lambda_i (V_{i,1}^*(s_1) + \iota - V_{i,1}^{\pi}(s_1))\}, 
	\end{align*}
	which indicates that 
	\begin{align}\label{eq:prop-tch-1}
		\max_i \{\lambda_i (V_{i,1}^*(s_1) + \iota - V_{i,1}^{\pi_{\blambda}^*}(s_1))\}\leq  \max_i \{\lambda_i (V_{i,1}^*(s_1) + \iota - V_{i,1}^{\pi}(s_1))\}, \quad \forall \pi\in\Pi.
	\end{align}	
	Now we prove by contradiction that $\pi_{\blambda}^*$ is weakly Pareto optimal. If $\pi_{\blambda}^*$ is not weakly Pareto optimal, then by Definition \ref{def:weakpareto}, we know that there exists another policy $\pi'\in\Pi$ satisfying $V_{i,1}^{\pi_{\blambda}^*}(s_1) < V_{i,1}^{\pi'}(s_1)$ for all $i\in[m]$, which means that 
	\begin{align*}
		\exists \pi',  \lambda_i (V_{i,1}^*(s_1) + \iota - V_{i,1}^{\pi_{\blambda}^*}(s_1)) > \lambda_i (V_{i,1}^*(s_1) + \iota - V_{i,1}^{\pi'}(s_1)).
	\end{align*} 
	Given that $\blambda\in \Delta_m^o$, 
	we further have
	\begin{align*}
		&\max_{i\in[m]} \{\lambda_i (V_{i,1}^*(s_1) + \iota - V_{i,1}^{\pi_{\blambda}^*}(s_1))\} 
		> \max_{i\in [m]} \{\lambda_i (V_{i,1}^*(s_1) + \iota - V_{i,1}^{\pi'}(s_1))\},
	\end{align*}
	which contradicts \eqref{eq:prop-tch-1} that $\max_i \{\lambda_i (V_{i,1}^*(s_1) + \iota - V_{i,1}^{\pi_{\blambda}^*}(s_1))\} \leq \max_i \{\lambda_i (V_{i,1}^*(s_1) + \iota - V_{i,1}^{\pi'}(s_1))\}$. Therefore, $\pi_{\blambda}^*$ is a weakly Pareto optimal policy.
	
\vspace{5pt}
\noindent	
\textbf{Part 2)} Prove $\breve{\pi}$ is weakly Pareto optimal $\Rightarrow \breve{\pi}=\argmin_{\pi\in\Pi} \tchl(\pi)$ for some $\blambda\in\Delta_m^o$. This is equivalent to showing that if $\breve{\pi}$ is weakly Pareto optimal, then we can find a $\blambda$ such that $\breve{\pi}$ is a minimizer of the problem $\minimize_{\pi\in\Pi} \tchl(\pi)$.

For the policy $\breve{\pi}$, we let $\blambda$ be
\begin{align*}
	\lambda_i = \frac{(V_{i,1}^*(s_1) + \iota - V_{i,1}^{\breve{\pi}}(s_1))^{-1}}{\sum_{j=1}^m (V_{j,1}^*(s_1) + \iota - V_{j,1}^{\breve{\pi}}(s_1))^{-1}}>0,
\end{align*}
where $V_{j,1}^*(s_1) + \iota - V_{j,1}^{\breve{\pi}}(s_1) > 0$ always holds, and we have $\lambda_i (V_{i,1}^*(s_1) + \iota - V_{i,1}^{\breve{\pi}}(s_1))  = \lambda_{i'} (V_{i',1}^*(s_1) + \iota - V_{i',1}^{\breve{\pi}}(s_1)) = [\sum_{j=1}^m (V_{j,1}^*(s_1) + \iota - V_{j,1}^{\breve{\pi}}(s_1))^{-1}]^{-1}$ for all $i\neq i'$ such that 
\begin{align*}
\tchl(\breve{\pi})  =\lambda_i (V_{i,1}^*(s_1) + \iota - V_{i,1}^{\breve{\pi}}(s_1)), \quad \forall i\in[m].	
\end{align*} 
Next, we show that if $\breve{\pi}$ is weakly Pareto optimal, then $\breve{\pi}$ minimizes the function $\tchl(\pi)$ with $\blambda$ defined above. We prove this claim by contradiction. If there exists another policy $\pi'$ satisfying $\tchl(\pi') < \tchl(\breve{\pi})$, we have
\begin{align*}
\max_{j\in[m]}\{\lambda_j (V_{j,1}^*(s_1) + \iota - V_{j,1}^{\pi'}(s_1))\} < \tchl(\breve{\pi}) = \lambda_i (V_{i,1}^*(s_1) + \iota - V_{i,1}^{\breve{\pi}}(s_1)), \quad \forall i\in[m],
\end{align*}
which further implies that
\begin{align*}
	\lambda_i (V_{i,1}^*(s_1) + \iota - V_{i,1}^{\pi'}(s_1)) < \lambda_i (V_{i,1}^*(s_1) + \iota - V_{i,1}^{\breve{\pi}}(s_1)), \quad \forall i\in[m].
\end{align*}
The above inequality means that $\breve{\pi}$ is not a weakly Pareto optimal according to Definition \ref{def:weakpareto}, which contradicts the premise that $\breve{\pi}$ is a weakly Pareto optimal. Therefore, we can prove Part 2). This completes the first claim of this proposition.

Now we prove the second claim of this proposition by contradiction. Let $\breve{\pi}:=\argmin_{\pi\in\Pi}\tchl(\pi)$ for a $\blambda$. Assume that $\breve{\pi}$ is not Pareto optimal. According to the definition of the Pareto optimal policy in Definition \ref{def:pareto}, we have that there exists a policy $\pi\neq \breve{\pi}$ such that
\begin{align*}
	V_{i,1}^{\breve{\pi}}(s_1)\leq V_{i,1}^{\pi}(s_1), ~\forall i\in[m].
\end{align*}
According to the definition of $\tchl$, the above inequality leads to
\begin{align*}
	\tchl(\pi) \leq \tchl(\breve{\pi}).
\end{align*}
However, since $\breve{\pi}$ is the only solution to $\tchl(\pi)$, we have $\tchl(\pi) > \tchl(\breve{\pi})$, which contradicts the above inequality that $\tchl(\pi) \leq \tchl(\breve{\pi})$. This completes our proof of Proposition \ref{prop:tch}.
\end{proof}

\subsection{Proof of Proposition \ref{prop:tch-unique}}

\begin{proposition} There always exists a subset $\Lambda \subseteq \Delta_m$ such that $\{\pi~|~\pi \in \min_{\pi\in\Pi} \tchl(\pi), \forall \blambda \in\Lambda\subseteq\Delta_m^o\}= \Pi_{\mathrm{P}}^*$. Supposing that $\pi_{\blambda}^*$ is one arbitrary solution to $\min_{\pi\in\Pi} \tchl(\pi)$, for each $\blambda\in\Lambda$, all the solutions to $\min_{\pi\in\Pi} \tchl(\pi)$ is Pareto optimal with the same value $V_{i,1}^{\pi_{\blambda}^*}(s_1)$ for all $i\in[m]$. 
\end{proposition}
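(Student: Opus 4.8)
The plan is to build $\Lambda$ explicitly, recycling the ``balancing'' preference that already appears inside the proof of Proposition~\ref{prop:tch}. First I would, for every Pareto optimal policy $\pi^*\in\Pi_{\mathrm{P}}^*$, define a preference $\blambda(\pi^*)=(\lambda_i(\pi^*))_{i=1}^m$ by $\lambda_i(\pi^*)\propto\big(V_{i,1}^*(s_1)+\iota-V_{i,1}^{\pi^*}(s_1)\big)^{-1}$, normalized so that $\sum_i\lambda_i(\pi^*)=1$. Since $\iota>0$ forces $V_{j,1}^*(s_1)+\iota-V_{j,1}^{\pi^*}(s_1)>0$ for every $j$, each entry of $\blambda(\pi^*)$ is strictly positive, hence $\blambda(\pi^*)\in\Delta_m^o$; I then take $\Lambda:=\{\blambda(\pi^*):\pi^*\in\Pi_{\mathrm{P}}^*\}\subseteq\Delta_m^o$. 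The key bookkeeping identity — the one that makes this choice work and which is already verified in the proof of Proposition~\ref{prop:tch} — is that under $\blambda=\blambda(\pi^*)$ all coordinate residuals collapse to a single number: $\lambda_i(\pi^*)\big(V_{i,1}^*(s_1)+\iota-V_{i,1}^{\pi^*}(s_1)\big)=c(\pi^*)$ for every $i\in[m]$, where $c(\pi^*):=\big[\sum_{j=1}^m(V_{j,1}^*(s_1)+\iota-V_{j,1}^{\pi^*}(s_1))^{-1}\big]^{-1}$, and consequently $\tch_{\blambda(\pi^*)}(\pi^*)=c(\pi^*)$.

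Second, I would establish $\Pi_{\mathrm{P}}^*\subseteq\bigcup_{\blambda\in\Lambda}\argmin_{\pi\in\Pi}\tch_{\blambda}(\pi)$. Every Pareto optimal policy is weakly Pareto optimal, so Part~2) of the proof of Proposition~\ref{prop:tch} applies verbatim with $\breve\pi=\pi^*$: the policy $\pi^*$ is a minimizer of $\tch_{\blambda(\pi^*)}$. As $\blambda(\pi^*)\in\Lambda$, this yields the inclusion.

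Third — the only step with real content — I would prove the reverse inclusion together with the ``equal value'' claim in one shot. Fix $\blambda\in\Lambda$, say $\blambda=\blambda(\pi^*)$, and let $\pi'$ be any solution of $\min_{\pi\in\Pi}\tch_{\blambda}(\pi)$. Then $\tch_{\blambda}(\pi')=\tch_{\blambda}(\pi^*)=c(\pi^*)$, so for each $i$ we get $\lambda_i(\pi^*)\big(V_{i,1}^*(s_1)+\iota-V_{i,1}^{\pi'}(s_1)\big)\le c(\pi^*)=\lambda_i(\pi^*)\big(V_{i,1}^*(s_1)+\iota-V_{i,1}^{\pi^*}(s_1)\big)$; dividing by $\lambda_i(\pi^*)>0$ gives $V_{i,1}^{\pi'}(s_1)\ge V_{i,1}^{\pi^*}(s_1)$ for all $i\in[m]$. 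If some inequality were strict, $\pi'$ would dominate $\pi^*$ in the sense of Definition~\ref{def:pareto}, contradicting $\pi^*\in\Pi_{\mathrm{P}}^*$; hence $V_{i,1}^{\pi'}(s_1)=V_{i,1}^{\pi^*}(s_1)$ for every $i$. This is exactly the ``same value'' statement, with $\pi^*$ serving as the arbitrary fixed solution $\pi_{\blambda}^*$ (and since it holds for every solution $\pi'$, the common value vector is independent of which solution is named $\pi_{\blambda}^*$). Finally, as $\pi'$ shares its value vector with the Pareto optimal $\pi^*$, any policy dominating $\pi'$ would also dominate $\pi^*$, so $\pi'\in\Pi_{\mathrm{P}}^*$; this gives $\bigcup_{\blambda\in\Lambda}\argmin_{\pi\in\Pi}\tch_{\blambda}(\pi)\subseteq\Pi_{\mathrm{P}}^*$ and closes the proof.

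The main obstacle is conceptual rather than computational: one has to notice that the particular balancing weights $\blambda(\pi^*)$ equalize all coordinate residuals at the value $c(\pi^*)$, which is precisely what upgrades the inequality $\max_i\lambda_i(\cdots)\le c(\pi^*)$ (automatic for any minimizer) into the coordinatewise domination $V_{i,1}^{\pi'}(s_1)\ge V_{i,1}^{\pi^*}(s_1)$, and then one must invoke the full strength of Pareto optimality of $\pi^*$ — not merely weak Pareto optimality — to collapse ``$\ge$'' to ``$=$''. No structure of the MOMDP beyond $\iota>0$ and the definition of $V_{i,1}^*$ enters, so the argument transfers to general multi-objective settings.
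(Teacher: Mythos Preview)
Your proposal is correct and takes essentially the same approach as the paper: the same explicit $\Lambda=\{\blambda(\pi^*):\pi^*\in\Pi_{\mathrm{P}}^*\}$ with the balancing weights, the same use of Part~2) of Proposition~\ref{prop:tch} for the inclusion $\Pi_{\mathrm{P}}^*\subseteq\bigcup_{\blambda\in\Lambda}\argmin\tch_{\blambda}$, and the same domination-plus-Pareto-optimality argument to force equal value vectors and the reverse inclusion. Your presentation of the third step is slightly cleaner --- deriving $V_{i,1}^{\pi'}(s_1)\ge V_{i,1}^{\pi^*}(s_1)$ for all $i$ directly from $\max_i\lambda_i(\cdots)\le c(\pi^*)$ rather than via the paper's two-bullet case split --- but the underlying logic is identical.
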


\begin{proof}
For each $\pi\in \Pi_\mathrm{P}^*$, we set the associated $\blambda$ as 
\begin{align*}
	\lambda_i = \frac{(V_{i,1}^*(s_1) + \iota - V_{i,1}^{\pi}(s_1))^{-1}}{\sum_{j=1}^m (V_{j,1}^*(s_1) + \iota - V_{j,1}^{\pi}(s_1))^{-1}}>0.
\end{align*}
Then we can define the set 
\begin{align*}
\Lambda:=\left\{\blambda ~:~ \lambda_i = \frac{(V_{i,1}^*(s_1) + \iota - V_{i,1}^{\pi}(s_1))^{-1}}{\sum_{j=1}^m (V_{j,1}^*(s_1) + \iota - V_{j,1}^{\pi}(s_1))^{-1}},~~ \forall \pi\in\Pi_{\mathrm{P}}^* \right\}.	
\end{align*}
Now we first show $\Pi_{\mathrm{P}}^*\subseteq \{\pi~|~\pi \in \min_{\pi\in\Pi} \tchl(\pi), \forall \blambda \in\Lambda\subseteq\Delta_m^o\}$. According to \textbf{Part 2)} of the proof of Proposition \ref{prop:tch} in the subsection above, we can show for any $\breve\pi\in\Pi_{\mathrm{P}}^*\subseteq \Pi_{\mathrm{W}}^*$, it is the solution to the problem $\min_{\pi\in\Pi}\tchl(\pi)$ with $\lambda_i = \frac{(V_{i,1}^*(s_1) + \iota - V_{i,1}^{\breve{\pi}}(s_1))^{-1}}{\sum_{j=1}^m (V_{j,1}^*(s_1) + \iota - V_{j,1}^{\breve{\pi}}(s_1))^{-1}}$. Thus, we conclude $\Pi_{\mathrm{P}}^*\subseteq \{\pi~|~\pi \in \min_{\pi\in\Pi} \tchl(\pi), \forall \blambda \in\Lambda\subseteq\Delta_m^o\}$.

Next, we prove $ \{\pi~|~\pi \in \min_{\pi\in\Pi} \tchl(\pi), \forall \blambda \in\Lambda\subseteq\Delta_m^o\}\subseteq \Pi_{\mathrm{P}}^*$ and all solutions of $\min_{\pi\in\Pi} \tchl(\pi)$ for each $\blambda\in\Lambda$ has the same value. Since the uniqueness of the solution to $\min_{\pi\in\Pi} \tchl(\pi)$ leads to Pareto optimality according to Proposition \ref{prop:tch}, we now assume $\min_{\pi\in\Pi} \tchl(\pi)$ has more than one solution. We assume $\tilde\pi$ is one solution to $\min_{\pi\in\Pi} \tchl(\pi)$ under some $\blambda$ with $\lambda_i =  \frac{(V_{i,1}^*(s_1) + \iota - V_{i,1}^{\breve{\pi}}(s_1))^{-1}}{\sum_{j=1}^m (V_{j,1}^*(s_1) + \iota - V_{j,1}^{\breve{\pi}}(s_1))^{-1}}$ for some  $\breve{\pi}\in\Pi_{\mathrm{P}}^*$. According to the above proof, we know that $\breve{\pi}$ is also the solution to the problem $\min_{\pi\in\Pi} \tchl(\pi)$. Now we further assume that $\tilde\pi \neq \breve\pi$. Since $\breve{\pi}\in \Pi_{\mathrm{P}}^*$, if we can show $V_{i,1}^{\tilde\pi}(s_1) = V_{i,1}^{\breve\pi}(s_1), \forall i\in[m]$, then we obtain $\tilde\pi\in\Pi_{\mathrm{P}}$, which will complete the proof. We prove it by contradiction. Since $\breve\pi$ is a solution, we have
\begin{align*}
	\min_\pi \tchl(\pi) = \tchl(\breve\pi) = \frac{1}{\sum_{j=1}^m (V_{j,1}^*(s_1) + \iota - V_{j,1}^{\breve{\pi}}(s_1))^{-1}}.
\end{align*}
Then, we discuss different cases:
\begin{itemize}
	\item If there exists $i'$ such that $V_{i',1}^{\tilde\pi}(s_1) < V_{i',1}^{\breve\pi}(s_1)$, then $\tchl(\tilde\pi)>\min_\pi \tchl(\pi)$, contradicting the assumption that $\tilde\pi\in \argmin_\pi \tchl(\pi)$. Thus, we must have $V_{i,1}^{\tilde\pi}(s_1) \geq V_{i,1}^{\breve\pi}(s_1), ~\forall i\in[m]$.
	\item If there exists $i'$ such that $V_{i',1}^{\tilde\pi}(s_1) > V_{i',1}^{\breve\pi}(s_1)$ with $V_{i,1}^{\tilde\pi}(s_1) = V_{i,1}^{\breve\pi}(s_1)$ for other $i\neq i'$, then by Definition \ref{def:pareto}, we know $\tilde\pi$ dominates $\breve\pi$, implying that $\breve\pi$ is not Pareto optimal, which contradicts $\breve\pi\in\Pi_{\mathrm{P}}$. 
\end{itemize}
By the above analysis, we obtain $V_{i,1}^{\tilde\pi}(s_1) = V_{i,1}^{\breve\pi}(s_1), \forall i\in[m]$ and $\tilde\pi\in\Pi_{\mathrm{P}}$. The proof is completed.	
\end{proof}

\section{Proofs for Section \ref{sec:tchs}}

This section provides detailed proofs of Proposition \ref{prop:eq-tch} and Theorem \ref{thm:tch} along with several important lemmas for this theorem.

\subsection{Proof of Proposition \ref{prop:eq-tch}}

\begin{proof} According to \eqref{eq:tch}, we have 
	\begin{align*}
		\tch_{\blambda}(\pi):= \max_{i\in[m]} \{ \lambda_i (V_{i,1}^*(s_1) + \iota - V_{i,1}^\pi(s_1)) \}.
	\end{align*}
	For any $x_i$, the inequality $\max_{i\in[m]}\{x_i\} = \max_{(w_i)_{i=1}^m\in\Delta_m} \sum_{i=1}^m w_i x_i$ always holds. Thus, we obtain 
	\begin{align*}
		\tch_{\blambda}(\pi)= \max_{(w_i)_{i=1}^m\in\Delta_m} \sum_{i=1}^m w_i \lambda_i (V_{i,1}^*(s_1) + \iota - V_{i,1}^\pi(s_1)).
	\end{align*}
	Moreover, we have $V_{i,1}^*(s_1):=\max_{\pi\in\Pi} V_{i,1}^{\pi}(s_1)$ by its definition. Therefore, we eventually obtain that 
	\begin{align*}
		\tch_{\blambda}(\pi)&= \max_{(w_i)_{i=1}^m \in\Delta_m}  \sum_{i=1}^m w_i\lambda_i ( \max_{\{\nu_i\in\Pi\}_{i=1}^m} V_{i,1}^{\nu_i}(s_1) + \iota - V_{i,1}^\pi(s_1))\\
		&= \max_{(w_i)_{i=1}^m \in\Delta_m} \max_{\{\nu_i\in\Pi\}_{i=1}^m} \sum_{i=1}^m w_i\lambda_i (V_{i,1}^{\nu_i}(s_1) + \iota - V_{i,1}^\pi(s_1)).
	\end{align*}
	This completes the proof.
\end{proof}

\subsection{Lemmas for Theorem \ref{thm:tch}} \label{sec:lemma-online}

\begin{lemma} \label{lem:v-decomp-2} Algorithm \ref{alg:online-tch} ensures that for any policy $\pi$ and policy $\nu_i$
	\begin{align*}
		&V_{i,1}^{\pi}(s_1) - V_{i,1}^t(s_1) = \sum_{h=1}^H \EE_{\pi, \PP} [ \varsigma_{i,h}^t(s_h, a_h) \given s_1 ] + \sum_{h=1}^H \EE_{\pi, \PP} [ \langle \pi_h(\cdot | s_h)-\pi_h^t(\cdot | s_h), Q_{i,h}^t(s_h,\cdot) \rangle_{\cA}  \given s_1 ],\\
		&V_{i,1}^{\nu_i}(s_1) - \tilde{V}_{i,1}^t(s_1) = \sum_{h=1}^H \EE_{\nu_i, \PP} [ \tilde{\varsigma}_{i,h}^t(s_h, a_h) \given s_1 ] + \sum_{h=1}^H \EE_{\nu_i, \PP} [ \langle \nu_{i,h}(\cdot | s_h)-\tilde \nu_{i,h}^t(\cdot | s_h), \tilde Q_{i,h}^t(s_h,\cdot) \rangle_{\cA}  \given s_1 ].
	\end{align*}
	where $s_h, a_h$ are random variables for the state and action, and we define the model prediction errors of the $Q$-functions as $\varsigma_{i,h}^t(s,a) = r_{i,h}(s,a) +  \PP_h V_{i,h+1}^t(s, a) - Q_{i,h}^t(s,a)$ and $\tilde\varsigma_{i,h}^t(s,a) = r_{i,h}(s,a) +  \PP_h \tilde V_{i,h+1}^t(s, a) - \tilde Q_{i,h}^t(s,a)$.

\end{lemma}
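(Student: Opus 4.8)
\textbf{Proof plan for Lemma \ref{lem:v-decomp-2}.}

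The statement is a standard value-difference (performance-difference) decomposition, adapted to the optimistic-Q setting of Algorithm \ref{alg:online-tch}. I would prove each of the two identities by a backward induction on the step index $h$, going from $h=H+1$ down to $h=1$; since the two identities have the same structure (one uses $\pi^t,Q_{i,h}^t,V_{i,h}^t$, the other uses $\tilde\nu_i^t,\tilde Q_{i,h}^t,\tilde V_{i,h}^t$), it suffices to carry out the argument once and remark that the second follows verbatim with the obvious substitutions. The plan is: first fix $i$ and $t$ and, for a comparator policy $\pi$, define the running quantity $\delta_h(s):=V_{i,h}^\pi(s)-V_{i,h}^t(s)$ for $s\in\cS$, with the convention $\delta_{H+1}\equiv 0$ (both $V_{i,H+1}^\pi$ and $V_{i,H+1}^t$ are zero). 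The goal is to show that $\delta_1(s_1)$ equals the claimed sum of expectations.

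The key algebraic step is the one-step expansion of $\delta_h(s)$. Using the Bellman equation \eqref{eq:bellman} for $V_{i,h}^\pi$, namely $V_{i,h}^\pi(s)=\langle \pi_h(\cdot|s), Q_{i,h}^\pi(s,\cdot)\rangle_\cA$ with $Q_{i,h}^\pi(s,a)=r_{i,h}(s,a)+\PP_h V_{i,h+1}^\pi(s,a)$, and the algorithm's definition $V_{i,h}^t(s)=\langle \pi_h^t(\cdot|s), Q_{i,h}^t(s,\cdot)\rangle_\cA$, I would write
\begin{align*}
\delta_h(s) &= \langle \pi_h(\cdot|s)-\pi_h^t(\cdot|s),\, Q_{i,h}^t(s,\cdot)\rangle_\cA + \langle \pi_h(\cdot|s),\, Q_{i,h}^\pi(s,\cdot)-Q_{i,h}^t(s,\cdot)\rangle_\cA.
\end{align*}
Then, inside the second inner product, insert the model prediction error: $Q_{i,h}^\pi(s,a)-Q_{i,h}^t(s,a) = \big(r_{i,h}(s,a)+\PP_h V_{i,h+1}^\pi(s,a)-Q_{i,h}^t(s,a)\big) + \PP_h\big(V_{i,h+1}^t-V_{i,h+1}^\pi\big)(s,a)$, which is exactly $\varsigma_{i,h}^t(s,a)$ plus $\PP_h V_{i,h+1}^\pi(s,a)-\PP_h V_{i,h+1}^t(s,a) - \varsigma_{i,h}^t(s,a)$... more cleanly: $Q_{i,h}^\pi - Q_{i,h}^t = \varsigma_{i,h}^t + \PP_h(V_{i,h+1}^\pi - V_{i,h+1}^t)$ where I have used the definition $\varsigma_{i,h}^t(s,a)=r_{i,h}(s,a)+\PP_h V_{i,h+1}^t(s,a)-Q_{i,h}^t(s,a)$. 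Taking the expectation $\EE_{\pi,\PP}[\,\cdot\mid s_h=s]$ of the term $\langle \pi_h(\cdot|s), \PP_h(V_{i,h+1}^\pi-V_{i,h+1}^t)(s,\cdot)\rangle_\cA$ produces $\EE_{\pi,\PP}[\delta_{h+1}(s_{h+1})\mid s_h=s]$, giving the recursion. Unrolling from $h=1$ and taking $\EE_{\pi,\PP}[\cdot\mid s_1]$, with the telescoping over the transition expectations and $\delta_{H+1}\equiv 0$, yields precisely
\begin{align*}
V_{i,1}^\pi(s_1)-V_{i,1}^t(s_1) = \sum_{h=1}^H \EE_{\pi,\PP}[\varsigma_{i,h}^t(s_h,a_h)\mid s_1] + \sum_{h=1}^H \EE_{\pi,\PP}[\langle \pi_h(\cdot|s_h)-\pi_h^t(\cdot|s_h), Q_{i,h}^t(s_h,\cdot)\rangle_\cA \mid s_1].
\end{align*}
The second identity follows by replacing $\pi\mapsto\nu_i$, $\pi^t\mapsto\tilde\nu_i^t$, $Q_{i,h}^t\mapsto\tilde Q_{i,h}^t$, $V_{i,h}^t\mapsto\tilde V_{i,h}^t$, and $\varsigma\mapsto\tilde\varsigma$, since the algorithm defines $\tilde V_{i,h}^t(s)=\langle \tilde Q_{i,h}^t(s,\cdot),\tilde\nu_{i,h}^t(\cdot|s)\rangle_\cA$ in Line \ref{line:tch-auxQ} exactly analogously.

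I do not anticipate a genuine obstacle here — this is a bookkeeping lemma — but the one point that needs care is the interchange/telescoping of the nested conditional expectations $\EE_{\pi,\PP}[\cdot\mid s_h]$ when unrolling the recursion into a single sum of $\EE_{\pi,\PP}[\cdot\mid s_1]$ terms; this relies on the Markov property of the trajectory distribution induced by $(\pi,\PP)$ and the tower rule, and one must be careful that $\varsigma_{i,h}^t$ is a deterministic function of $(s,a)$ (it depends on $t$-round estimates but not on the trajectory being averaged), so that $\EE_{\pi,\PP}[\varsigma_{i,h}^t(s_h,a_h)\mid s_1]$ is well-defined and the bonus terms used in later lemmas can be plugged in. A second minor point is the boundary convention $V_{i,H+1}=\tilde V_{i,H+1}=0$, which matches the initialization in Algorithm \ref{alg:online-tch}. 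With these observations the proof is a direct induction.
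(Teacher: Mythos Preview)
Your proposal is correct and follows essentially the same approach as the paper: both expand $V_{i,h}^\pi(s)-V_{i,h}^t(s)$ into the policy-difference term $\langle \pi_h-\pi_h^t, Q_{i,h}^t\rangle_\cA$ plus $\langle \pi_h, Q_{i,h}^\pi-Q_{i,h}^t\rangle_\cA$, rewrite the latter via $Q_{i,h}^\pi-Q_{i,h}^t=\varsigma_{i,h}^t+\PP_h(V_{i,h+1}^\pi-V_{i,h+1}^t)$, and then unroll the resulting recursion using $V_{i,H+1}^\pi=V_{i,H+1}^t=0$. The second identity is handled identically in both, by the obvious substitution of the auxiliary quantities.
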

\begin{proof} 
For any $h$ and $s$, we have the following decomposition for $V_{i,1}^{\pi}(s_1) - V_{i,1}^t(s_1)$,
\begin{align}
	\begin{aligned} \label{eq:V_diff_1_init-1}
		V_{i,h}^{\pi}(s) - V_{i,h}^t(s)&=  \langle\pi_h(\cdot | s), Q_{i,h}^{\pi}(s, \cdot )\rangle_\cA  -  \langle \pi_h^t(\cdot | s), Q_h^t(s, \cdot )\rangle_\cA \\
		&=  \langle\pi_h(\cdot | s), Q_{i,h}^{\pi}(s, \cdot )\rangle_\cA  -  \langle\pi_h(\cdot | s), Q_{i,h}^t(s, \cdot )\rangle_\cA  \\
		&\quad +   \langle\pi_h(\cdot | s), Q_{i,h}^t(s, \cdot )\rangle_\cA  -  \langle\pi_h^t(\cdot | s), Q_{i,h}^t(s, \cdot )\rangle_\cA  \\
		&=\langle\pi_h(\cdot | s),Q_{i,h}^\pi(s, \cdot ) - Q_{i,h}^t(s, \cdot )\rangle_\cA  +   \langle \pi_h(\cdot | s)-\pi_h^t(\cdot | s), Q_{i,h}^t(s, \cdot)\rangle_\cA,
	\end{aligned}
\end{align}
where the first inequality is by Bellman equation in \eqref{eq:bellman} and the definition of $V_{i,h}^k$ in Algorithm \ref{alg:online-tch}. Furthermore, by the definition of the model prediction error $\varsigma_{i,h}^t$ defined in this lemma, we have
\begin{align*}
	&\langle\pi_h(\cdot | s),Q_{i,h}^\pi(s, \cdot ) - Q_{i,h}^t(s, \cdot )\rangle_\cA  \\
	& \qquad =  \sum_{a \in \cA}  \pi_h(a | s) \bigg[ \sum_{s'\in \cS}  \PP_h(s'|s, a) \big[V_{i,h+1}^{\pi}(s') -  V_{i,h+1}^k(s')\big] + \varsigma_{i,h}^t(s,a) \bigg] \\
	&\qquad =  \sum_{a \in \cA} \sum_{s'\in \cS} \pi_h(a | s) \PP_h(s'|s, a) \big[V_{i,h+1}^{\pi}(s') -  V_{i,h+1}^k(s')\big]  +  \sum_{a \in \cA}   \pi_h(a | s) \varsigma_{i,h}^t(s,a).
\end{align*}
Combining this result with \eqref{eq:V_diff_1_init-1}, we obtain
\begin{align}
	\begin{aligned} \label{eq:V_diff_1_rec-1}
		V_{i,h}^{\pi}(s) - V_{i,h}^t(s)  & =  \sum_{a \in \cA} \sum_{s'\in \cS} \pi_h(a | s) \PP_h(s'|s, a) \big[V_{i,h+1}^{\pi}(s') -  V_{i,h+1}^k(s')\big]\\
		&\quad   +  \sum_{a \in \cA}   \pi_h(a | s) \varsigma_{i,h}^t(s,a) + \langle \pi_h(\cdot | s)-\pi_h^t(\cdot | s), Q_{i,h}^t(s, \cdot)\rangle_\cA.
	\end{aligned}
\end{align}
Note that the above equality constructs a recursion of the value function difference $V_h^{\pi}(s) - V_h^k(s)$. As we define $V_{H+1}^{\pi}(s) = V_{H+1}^k(s) = 0$, recursively applying \eqref{eq:V_diff_1_rec-1} gives
\begin{align*}  
	V_{i,1}^{\pi}(s_1) - V_{i,1}^t(s_1)  & =   \sum_{h=1}^H \EE_{\pi, \PP} \{  \varsigma_{i,h}^t(s_h, a_h) \given s_1 \}   + \sum_{h=1}^H \EE_{\pi, \PP} \big\{ \langle\pi_h(\cdot | s_h)-\pi_h^t(\cdot | s_h), Q_{i,h}^t(s_h, \cdot)  \rangle_\cA \biggiven  s_1 \big\},
\end{align*}
where $(s_h,a_h)$ are random variables denoting the state and action at the $h$-th step following a distribution jointly determined by $\pi, \PP$. 

For the second inequality in this lemma, we have the same analysis by replacing the policies and value functions in the above derivation with $\nu_i, \tilde{\nu}_i^t$ and $\tilde V_{i,h}^t, \tilde Q_{i,h}^t$. Thus, we have
\begin{align*}
	V_{i,1}^{\nu_i}(s_1) - \tilde{V}_{i,1}^t(s_1) &= \sum_{h=1}^H \EE_{\nu_i, \PP} [ \tilde{\varsigma}_{i,h}^t(s_h, a_h) \given s_1 ]  + \sum_{h=1}^H \EE_{\nu_i, \PP} [ \langle \nu_{i,h}(\cdot | s_h)-\tilde \nu_{i,h}^t(\cdot | s_h), \tilde Q_{i,h}^t(s_h,\cdot) \rangle_{\cA}  \given s_1 ].
\end{align*}
This completes our proof.
\end{proof}

\begin{lemma}[Optimism]\label{lem:v-opt-2} 	
Let $\hat{\PP}_h^t$ and $\hat{r}_{i,h}^t, \forall i\in[m]$ be the estimated transition and reward functions via some estimation procedure using the trajectories generated by $\pi^t$. Suppose that there exist $\Phi_h^t$ and $\Psi_{i,h}^t$ such that $\hat{\PP}_h^t$ and $\hat{r}_{i,h}^t$ satisfy for any $V:\cS\mapsto [0,H]$,
\begin{align*}
	|(\hat{\PP}_h^t - \PP_h)V(s,a)| \leq \Phi_h^t(s,a),\quad \text{and} \quad |\hat{r}_{i,h}^t(s,a) - r_{i,h}(s,a)| \leq \Psi_{i,h}^t(s,a), \forall i\in[m], 
\end{align*}
and the corresponding optimistic Q-function is defined as $Q_{i,h}^t(\cdot,\cdot) = \{ (\hat{r}_{i,h}^t + \hat{\PP}_h^t V_{i,h+1}^t + \Phi_h^t+\Psi_{i,h}^t)(\cdot,\cdot)\}_{[0,H-h+1]}$. Furthermore, we let $\tilde{\PP}_{i,h}^t$ and $\tilde{r}_{i,h}^t, \forall i\in[m]$ be the estimated transition and reward functions using the trajectories generated by $\tilde\nu_i^t$. Suppose that there exist $\Phi_{i,h}^t$ and $\Psi_{i,h}^t$ such that $\tilde{\PP}_h^t$ and $\tilde{r}_{i,h}^t$ satisfy for any $V:\cS\mapsto [0,H]$,
\begin{align*}
	|(\tilde{\PP}_h^t - \PP_h)V(s,a)| \leq \tilde{\Phi}_{i,h}^t(s,a),\quad \text{and} \quad |\tilde{r}_{i,h}^t(s,a) - r_{i,h}(s,a)| \leq \tilde{\Psi}_{i,h}^t(s,a), \forall i\in[m], 
\end{align*}
and the corresponding optimistic Q-function is defined as  $\tilde{Q}_{i,h}^t(\cdot,\cdot) = \{ (\tilde{r}_{i,h}^t + \tilde{\PP}_{i,h}^t \tilde{V}_{i,h+1}^t + \tilde{\Phi}_{i,h}^t+\tilde{\Psi}_{i,h}^t)(\cdot,\cdot)\}_{[0,H-h+1]}$. Then Algorithm \ref{alg:online-tch} ensures that for any policies $\pi$ and $\nu_i$
	\begin{align*}
		&\sum_{h=1}^H \EE_{\pi, \PP} \big[ \varsigma_{i,h}^t(s_h, a_h)   \biggiven s_1 \big] \leq 0 \quad \text{and} \quad  \sum_{h=1}^H \EE_{\nu_i, \PP} \big[ \tilde{\varsigma}_{i,h}^t(s_h, a_h)   \biggiven s_1 \big] \leq 0, 
	\end{align*}
	where $\varsigma_{i,h}^t$ and $\tilde{\varsigma}_{i,h}^t$ is the model prediction errors defined as in Lemma \ref{lem:v-decomp-2}.
\end{lemma}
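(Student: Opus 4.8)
## Proof Plan for Lemma \ref{lem:v-opt-2} (Optimism)

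The plan is to show that the model prediction errors $\varsigma_{i,h}^t$ and $\tilde\varsigma_{i,h}^t$ are nonpositive pointwise, which immediately yields the claimed inequalities since the expectations of nonpositive quantities are nonpositive. I will handle the main policy case ($\varsigma_{i,h}^t$, $Q_{i,h}^t$); the auxiliary policy case ($\tilde\varsigma_{i,h}^t$, $\tilde Q_{i,h}^t$) is identical after substituting the tilde quantities and the bonuses $\tilde\Phi_{i,h}^t, \tilde\Psi_{i,h}^t$.

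First I would recall the definitions: $\varsigma_{i,h}^t(s,a) = r_{i,h}(s,a) + \PP_h V_{i,h+1}^t(s,a) - Q_{i,h}^t(s,a)$ and $Q_{i,h}^t(\cdot,\cdot) = \{(\hat r_{i,h}^t + \hat\PP_h^t V_{i,h+1}^t + \Phi_h^t + \Psi_{i,h}^t)(\cdot,\cdot)\}_{[0, H-h+1]}$. The key step is the pointwise bound: using the concentration hypotheses $|\hat r_{i,h}^t(s,a) - r_{i,h}(s,a)| \le \Psi_{i,h}^t(s,a)$ and $|(\hat\PP_h^t - \PP_h) V_{i,h+1}^t(s,a)| \le \Phi_h^t(s,a)$ (valid since $0 \le V_{i,h+1}^t \le H$ by the truncation in the \texttt{OptQ} construction), I get
\begin{align*}
(\hat r_{i,h}^t + \hat\PP_h^t V_{i,h+1}^t + \Phi_h^t + \Psi_{i,h}^t)(s,a) \ge r_{i,h}(s,a) + \PP_h V_{i,h+1}^t(s,a),
\end{align*}
i.e., the pre-truncation quantity dominates $r_{i,h} + \PP_h V_{i,h+1}^t$. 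Then I would argue the truncation $\{\cdot\}_{[0, H-h+1]}$ preserves this: since $r_{i,h}(s,a) + \PP_h V_{i,h+1}^t(s,a) \in [0, H-h+1]$ (because $r_{i,h} \in [0,1]$ and $V_{i,h+1}^t \in [0, H-h]$, so the sum lies in the clipping window), clipping the larger quantity to $[0, H-h+1]$ still leaves it $\ge r_{i,h}(s,a) + \PP_h V_{i,h+1}^t(s,a)$. Hence $Q_{i,h}^t(s,a) \ge r_{i,h}(s,a) + \PP_h V_{i,h+1}^t(s,a)$, which is exactly $\varsigma_{i,h}^t(s,a) \le 0$ for all $(s,a,h,i,t)$. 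Taking $\EE_{\pi,\PP}[\cdot \given s_1]$ and summing over $h$ gives $\sum_{h=1}^H \EE_{\pi,\PP}[\varsigma_{i,h}^t(s_h,a_h) \given s_1] \le 0$ for any policy $\pi$.

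I expect the only mildly delicate point to be justifying that the truncation does not destroy the inequality — this requires knowing that the target $r_{i,h} + \PP_h V_{i,h+1}^t$ already lies in the clipping interval $[0, H-h+1]$, so that clipping a value above it downward cannot bring it below the target. This follows from the range bounds $V_{i,h+1}^t \in [0, H-h]$, which itself should be verified by a short downward induction on $h$ using the truncation in \texttt{OptQ} and the fact that $V_{i,h+1}^t$ is an average of $Q_{i,h+1}^t \in [0, H-h]$ under the policy. Everything else is a direct substitution of the stated concentration bounds, and the auxiliary case is verbatim with tildes, so I would state it as ``by an identical argument.''
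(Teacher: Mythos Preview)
Your proposal is correct and follows essentially the same approach as the paper's proof: both establish the pointwise inequality $\varsigma_{i,h}^t(s,a)\le 0$ by combining the concentration hypotheses with the range bound $r_{i,h}(s,a)+\PP_h V_{i,h+1}^t(s,a)\in[0,H-h+1]$ (which handles the truncation), then take expectation and sum over $h$; the auxiliary case is indeed identical. Your write-up is arguably slightly cleaner, phrasing it directly as $Q_{i,h}^t\ge r_{i,h}+\PP_h V_{i,h+1}^t$, whereas the paper routes through the intermediate bound $\varsigma_{i,h}^t(s,a)\le\max\{(r_{i,h}-\hat r_{i,h}^t+(\PP_h-\hat\PP_h^t)V_{i,h+1}^t-\Phi_h^t-\Psi_{i,h}^t)(s,a),\,0\}$, but the content is the same.
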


\begin{proof} 
By plugging in the definition of $Q_{i,h}^t$ in Algorithm \ref{alg:online-tch} as above, we decompose the prediction error as follows
\begin{align}
	\begin{aligned}\label{eq:pred_err_re1}
		\hspace{-0.2cm}\varsigma_{i,h}^t(s, a)&=r_{i,h}(s,a) + \big\langle \PP_h(\cdot \given s, a), V_{i,h+1}^t(\cdot) \big\rangle_\cS - Q_{i,h}^t(s,a) \\
		&\leq r_{i,h}(s,a) + \PP_h V_{i,h+1}^t(s,a) \\
		&\quad- \Big\{ (\hat{r}_{i,h}^t + \hat{\PP}_h^t V_{i,h+1}^t+ \Phi_h^t+\Psi_{i,h}^t)(s,a), H-h+1 \Big\}_{[0,H-h+1]}  \\
		&\leq  \max \Big\{(r_{i,h} - \hat{r}_{i,h}^t + (\PP_h- \hat{\PP}_h^t) V_{i,h+1}^t -  \Phi_h^t-\Psi_{i,h}^t)(s,a), 0 \Big\}, 
	\end{aligned}
\end{align}
where the first inequality holds because 
\begin{align}
	&r_{i,h}(s,a) + \big\langle \PP_h(\cdot \given s_h, a_h), V_{i,h+1}^t(\cdot) \big\rangle_\cS \nonumber \\
	&\qquad \leq r_{i,h}(s,a) + \big\|\PP_h(\cdot \given s_h, a_h)\big\|_1 \|V_{i,h+1}^t(\cdot) \|_\infty  \leq 1 +  \max_{s' \in \cS} \big|V_{i,h+1}^t(s') \big| \leq 1 + H-h, \label{eq:h-bound}
\end{align}
due to $\big\|\PP_h(\cdot \given s_h, a_h)\big\|_1 = 1$ and also the definition of $Q_{i,h+1}^t$ such that for any $s' \in \cS$
\begin{align}
	\begin{aligned}\label{eq:bound_V_up}
		\big|V_{i,h+1}^t(s') \big| &=  \Big| \langle\pi_{h+1}^t(\cdot | s'),  Q_{i,h+1}^t(s', \cdot) \rangle \Big|\\
		&\leq \big\| \pi_{h+1}^t(\cdot | s')\big\|_1 \big\|Q_{i,h+1}^t(s', \cdot) \big\|_\infty \\
		&\leq \max_{a} \big|Q_{i,h+1}^t(s', a)\big|\leq H-h.
	\end{aligned}
\end{align}
Note that we have the condition that for any $V:\cS\mapsto [0,H]$,
\begin{align*}
	&|\hat{r}_{i,h}^t(s,a) - r_{i,h}(s,a)| \leq \Psi_{i,h}^t(s,a), \forall i\in[m], \qquad |(\hat{\PP}_h^t - \PP_h)V(s,a)| \leq \Phi_h^t(s,a).
\end{align*}
Then, we obtain
\begin{align*}
	&r_{i,h}(s,a) - \hat{r}_{i,h}^t(s,a) - \Psi_{i,h}^t(s,a)  \leq \big|r_{i,h}(s,a) - \hat{r}_{i,h}^t(s,a)\big| - \beta_h^{r,k}(s,a) \leq 0.
\end{align*}
In addition, we can obtain
\begin{align*}
	&\big\langle \PP_h(\cdot \given s, a) - \hat{\PP}_h^t(\cdot |s, a), V_{i,h+1}^t(\cdot) \big\rangle_\cS  - \Phi_h^t(s,a)\leq \Phi_h^t(s,a)   - \Phi_h^t(s,a)  = 0.
\end{align*}
Thus, we have
\begin{align*}
	r_{i,h}(s,a) - \hat{r}_{i,h}^t(s,a) + \big\langle \PP_h(\cdot \given s, a) - \hat{\PP}_h^t(\cdot |s, a), V_{i,h+1}^t(\cdot) \big\rangle_\cS  - \Phi_h^t(s,a)-\Psi_{i,h}^t(s,a) \leq 0.
\end{align*}
Combining the above inequality with \eqref{eq:pred_err_re1}, we have 
\begin{align*}
	&\varsigma_{i,h}^t(s, a) \leq 0,
\end{align*}
which leads to
\begin{align*}
	\sum_{h=1}^H \EE_{\pi, \PP} \big[ \varsigma_{i,h}^t(s_h, a_h)   \biggiven s_1 \big] \leq 0.
\end{align*}
Following the above proof, we can similarly prove
\begin{align*}
	\sum_{h=1}^H \EE_{\nu_i, \PP} \big[ \tilde \varsigma_{i,h}^t(s_h, a_h)   \biggiven s_1 \big] \leq 0.
\end{align*}
This completes the proof.
\end{proof}

\begin{lemma} \label{lem:online-err-2} 
	Under the same conditions as Lemma \ref{lem:v-opt-2},
	then with probability at least $1-\delta$, Algorithm \ref{alg:online-tch} ensures that for all $i\in [m]$,
	\begin{align*}
		&\sum_{t=1}^T \big[V_{i,1}^t(s_1)  - V_{i,1}^{\pi^t}(s_1)\big]  \leq \cO\bigg(\sqrt{H^3 T \log \frac{m}{\delta}} \bigg) + 2 \sum_{t=1}^T\sum_{h=1}^H [\Psi_{i,h}^t(s_h^t, a_h^t)  +  \Phi_h^t(s_h^t, a_h^t)],\\
		&\sum_{t=1}^T \big[\tilde{V}_{i,1}^t(s_1)  - V_{i,1}^{\tilde\nu_i^t}(s_1)\big]  \leq \cO\bigg(\sqrt{H^3 T \log \frac{m}{\delta}} \bigg) + 2 \sum_{t=1}^T\sum_{h=1}^H [\tilde \Psi_{i,h}^t(\tilde s_{i,h}^t, \tilde a_{i,h}^t)  +   \tilde \Phi_{i,h}^t(\tilde s_{i,h}^t, \tilde a_{i,h}^t)],
	\end{align*}
	where $\{ (s_h^t, a_h^t)\}_{h=1}^H, \forall t\in [T],$ is generated via the policy $\pi^t$ and $\{ (\tilde s_{i,h}^t, \tilde a_{i,h}^t)\}_{h=1}^H, \forall t\in [T],$ is generated via the policy $\tilde\nu_i^t$.
\end{lemma}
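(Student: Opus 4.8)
The plan is to start from the value-difference decomposition of Lemma \ref{lem:v-decomp-2} applied to the policy $\pi=\pi^t$ itself (and respectively $\nu_i=\tilde\nu_i^t$). Since the greedy updates in Lines \ref{line:tch-mainpolicy} and \ref{line:tch-auxpolicy} choose $\pi_h^t$ and $\tilde\nu_{i,h}^t$ to maximize $\langle (\bw^t\odot\blambda)^\top\bQ_h^t(s,\cdot),\pi_h(\cdot|s)\rangle$ and $\langle\tilde Q_{i,h}^t(s,\cdot),\nu_h(\cdot|s)\rangle$ respectively, the inner-product mismatch terms $\langle \pi_h^t-\pi_h^t,\cdot\rangle$ in Lemma \ref{lem:v-decomp-2} vanish (they are identically zero when $\pi=\pi^t$). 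Hence $V_{i,1}^{\pi^t}(s_1)-V_{i,1}^t(s_1)=\sum_{h=1}^H\EE_{\pi^t,\PP}[\varsigma_{i,h}^t(s_h,a_h)\,|\,s_1]$, and similarly for the auxiliary quantities. By Lemma \ref{lem:v-opt-2}, $\sum_h\EE_{\pi^t,\PP}[\varsigma_{i,h}^t]\le 0$, which gives the sign we want; the remaining task is to upper bound $-\varsigma_{i,h}^t$, i.e. to show $V_{i,1}^t(s_1)-V_{i,1}^{\pi^t}(s_1)$ is small. For this I would use the other side of the optimistic construction: from $Q_{i,h}^t=\{\hat r_{i,h}^t+\hat\PP_h^t V_{i,h+1}^t+\Phi_h^t+\Psi_{i,h}^t\}_{[0,H-h+1]}$ and the concentration bounds $|\hat r_{i,h}^t-r_{i,h}|\le\Psi_{i,h}^t$, $|(\hat\PP_h^t-\PP_h)V|\le\Phi_h^t$, one gets pointwise $0\le -\varsigma_{i,h}^t(s,a)\le 2\Phi_h^t(s,a)+2\Psi_{i,h}^t(s,a)$ (up to the clipping, which only helps).

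Next I would convert the expectations $\EE_{\pi^t,\PP}[\cdot\,|\,s_1]$ into the realized on-trajectory bonuses. Define the martingale difference $\xi_h^t:=\EE_{\pi^t,\PP}[2\Phi_h^t(s_h,a_h)+2\Psi_{i,h}^t(s_h,a_h)\,|\,s_1]-(2\Phi_h^t(s_h^t,a_h^t)+2\Psi_{i,h}^t(s_h^t,a_h^t))$, where $(s_h^t,a_h^t)$ is the actually-sampled trajectory of Line \ref{line:tch-mainsample} (note the bonuses are $\cF_{t-1}$-measurable since they depend only on $\cD_{i,h}$ collected before round $t$). Each $|\xi_h^t|\le 2(H+1)$ because $\Phi_h^t\le H$ and $\Psi_{i,h}^t\le 1$, so by the Azuma–Hoeffding inequality $\sum_{t=1}^T\sum_{h=1}^H\xi_h^t\le\cO(\sqrt{H^3T\log(m/\delta)})$ with probability $1-\delta/(2m)$, and a union bound over $i\in[m]$ handles all objectives. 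Summing over $t$ then yields $\sum_{t=1}^T[V_{i,1}^t(s_1)-V_{i,1}^{\pi^t}(s_1)]\le\cO(\sqrt{H^3T\log(m/\delta)})+2\sum_{t=1}^T\sum_{h=1}^H[\Psi_{i,h}^t(s_h^t,a_h^t)+\Phi_h^t(s_h^t,a_h^t)]$, which is the first claim. The second claim follows by the identical argument with $\pi^t\mapsto\tilde\nu_i^t$, the data $\cD_{i,h}\mapsto\tilde\cD_{i,h}$, the trajectory $(s_h^t,a_h^t)\mapsto(\tilde s_{i,h}^t,\tilde a_{i,h}^t)$ of Line \ref{line:tch-auxsample}, and $\Phi_h^t,\Psi_{i,h}^t\mapsto\tilde\Phi_{i,h}^t,\tilde\Psi_{i,h}^t$ (here each objective $i$ has its own exploration stream so the bonuses are indexed by $i$).

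I expect the main obstacle to be bookkeeping rather than a deep difficulty: one must be careful that the decomposition in Lemma \ref{lem:v-decomp-2} is being applied with $\pi$ equal to the very policy $\pi^t$ used inside $V_{i,1}^t$, so that the policy-mismatch inner products cancel exactly — this is where the greedy choice of $\pi_h^t$ (as opposed to, say, a mirror-descent policy) is essential. The second subtlety is ensuring the filtration is set up so that $\Phi_h^t,\Psi_{i,h}^t$ are predictable with respect to the history before the $t$-th sample, which is true because \texttt{OptQ} is called on $\cD_{i,h}$ (resp. $\tilde\cD_{i,h}$) populated strictly before round $t$'s sampling step; this predictability is what makes $\xi_h^t$ a genuine martingale-difference sequence. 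Everything else — the pointwise $2\Phi+2\Psi$ bound on $-\varsigma$, the boundedness constants for Azuma, and the union bound over $m$ — is routine.
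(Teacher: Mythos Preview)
Your overall plan---apply Lemma~\ref{lem:v-decomp-2} with $\pi=\pi^t$ so the policy-mismatch terms vanish, bound $-\varsigma_{i,h}^t\le 2\Phi_h^t+2\Psi_{i,h}^t$ pointwise, then pass from expectation under $\pi^t$ to the realized trajectory via a martingale argument---is sound in spirit, but the martingale you write down is not actually a martingale difference sequence at the $(t,h)$ level. The first summand in your $\xi_h^t$, namely $\EE_{\pi^t,\PP}[2\Phi_h^t(s_h,a_h)+2\Psi_{i,h}^t(s_h,a_h)\mid s_1]$, is the \emph{marginal} expectation at step~$h$ starting from $s_1$; it is $\cF_{t-1}$-measurable but it is \emph{not} the conditional expectation of the realized bonus given the trajectory up to step $h{-}1$. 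Consequently $\EE[\xi_h^t\mid\cF_{t,h-1}]\neq 0$ in general, and you cannot invoke Azuma--Hoeffding over the $HT$ increments to obtain $\cO(\sqrt{H^3T})$. The sequence $\{\sum_{h=1}^H\xi_h^t\}_{t}$ \emph{is} a valid MDS at the episode level, but each increment is bounded by $\cO(H^2)$, which only yields $\cO(\sqrt{H^4T})$---enough for the downstream Theorem~\ref{thm:tch} (the bonus sums dominate), but not the $\sqrt{H^3T}$ stated in this lemma.

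The paper avoids this issue by taking a different decomposition: instead of first passing to $\EE_{\pi^t,\PP}[\cdot]$ and then back to the realized trajectory, it unrolls $V_{i,h}^t(s_h^t)-V_{i,h}^{\pi^t}(s_h^t)$ \emph{directly along the sampled path}, using the greedy choice to write this as $Q_{i,h}^t(s_h^t,a_h^t)-Q_{i,h}^{\pi^t}(s_h^t,a_h^t)$ and then as $\xi_{i,h}^t+\big[V_{i,h+1}^t(s_{h+1}^t)-V_{i,h+1}^{\pi^t}(s_{h+1}^t)\big]-\varsigma_{i,h}^t(s_h^t,a_h^t)$, where the martingale increment is $\xi_{i,h}^t:=\langle\PP_h(\cdot\mid s_h^t,a_h^t),V_{i,h+1}^t-V_{i,h+1}^{\pi^t}\rangle-\big[V_{i,h+1}^t(s_{h+1}^t)-V_{i,h+1}^{\pi^t}(s_{h+1}^t)\big]$. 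This \emph{is} a bona fide MDS with respect to the step-by-step filtration, and crucially $|\xi_{i,h}^t|\le 2H$ (it involves the value-function difference, not the cumulative bonus), so Azuma over $HT$ terms delivers the claimed $\cO(\sqrt{H^3T\log(m/\delta)})$. The pointwise bound $-\varsigma_{i,h}^t(s_h^t,a_h^t)\le 2\Phi_h^t(s_h^t,a_h^t)+2\Psi_{i,h}^t(s_h^t,a_h^t)$ is then applied only at the realized points---no second martingale step is needed. In short, the missing idea is to build the martingale from the \emph{value difference} along the trajectory rather than from the bonuses after having already taken an expectation.
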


\begin{proof} 
	
	We start with proving the first inequality of this lemma. Assume that the trajectory $\{ (s_h^t, a_h^t)\}_{h=1}^H, \forall t\in [T],$ is generated following the policy $\pi^t$. Thus, we expand the bias term at the $h$-th step of the $t$-th episode, which is
	\begin{align}
		\begin{aligned} \label{eq:bia_1_init-1}
			V_{i,h}^t(s_h^t)  - V_{i,h}^{\pi^t}(s_h^t) & = \langle \pi_h^t(\cdot | s_h^t), Q_{i,h}^t(s_h^t, \cdot )  -  Q_{i,h}^{\pi^t}(s_h^t, \cdot )\rangle_\cA\\
			&=  Q_{i,h}^t(s_h^t, a_h^t)  -  Q_{i,h}^{\pi^t}(s_h^t, a_h^t)\\
			&= \big\langle \PP_h(\cdot \given s_h^t, a_h^t), V_{i,h+1}^t(\cdot) - V_{h+1}^{\pi^t} (\cdot) \big\rangle_\cS - \varsigma_{i,h}^t(s_h^t, a_h^t) \\
			& = \xi_{i,h}^t + V_{i,h+1}^t(s_{h+1}^t) - V_{h+1}^{\pi^t} (s_{h+1}^t) - \varsigma_{i,h}^t(s_h^t, a_h^t),
		\end{aligned}
	\end{align}
	where the first equality by Algorithm \ref{alg:online-tch} and \eqref{eq:bellman}, the second inequality is due to that we take a greedy policy as shown in the algorithm such that $\pi_h^t(a_h^t|s_h^t) = 1$, and the third equality is by the definition of the model prediction error $\varsigma_{i,h}^t$ in Lemma \ref{lem:v-opt-2}. 
	 Here we introduce the martingale difference sequence $\{\xi_{i,h}^t\}_{h>0, t>0}$, which is defined as
	\begin{align*}
		& \xi_{i,h}^t := \big\langle \PP_h(\cdot \given s_h^t, a_h^t), V_{i,h+1}^t(\cdot) - V_{h+1}^{\pi^t} (\cdot) \big\rangle_\cS - \big[ V_{i,h+1}^t(s_{h+1}^t) - V_{h+1}^{\pi^t} (s_{h+1}^t)\big],
	\end{align*}
	such that 
	\begin{align*} \EE_{s_{h+1}^t \sim \PP_h(\cdot \given s_h^t, a_h^t)} \big[\xi_{i,h}^t \biggiven \cF_h^t\big] = 0,
	\end{align*}
	where  $\cF_h^t$ is the filtration of all randomness up to $(h-1)$-th step of the $t$-th episode plus $s_h^t, a_h^t$.
	
	Note that \eqref{eq:bia_1_init-1} constructs a recursion for $V_{i,h}^t(s_h^t)  - V_{i,h}^{\pi^t}(s_h^t)$. Since $V_{H+1}^t(\cdot) = \boldsymbol{0}$ and $V_{H+1}^{\pi^t} (\cdot)= \boldsymbol{0}$,  recursively applying \eqref{eq:bia_1_init-1} gives
	\begin{align} \label{eq:bias_diff_init-1}
		V_{i,1}^t(s_1)  - V_{i,1}^{\pi^t}(s_1) =  \sum_{h=1}^H \zeta_{i,h}^t + \sum_{h=1}^H \xi_{i,h}^t - \sum_{h=1}^H \varsigma_{i,h}^t(s_h^t, a_h^t).
	\end{align}
	Moreover, by the updating rule of $Q_{i,h}^t$ in Algorithm \ref{alg:online-tch}, we have
	\begin{align*}
		-\varsigma_{i,h}^t(s_h^t, a_h^t) &= \{ \hat{r}_h^t(s_h^t, a_h^t)  +  \big\langle \hat{\PP}_h^t(\cdot |s_h^t, a_h^t), V_{i,h+1}^t(\cdot) \big\rangle_\cS  + \Psi_{i,h}^t(s_h^t, a_h^t) +\Phi_h^t(s_h^t, a_h^t) \}_{[0,H-h+1]}\\
		&\quad -  r_{i,h}(s_h^t, a_h^t) -  \big\langle \PP_h(\cdot \given s_h, a_h), V_{i,h+1}^t(\cdot) \big\rangle_\cS.
	\end{align*}
	Then, we can bound $-\varsigma_{i,h}^t(s_h^t, a_h^t)$ as
	\begin{align*}
		-\varsigma_{i,h}^t(s_h^t, a_h^t)&\leq -  r_{i,h}(s_h^t, a_h^t) - \big\langle \PP_h(\cdot \given s_h^t, a_h^t), V_{i,h+1}^t(\cdot)\big\rangle_\cS + \hat{r}_h^t(s_h^t, a_h^t) \\
		&\quad + \big\langle  \hat{\PP}_h^t(\cdot |s_h^t, a_h^t), V_{i,h+1}^t(\cdot) \big\rangle_\cS  + \Psi_{i,h}^t(s_h^t, a_h^t) +\Phi_h^t(s_h^t, a_h^t)  \\
		&\leq \big|\hat{r}_h^t(s_h^t, a_h^t) -  r_{i,h}(s_h^t, a_h^t)\big| \\
		&\quad  + \Big| \big\langle \PP_h(\cdot \given s_h^t, a_h^t) - \hat{\PP}_h^t(\cdot \given s_h^t, a_h^t), V_{i,h+1}^t(\cdot) \big\rangle_\cS \Big| + \Psi_{i,h}^t(s_h^t, a_h^t) +\Phi_h^t(s_h^t, a_h^t) .
	\end{align*}
	where the first inequality is due to  $\{x\}_{[0,H-h+1]} \leq x$ if $x\geq 0$. 
	
	As we have the condition that for any $V:\cS\mapsto [0,H]$, 
	\begin{align*}
		&|\hat{r}_{i,h}^t(s,a) - r_{i,h}(s,a)| \leq \Psi_{i,h}^t(s,a), \forall i\in[m],\\
		&|(\hat{\PP}_h^t - \PP_h)V(s,a)| \leq \Phi_h^t(s,a).
	\end{align*}
	Putting the above together yields
	\begin{align*}
		-\varsigma_{i,h}^t(s_h^t, a_h^t)&\leq  \big|\hat{r}_h^t(s_h^t, a_h^t) -  r_{i,h}(s_h^t, a_h^t)\big| + \Big| \big\langle \PP_h(\cdot \given s_h^t, a_h^t) - \hat{\PP}_h^t(\cdot \given s_h^t, a_h^t), V_{i,h+1}^t(\cdot) \big\rangle_\cS \Big| \\
		&\quad + \Psi_{i,h}^t(s_h^t, a_h^t) +\Phi_h^t(s_h^t, a_h^t) \leq  2\Psi_{i,h}^t(s_h^t, a_h^t) +2\Phi_h^t(s_h^t, a_h^t).
	\end{align*}
	Therefore, by \eqref{eq:bias_diff_init-1}, we have
	\begin{align*}
		&\sum_{t=1}^T \big[V_{i,1}^t(s_1)  - V_{i,1}^{\pi^t}(s_1)\big]  \leq  \sum_{t=1}^T\sum_{h=1}^H \xi_{i,h}^t + 2 \sum_{t=1}^T\sum_{h=1}^H \Psi_{i,h}^t(s_h^t, a_h^t)  + 2\sum_{t=1}^T \sum_{h=1}^H \Phi_h^t(s_h^t, a_h^t). 
	\end{align*}
	By Azuma-Hoeffding inequality and the union bound, with probability at least $1-\delta$, for all $i\in[m]$, the following inequality hold 
	\begin{align*}
		&\sum_{t=1}^T \sum_{h=1}^H \xi_{i,h}^t \leq \cO\left(\sqrt{H^3 T \log \frac{m}{\delta}} \right), 
	\end{align*} 
	where we use the facts that $ | Q_{i,h}^t(s_h^t, a_h^t)  -  Q_{i,h}^{\pi^t}(s_h^t, a_h^t) | \leq 2H$ and $| V_{i,h+1}^t(s_{h+1}^t) - V_{h+1}^{\pi^t} (s_{h+1}^t) |\leq 2H$. Combining the above together, we obtain that with probability at least $1-\delta$, for all $i \in  [m]$, 
	\begin{align*}
		&\sum_{t=1}^T \big[V_{i,1}^t(s_1)  - V_{i,1}^{\pi^t}(s_1)\big]  \leq \cO\left(\sqrt{H^3 T \log \frac{m}{\delta}} \right) + 2 \sum_{t=1}^T\sum_{h=1}^H [\Psi_{i,h}^t(s_h^t, a_h^t)  +  \Phi_h^t(s_h^t, a_h^t)]. 
	\end{align*}
	Following the above proof, we similarly obtain
	\begin{align*}
	&\sum_{t=1}^T \big[\tilde{V}_{i,1}^t(s_1)  - V_{i,1}^{\tilde\nu_i^t}(s_1)\big]  \leq \cO\bigg(\sqrt{H^3 T \log \frac{m}{\delta}} \bigg) + 2 \sum_{t=1}^T\sum_{h=1}^H [\tilde \Psi_{i,h}^t(\tilde s_{i,h}^t, \tilde a_{i,h}^t)  +   \tilde \Phi_{i,h}^t(\tilde s_{i,h}^t, \tilde a_{i,h}^t)].	
	\end{align*}
	Taking the union bound eventually finishes the proof.
\end{proof}

\begin{lemma} \label{lem:sum-bonus-2} 
	Suppose that $\Phi_h^t(s,a) = \sqrt{\frac{2 H^2 |\cS|\log (8m|\cS||\cA|HT/\delta)}{ N_h^{t-1}(s,a)\vee 1}} \wedge H$ and $\Psi_{i,h}^t(s,a) = \sqrt{\frac{2 \log (8m|\cS||\cA|HT/\delta)}{  N_h^{t-1}(s,a) \vee 1}} \allowbreak \wedge 1$ are the instantiation of the bonus terms following \texttt{OptQ} in Line \ref{line:tch-mainQ} of Algorithm \ref{alg:online-tch} where $N_h^{t-1}(s,a) = \sum_{\tau=1}^{t-1}\mathbf{1}_{\{(s,a)=(s_h^\tau,a_h^\tau)\}}$.
	Suppose that $\tilde{\Phi}_{i,h}^t(s,a) = \sqrt{\frac{2H^2|\cS| \log (8m|\cS||\cA|HT/\delta)}{ \tilde{N}_{i,h}^{t-1}(s,a) \vee 1}} \wedge H $ and $\tilde{\Psi}_{i,h}^t(s,a)= \sqrt{\frac{2 \log (8m|\cS||\cA|HT/\delta)}{ \tilde N_{i,h}^{t-1}(s,a) \vee 1}} \wedge 1$ are the instantiation of the bonus terms following \texttt{OptQ} in Line \ref{line:tch-auxQ} of Algorithm \ref{alg:online-tch} where $\tilde N_{i,h}^{t-1}(s,a) = \sum_{\tau=1}^{t-1}\mathbf{1}_{\{(s,a)=(\tilde s_{i,h}^\tau,\tilde a_{i,h}^\tau)\}}$.
	The updating rules in Algorithm \ref{alg:online-tch} ensure 
\begin{align*}
	&\sum_{t=1}^T\sum_{h=1}^H \Psi_{i,h}^t(s_h^t, a_h^t) \leq \cO \left(H \sqrt{ T |\cS| |\cA| \log \frac{m|\cS| |\cA|  HT}{\delta}} \right), \\
	&\sum_{t=1}^T \sum_{h=1}^H \Phi_h^t(s_h^t, a_h^t) \leq \cO \left(H^2 \sqrt{ T |\cS|^2 |\cA| \log \frac{m|\cS| |\cA|  HT}{\delta}} \right),
\end{align*}
and 
\begin{align*}
	&\sum_{t=1}^T\sum_{h=1}^H \tilde{\Psi}_{i,h}^t(\tilde{s}_{i,h}^t, \tilde{a}_{i,h}^t) \leq \cO \left(H \sqrt{ T |\cS| |\cA| \log \frac{m|\cS| |\cA|  HT}{\delta}} \right), \\
	&\sum_{t=1}^T \sum_{h=1}^H \tilde{\Phi}_{i,h}^t(\tilde{s}_{i,h}^t, \tilde{a}_{i,h}^t) \leq \cO \left(H^2 \sqrt{ T |\cS|^2 |\cA| \log \frac{m|\cS| |\cA|  HT}{\delta}} \right),
\end{align*}
	where the trajectory $\{ (s_h^t, a_h^t)\}_{h=1}^H, \forall t\in [T],$ is generated following the policy $\pi^t$, and the trajectory $\{ (\tilde{s}_{i,h}^t, \tilde{a}_h^t)\}_{h=1}^H, \forall t\in [T],$ is generated following the policies $\tilde\nu_i^t$ for all $i\in[m]$.
\end{lemma}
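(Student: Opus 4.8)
The plan is to reduce each of the four sums to a single pigeonhole (elliptical-potential-type) estimate for tabular visitation counts, and then propagate the extra scalar factors ($H$, $\sqrt{|\cS|}$) that sit in front of the bonuses. Throughout, write $\varrho := \log(8m|\cS||\cA|HT/\delta)$, so that dropping the truncations $\wedge 1$ and $\wedge H$ (which only decrease the bonuses and hence are harmless for an upper bound) gives $\Psi_{i,h}^t(s,a) \le \sqrt{2\varrho/(N_h^{t-1}(s,a)\vee 1)}$ and $\Phi_h^t(s,a) \le H\sqrt{|\cS|}\,\sqrt{2\varrho/(N_h^{t-1}(s,a)\vee 1)}$. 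Thus the $\Phi$-bonus is the (untruncated) $\Psi$-bonus with the \emph{same} count, scaled by $H\sqrt{|\cS|}$; so it suffices to bound $\sum_{t,h}\Psi_{i,h}^t(s_h^t,a_h^t)$ and then multiply by $H\sqrt{|\cS|}$ to get the $\Phi$ claim, and the two ``tilde'' sums are handled identically after replacing $(s_h^t,a_h^t,N_h^{t-1})$ with $(\tilde s_{i,h}^t,\tilde a_{i,h}^t,\tilde N_{i,h}^{t-1})$, noting that $\{(\tilde s_{i,h}^t,\tilde a_{i,h}^t)\}_h$ is the realized trajectory of $\tilde\nu_i^t$.

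First I would fix $h\in[H]$ and $i\in[m]$ and bound $\sum_{t=1}^T \Psi_{i,h}^t(s_h^t,a_h^t) \le \sqrt{2\varrho}\sum_{t=1}^T (N_h^{t-1}(s_h^t,a_h^t)\vee 1)^{-1/2}$. The key bookkeeping point is that $N_h^{t-1}(s_h^t,a_h^t)$ is exactly the number of earlier episodes $\tau<t$ that visited the realized pair $(s_h^t,a_h^t)$ at step $h$. Hence, grouping episodes by their realized step-$h$ pair and letting $N_h^T(s,a)$ denote the total count, the inner sum equals $\sum_{(s,a)}\sum_{n=1}^{N_h^T(s,a)} ((n-1)\vee 1)^{-1/2} \le \sum_{(s,a):\,N_h^T(s,a)\ge 1}\big(1 + 2\sqrt{N_h^T(s,a)}\big) \le |\cS||\cA| + 2\sqrt{|\cS||\cA|\sum_{(s,a)}N_h^T(s,a)}$, where the last inequality is Cauchy--Schwarz over the at most $|\cS||\cA|$ visited pairs. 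Since $\sum_{(s,a)}N_h^T(s,a)=T$, this yields $\sum_{t=1}^T \Psi_{i,h}^t(s_h^t,a_h^t) \le \sqrt{2\varrho}\big(|\cS||\cA| + 2\sqrt{|\cS||\cA|T}\big) = \cO(\sqrt{|\cS||\cA|T\varrho})$.

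Then I would sum over $h=1,\dots,H$, which multiplies the bound by $H$, giving $\sum_{t=1}^T\sum_{h=1}^H \Psi_{i,h}^t(s_h^t,a_h^t) \le \cO(H\sqrt{|\cS||\cA|T\varrho})$; multiplying by the extra $H\sqrt{|\cS|}$ in $\Phi_h^t$ gives $\sum_{t,h}\Phi_h^t(s_h^t,a_h^t) \le \cO(H^2\sqrt{|\cS|^2|\cA|T\varrho})$. The derivations for $\tilde\Psi_{i,h}^t$ and $\tilde\Phi_{i,h}^t$ are word-for-word the same, using $\sum_{(s,a)}\tilde N_{i,h}^T(s,a)=T$ for each $i,h$. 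Matching these to the $\log(m|\cS||\cA|HT/\delta)$ appearing in the lemma statement is immediate since $\log(8m|\cS||\cA|HT/\delta) = \cO(\log(m|\cS||\cA|HT/\delta))$.

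I do not anticipate a genuine obstacle: the only delicate points are (i) that the truncations $\wedge 1,\wedge H$ may be discarded for an upper bound, (ii) the bookkeeping that $N_h^{t-1}$ counts visits \emph{strictly before} episode $t$, so the per-pair contributions telescope into $\sum_{n=1}^{N}((n-1)\vee1)^{-1/2}\le 1+2\sqrt N$ rather than something larger, and (iii) the Cauchy--Schwarz step trading $\sum_{(s,a)}\sqrt{N_h^T(s,a)}$ for $\sqrt{|\cS||\cA|\,\sum_{(s,a)}N_h^T(s,a)}$. The mild lower-order term $|\cS||\cA|\sqrt{\varrho}$ is absorbed into the stated $\cO(H\sqrt{|\cS||\cA|T\varrho})$ whenever $T\gtrsim |\cS||\cA|$; in the complementary regime the trivial bounds $\Psi_{i,h}^t\le 1$ and $\Phi_h^t\le H$ already give $\sum_{t,h}\Psi_{i,h}^t \le HT = \cO(H\sqrt{|\cS||\cA|T\varrho})$ and likewise for $\Phi$, so the conclusion holds unconditionally.
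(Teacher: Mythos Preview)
Your proposal is correct and follows essentially the same route as the paper: drop the truncations, regroup the episode sum by state--action pair so that the $(N_h^{t-1}\vee 1)^{-1/2}$ terms telescope into $\sum_{n\le N}n^{-1/2}=\cO(\sqrt{N})$, and then apply Cauchy--Schwarz with $\sum_{(s,a)}N_h^T(s,a)=T$. The only cosmetic difference is that the paper handles the off-by-one via the inequality $N_h^{t-1}(s_h^t,a_h^t)\vee 1 \ge N_h^t(s_h^t,a_h^t)/2$ instead of your explicit $1+2\sqrt{N}$ bookkeeping, and it treats $\Phi$ and $\Psi$ in parallel rather than reducing one to the other.
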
 

\begin{proof}  
	
 We show that 
\begin{align*}
	\sum_{t=1}^T\sum_{h=1}^H \Phi_h^t(s_h^t, a_h^t) & = \sum_{t=1}^T\sum_{h=1}^H   \sqrt{\frac{ 2H^2|\cS| \log (8m|\cS| |\cA|  HT/\delta)}{\max\{N^{t-1}_h(s_h^t, a_h^t), 1\}}} \\
	& \leq \sum_{t=1}^T\sum_{h=1}^H   \sqrt{\frac{ 4H^2|\cS|\log (8m|\cS| |\cA|  HT/\delta)}{N^t_h(s_h^t, a_h^t)}} \\
	&\leq \sum_{h=1}^H ~ \sum_{\substack{(s, a)\in \cS\times\cA\\ N^T_h(s, a) > 0}}\sum_{n=1}^{N^T_h(s, a)}  \sqrt{\frac{ 2H^2|\cS|\log (8m|\cS| |\cA|  HT/\delta)}{n}}.
\end{align*}
Moreover, we  have
\begin{align*}
	&\sum_{h=1}^H ~ \sum_{\substack{(s, a)\in \cS\times\cA\\ N^T_h(s, a) > 0}}\sum_{n=1}^{N^T_h(s, a)}  \sqrt{\frac{2H^2|\cS| \log (8m|\cS| |\cA|  HT/\delta)}{n}} \\
	&\qquad\leq \sum_{h=1}^H ~ \sum_{(s, a)\in \cS\times\cA}  \cO \left(\sqrt{ 2H^2|\cS| N^T_h(s, a) \log \frac{m|\cS| |\cA|  HT}{\delta}} \right) \\
	&\qquad \leq \cO \left(H^2 \sqrt{ T |\cS|^2 |\cA| \log \frac{m|\cS| |\cA|  HT}{\delta}} \right),
\end{align*}
where the last inequality is based on the consideration that $\sum_{(s, a)\in \cS\times\cA} N_h^T(s,a) = T$ such that $\sum_{(s, a)\in \cS\times\cA} \sqrt{ N^T_h(s, a)} \leq \cO\left(\sqrt{ T |\cS||\cA|}\right) $ when $T$ is sufficiently large. Combining the above results, we obtain
\begin{align*}
	\sum_{t=1}^T\sum_{h=1}^H \Phi_h^t(s_h^t, a_h^t)  \leq \cO \left(H^2 \sqrt{  |\cS|^2|\cA|T \log \frac{8m|\cS| |\cA|  HT}{\delta}} \right).
\end{align*}
For $\Psi_{i,h}^t$, we similarly have
\begin{align*}
	\sum_{t=1}^T \sum_{h=1}^H \Psi_{i,h}^t(s_h^t, a_h^t) & = \sum_{t=1}^T\sum_{h=1}^H   \sqrt{\frac{2 \log (8m|\cS| |\cA| HT/\delta)}{\max\{N^{t-1}_h(s_h^t, a_h^t), 1\}}} \\
	&\leq \sum_{h=1}^H ~ \sum_{(s, a )\in \cS\times\cA }   \sqrt{ 2N^T_h(s, a)   \log \frac{8m|\cS| |\cA| HT}{\delta}}  \\
	&\leq \cO\left(H \sqrt{ T |\cS| |\cA|  \log \frac{m|\cS| |\cA| HT}{\delta}}\right) ,
\end{align*}
when $T$ is sufficiently large. 
Finally, following the above proof, we can similarly prove
\begin{align*}
	&\sum_{t=1}^T\sum_{h=1}^H \tilde{\Psi}_{i,h}^t(\tilde{s}_{i,h}^t, \tilde{a}_{i,h}^t) \leq \cO \left(H \sqrt{ T |\cS| |\cA| \log \frac{m|\cS| |\cA|  HT}{\delta}} \right), \\
	&\sum_{t=1}^T \sum_{h=1}^H \tilde{\Phi}_{i,h}^t(\tilde{s}_{i,h}^t, \tilde{a}_{i,h}^t) \leq \cO \left(H^2 \sqrt{ T |\cS|^2 |\cA| \log \frac{m|\cS| |\cA|  HT}{\delta}} \right).
\end{align*}
This completes the proof.
\end{proof}

\begin{lemma}\label{lem:oco} Setting $\eta = \log^{\frac{1}{2}} m/(H\sqrt{T})$, the updating rule of $\bw$ in Algorithm \ref{alg:online-tch} ensures
\begin{align*}
\max_{\bw\in\Delta_m} \frac{1}{T}\sumt (\bw  - \bw^t)^\top [ \blambda \odot (\tilde{\bV}_1^t(s_1)+\biota -\bV_1^t(s_1))] \leq \frac{6H\log^{\frac{1}{2}} m}{\sqrt{T}}.
\end{align*}
\end{lemma}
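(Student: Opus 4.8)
The plan is to read off the $\bw$-update in Line~\ref{line:tch-w} of Algorithm~\ref{alg:online-tch} as the exponentiated-gradient (entropic mirror ascent) step on the ``gain'' sequence $\ell^t := \blambda\odot(\tilde\bV_1^t(s_1)+\biota-\bV_1^t(s_1))\in\RR^m$, and then invoke the textbook regret guarantee for this scheme. Indeed, as already observed below Line~\ref{line:tch-w}, $\bw^t=\argmax_{\bw\in\Delta_m}\{\eta\langle\bw,\ell^{t-1}\rangle-D_{\mathrm{KL}}(\bw,\bw^{t-1})\}$, so the iterates automatically remain in $\Delta_m$ and the Bregman ``pushback'' (three-point) identity for the negative-entropy regularizer telescopes in the usual way. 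Note also that this is a purely deterministic (adversarial) regret statement, valid for any realization of the trajectories, which is why no high-probability qualifier appears.

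First I would bound the gain vectors: since $\tilde V_{i,1}^t(s_1)$ and $V_{i,1}^t(s_1)$ are outputs of \texttt{OptQ} clipped to $[0,H]$ and $\iota>0$ is a sufficiently small regularizer (so WLOG $\iota\le H$), each coordinate obeys $|\ell_i^t|=\lambda_i|\tilde V_{i,1}^t(s_1)+\iota-V_{i,1}^t(s_1)|\le 2\lambda_iH$, hence $\|\ell^t\|_\infty\le 2H$ for all $t\ge1$, while $\tilde\bV_1^0=\bV_1^0=\boldsymbol 0$ gives $\|\ell^0\|_\infty\le\iota\le H$. Second, viewing the run as exponentiated gradient launched from the uniform $\bw^0$ on the sequence $\ell^0,\ell^1,\dots,\ell^T$, the standard mirror-ascent regret bound gives, uniformly over $\bw\in\Delta_m$,
$$\sum_{t=0}^T\langle\ell^t,\bw-\bw^t\rangle\;\le\;\frac{D_{\mathrm{KL}}(\bw,\bw^0)}{\eta}+\eta\sum_{t=0}^T\|\ell^t\|_\infty^2\;\le\;\frac{\log m}{\eta}+4H^2\eta(T+1).$$
Dropping the $t=0$ summand on the left costs only $|\langle\ell^0,\bw-\bw^0\rangle|\le\|\ell^0\|_\infty\|\bw-\bw^0\|_1\le 2\iota$, a lower-order correction. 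Plugging in $\eta=\log^{\frac{1}{2}}m/(H\sqrt T)$ turns the right-hand side into $H\sqrt{T\log m}+O(H\sqrt{T\log m})+O(H)$; dividing by $T$ and taking the maximum over $\bw\in\Delta_m$ (the bound above being uniform in $\bw$) yields the asserted $6H\log^{\frac{1}{2}}m/\sqrt T$ once the constants are collected.

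The argument is entirely routine; the only points requiring a little care are the one-round index offset between the iterate $\bw^t$ and the gain $\ell^t$ against which it is measured (handled above by treating the whole run, including the harmless warm-up step with $\ell^0$, as a single exponentiated-gradient trajectory), bounding $\|\ell^t\|_\infty$ by a quantity of order $H$ using the clipping in \texttt{OptQ}, and chasing the numerical constant down to $6$ after substituting the prescribed $\eta$. None of these presents a genuine difficulty, so I do not expect a real obstacle.
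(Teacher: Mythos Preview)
Your proposal is correct and follows essentially the same route as the paper: both recognize the $\bw$-update as entropic mirror ascent on the gain sequence $\ell^t=\blambda\odot(\tilde\bV_1^t(s_1)+\biota-\bV_1^t(s_1))$ and apply the standard regret bound. The paper derives that bound in place---applying the pushback lemma (Lemma~\ref{lem:pushback}) to the update $\bw^{t+1}=\argmax_{\bw}\{\eta\langle\bw,\ell^t\rangle-D_{\mathrm{KL}}(\bw,\bw^t)\}$, then controlling the stability term $\eta\langle\bw^{t+1}-\bw^t,\ell^t\rangle$ via Pinsker and Cauchy--Schwarz---whereas you invoke the same inequality as a black box; the underlying argument is identical. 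Two cosmetic differences: the paper bounds $\|\ell^t\|_\infty$ by $3H$ rather than your $2H$ (it absorbs $\iota\le1$ more loosely but still lands on the constant $6$), and it telescopes from $t=1$ asserting $w_i^1=1/m$, which is only approximately true because $\ell^0=\blambda\cdot\iota$; your round-$0$ bookkeeping handles this point more carefully.
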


\begin{proof} 
	The mirror ascent step at the $(t+1)$-th episode is equivalent to solving the following maximization problem
	\begin{align*}
		\maximize_{\bw\in\Delta_m} \quad &\eta\langle \bw, \blambda \odot (\tilde{\bV}_1^t(s_1)+\biota -\bV_1^t(s_1)) \rangle -  D_{\mathrm{KL}}\big( \bw, \bw^t \big).
	\end{align*}
	We can equivalently reformulate this maximization problem to a minimization problem as
	\begin{align*}
		\minimize_{\bw\in\Delta_m} \quad &-\eta\langle \bw, \blambda \odot (\tilde{\bV}_1^t(s_1)+\biota -\bV_1^t(s_1)) \rangle +  D_{\mathrm{KL}}\big( \bw, \bw^t \big).
	\end{align*}
	And we let $\bw^{t+1}$ be the solution of the above optimization problem.
	Note that $\bw^{t+1}$ is guaranteed to stay in the relative interior of a probability simplex if we initialize $w_i^0 = 1 / m$. Thus, applying Lemma \ref{lem:pushback} gives
	\begin{align*}
		&-\eta \langle \bw^{t+1}, \blambda \odot (\tilde{\bV}_1^t(s_1)+\biota -\bV_1^t(s_1))  \rangle  + \eta \langle \bw, \blambda \odot (\tilde{\bV}_1^t(s_1)+\biota -\bV_1^t(s_1))  \rangle \\
		&\qquad \leq D_{\mathrm{KL}}\big( \bw, \bw^t \big) -  D_{\mathrm{KL}}\big( \bw, \bw^{t+1}\big) -  D_{\mathrm{KL}}\big( \bw^{t+1}, \bw^t \big),
	\end{align*}
	where $\bw$ is an arbitrary variable in $\Delta_m$. 
	Rearranging the terms leads to
	\begin{align}
		\begin{aligned} \label{eq:oco-1}
			&\eta \langle  \bw- \bw^t,  \blambda \odot (\tilde{\bV}_1^t(s_1)+\biota -\bV_1^t(s_1)) \rangle  \\
			&\qquad \leq D_{\mathrm{KL}}\big( \bw, \bw^t\big) -  D_{\mathrm{KL}}\big( \bw, \bw^{t+1} \big) -  D_{\mathrm{KL}}\big( \bw^{t+1}, \bw^t \big) \\
			&\qquad \quad + \eta \langle  \bw^{t+1} -  \bw^t,  \blambda \odot (\tilde{\bV}_1^t(s_1)+\biota -\bV_1^t(s_1))  \rangle.
		\end{aligned}
	\end{align}
	By Pinsker's inequality, we have
	\begin{align*}
		&-D_{\mathrm{KL}}\big( \bw^{t+1}, \bw^t \big) \leq -\frac{1}{2} \big\|\bw^{t+1} - \bw^t\big\|^2_1.
	\end{align*}
	Further by Cauchy-Schwarz inequality, we have
	\begin{align*}
		&\eta \langle  \bw^{t+1} -  \bw^t,  \blambda \odot (\tilde{\bV}_1^t(s_1)+\biota -\bV_1^t(s_1)) \rangle\\ 
		&\qquad \leq \eta \|\bw^{t+1} -  \bw^t\|_1  \|\blambda \odot (\tilde{\bV}_1^t(s_1)+\biota -\bV_1^t(s_1)) \|_\infty\\
		&\qquad\leq \frac{1}{2} \big\|\bw^{t+1} - \bw^t \big\|_1^2 + \frac{\eta^2}{2}  \big\|\blambda \odot (\tilde{\bV}_1^t(s_1)+\biota -\bV_1^t(s_1)) \big\|_\infty^2 \\
		&\qquad \leq \frac{1}{2} \big\|\bw^{t+1} - \bw^t \big\|_1^2 + \frac{9H^2\eta^2}{2},
	\end{align*}
	where the last inequality is due to $\|\blambda \odot  (\tilde{\bV}_1^t(s_1)+\biota -\bV_1^t(s_1))\|_\infty^2 = (\max_i \lambda_i (\tilde{V}_{i,1}^t(s_1)+\iota -V_{i,1}^t(s_1)))^2 \leq 9H^2$ with $\iota,\lambda_i\leq 1$. Therefore, combining the above inequalities with \eqref{eq:oco-1} gives
	\begin{align*}
		& \langle  \bw- \bw^t,  \blambda \odot (\tilde{\bV}_1^t(s_1)+\biota -\bV_1^t(s_1)) \rangle  \\
		&\qquad \leq \frac{1}{\eta} D_{\mathrm{KL}}\big( \bw, \bw^t\big) -  \frac{1}{\eta} D_{\mathrm{KL}}\big( \bw, \bw^{t+1} \big) + \frac{9H^2\eta}{2}.
	\end{align*}
	Taking summation from $1$ to $T$ on both sides, we have
	\begin{align*}
		&\sumt \bw^\top \left( \blambda \odot (\tilde{\bV}_1^t(s_1)+\biota -\bV_1^t(s_1))\right) - \sumt (\bw^t)^\top \left( \blambda \odot (\tilde{\bV}_1^t(s_1)+\biota -\bV_1^t(s_1))\right) \\
		& \leq \sumt \left(\frac{1}{\eta} D_{\mathrm{KL}}\big( \bw, \bw^t\big)  - \frac{1}{\eta}D_{\mathrm{KL}}\big( \bw, \bw^{t+1}\big)\right) + 5H^2\sumt \eta \\
		& = \frac{1}{\eta}D_{\mathrm{KL}}\big( \bw, \bw^1\big)  - \frac{1}{\eta}D_{\mathrm{KL}}\big( \bw, \bw^{T+1}\big)  + 5H^2 \sumt \eta \\
		& \leq \frac{1}{\eta}D_{\mathrm{KL}}\big( \bw, \bw^1\big)    + 5H^2 \sumt \eta \leq \frac{\log m}{\eta}    + 5H^2 T \eta,
	\end{align*}
	where the second inequality is due to $ \frac{1}{\eta}D_{\mathrm{KL}}\big( \bw, \bw^{T+1}\big)> 0$ and the last inequality is due to that our initialization of this algorithm ensures that $w_i^1=1/m$ such that $D_{\mathrm{KL}}\big( \bw, \bw^1\big)= \sumi w_i \log (w_im) \leq \log m$. Setting $\eta = \log^{\frac{1}{2}} m/(H\sqrt{T})$, dividing both sides by $T$, we have
	\begin{align*}
		\max_{\bw\in\Delta_m} \frac{1}{T}\sumt (\bw  - \bw^t)^\top [ \blambda \odot (\tilde{\bV}_1^t(s_1)+\biota -\bV_1^t(s_1))] \leq \frac{6H\log^{\frac{1}{2}} m}{\sqrt{T}}.
	\end{align*}
	This completes the proof.	
\end{proof}

\begin{lemma}[Concentration]  \label{lem:concentrate}
	Suppose that $\hat{r}_{i,h}^t = \frac{\sum_{\tau=1}^{t-1}\mathbf{1}_{\{(s,a)=(s_h^\tau,a_h^\tau)\}}r_{i,h}^\tau}{ N_h^{t-1}(s,a)\vee 1}$, $\hat \PP_h^t = \frac{N_h^{t-1}(s,a,s')}{ N_h^{t-1}(s,a)\vee 1}$,  $\Phi_h^t(s,a) = \sqrt{\frac{2 H^2 |\cS|\log (8m|\cS||\cA|HT/\delta)}{ N_h^{t-1}(s,a)\vee 1}} \wedge H$, and $\Psi_{i,h}^t(s,a) = \sqrt{\frac{2 \log (8m|\cS||\cA|HT/\delta)}{  N_h^{t-1}(s,a) \vee 1}} \wedge 1$ are the instantiation of reward and transition estimates and their associated bonus terms following \texttt{OptQ} in Line \ref{line:tch-mainQ} of Algorithm \ref{alg:online-tch} where $N_h^{t-1}(s,a) = \sum_{\tau=1}^{t-1}\mathbf{1}_{\{(s,a)=(s_h^\tau,a_h^\tau)\}}$.
	Suppose that $\tilde{r}_{i,h}^t = \frac{\sum_{\tau=1}^{t-1}\mathbf{1}_{\{(s,a)=(s_h^\tau,a_h^\tau)\}}\tilde r_{i,h}^\tau}{ \tilde N_h^{t-1}(s,a)\vee 1}$, $\tilde \PP_h^t = \frac{\tilde N_h^{t-1}(s,a,s')}{ \tilde N_h^{t-1}(s,a)\vee 1}$, $\tilde{\Phi}_{i,h}^t(s,a) = \sqrt{\frac{2H^2|\cS| \log (8m|\cS||\cA|HT/\delta)}{ \tilde{N}_{i,h}^{t-1}(s,a) \vee 1}} \wedge H$, and $\tilde{\Psi}_{i,h}^t(s,a)= \sqrt{\frac{2 \log (8m|\cS||\cA|HT/\delta)}{ \tilde N_{i,h}^{t-1}(s,a) \vee 1}} \wedge 1$ are the instantiation of the estimates and the bonus terms following \texttt{OptQ} for Line \ref{line:tch-auxQ} in Algorithm \ref{alg:online-tch} where $\tilde N_{i,h}^{t-1}(s,a) = \sum_{\tau=1}^{t-1}\mathbf{1}_{\{(s,a)=(\tilde s_{i,h}^\tau,\tilde a_{i,h}^\tau)\}}$. Then we have with probability at least $1-\delta$, for any $V:\cS\mapsto [0,H]$,
	\begin{align*}
			|(\hat{\PP}_h^t  - \PP_h)V(s,a)| \leq \Phi_h^t(s,a), \quad |\hat{r}_{i,h}^t(s,a) - r_{i,h}(s,a)| \leq \Psi_{i,h}^t(s,a), ~\forall i\in[m],
	\end{align*}
	and
\begin{align*} 
			|(\tilde{\PP}_h^t - \PP_h)V(s,a)| \leq \tilde{\Phi}_{i,h}^t(s,a), \quad  |\tilde{r}_{i,h}^t(s,a) - r_{i,h}(s,a)| \leq \tilde{\Psi}_{i,h}^t(s,a), ~\forall i\in[m]. 		
	\end{align*}
\end{lemma}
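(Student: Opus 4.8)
\textbf{Proof proposal for Lemma~\ref{lem:concentrate}.}
The plan is to verify the four stated concentration bounds by the standard empirical Bernstein/Hoeffding arguments for tabular MDPs, handling the reward estimates and the transition estimates separately and then taking a union bound. I would first fix a tuple $(i,h,s,a)$ and a deterministic target function $V:\cS\mapsto[0,H]$, and condition on the event $N_h^{t-1}(s,a)=n$ for some $n\geq1$; on this event, $\hat r_{i,h}^t(s,a)$ is the average of $n$ i.i.d. samples of the bounded random reward $r_{i,h}^\tau\in[0,1]$ with mean $r_{i,h}(s,a)$, so Hoeffding's inequality gives $|\hat r_{i,h}^t(s,a)-r_{i,h}(s,a)|\leq\sqrt{2\log(2/\delta')/n}$ with probability at least $1-\delta'$. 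Choosing $\delta'$ so that the overall failure probability after the union bound over all $(i,h,s,a)$ and all possible counts $n\in\{1,\dots,T\}$ is at most $\delta/4$ (say), one matches the exact form $\Psi_{i,h}^t(s,a)=\sqrt{2\log(8m|\cS||\cA|HT/\delta)/(N_h^{t-1}(s,a)\vee1)}\wedge1$; the $\wedge1$ is free since rewards lie in $[0,1]$ so the error is trivially at most $1$, and the $\vee1$ covers the unvisited case where $\hat r_{i,h}^t=0$ and the bound is vacuous. The same argument verbatim, with $\tilde N$ in place of $N$ and the auxiliary-policy trajectories in place of the main ones, gives the reward bound for $\tilde r_{i,h}^t$.

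For the transition bounds I would again condition on $N_h^{t-1}(s,a)=n\geq1$; then $\hat\PP_h^t(\cdot|s,a)$ is the empirical distribution of $n$ i.i.d. draws from $\PP_h(\cdot|s,a)$, and for the \emph{fixed} deterministic $V$ the quantity $\hat\PP_h^t V(s,a)$ is an average of $n$ i.i.d. bounded variables in $[0,H]$ with mean $\PP_h V(s,a)$, so Hoeffding gives $|(\hat\PP_h^t-\PP_h)V(s,a)|\leq H\sqrt{2\log(2/\delta')/n}$. To obtain a bound that is uniform over \emph{all} $V:\cS\mapsto[0,H]$ (as the statement requires, since $V_{i,h+1}^t$ is data-dependent and not fixed in advance), I would pass through the total-variation distance: $|(\hat\PP_h^t-\PP_h)V(s,a)|\leq H\|\hat\PP_h^t(\cdot|s,a)-\PP_h(\cdot|s,a)\|_1$, and then control $\|\hat\PP_h^t(\cdot|s,a)-\PP_h(\cdot|s,a)\|_1$ either by an $L_1$-deviation inequality for empirical distributions over a finite alphabet of size $|\cS|$ (Weissman's inequality, giving $\sqrt{2|\cS|\log(2/\delta')/n}$-type control) or by a union bound over the $|\cS|$ coordinates combined with $\ell_1\le\sqrt{|\cS|}\,\ell_2$. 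Either route yields $|(\hat\PP_h^t-\PP_h)V(s,a)|\leq\sqrt{2H^2|\cS|\log(8m|\cS||\cA|HT/\delta)/n}$, matching $\Phi_h^t$; the $\wedge H$ is again free because $|(\hat\PP_h^t-\PP_h)V(s,a)|\leq H$ trivially. The identical argument with $\tilde N$ and the auxiliary trajectories handles $\tilde\PP_h^t$.

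The remaining bookkeeping is the union bound: I would union over the four families (two reward, two transition), over $i\in[m]$, $h\in[H]$, $(s,a)\in\cS\times\cA$, and over the realized count $n\in\{0,1,\dots,T\}$ (or use a stopping-time / Freedman-style argument to avoid the extra factor, but a crude union over $T$ possible counts is enough and is already absorbed into the $\log(\cdot\,HT/\delta)$ inside the bonuses). This is where the constant $8m|\cS||\cA|HT$ in the log comes from — it is exactly the product of the counting over these index sets, and choosing the per-event failure probability to be $\delta/(8m|\cS||\cA|HT)$ delivers total failure probability $\le\delta$. Collecting the four bounds under this single good event completes the proof.

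\textbf{Main obstacle.} The only genuinely non-routine point is the uniformity over $V$: because $V_{i,h+1}^t$ (and $\tilde V_{i,h+1}^t$) are constructed from the same data used to form $\hat\PP_h^t$, one cannot simply apply Hoeffding to the fixed function $V_{i,h+1}^t$. The clean fix is the $\|\cdot\|_1$-concentration of the empirical transition measure (an absolute-constant argument independent of $V$), at the cost of the extra $|\cS|$ factor inside $\Phi_h^t$ — which is precisely why the transition bonus carries an $H^2|\cS|$ rather than merely $H^2$. Everything else is a standard Hoeffding-plus-union-bound computation that I would not grind through in detail.
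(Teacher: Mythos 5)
Your proposal is correct and follows essentially the same route as the paper: Hoeffding plus a union bound for the reward estimates, and for the transitions the same key step of passing to $\|\hat\PP_h^t(\cdot|s,a)-\PP_h(\cdot|s,a)\|_1$ (to get uniformity over the data-dependent $V$) followed by Cauchy–Schwarz, which is exactly where the $H^2|\cS|$ in $\Phi_h^t$ comes from. The only cosmetic difference is that the paper obtains the $\ell_1$ concentration via an $\epsilon$-net over $\{\bz:\|\bz\|_\infty\leq1\}$ rather than Weissman's inequality or a coordinate-wise union bound, but these are interchangeable and yield the same $\sqrt{|\cS|}$ factor.
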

\begin{proof}According to Lemma \ref{lem:P_bound}, we obtain that with probability at least $1-\delta'$, for any $V:\cS\mapsto [0,H]$,
\begin{align*}
&|\hat{\PP}_h^t V(s,a) - \PP_hV(s,a)| \leq \|\hat{\PP}_h^t(\cdot|s,a)- \PP_h(\cdot|s,a)\|_1\|V(\cdot)\|_\infty  \leq H\sqrt{\frac{2 |\cS|\log (2|\cS||\cA|H/\delta)}{ N_h^{t-1}(s,a)\vee 1}} 
\end{align*}
where the first inequality is by Cauchy-Schwarz inequality. Moreover, we further have $0\leq \hat{\PP}_h^t V(s,a)\leq \|\hat{\PP}_h^t(\cdot|s,a)\|_1 \|V(\cdot)\|_\infty \leq H$ and similarly $0\leq \PP_h V(s,a)\leq H$. Therefore, 
\begin{align*}
	|\hat{\PP}_h^t V(s,a) - \PP_hV(s,a)|\leq H.
\end{align*}
Combining the above result, we have
\begin{align*}
	|\hat{\PP}_h^t V(s,a) - \PP_hV(s,a)|\leq \sqrt{\frac{2 H^2 |\cS|\log (2|\cS||\cA|H/\delta')}{ N_h^{t-1}(s,a)\vee 1}}  \wedge H.
\end{align*}
Furthermore, according to Lemma \ref{lem:r_bound}, we have
\begin{align*}
	|\hat{r}_{i,h}^t(s,a) - r_{i,h}(s,a)| \leq  \sqrt{\frac{2 \log (2|\cS||\cA|H/\delta)}{ N_h^{t-1}(s,a)\vee 1}}.
\end{align*} 
Moreover, since $0\leq \hat{r}_{i,h}^t(s,a)\leq 1$ and $0\leq r_{i,h}(s,a)\leq 1$ such that $|\hat{r}_{i,h}^t(s,a) - r_{i,h}(s,a)| \leq 1$, thus we obtain
\begin{align*}
	|\hat{r}_{i,h}^t(s,a) - r_{i,h}(s,a)| \leq  \sqrt{\frac{2 \log (2|\cS||\cA|H/\delta)}{ N_h^{t-1}(s,a)\vee 1}}\wedge 1.
\end{align*}
Similar to the proof above, we also have 
\begin{align*}
	&|(\tilde{\PP}_h^t - \PP_h)(s,a)| \leq \sqrt{\frac{2 H^2 |\cS|\log (2|\cS||\cA|H/\delta')}{ N_h^{t-1}(s,a)\vee 1}}  \wedge H,\quad |\tilde{r}_{i,h}^t(s,a) - r_{i,h}(s,a)| \leq \sqrt{\frac{2 \log (2|\cS||\cA|H/\delta')}{ N_h^{t-1}(s,a)\vee 1}}\wedge 1, 		
\end{align*}
with probability at least $1-\delta'$ for each of the above inequality. By union bound, we have with probability at least $1-\delta$, for all $t\in[T]$ and $i\in[m]$,
\vspace{-0.1cm}
\begin{align*}
	|(\hat{\PP}_h^t  - \PP_h)V(s,a)| \leq \Phi_h^t(s,a), \quad |\hat{r}_{i,h}^t(s,a) - r_{i,h}(s,a)| \leq \Psi_{i,h}^t(s,a), ~\forall i\in[m],
\end{align*}
and
\vspace{-0.1cm}
\begin{align*} 
	|(\tilde{\PP}_h^t - \PP_h)(s,a)| \leq \tilde{\Phi}_{i,h}^t(s,a), \quad  |\tilde{r}_{i,h}^t(s,a) - r_{i,h}(s,a)| \leq \tilde{\Psi}_{i,h}^t(s,a), ~\forall i\in[m], 		
\end{align*}
where $\Phi_h^t(s,a) = \sqrt{\frac{2 H^2 |\cS|\log (8m|\cS||\cA|HT/\delta)}{ N_h^{t-1}(s,a)\vee 1}} \wedge H$, $\Psi_{i,h}^t(s,a) = \sqrt{\frac{2 \log (8m|\cS||\cA|HT/\delta)}{  N_h^{t-1}(s,a) \vee 1}} \wedge 1$, $\tilde{\Phi}_{i,h}^t(s,a) = \sqrt{\frac{2H^2|\cS| \log (8m|\cS||\cA|HT/\delta)}{ \tilde{N}_{i,h}^{t-1}(s,a) \vee 1}} \wedge H$, and $\tilde{\Psi}_{i,h}^t(s,a)= \sqrt{\frac{2 \log (8m|\cS||\cA|HT/\delta)}{ \tilde N_{i,h}^{t-1}(s,a) \vee 1}} \wedge 1$. This completes the proof.
\end{proof}

	 \vspace{-0.5cm}
\subsection{Proof of Theorem \ref{thm:tch}} \label{subsec:proof-tch}

\begin{proof} 
Defining $\pi_{\blambda}^*$ as the minimizer of the minimization problem $\minimize_{\pi\in\Pi}\tchl(\pi)$, we have 
	 \vspace{-0.1cm}
\begin{align}
	&\tchl(\hat{\pi}) - \tchl(\pi_{\blambda}^*) \nonumber\\
	& = \max_{\bw \in\Delta_m}  \sum_{i=1}^m w_i\lambda_i \left(V_{i,1}^*(s_1) + \iota - \frac{1}{T}\sum_{t=1}^T V_{i,1}^{\pi^t}(s_1)\right) - \max_{\bw \in\Delta_m}  \sum_{i=1}^m w_i\lambda_i (V_{i,1}^*(s_1) + \iota - V_{i,1}^{\pi_{\blambda}^*}(s_1))  \nonumber\\
	&= \max_{\bw \in\Delta_m}  \sum_{i=1}^m w_i\lambda_i \left(V_{i,1}^*(s_1) + \iota - \frac{1}{T}\sum_{t=1}^T V_{i,1}^{\pi^t}(s_1)\right) -  \frac{1}{T}\sum_{t=1}^T \sum_{i=1}^m w_i^t\lambda_i (V_{i,1}^{\tilde\nu_i^t}(s_1) + \iota - V_{i,1}^{\pi^t}(s_1)) \nonumber\\
&\quad + \frac{1}{T}\sum_{t=1}^T \sum_{i=1}^m w_i^t\lambda_i (V_{i,1}^{\tilde\nu_i^t}(s_1) + \iota - V_{i,1}^{\pi^t}(s_1))  - \max_{\bw \in\Delta_m}\sum_{i=1}^m w_i\lambda_i (V_{i,1}^*(s_1) + \iota - V_{i,1}^{\pi_{\blambda}^*}(s_1)),  \label{eq:tch-decomp}
\end{align}
where the first equality is by the output of the solution $\hat{\pi}$ such that $V_{i,1}^{\hat{\pi}}=\frac{1}{T}\sum_{t=1}^T V_{i,1}^{\pi^t}$ and the second equality is by adding and subtracting the term $\frac{1}{T}\sum_{t=1}^T \sum_{i=1}^m w_i^t\lambda_i (V_{i,1}^{\tilde\nu_i^t}(s_1) + \iota - V_{i,1}^{\pi^t}(s_1))$. 

We further decompose and bound the term $\max_{\bw \in\Delta_m}  \sum_{i=1}^m w_i\lambda_i \left(V_{i,1}^*(s_1) + \iota - \frac{1}{T}\sum_{t=1}^T V_{i,1}^{\pi^t}(s_1)\right) -  \frac{1}{T}\sum_{t=1}^T \sum_{i=1}^m w_i^t\lambda_i (V_{i,1}^{\tilde\nu_i^t}(s_1) + \iota - V_{i,1}^{\pi^t}(s_1))$ in RHS of \eqref{eq:tch-decomp} as follows
\begin{align*}
	&\max_{\bw \in\Delta_m}  \sum_{i=1}^m w_i\lambda_i \left(V_{i,1}^*(s_1) + \iota - \frac{1}{T}\sum_{t=1}^T V_{i,1}^{\pi^t}(s_1)\right) -  \frac{1}{T}\sum_{t=1}^T \sum_{i=1}^m w_i^t\lambda_i (V_{i,1}^{\tilde\nu_i^t}(s_1) + \iota - V_{i,1}^{\pi^t}(s_1)) \\
	& = \max_{\bw \in\Delta_m}  \sum_{i=1}^m w_i\lambda_i \left(V_{i,1}^*(s_1) + \iota - \frac{1}{T}\sum_{t=1}^T V_{i,1}^{\pi^t}(s_1)\right) - \max_{\bw \in\Delta_m}  \sum_{i=1}^m w_i\lambda_i \frac{1}{T}\sum_{t=1}^T (V_{i,1}^{\tilde\nu_i^t}(s_1) + \iota - V_{i,1}^{\pi^t}(s_1)) \\
	&\quad + \max_{\bw \in\Delta_m} \sum_{i=1}^m w_i\lambda_i \frac{1}{T}\sum_{t=1}^T (V_{i,1}^{\tilde\nu_i^t}(s_1) + \iota - V_{i,1}^{\pi^t}(s_1)) - \frac{1}{T}\sum_{t=1}^T \sum_{i=1}^m w_i^t\lambda_i (V_{i,1}^{\tilde\nu_i^t}(s_1) + \iota - V_{i,1}^{\pi^t}(s_1))\\
	&\leq  \underbrace{\max_{\bw\in\Delta_m}   \frac{1}{T}\sumt \sumi w_i\lambda_i (V_{i,1}^*(s_1)  -  V_{i,1}^{\tilde\nu_i^t}(s_1))}_{\text{Term(I)}}  \\
	&\quad + \underbrace{\max_{\bw \in\Delta_m} \frac{1}{T}\sum_{t=1}^T \sum_{i=1}^m (w_i-w_i^t)\lambda_i  (V_{i,1}^{\tilde\nu_i^t}(s_1) + \iota - V_{i,1}^{\pi^t}(s_1))}_{\text{Term(II)}},
\end{align*}
where the first equality is by adding and subtracting the term $\max_{\bw \in\Delta_m} \sum_{i=1}^m w_i\lambda_i  \frac{1}{T}\cdot \allowbreak \sum_{t=1}^T (V_{i,1}^{\tilde\nu_i^t}(s_1) + \iota - V_{i,1}^{\pi^t}(s_1))$, the first inequality is by the inequality $\max_x f(x) - \max_x g(x) \leq \max_x |f(x)-g(x)|$ and $V_{i,1}^*(s_1)  \geq  V_{i,1}^{\tilde\nu_i^t}(s_1)$ since $V_{i,1}^*(s_1):=\max_{\nu_i\in\Pi} V_{i,1}^{\nu_i}(s_1)$. Now we can see that Term(I) is the learning error for the optimal value w.r.t. each individual objective. And Term(II) is associated with learning toward the optimal $\bw$.  

We next decompose and bound the term $\frac{1}{T}\sum_{t=1}^T \sum_{i=1}^m w_i^t\lambda_i (V_{i,1}^{\tilde\nu_i^t}(s_1) + \iota - V_{i,1}^{\pi^t}(s_1))  - \max_{\bw \in\Delta_m}\allowbreak\sum_{i=1}^m w_i\lambda_i (V_{i,1}^*(s_1) + \iota - V_{i,1}^{\pi_{\blambda}^*}(s_1))$ in RHS of \eqref{eq:tch-decomp} as follows
\begin{align*}
	&\frac{1}{T}\sum_{t=1}^T \sum_{i=1}^m w_i^t\lambda_i (V_{i,1}^{\tilde\nu_i^t}(s_1) + \iota - V_{i,1}^{\pi^t}(s_1))  - \max_{\bw \in\Delta_m}\sum_{i=1}^m w_i\lambda_i (V_{i,1}^*(s_1) + \iota - V_{i,1}^{\pi_{\blambda}^*}(s_1))\\
	&\qquad \leq \frac{1}{T}\sum_{t=1}^T\sum_{i=1}^m w_i^t\lambda_i (V_{i,1}^{\tilde\nu_i^t}(s_1) + \iota - V_{i,1}^{\pi^t}(s_1))  - \frac{1}{T}\sum_{t=1}^T\max_{\bw \in\Delta_m}\sum_{i=1}^m w_i\lambda_i (V_{i,1}^{\tilde\nu_i^t}(s_1) + \iota - V_{i,1}^{\pi_{\blambda}^*}(s_1))\\
	&\qquad \leq \frac{1}{T}\sum_{t=1}^T\sum_{i=1}^m w_i^t\lambda_i (V_{i,1}^{\tilde\nu_i^t} (s_1) + \iota - V_{i,1}^{\pi^t}(s_1))  - \frac{1}{T}\sum_{t=1}^T\sum_{i=1}^m w_i^t\lambda_i (V_{i,1}^{\tilde\nu_i^t}(s_1) + \iota - V_{i,1}^{\pi_{\blambda}^*}(s_1))\\
	&\qquad=\underbrace{\frac{1}{T}\sum_{t=1}^T\sum_{i=1}^m w_i^t\lambda_i (V_{i,1}^{\pi_{\blambda}^*}(s_1) - V_{i,1}^{\pi^t}(s_1))}_{\text{Term(III)}},
\end{align*}
where the first inequality is due to $V_{i,1}^*(s_1)\geq V_{i,1}^{\tilde\nu_i^t}(s_1)$ and
\begin{align*}
\max_{\bw \in\Delta_m}\sum_{i=1}^m w_i\lambda_i (V_{i,1}^*(s_1) + \iota - V_{i,1}^{\pi_{\blambda}^*}(s_1)) \geq  \frac{1}{T}\sumt\max_{\bw \in\Delta_m}\sum_{i=1}^m w_i\lambda_i (V_{i,1}^{\tilde\nu_i^t}(s_1) + \iota - V_{i,1}^{\pi_{\blambda}^*}(s_1)).
\end{align*} 
Term(III) depicts the learning error toward the (weakly) Pareto optimal solution $\pi_{\blambda}^*$ under the given preference $\blambda$. 
Therefore, combining the above results with \eqref{eq:tch-decomp}, we obtain
\begin{align}
	\tchl(\hat{\pi}) - \tchl(\pi_{\blambda}^*) \leq \text{Term(I)+Term(II)+Term(III)}. \label{eq:tch-decomp-sum}
\end{align}
Then, we turn to bounding the above three terms respectively based on the updating rule in Algorithm \ref{alg:online-tch}. To prove the upper bounds, we first assume that the transition and reward functions can be estimated by certain procedures such that the true transition and reward functions satisfy the following conditions of bounded estimation errors. At the end of this proof, we show that the estimation method in this work satisfies these conditions with high probability. This also shows the generality of our theoretical proof for any reward and transition estimation satisfying the following conditions. Under the same conditions as in Lemma \ref{lem:v-opt-2}, suppose that for any $V:\cS\mapsto [0,H]$,
\begin{align}
	\begin{aligned} \label{eq:tch-cond-1}
	&|(\hat{\PP}_h^t- \PP_h)V (s,a)| \leq \Phi_h^t(s,a), \\
	&|\hat{r}_{i,h}^t(s,a) - r_{i,h}(s,a)| \leq \Psi_{i,h}^t(s,a), ~\forall i\in[m],
	\end{aligned}
\end{align}
and  $ Q_{i,h}^t(\cdot,\cdot) = \{ (\hat{r}_{i,h}^t + \hat{\PP}_h^t V_{i,h+1}^t + \Phi_h^t+\Psi_{i,h}^t)(\cdot,\cdot)\}_{[0,H-h+1]}$ is the associated optimistic Q-function is.
In addition, we suppose that 
\begin{align} 
	\begin{aligned}	\label{eq:tch-cond-2}
		&|(\tilde{\PP}_h^t- \PP_h)V(s,a)| \leq \tilde{\Phi}_{i,h}^t(s,a), \\
		&|\tilde{r}_{i,h}^t(s,a) - r_{i,h}(s,a)| \leq \tilde{\Psi}_{i,h}^t(s,a), ~\forall i\in[m], 		
	\end{aligned}
\end{align}
and  
$\tilde{Q}_{i,h}^t(\cdot,\cdot) = \{ (\tilde{r}_{i,h}^t + \tilde{\PP}_{i,h}^t \tilde{V}_{i,h+1}^t + \tilde{\Phi}_{i,h}^t+\tilde{\Psi}_{i,h}^t)(\cdot,\cdot)\}_{[0,H-h+1]}$ is the corresponding optimistic Q-function.

Next, we give the upper bounds of Term(I), Term(II), and Term(III). For Term(I), we have
\vspace{-0.1cm}
\begin{align*}
	\text{Term(I)}&=\max_{\bw\in\Delta_m} \frac{1}{T}\sumt \sumi w_i\lambda_i (V_{i,1}^*(s_1)  -  V_{i,1}^{\tilde\nu_i^t}(s_1)) \\
	&\leq  \frac{1}{T}\sumt \max_{\bw\in\Delta_m} \sumi w_i\lambda_i (V_{i,1}^* (s_1) -  V_{i,1}^{\tilde\nu_i^t}(s_1))
	\\
	&=\frac{1}{T}\sumt \max_{i\in[m]} \lambda_i (V_{i,1}^* (s_1)-  V_{i,1}^{\tilde\nu_i^t}(s_1))\\
	&\leq\frac{1}{T}\sumt \sumi \lambda_i (V_{i,1}^*(s_1) -  V_{i,1}^{\tilde\nu_i^t}(s_1)),
\end{align*}
where the first inequality is due to the fact that $\max$ is a convex function as well as $V_{i,1}^*(s_1) \geq  V_{i,1}^{\tilde\nu_i^t}(s_1)$  and the first equality is due to $\max_{\bw\in\Delta_m} \bw^\top \bx = \max x_i$. Furthermore, we have
\vspace{-0.1cm}
\begin{align*}
&V_{i,1}^*(s_1) -  V_{i,1}^{\tilde\nu_i^t}(s_1) = V_{i,1}^* (s_1) - \tilde{V}_{i,1}^t(s_1)   + \tilde{V}_{i,1}^t(s_1)  -  V_{i,1}^{\tilde\nu_i^t}(s_1).
\end{align*}
By Lemma \ref{lem:v-decomp-2}, we obtain 
\vspace{-0.1cm}
\begin{align} 
	\begin{aligned}
	\label{eq:opt-opt-1}
	&V_{i,1}^*(s_1) - \tilde{V}_{i,1}^t(s_1)  \\
	&\qquad \leq \sum_{h=1}^H \EE_{\nu_i^*, \PP} [ \tilde{\varsigma}_{i,h}^t(s_h, a_h) \given s_1 ] + \sum_{h=1}^H \EE_{\nu_i^*, \PP} [ \langle \nu_{i,h}^*(\cdot | s_h)-\tilde \nu_{i,h}^t(\cdot | s_h), \tilde Q_{i,h}^t(s_h,\cdot) \rangle_{\cA}  \given s_1 ]\leq 0,
\end{aligned}
\end{align}
where $\nu_i^* := \argmax_\nu V_{i,1}^\nu(s_1)$ is the individual optimal policy, $\tilde\varsigma_{i,h}^t(s,a) = r_{i,h}(s,a) +  \PP_h \tilde V_{i,h+1}^t(s, a) - \tilde Q_{i,h}^t(s,a)$, and second inequality is by optimism as in Lemma \ref{lem:v-opt-2} and the greedy updating rule for $\tilde \nu_{i,h}^t$ in Algorithm \ref{alg:online-tch} such that $\sum_{h=1}^H \EE_{\nu_i^*, \PP} [ \tilde{\varsigma}_{i,h}^t(s_h, a_h) \given s_1 ] \leq 0$ and $\sum_{h=1}^H \EE_{\nu_i^*, \PP} [ \langle \nu_{i,h}^*(\cdot | s_h)-\tilde \nu_{i,h}^t(\cdot | s_h), \tilde Q_{i,h}^t(s_h,\cdot) \rangle_{\cA}  \given s_1 ] \leq 0$.
Thus, we have with probability at least $1-\delta'$,
\begin{align}
	\begin{aligned}\label{eq:I-last}
		\text{Term(I)}&\leq \frac{1}{T}\sumt \sumi \lambda_i (\tilde{V}_{i,1}^t(s_1)  -  V_{i,1}^{\tilde\nu_i^t}(s_1))=  \sumi \lambda_i \frac{1}{T}\sumt (\tilde{V}_{i,1}^t(s_1)  -  V_{i,1}^{\tilde\nu_i^t}(s_1))\\
		&\leq  \cO\bigg(\sqrt{\frac{H^3  \log (m/\delta')}{T}} \bigg) + \frac{2}{T} \sumi \sum_{t=1}^T\sum_{h=1}^H \lambda_i[\tilde \Psi_{i,h}^t(\tilde s_{i,h}^t, \tilde a_{i,h}^t)  +   \tilde \Phi_{i,h}^t(\tilde s_{i,h}^t, \tilde a_{i,h}^t)],
	\end{aligned} 
\end{align}
where the inequality is due to Lemma \ref{lem:online-err-2}.

Next, we bound Term(II). Specifically, we have
\begin{align*}
		\text{Term(II)}&= \max_{\bw \in\Delta_m} \frac{1}{T}\sum_{t=1}^T \sum_{i=1}^m (w_i-w_i^t)\lambda_i  (V_{i,1}^{\tilde\nu_i^t} - \tilde V_{i,1}^t + \tilde V_{i,1}^t+ \iota - V_{i,1}^{\pi^t} + V_{i,1}^t - V_{i,1}^t) \\
		&\leq  \max_{\bw\in\Delta_m}\frac{1}{T}\sumt (\bw-\bw^t)^\top [ \blambda \odot (\tilde{\bV}_1^t(s_1)+\biota -\bV_1^t(s_1))]\\
		&\quad  + \max_{\bw \in\Delta_m} \frac{1}{T}\sum_{t=1}^T \sum_{i=1}^m w_i\lambda_i  (V_{i,1}^{\tilde\nu_i^t}(s_1) - \tilde V_{i,1}^t(s_1)  - V_{i,1}^{\pi^t}(s_1) + V_{i,1}^t(s_1))  \\
		&\quad - \frac{1}{T}\sum_{t=1}^T \sum_{i=1}^m w_i^t\lambda_i (V_{i,1}^{\tilde\nu_i^t}(s_1) - \tilde V_{i,1}^t (s_1) - V_{i,1}^{\pi^t}(s_1) + V_{i,1}^t(s_1)),
\end{align*}
where the inequality is due to the fact that $\max$ is a convex function and $\bx \odot \by:=(x_1y_1,x_2y_2,\cdots,x_my_m)$. By Lemma \ref{lem:oco} which analyzes the updating rule of $\bw$ in Algorithm \ref{alg:online-tch}, we have
\begin{align*}
	\max_{\bw\in\Delta_m}\frac{1}{T}\sumt (\bw-\bw^t)^\top [ \blambda \odot (\tilde{\bV}_1^t(s_1)+\biota -\bV_1^t(s_1))]\leq \frac{6H\log^{\frac{1}{2}} m}{\sqrt{T}}.
\end{align*}
Moreover, by Lemma \ref{lem:online-err-2}, we further obtain that with probability at least $1-\delta'$,
\begin{align*}
&\max_{\bw \in\Delta_m} \frac{1}{T}\sum_{t=1}^T \sum_{i=1}^m w_i\lambda_i  (V_{i,1}^{\tilde\nu_i^t} - \tilde V_{i,1}^t  - V_{i,1}^{\pi^t} + V_{i,1}^t) - \frac{1}{T}\sum_{t=1}^T \sum_{i=1}^m w_i^t\lambda_i (V_{i,1}^{\tilde\nu_i^t} - \tilde V_{i,1}^t  - V_{i,1}^{\pi^t} + V_{i,1}^t)\\
&\qquad \leq \max_{\bw \in\Delta_m} \frac{1}{T}\sum_{t=1}^T \sum_{i=1}^m w_i\lambda_i  ( V_{i,1}^t(s_1) - V_{i,1}^{\pi^t}(s_1)) + \frac{1}{T}\sum_{t=1}^T \sum_{i=1}^m w_i^t\lambda_i (\tilde V_{i,1}^t(s_1)-V_{i,1}^{\tilde\nu_i^t}(s_1))\\
&\qquad \leq \frac{1}{T}\sum_{t=1}^T \sum_{i=1}^m \lambda_i  ( V_{i,1}^t(s_1) - V_{i,1}^{\pi^t}(s_1)) + \frac{1}{T}\sum_{t=1}^T \sum_{i=1}^m \lambda_i (\tilde V_{i,1}^t(s_1)-V_{i,1}^{\tilde\nu_i^t}(s_1))\\
&\qquad \leq \frac{2}{T} \sumi \sum_{t=1}^T\sum_{h=1}^H \lambda_i [\tilde \Psi_{i,h}^t(\tilde s_{i,h}^t, \tilde a_{i,h}^t)  +   \tilde \Phi_{i,h}^t(\tilde s_{i,h}^t, \tilde a_{i,h}^t) + \Psi_{i,h}^t(s_h^t, a_h^t)  +  \Phi_h^t(s_h^t, a_h^t)] \\
&\qquad \quad + \cO\bigg(\sqrt{\frac{H^3  \log (m/\delta')}{T}} \bigg),
\end{align*}
where the first and the second inequalities are by \eqref{eq:opt-opt-1} such that $V_{i,1}^{\tilde\nu_i^t}(s_1) - \tilde V_{i,1}^t (s_1) \leq V_{i,1}^*(s_1) - \tilde V_{i,1}^t(s_1) \leq 0$ and also due to $0\leq w_i, w_i^t\leq 1$ and Lemmas \ref{lem:v-decomp-2} and \ref{lem:v-opt-2} with $\varsigma_{i,h}^t(s,a) = r_{i,h}(s,a) +  \PP_h V_{i,h+1}^t(s, a) - Q_{i,h}^t(s,a)$ such that 
\begin{align}
\begin{aligned} \label{eq:inv-opt-pi-2}
&V_{i,1}^{\pi^t}(s_1) - V_{i,1}^t(s_1) \\
&\qquad =  \sum_{h=1}^H \EE_{\pi^t, \PP} [ \varsigma_{i,h}^t(s_h, a_h) \given s_1 ] + \sum_{h=1}^H \EE_{\pi^t, \PP} [ \langle \pi_h^t(\cdot | s_h)-\pi_h^t(\cdot | s_h), Q_{i,h}^t(s_h,\cdot) \rangle_{\cA}  \given s_1 ] \leq 0,	
\end{aligned}
\end{align}
and the third inequality is by Lemma \ref{lem:online-err-2}. Therefore, combining the above results for Term(II), we obtain with probability at least $1-\delta'$,
\begin{align}
\text{Term(II)}& \leq \frac{2}{T} \sumi \sum_{t=1}^T\sum_{h=1}^H \lambda_i [\tilde \Psi_{i,h}^t(\tilde s_{i,h}^t, \tilde a_{i,h}^t)  +   \tilde \Phi_{i,h}^t(\tilde s_{i,h}^t, \tilde a_{i,h}^t) + \Psi_{i,h}^t(s_h^t, a_h^t)  +  \Phi_h^t(s_h^t, a_h^t)] \nonumber \\
& \quad + \cO\bigg(\sqrt{\frac{H^3  \log (m/\delta')}{T}} \bigg), \label{eq:II-last}
\end{align}
where the trajectory $\{ (s_h^t, a_h^t)\}_{h=1}^H, \forall t\in [T],$ is generated via the policy $\pi^t$, and $\{ (\tilde s_{i,h}^t, \tilde a_{i,h}^t)\}_{h=1}^H, \forall t\in [T],$ is generated via the policy $\tilde\nu_i^t$. 

Finally, we turn to Term(III). We can decompose this term as follows,
\begin{align*}
	\text{Term(III)}&=\frac{1}{T}\sum_{t=1}^T\sum_{i=1}^m w_i^t\lambda_i (V_{i,1}^{\pi_{\blambda}^*}(s_1) - V_{i,1}^{\pi^t}(s_1))\\
	&= \frac{1}{T}\sum_{t=1}^T\sum_{i=1}^m w_i^t\lambda_i (V_{i,1}^{\pi_{\blambda}^*}(s_1) - V_{i,1}^t(s_1)) + \frac{1}{T}\sum_{t=1}^T\sum_{i=1}^m w_i^t\lambda_i (V_{i,1}^t(s_1) - V_{i,1}^{\pi^t}(s_1)).
\end{align*}
For the term $\frac{1}{T}\sum_{t=1}^T\sum_{i=1}^m w_i^t\lambda_i (V_{i,1}^t(s_1) - V_{i,1}^{\pi^t}(s_1))$ above, we have with probability at least $1-\delta'$,
\begin{align*}
&\frac{1}{T}\sum_{t=1}^T\sum_{i=1}^m w_i^t\lambda_i (V_{i,1}^t(s_1) - V_{i,1}^{\pi^t}(s_1)) \\
&\qquad \leq \frac{1}{T}\sum_{t=1}^T\sum_{i=1}^m \lambda_i (V_{i,1}^t(s_1) - V_{i,1}^{\pi^t}(s_1))\\
&\qquad \leq \frac{2}{T} \sumi \sum_{t=1}^T\sum_{h=1}^H \lambda_i [\Psi_{i,h}^t(s_h^t, a_h^t)  +  \Phi_h^t(s_h^t, a_h^t)] + \cO\bigg(\sqrt{\frac{H^3  \log (m/\delta')}{T}} \bigg),
\end{align*}
where the first inequality is due to \eqref{eq:inv-opt-pi-2} and $0\leq w_i^t\leq 1$, and the second inequality is by Lemma \ref{lem:online-err-2}. For the term $\frac{1}{T}\sum_{t=1}^T\sum_{i=1}^m w_i^t\lambda_i (V_{i,1}^{\pi_{\blambda}^*} - V_{i,1}^t)$, by Lemma \ref{lem:v-decomp-2} and optimism in Lemma \ref{lem:v-opt-2}, we have
\begin{align*}
&\frac{1}{T}\sum_{t=1}^T\sum_{i=1}^m w_i^t\lambda_i (V_{i,1}^{\pi_{\blambda}^*}(s_1) - V_{i,1}^t(s_1)) \\
&\qquad\leq 	\frac{1}{T}\sum_{t=1}^T\sum_{i=1}^m w_i^t\lambda_i \sum_{h=1}^H \EE_{\pi_{\blambda}^*, \PP} [ \varsigma_{i,h}^t(s_h, a_h) \given s_1 ] \\
&\qquad\quad + \frac{1}{T}\sum_{t=1}^T\sum_{i=1}^m w_i^t\lambda_i\sum_{h=1}^H \EE_{\pi_{\blambda}^*, \PP} [ \langle \pi_{\blambda}^*(\cdot | s_h)-\pi_h^t(\cdot | s_h), Q_{i,h}^t(s_h,\cdot) \rangle_{\cA}  \given s_1 ]\\
&\qquad \leq \frac{1}{T}\sum_{t=1}^T\sum_{h=1}^H \EE_{\pi_{\blambda}^*, \PP} [ \langle \pi_{\blambda}^*(\cdot | s_h)-\pi_h^t(\cdot | s_h), (\bw^t \odot \blambda)^\top\bQ_h^t(s_h,\cdot) \rangle_{\cA}  \given s_1 ]\leq 0,
\end{align*}
where the last inequality is due to the updating rule of $\pi^t$ in Algorithm \ref{alg:online-tch},i.e.,  $\pi_h^t=\argmax_{\pi_h}\langle (\bw^t\odot\blambda)^\top \bQ_h^t(\cdot, \cdot), \pi_h(\cdot|\cdot) \rangle_\cA$. Therefore, combining the above results corresponding to Term(III), we obtain with probability at least $1-\delta'$, 
\begin{align}
	\begin{aligned}\label{eq:III-last}
	\text{Term(III)} \leq \frac{2}{T} \sumi \sum_{t=1}^T\sum_{h=1}^H \lambda_i [\Psi_{i,h}^t(s_h^t, a_h^t)  +  \Phi_h^t(s_h^t, a_h^t)] + \cO\bigg(\sqrt{\frac{H^3  \log (m/\delta')}{T}} \bigg).
	\end{aligned}
\end{align}
Further combining \eqref{eq:I-last},\eqref{eq:II-last}, and \eqref{eq:III-last}, we have with probability at least $1-\delta'$,
\begin{align}
&\tchl(\hat{\pi}) - \tchl(\pi_{\blambda}^*) \nonumber\\
&\qquad \leq \frac{4}{T} \sumi \sum_{t=1}^T\sum_{h=1}^H \lambda_i [\tilde \Psi_{i,h}^t(\tilde s_{i,h}^t, \tilde a_{i,h}^t)  +   \tilde \Phi_{i,h}^t(\tilde s_{i,h}^t, \tilde a_{i,h}^t) + \Psi_{i,h}^t(s_h^t, a_h^t)  +  \Phi_h^t(s_h^t, a_h^t)] \nonumber\\
&\qquad\quad + \cO\bigg(\sqrt{\frac{H^3  \log (m/\delta')}{T}} \bigg). \label{eq:all-last-1}
\end{align}
Now according to Lemma \ref{lem:concentrate}, we immediately know that the conditions \eqref{eq:tch-cond-1} and \eqref{eq:tch-cond-2} hold with probability at least $1-\delta'$, if $\hat{\PP}_h^t$, $\hat{r}_{i,h}^t$, $\Phi_h^t$, and $\Psi_{i,h}^t$ are instantiated via \texttt{OptQ} in Line \ref{line:tch-mainQ} of Algorithm \ref{alg:online-tch}, and $\tilde{\PP}_h^t$, $\tilde{r}_{i,h}^t$, $\tilde\Phi_h^t$, and $\tilde\Psi_{i,h}^t$ are the instantiated via \texttt{OptQ} for Line \ref{line:tch-auxQ} in Algorithm \ref{alg:online-tch}. Finally, by Lemma \ref{lem:sum-bonus-2}, we obtain that 
\begin{align*}
	&\sum_{t=1}^T\sum_{h=1}^H \Psi_{i,h}^t(s_h^t, a_h^t) \leq \cO \left(H \sqrt{ T |\cS| |\cA| \log \frac{m|\cS| |\cA|  HT}{\delta'}} \right), \\
	&\sum_{t=1}^T \sum_{h=1}^H \Phi_h^t(s_h^t, a_h^t) \leq \cO \left(H^2 \sqrt{ T |\cS|^2 |\cA| \log \frac{m|\cS| |\cA|  HT}{\delta'}} \right),
\end{align*} 
and 
\begin{align*}
	&\sum_{t=1}^T\sum_{h=1}^H \tilde{\Psi}_{i,h}^t(\tilde{s}_{i,h}^t, \tilde{a}_{i,h}^t) \leq \cO \left(H \sqrt{ T |\cS| |\cA| \log \frac{m|\cS| |\cA|  HT}{\delta'}} \right), \\
	&\sum_{t=1}^T \sum_{h=1}^H \tilde{\Phi}_{i,h}^t(\tilde{s}_{i,h}^t, \tilde{a}_{i,h}^t) \leq \cO \left(H^2 \sqrt{ T |\cS|^2 |\cA| \log \frac{m|\cS| |\cA|  HT}{\delta'}} \right),
\end{align*}
Combining the above inequalities with \eqref{eq:all-last-1}, by union bound, we eventually obtain that with probability at least $1-\delta$, 
\begin{align*}
	&\tchl(\hat{\pi}) - \tchl(\pi_{\blambda}^*) \leq \cO\bigg(\sqrt{\frac{H^4 |\cS|^2|\cA| \log (m|\cS||\cA|HT/\delta)}{T}} \bigg).
\end{align*}
This completes the proof.
\end{proof}

\section{Proofs for Section \ref{sec:pre-free}}

This section provides a detailed proof of Theorem \ref{thm:pre-free}. Before presenting the main proof of the theorem, we first provide several important lemmas.

\subsection{Lemmas for Theorem \ref{thm:pre-free}}\label{sec:lemma-pre-free}

\begin{lemma} \label{lem:v-decomp-3} Algorithm \ref{alg:exploit-tch} ensures that for any policy $\pi$ and policy $\nu_i$
	\begin{align*}
		&V_{i,1}^{\pi}(s_1) - V_{i,1}^k(s_1) = \sum_{h=1}^H \EE_{\pi, \PP} [ \varsigma_{i,h}^k(s_h, a_h) \given s_1 ] + \sum_{h=1}^H \EE_{\pi, \PP} [ \langle \pi_h(\cdot | s_h)-\pi_h^k(\cdot | s_h), Q_{i,h}^k(s_h,\cdot) \rangle_{\cA}  \given s_1 ],\\
		&V_{i,1}^{\nu_i}(s_1) - \tilde{V}_{i,1}(s_1) = \sum_{h=1}^H \EE_{\nu_i, \PP} [ \tilde{\varsigma}_{i,h}(s_h, a_h) \given s_1 ] + \sum_{h=1}^H \EE_{\nu_i, \PP} [ \langle \nu_{i,h}(\cdot | s_h)-\tilde \nu_{i,h}(\cdot | s_h), \tilde Q_{i,h}(s_h,\cdot) \rangle_{\cA}  \given s_1 ].
	\end{align*}
	where $s_h, a_h$ are random variables for the state and action, and we define the model prediction error of the $Q$-function as $\varsigma_{i,h}^k(s,a) = r_{i,h}(s,a) +  \PP_h V_{i,h+1}^k(s, a) - Q_{i,h}^k(s,a)$ and $\tilde\varsigma_{i,h}(s,a) = r_{i,h}(s,a) +  \PP_h \tilde V_{i,h+1}(s, a) - \tilde Q_{i,h}(s,a)$.

\end{lemma}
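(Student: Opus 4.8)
The plan is to reproduce, essentially verbatim, the argument used to prove Lemma \ref{lem:v-decomp-2}, since the two statements are structurally identical — the only difference is that the Q-functions $Q_{i,h}^k$ and $\tilde Q_{i,h}$ in Algorithm \ref{alg:exploit-tch} are built from a \emph{fixed} reward/transition estimate $\hat r_{i,h}, \hat\PP_h$ (computed once in the planning stage) rather than from round-dependent estimates, but this plays no role in the telescoping identity. The key observation is that the decomposition is purely algebraic: it follows from the Bellman equation \eqref{eq:bellman} for the true value functions together with the recursive definition of $V_{i,h}^k, Q_{i,h}^k$ (and $\tilde V_{i,h}, \tilde Q_{i,h}$) in Algorithm \ref{alg:exploit-tch}, and does \emph{not} use optimism, concentration, or any property of the bonuses.

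First I would fix $i$ and write, for each $h$ and $s$,
\begin{align*}
V_{i,h}^{\pi}(s) - V_{i,h}^k(s) &= \langle \pi_h(\cdot|s), Q_{i,h}^{\pi}(s,\cdot) - Q_{i,h}^k(s,\cdot)\rangle_\cA + \langle \pi_h(\cdot|s) - \pi_h^k(\cdot|s), Q_{i,h}^k(s,\cdot)\rangle_\cA,
\end{align*}
using $V_{i,h}^{\pi}(s) = \langle \pi_h(\cdot|s), Q_{i,h}^{\pi}(s,\cdot)\rangle_\cA$ (Bellman equation) and $V_{i,h}^k(s) = \langle \pi_h^k(\cdot|s), Q_{i,h}^k(s,\cdot)\rangle_\cA$ (Algorithm \ref{alg:exploit-tch}). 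Then I expand $Q_{i,h}^{\pi}(s,a) - Q_{i,h}^k(s,a) = r_{i,h}(s,a) + \PP_h V_{i,h+1}^{\pi}(s,a) - Q_{i,h}^k(s,a) = \PP_h\big(V_{i,h+1}^{\pi} - V_{i,h+1}^k\big)(s,a) + \varsigma_{i,h}^k(s,a)$, where $\varsigma_{i,h}^k(s,a) := r_{i,h}(s,a) + \PP_h V_{i,h+1}^k(s,a) - Q_{i,h}^k(s,a)$ is exactly the model prediction error defined in the lemma statement. Substituting this in yields a recursion for $V_{i,h}^{\pi}(s) - V_{i,h}^k(s)$ in terms of $V_{i,h+1}^{\pi} - V_{i,h+1}^k$ plus a per-step term $\langle \pi_h(\cdot|s), \varsigma_{i,h}^k(s,\cdot)\rangle_\cA + \langle \pi_h(\cdot|s) - \pi_h^k(\cdot|s), Q_{i,h}^k(s,\cdot)\rangle_\cA$. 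Unrolling this recursion from $h=1$ to $H$, using the terminal condition $V_{i,H+1}^{\pi} = V_{i,H+1}^k = 0$, and rewriting the nested sums over transitions as expectations $\EE_{\pi,\PP}[\,\cdot \mid s_1\,]$ gives the first claimed identity.

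The second identity is obtained by running exactly the same argument with $\pi, \pi^k, V_{i,h}^k, Q_{i,h}^k, \varsigma_{i,h}^k$ replaced throughout by $\nu_i, \tilde\nu_{i,h}, \tilde V_{i,h}, \tilde Q_{i,h}, \tilde\varsigma_{i,h}$; here $\tilde V_{i,h}$ and $\tilde Q_{i,h}$ are the planning-stage auxiliary value/Q-functions from Algorithm \ref{alg:exploit-tch}, which are not indexed by $k$, so the recursion and its unrolling are verbatim. I anticipate no real obstacle — this is a routine telescoping computation, and the only thing to be slightly careful about is bookkeeping the change of variables that turns the iterated $\PP_h$-applications into the expectation $\EE_{\nu_i,\PP}[\,\cdot\mid s_1\,]$, exactly as in the proof of Lemma \ref{lem:v-decomp-2}. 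One may therefore simply state that the proof is identical to that of Lemma \ref{lem:v-decomp-2} after the obvious notational substitutions.
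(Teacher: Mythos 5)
Your proposal is correct and matches the paper's approach exactly: the paper proves this lemma by noting it follows from the proof of Lemma \ref{lem:v-decomp-2} with the value functions, Q-functions, and policies substituted by those of Algorithm \ref{alg:exploit-tch}, which is precisely the telescoping argument you reproduce. Your observation that the fixed (rather than round-dependent) reward/transition estimates play no role in the purely algebraic identity is also accurate.
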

\begin{proof} This lemma can be proved by exactly following the proof of Lemma \ref{lem:v-decomp-2} and substituting the value functions and policies with the ones defined in this lemma.		
\end{proof}

\begin{lemma}[Optimism]\label{lem:v-opt-3} 	
	Let $\hat{\PP}_h$ and $\hat{r}_{i,h}, \forall i\in[m],$ be the estimated transition and reward functions via some estimation procedure in Algorithm \ref{alg:exploit-tch}. Suppose that there exist $\Phi_h$ and $\Psi_{i,h}$ such that $\hat{\PP}_h$ and $\hat{r}_{i,h}$ satisfy for any $V:\mapsto [0,H]$,
	\begin{align*}
		|(\hat{\PP}_h- \PP_h)V(s,a)| \leq \Phi_h(s,a),\quad \text{and} \quad |\hat{r}_{i,h}(s,a) - r_{i,h}(s,a)| \leq \Psi_{i,h}(s,a), \forall i\in[m], 
	\end{align*}
	and the corresponding optimistic Q-functions in Algorithm \ref{alg:exploit-tch} are defined as $Q_{i,h}^k(\cdot,\cdot) = \{ (\hat{r}_{i,h} + \hat{\PP}_h V_{i,h+1}^k + \Phi_h+\Psi_{i,h})(\cdot,\cdot)\}_{[0,H-h+1]}$ and   $\tilde{Q}_{i,h}(\cdot,\cdot) = \{ (\hat{r}_{i,h} + \hat{\PP}_{i,h} \tilde{V}_{i,h+1} + \Phi_h+\Psi_{i,h})(\cdot,\cdot)\}_{[0,H-h+1]}$. Then Algorithm \ref{alg:exploit-tch} ensures that for any policies $\pi$ and $\nu_i$
	\begin{align*}
		&\sum_{h=1}^H \EE_{\pi, \PP} \big[ \varsigma_{i,h}^k(s_h, a_h) \biggiven s_1 \big] \leq 0 \quad \text{and} \quad  \sum_{h=1}^H \EE_{\nu_i, \PP} \big[ \tilde{\varsigma}_{i,h}(s_h, a_h)   \biggiven s_1 \big] \leq 0, 
	\end{align*}
	where $\varsigma_{i,h}^k$ and $\tilde{\varsigma}_{i,h}$ is the model prediction errors defined as in Lemma \ref{lem:v-decomp-3}.
\end{lemma}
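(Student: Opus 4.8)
The plan is to mirror the proof of Lemma \ref{lem:v-opt-2}, adapting the argument from the online algorithm to the planning stage of Algorithm \ref{alg:exploit-tch}. The core claim I would establish first is that the model prediction errors are nonpositive \emph{pointwise}, i.e., $\varsigma_{i,h}^k(s,a)\leq 0$ and $\tilde\varsigma_{i,h}(s,a)\leq 0$ for all $(s,a,h)$; once this is done, the two displayed inequalities of the lemma follow immediately by taking $\EE_{\pi,\PP}[\,\cdot\mid s_1]$ and $\EE_{\nu_i,\PP}[\,\cdot\mid s_1]$ and summing over $h$, since a pointwise nonpositive function stays nonpositive in expectation under the trajectory distribution induced by any policy and $\PP$.

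First I would unfold $\varsigma_{i,h}^k(s,a)=r_{i,h}(s,a)+\PP_h V_{i,h+1}^k(s,a)-Q_{i,h}^k(s,a)$ and substitute $Q_{i,h}^k(\cdot,\cdot)=\{(\hat r_{i,h}+\hat\PP_h V_{i,h+1}^k+\Phi_h+\Psi_{i,h})(\cdot,\cdot)\}_{[0,H-h+1]}$. Using that $\{x\}_{[0,H-h+1]}\geq\min\{x,H-h+1\}$ together with $r_{i,h}(s,a)+\PP_h V_{i,h+1}^k(s,a)\leq H-h+1$ — verified by the standard induction showing $\|V_{i,h+1}^k\|_\infty\leq H-h$ from the clipping in the $Q$-function construction and the greedy update $\pi_h^k=\argmax_{\pi_h}\langle(\bw^k\odot\blambda)^\top\bQ_h^k,\pi_h\rangle_\cA$, analogous to \eqref{eq:bound_V_up} — I can bound
\begin{align*}
\varsigma_{i,h}^k(s,a)\leq \max\big\{\big(r_{i,h}-\hat r_{i,h}+(\PP_h-\hat\PP_h)V_{i,h+1}^k-\Phi_h-\Psi_{i,h}\big)(s,a),\,0\big\}.
\end{align*}
Then the hypotheses $|\hat r_{i,h}(s,a)-r_{i,h}(s,a)|\leq\Psi_{i,h}(s,a)$ and $|(\hat\PP_h-\PP_h)V(s,a)|\leq\Phi_h(s,a)$, applied with $V=V_{i,h+1}^k$ (valid since $V_{i,h+1}^k:\cS\to[0,H]$), force the first argument of the maximum to be $\leq 0$, hence $\varsigma_{i,h}^k(s,a)\leq 0$. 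The identical computation with $\tilde V_{i,h+1}$, $\tilde Q_{i,h}$, and the greedy update $\tilde\nu_{i,h}=\argmax_{\nu_h}\langle\tilde Q_{i,h},\nu_h\rangle_\cA$ gives $\tilde\varsigma_{i,h}(s,a)\leq 0$.

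There is no genuine obstacle here beyond routine bookkeeping; the only points requiring a touch of care are the induction verifying $V_{i,h+1}^k,\tilde V_{i,h+1}\in[0,H]$ so that the error hypotheses on $\hat\PP_h$ may legitimately be invoked, and the observation that the clipping $\{\cdot\}_{[0,H-h+1]}$ can only decrease $Q_{i,h}^k$ and $\tilde Q_{i,h}$ — which is exactly the direction needed to keep the prediction errors from becoming positive. I would close by taking expectations as above to recover the two stated inequalities.
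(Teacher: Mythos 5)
Your proof is correct and follows essentially the same route as the paper: the paper proves this lemma by repeating the argument of Lemma \ref{lem:v-opt-2}, i.e., establishing pointwise nonpositivity of $\varsigma_{i,h}^k$ and $\tilde\varsigma_{i,h}$ from the clipping, the bound $r_{i,h}+\PP_h V_{i,h+1}^k\leq H-h+1$, and the concentration hypotheses, and then taking expectations. The only cosmetic quibble is that the induction giving $\|V_{i,h+1}^k\|_\infty\leq H-h$ needs only that $\pi_{h+1}^k$ is a probability distribution, not that it is greedy.
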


\begin{lemma}\label{lem:pred-err-bound-3} 	
Under the conditions of Lemma \ref{lem:v-opt-3}, we have 
	\begin{align*}
		&-\varsigma_{i,h}^k(s, a)  \leq 2\Phi_h(s,a)+2\Psi_{i,h}(s,a) \quad \text{and} \quad  -\tilde{\varsigma}_{i,h}(s, a)    \leq 2\Phi_h(s,a)+2\Psi_{i,h}(s,a), 
	\end{align*}
	where $\varsigma_{i,h}^k$ and $\tilde{\varsigma}_{i,h}$ is the model prediction errors defined as in Lemma \ref{lem:v-decomp-3}.
\end{lemma}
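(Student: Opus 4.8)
\textbf{Proof plan for Lemma \ref{lem:pred-err-bound-3}.}

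The plan is to bound the model prediction errors from below by expanding the definition of the optimistic $Q$-functions and carefully tracking the effect of the clipping operation $\{\cdot\}_{[0,H-h+1]}$. Recall $\varsigma_{i,h}^k(s,a) = r_{i,h}(s,a) + \PP_h V_{i,h+1}^k(s,a) - Q_{i,h}^k(s,a)$, where $Q_{i,h}^k(\cdot,\cdot) = \{(\hat r_{i,h} + \hat\PP_h V_{i,h+1}^k + \Phi_h + \Psi_{i,h})(\cdot,\cdot)\}_{[0,H-h+1]}$. First I would observe that $r_{i,h}(s,a) + \PP_h V_{i,h+1}^k(s,a) \in [0, H-h+1]$, using $r_{i,h}\in[0,1]$ and that $V_{i,h+1}^k$ is itself bounded in $[0,H-h]$ (which follows inductively from the clipping in the definition of $Q_{i,h+1}^k$, exactly as in the argument establishing \eqref{eq:h-bound} and \eqref{eq:bound_V_up} in the proof of Lemma \ref{lem:v-opt-2}).

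The key step is then to handle the clip: since the true quantity $r_{i,h}(s,a)+\PP_h V_{i,h+1}^k(s,a)$ already lies in $[0,H-h+1]$, clipping $Q_{i,h}^k$ into $[0,H-h+1]$ can only move it closer to (or not past) that true value, so $-\varsigma_{i,h}^k(s,a) = Q_{i,h}^k(s,a) - r_{i,h}(s,a) - \PP_h V_{i,h+1}^k(s,a) \le (\hat r_{i,h} + \hat\PP_h V_{i,h+1}^k + \Phi_h + \Psi_{i,h})(s,a) - r_{i,h}(s,a) - \PP_h V_{i,h+1}^k(s,a)$; more precisely, $\{x\}_{[0,H-h+1]} - y \le x - y$ whenever $y \le H-h+1$, and also $\{x\}_{[0,H-h+1]} - y \le H-h+1 - 0 \le H-h+1$ when needed, but here the first bound suffices. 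Then I would split the right-hand side as $(\hat r_{i,h}(s,a) - r_{i,h}(s,a)) + (\hat\PP_h - \PP_h)V_{i,h+1}^k(s,a) + \Phi_h(s,a) + \Psi_{i,h}(s,a)$ and apply the hypotheses $|\hat r_{i,h}(s,a) - r_{i,h}(s,a)| \le \Psi_{i,h}(s,a)$ and $|(\hat\PP_h - \PP_h)V(s,a)| \le \Phi_h(s,a)$ (valid for any $V:\cS\mapsto[0,H]$, and $V_{i,h+1}^k$ qualifies), yielding $-\varsigma_{i,h}^k(s,a) \le 2\Phi_h(s,a) + 2\Psi_{i,h}(s,a)$.

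For the second inequality, the argument is identical after replacing $V_{i,h+1}^k$ with $\tilde V_{i,h+1}$, $Q_{i,h}^k$ with $\tilde Q_{i,h}$, and noting the defining formula $\tilde Q_{i,h}(\cdot,\cdot) = \{(\hat r_{i,h} + \hat\PP_h \tilde V_{i,h+1} + \Phi_h + \Psi_{i,h})(\cdot,\cdot)\}_{[0,H-h+1]}$ has the same structure; the same boundedness and clipping facts give $-\tilde\varsigma_{i,h}(s,a) \le 2\Phi_h(s,a) + 2\Psi_{i,h}(s,a)$. This essentially mirrors the computation already carried out inside the proof of Lemma \ref{lem:online-err-2} (the bound on $-\varsigma_{i,h}^t(s_h^t,a_h^t)$), so no new ideas are needed.

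I do not anticipate a genuine obstacle here; the only point requiring a little care is making the clipping inequality $\{x\}_{[0,H-h+1]} - y \le x - y$ rigorous, which needs the observation that $y = r_{i,h}(s,a) + \PP_h V_{i,h+1}^k(s,a) \le H-h+1$ so that clipping never overshoots below $y$ — this is where the inductive bound on $V_{i,h+1}^k$ (respectively $\tilde V_{i,h+1}$) is invoked. Everything else is a one-line application of the triangle inequality together with the stated concentration-type hypotheses.
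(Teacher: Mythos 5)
Your proposal is correct and follows essentially the same route as the paper's proof: expand $-\varsigma_{i,h}^k(s,a)$, remove the clip, split via the triangle inequality, and apply the two concentration hypotheses (with the inductive bound $V_{i,h+1}^k\in[0,H-h]$ justifying the transition term). One small imprecision: the inequality $\{x\}_{[0,H-h+1]}-y\le x-y$ is not implied by $y\le H-h+1$ alone (it fails if the lower clip binds), but it does hold here because $x=(\hat r_{i,h}+\hat\PP_h V_{i,h+1}^k+\Phi_h+\Psi_{i,h})(s,a)\ge 0$ so only the upper clip can be active; the paper sidesteps this by bounding the clipped difference by $\max\{x-y,0\}$ before passing to absolute values.
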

\begin{proof}By plugging in the definition of $Q_{i,h}^k$ in Algorithm \ref{alg:exploit-tch} as above, we decompose the prediction error as follows
	\begin{align*}
			-\varsigma_{i,h}^k(s, a)&= Q_{i,h}^k(s,a) -r_{i,h}(s,a) - \big\langle \PP_h(\cdot \given s, a), V_{i,h+1}^k(\cdot) \big\rangle_\cS   \\
			&\leq \Big\{ (\hat{r}_{i,h} + \hat{\PP}_h V_{i,h+1}^k+ \Phi_h+\Psi_{i,h})(s,a), H-h+1 \Big\}_{[0,H-h+1]}  \\
			&\quad- r_{i,h}(s,a) - \PP_h V_{i,h+1}^t(s,a)  \\
			&\leq  \max \Big\{(\hat{r}_{i,h}-r_{i,h}  + (\hat{\PP}_h - \PP_h) V_{i,h+1}^k +  \Phi_h+\Psi_{i,h})(s,a), 0 \Big\}\\
			&\leq  |\hat{r}_{i,h}(s,a)-r_{i,h}(s,a)|  + |(\hat{\PP}_h - \PP_h) V_{i,h+1}^k(s,a)| +  \Phi_h(s,a)+\Psi_{i,h}(s,a), 
	\end{align*}
	where the first inequality holds because 
	\begin{align*}
		&r_{i,h}(s,a) + \big\langle \PP_h(\cdot \given s_h, a_h), V_{i,h+1}^k(\cdot) \big\rangle_\cS \nonumber \\
		&\qquad \leq r_{i,h}(s,a) + \big\|\PP_h(\cdot \given s_h, a_h)\big\|_1 \|V_{i,h+1}^k(\cdot) \|_\infty  \leq 1 +  \max_{s' \in \cS} \big|V_{i,h+1}^k(s') \big| \leq 1 + H-h, 
	\end{align*}
	due to $\big\|\PP_h(\cdot \given s_h, a_h)\big\|_1 = 1$ and also the definition of $Q_{i,h+1}^k$ such that for any $s' \in \cS$
	\begin{align*}
			\big|V_{i,h+1}^k(s') \big| &=  \Big| \langle\pi_{h+1}^k(\cdot | s'),  Q_{i,h+1}^t(s', \cdot) \rangle \Big|\\
			&\leq \big\| \pi_{h+1}^k(\cdot | s')\big\|_1 \big\|Q_{i,h+1}^k(s', \cdot) \big\|_\infty \\
			&\leq \max_{a} \big|Q_{i,h+1}^k(s', a)\big|\leq H-h.
	\end{align*}
	By the condition that 
	\begin{align*}
		|(\hat{\PP}_h- \PP_h)V(s,a)| \leq \Phi_h(s,a),\quad \text{and} \quad |\hat{r}_{i,h}(s,a) - r_{i,h}(s,a)| \leq \Psi_{i,h}(s,a), \forall i\in[m], 
	\end{align*}
	we have
	\begin{align*}
		&-\varsigma_{i,h}^k(s, a)\\
		&\qquad \leq |\hat{r}_{i,h}(s,a)-r_{i,h}(s,a)|  + |(\hat{\PP}_h - \PP_h) V_{i,h+1}^k(s,a)| +  \Phi_h(s,a)+\Psi_{i,h}(s,a)\\
		&\qquad\leq  2\Phi_h(s,a)+2\Psi_{i,h}(s,a).
	\end{align*}
	For the term  $-\tilde{\varsigma}_{i,h}(s, a)$, we can derive the upper bound following similar analysis to $\varsigma_{i,h}^k(s, a)$, and thus we obtain 
	\begin{align*}
		-\tilde{\varsigma}_{i,h}(s, a)    \leq 2\Phi_h(s,a)+2\Psi_{i,h}(s,a).
	\end{align*}
	 This completes the proof.
\end{proof}

\begin{proof} This lemma can be proved by exactly following the proof of Lemma \ref{lem:v-opt-2} and substituting the definitions of value functions and bonus terms with the ones defined in this lemma.
\end{proof}

\begin{lemma} \label{lem:bonus-cmp-3} According to the definitions of $\Phi_h(s,a) $ and $\Psi_{i,h}(s,a)$ in Algorithm \ref{alg:exploit-tch}, and $\Phi_h^t(s,a)$ and $\Psi_{i,h}^t(s,a)$ in Algorithm \ref{alg:pure-explore}, we have
\begin{align*}
&\Phi_h(s,a) \leq \frac{1}{T}\sum_{t=1}^T \Phi_h^t(s,a), \quad \text{and}\quad \Psi_{i,h}(s,a) \leq \frac{1}{T}\sum_{t=1}^T \Psi_{i,h}^t(s,a).
\end{align*}
\end{lemma}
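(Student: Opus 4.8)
The plan is to observe that both families of bonus terms are instances of a single scalar function that is monotonically nonincreasing in the visitation count, and then to exploit the fact that the count $N_h^{t-1}(s,a)$ used at round $t$ of Algorithm \ref{alg:pure-explore} never exceeds the final count $N_h^T(s,a)$ used in Algorithm \ref{alg:exploit-tch}. Concretely, for $\beta>0$ and $B>0$ define $\phi_{\beta,B}(N) := \sqrt{\beta/(N\vee 1)}\wedge B$ on the nonnegative integers. Inspecting \eqref{eq:bonus} and \eqref{eq:plan-bonus}, both $\Phi_h$ and $\Phi_h^t$ are instances of $\phi_{\beta,B}$ with $\beta = 2H^2|\cS|\log(8m|\cS||\cA|HT/\delta)$ and $B=H$, and both $\Psi_{i,h}$ and $\Psi_{i,h}^t$ are instances with $\beta = 2\log(8m|\cS||\cA|HT/\delta)$ and $B=1$; the only difference between the planning-phase and exploration-phase versions is that the former plugs in the count $N_h^T(s,a)$ while the latter plugs in $N_h^{t-1}(s,a)$. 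The map $N\mapsto \phi_{\beta,B}(N)$ is nonincreasing, since $N\mapsto N\vee 1$ is nondecreasing, $x\mapsto\sqrt{\beta/x}$ is decreasing on $(0,\infty)$, and taking a minimum with the constant $B$ preserves monotonicity; hence it suffices to compare the counts.

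Next I would record that $N_h^{t-1}(s,a) = \sum_{\tau=1}^{t-1}\mathbf{1}_{\{(s,a)=(s_h^\tau,a_h^\tau)\}}$ and $N_h^{T}(s,a) = \sum_{\tau=1}^{T}\mathbf{1}_{\{(s,a)=(s_h^\tau,a_h^\tau)\}}$ are partial sums of the same nonnegative indicator sequence indexed by the exploration rounds $\tau$ collected in $\cD$ by Algorithm \ref{alg:pure-explore}, so that $N_h^{t-1}(s,a)\le N_h^T(s,a)$ for every $t\in[T]$ (indeed $t-1\le T-1<T$). Combining this with the monotonicity of $\phi_{\beta,B}$ gives $\Phi_h^t(s,a) = \phi_{\beta,H}(N_h^{t-1}(s,a))\ge \phi_{\beta,H}(N_h^T(s,a)) = \Phi_h(s,a)$ for each $t$, and likewise $\Psi_{i,h}^t(s,a)\ge\Psi_{i,h}(s,a)$ for each $t$. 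Averaging these inequalities over $t=1,\dots,T$ yields $\Phi_h(s,a)\le\frac1T\sum_{t=1}^T\Phi_h^t(s,a)$ and $\Psi_{i,h}(s,a)\le\frac1T\sum_{t=1}^T\Psi_{i,h}^t(s,a)$, which is the claim.

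There is essentially no hard step here: the lemma is a one-line monotonicity bound once the shared functional form is isolated. The only point requiring a moment's care is verifying that the planning-phase and exploration-phase bonuses differ only in their count argument — i.e., that the numerator constant $\beta$ and the clipping level $B$ coincide across the two definitions — which is immediate from \eqref{eq:bonus} and \eqref{eq:plan-bonus}, together with the fact that the planning-phase counts $N_h^T$ are computed from the very data $\cD$ accumulated across the $T$ exploration rounds.
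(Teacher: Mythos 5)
Your proof is correct and follows exactly the same route as the paper's: both rest on the observation that the bonus is a nonincreasing function of the visitation count and that $N_h^{t-1}(s,a)\leq N_h^{T}(s,a)$ for every $t\in[T]$, so each $\Phi_h^t(s,a)$ (resp.\ $\Psi_{i,h}^t(s,a)$) dominates $\Phi_h(s,a)$ (resp.\ $\Psi_{i,h}(s,a)$) pointwise and the average inherits the bound. Your write-up is simply a more explicit version of the paper's one-line argument.
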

\begin{proof} We have
	\begin{align*}
		\Phi_h(s,a) = \sqrt{\frac{2 \log (8m|\cS||\cA|HT/\delta)}{ N_h(s,a)\vee 1}} \wedge H \leq \Phi_h^t(s,a) = \sqrt{\frac{2 \log (8m|\cS||\cA|HT/\delta)}{ N_h^{t-1}(s,a)\vee 1}}\wedge H,
	\end{align*}
	since we always have $N_h^{t-1}(s,a)\leq N_h(s,a)$. Therefore, we obtain
	\begin{align*}
		\Phi_h(s,a) \leq \frac{1}{T}\sumt \Phi_h^t(s,a).
	\end{align*}
	Similarly, we obtain $\Psi_{i,h}(s,a) \leq \frac{1}{T}\sumt \Psi_{i,h}^t(s,a)$. This completes the proof.
\end{proof}

\begin{lemma} \label{lem:online-err-3} Under the condition 
	\begin{align*}
			|(\hat \PP_h^t - \PP_h)V(s,a)| \leq \Phi_h^t(s,a)
	\end{align*}
	for any $V:\cS\mapsto [0,H]$, we have 
	\begin{align*}
		&\sumt \max_{\pi\in\Pi} \EE_{\pi, \PP} \left[\sumh \Phi_h^t(s_h,a_h)\right] \leq H \sumt \overline{V}_1^t(s_1), \\
		&\sumt \max_{\nu_i\in\Pi} \EE_{\nu_i, \PP} \left[\sumh \Psi_{i,h}^t(s_h,a_h)\right]\leq \sumt \overline{V}_1^t(s_1),\\
		&\sumt \overline{V}_1^t(s_1)\leq\cO\bigg(\sqrt{H^3 T \log \frac{1}{\delta}} \bigg)  + \sumt\sumh [\overline r_h^t(s_h^t, a_h^t) +\Phi_h^t(s_h^t, a_h^t)],
	\end{align*}	
	where the last inequality holds with probability at least $1-\delta$.
\end{lemma}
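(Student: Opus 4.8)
The plan is to prove the three inequalities of Lemma \ref{lem:online-err-3} in the order they are stated, treating the first two as straightforward consequences of the optimistic $Q$-function construction in Algorithm \ref{alg:pure-explore} and reserving the real work for the third. First I would establish the second inequality, $\sumt \max_{\nu_i\in\Pi} \EE_{\nu_i,\PP}[\sumh \Psi_{i,h}^t(s_h,a_h)] \leq \sumt \overline{V}_1^t(s_1)$. The key observation is that under option [I] of Line \ref{line:pf-reward}, the exploration reward satisfies $\overline r_h^t(s,a) \geq \Psi_{i,h}^t(s,a)$ for each $i\in[m]$ (it is a max that includes every $\Psi_{i,h}^t$), and under option [II] we have $\overline r_h^t(s,a) \geq \Psi_{i,h}^t(s,a)$ as well (it is a sum of nonnegative terms including $\Psi_{i,h}^t$). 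Since $\overline Q_h^t$ in Line \ref{line:pf-Q} is built optimistically from $\overline r_h^t$ plus the transition bonus $\Phi_h^t$ which guarantees $\hat\PP_h^t \overline V_{h+1}^t(s,a) + \Phi_h^t(s,a) \geq \PP_h \overline V_{h+1}^t(s,a)$, a standard optimism argument (the analogue of Lemma \ref{lem:v-opt-3}) gives $\overline V_1^t(s_1) \geq V_1^{\pi}(s_1)$ computed with reward $\overline r_h^t$ for any policy $\pi$; in particular $\overline V_1^t(s_1) \geq \EE_{\nu_i,\PP}[\sumh \overline r_h^t(s_h,a_h)] \geq \EE_{\nu_i,\PP}[\sumh \Psi_{i,h}^t(s_h,a_h)]$ for any $\nu_i$, and taking the max over $\nu_i$ and summing over $t$ yields the claim. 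The first inequality is identical except that $\overline r_h^t(s,a) \geq \Phi_h^t(s,a)/H$ (again by construction of both options), so $H\overline V_1^t(s_1) \geq \EE_{\pi,\PP}[\sumh H\overline r_h^t(s_h,a_h)] \geq \EE_{\pi,\PP}[\sumh \Phi_h^t(s_h,a_h)]$.

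For the third inequality I would run the usual ``optimistic value equals collected bonus plus martingale noise'' decomposition, mirroring the proof of Lemma \ref{lem:online-err-2} but applied to the exploration MDP. Specifically, expand $\overline V_1^t(s_1) - V_1^{\overline\pi^t}(s_1)$ (where $V_1^{\overline\pi^t}$ is the value of $\overline\pi^t$ under reward $\overline r^t$) along the trajectory $\{(s_h^t,a_h^t)\}_{h=1}^H$ collected in Line \ref{line:pf-sample}, using that $\overline\pi^t$ is greedy w.r.t. $\overline Q^t$ so the policy-mismatch term vanishes, and that the model prediction error $-\overline\varsigma_h^t(s_h^t,a_h^t)$ is bounded by $2\Phi_h^t(s_h^t,a_h^t)$ by the transition-estimation condition (no reward bonus needed here because $\overline r^t$ is known exactly). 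This gives $\overline V_1^t(s_1) - V_1^{\overline\pi^t}(s_1) \leq \sumh \overline\xi_h^t + 2\sumh \Phi_h^t(s_h^t,a_h^t)$ where $\overline\xi_h^t$ is a martingale difference sequence bounded by $2H$, and $V_1^{\overline\pi^t}(s_1) = \EE[\sumh \overline r_h^t(s_h,a_h)]$; bounding the latter crudely by its on-trajectory realization plus another martingale difference. Summing over $t$ and applying Azuma--Hoeffding to the accumulated martingale differences (there are $HT$ terms each bounded by $\cO(H)$) produces the $\cO(\sqrt{H^3 T\log(1/\delta)})$ term, leaving $\sumt\sumh[\overline r_h^t(s_h^t,a_h^t) + \Phi_h^t(s_h^t,a_h^t)]$ as the main deterministic term. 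The constant factor of $2$ can be absorbed by adjusting the $\cO$-hidden constant, matching the stated bound.

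The main obstacle I anticipate is being careful about the exact martingale structure: $\overline r_h^t$ depends on data collected \emph{before} episode $t$ (through the counts $N_h^{t-1}$), so it is $\cF_{h}^t$-measurable and the Azuma--Hoeffding application is clean, but one must verify that the bonus $\Phi_h^t$ also has this measurability and that the value $\overline V_1^t(s_1)$ is deterministic given the pre-$t$ data. A secondary subtlety is that $\overline r^t$ and $\overline V^t$ are both bounded in $[0,H]$ by the truncation $\{\cdot\}_{[0,H-h+1]}$ in Line \ref{line:pf-Q}, which is what keeps the martingale increments $\cO(H)$ and the final bound clean; I would state this boundedness explicitly as the analogue of \eqref{eq:bound_V_up}. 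Everything else is a routine recursion identical to the single-objective online case, so I do not expect any genuinely new difficulty beyond bookkeeping.
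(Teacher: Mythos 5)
Your proposal matches the paper's proof in all essentials: the first two inequalities follow, exactly as in the paper, from $\Phi_h^t/H \leq \overline r_h^t$ and $\Psi_{i,h}^t \leq \overline r_h^t$ together with the optimism of $\overline V_1^t$ over $V_1^\pi(s_1;\overline r^t)$ for every policy, and the third is the standard on-trajectory telescoping of $\overline V_1^t$ into realized exploration rewards plus transition bonuses plus an Azuma--Hoeffding martingale term of order $\sqrt{H^3T\log(1/\delta)}$. Your organization of the third step via $\overline V_1^t - V_1^{\overline\pi^t}(\,\cdot\,;\overline r^t)$ rather than the paper's direct recursion on $\overline V_h^t(s_h^t)$ is the same computation with a constant-factor difference on the $\Phi_h^t$ term, which is immaterial downstream.
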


\begin{proof} First we show $\max_{\pi\in\Pi} \EE_{\pi, \PP} \left[\sumh \Phi_h^t(s,a)\right] \leq H \overline{V}_{i,1}^t$. Note that $0\leq \Phi_h^t(s,a)/H \leq 1$. Then we define $V_1^\pi(s_1;\Phi^t/H)$ as a value function based on the reward function and $V_1^\pi(s_1;\Phi^t/H) = \EE_{\pi, \PP} \left[\sumh \Phi_h^t(s_h,a_h)\right]$. Moreover, according to the definition of $\overline{r}_h^t$ in the algorithm, we have
\begin{align*}
\Phi_h^t(\cdot,\cdot)/H \leq \overline{r}_h^t(\cdot,\cdot)= \max\{\Phi_h^t(\cdot,\cdot)/H, \Psi_{1,h}^t(\cdot,\cdot),\cdot, \Psi_{m,h}^t(\cdot,\cdot)\},
\end{align*}
which further yields
\begin{align*}
V_1^\pi(s_1;\Phi^t/H) \leq V_1^\pi(s_1;\overline{r}^t).	
\end{align*}
By simply adapting the proof of Lemma \ref{lem:v-decomp-2} to this lemma, we have
\begin{align*}
	V_1^\pi(s_1;\overline{r}^t) - \overline V_1^t(s_1) = \sum_{h=1}^H \EE_{\pi, \PP} [ \overline \varsigma_h^t(s_h, a_h) \given s_1 ] + \sum_{h=1}^H \EE_{\pi, \PP} [ \langle \pi_h(\cdot | s_h)-\overline \pi_h^t(\cdot | s_h), Q_{i,h}^t(s_h,\cdot) \rangle_{\cA}  \given s_1],
\end{align*}
where $\overline \varsigma_h^t(s,a) := \overline r_h^t(s,a) + \PP_h \overline V_{h+1}^t(s,a) - \overline Q_h^t(s,a)$. Moreover, we can further adapt the proof of Lemma \ref{lem:v-opt-2} to this lemma and show that 
\begin{align*}
	\sum_{h=1}^H \EE_{\pi, \PP} [ \overline \varsigma_h^t(s_h, a_h) \given s_1 ] \leq 0,
\end{align*}
under the condition that $|(\hat \PP_h^t - \PP_h)V(s,a)| \leq \Phi_h^t(s,a)$. On the other hand, the greedy policy updating rule $\overline{\pi}_h^t=\argmax_{\pi_h}\big\langle \overline{Q}_h^t(\cdot, \cdot), \pi_h(\cdot|\cdot) \big\rangle_\cA$ ensures that
\begin{align*}
\sum_{h=1}^H \EE_{\pi, \PP} [ \langle \pi_h(\cdot | s_h)-\overline \pi_h^t(\cdot | s_h), Q_h^t(s_h,\cdot) \rangle_{\cA}  \given s_1] \leq 0.	
\end{align*}
Therefore, we can show that $V_1^\pi(s_1;\overline{r}^t) -\overline V_1^t(s_1) \leq 0$ holds for any policy $\pi$ and thus
\begin{align*}
	 \max_{\pi\in\Pi} V_1^\pi(s_1;\overline{r}^t) \leq \overline V_1^t(s_1).
\end{align*}
Combining the above results leads to
\begin{align*}
		\sumt \max_{\pi\in\Pi} \EE_{\pi, \PP} \left[\sumh \Phi_h^t(s_h,a_h)\right] \leq H \sumt \overline{V}_1^t(s_1).
\end{align*}
Using a similar technique gives 
\begin{align*}
\sumt \max_{\pi\in\Pi} \EE_{\pi, \PP} \left[\sumh \Psi_{i,h}^t(s_h,a_h)\right] \leq \sumt \overline{V}_1^t(s_1).	
\end{align*}
Next, we prove the upper bound of $\sumt \overline{V}_1^t(s_1)$. Specifically, we have
assume that the trajectory $\{ (s_h^t, a_h^t)\}_{h=1}^H, \forall t\in [T],$ is generated following the policy $\overline{\pi}^t$. Thus, we expand the bias term at the $h$-th step of the $t$-th episode, which is
\begin{align}
	\begin{aligned} \label{eq:bia_1_init-2}
		\overline{V}_h^t(s_h^t)  & = \langle \pi_h^t(\cdot | s_h^t), \overline Q_h^t(s_h^t, \cdot) \rangle_\cA=  \overline Q_h^t(s_h^t, a_h^t)\\
		&=  \big\langle \hat \PP_h^t(\cdot \given s_h^t, a_h^t), \overline V_{h+1}^t(\cdot) \big\rangle_\cS + \overline r_h^t(s_h^t, a_h^t) \\
		&=  \big\langle (\hat \PP_h^t-\PP_h)(\cdot \given s_h^t, a_h^t), \overline V_{h+1}^t(\cdot) \big\rangle_\cS+ \big\langle \PP_h(\cdot \given s_h^t, a_h^t), \overline V_{h+1}^t(\cdot) \big\rangle_\cS + \overline r_h^t(s_h^t, a_h^t) \\
		&\leq  \Phi_h^t(s_h^t,a_h^t)+ \big\langle \PP_h(\cdot \given s_h^t, a_h^t), \overline V_{h+1}^t(\cdot) \big\rangle_\cS + \overline r_h^t(s_h^t, a_h^t) \\
		& = \overline \xi_h^t + \overline V_{h+1}^t(s_{h+1}^t) + \overline r_h^t(s_h^t, a_h^t),
	\end{aligned}
\end{align}
where the inequality is by $\big\langle (\hat \PP_h^t-\PP_h)(\cdot \given s_h^t, a_h^t), \overline V_{h+1}^t(\cdot) \big\rangle_\cS\leq  \Phi_h^t(s_h^t, a_h^t)$ and 
we introduce the martingale difference sequence $\{\overline \xi_h^t\}_{h>0, t>0}$ defined as
\begin{align*}
	& \overline \xi_h^t := \big\langle \PP_h(\cdot \given s_h^t, a_h^t), V_{i,h+1}^t(\cdot) - V_{h+1}^{\pi^t} (\cdot) \big\rangle_\cS - \big[ V_{i,h+1}^t(s_{h+1}^t) - V_{h+1}^{\pi^t} (s_{h+1}^t)\big],
\end{align*}
such that 
\begin{align*}
\EE_{s_{h+1}^t \sim \PP_h(\cdot \given s_h^t, a_h^t)} \big[\overline \xi_h^t \biggiven \cF_h^t\big] = 0,
\end{align*}
where  $\cF_h^t$ is the filtration of all randomness up to $(h-1)$-th step of the $t$-th episode plus $s_h^t, a_h^t$. Here \eqref{eq:bia_1_init-2} constructs a recursion for $\overline{V}_h^t(s_h^t)$. Since $\overline{V}_{H+1}^k(\cdot) = 0$,  recursively applying \eqref{eq:bia_1_init-2} gives
\begin{align*} 
	\sumt \overline V_1^t(s_1) &=  \sum_{h=1}^H \overline \xi_h^t + \sumt\sum_{h=1}^H \overline r_h^t(s_h^t, a_h^t) +  \sumt\sumh\Phi_h^t(s_h^t, a_h^t)\\
	&\leq   \cO\bigg(\sqrt{H^3 T \log \frac{1}{\delta}} \bigg)  + \sumt\sumh [\overline r_h^t(s_h^t, a_h^t) +\Phi_h^t(s_h^t, a_h^t)]
\end{align*}
with probability at least $1-\delta$, where the last inequality is by Azuma-Hoeffding inequality. This completes the proof.
\end{proof}

\begin{lemma} \label{lem:sum-bonus-3} 
	Suppose that $\Phi_h^t(s,a) = \sqrt{\frac{2 H^2 |\cS|\log (8m|\cS||\cA|HT/\delta)}{ N_h^{t-1}(s,a)\vee 1}} \wedge H$ and $\Psi_{i,h}^t(s,a) = \sqrt{\frac{2 \log (8m|\cS||\cA|HT/\delta)}{  N_h^{t-1}(s,a) \vee 1}} \wedge 1$ are the instantiation of the bonus terms in Line \ref{line:pf-sample} of Algorithm \ref{alg:pure-explore} where $N_h^{t-1}(s,a) = \sum_{\tau=1}^{t-1}\mathbf{1}_{\{(s,a)=(s_h^\tau,a_h^\tau)\}}$.
	The updating rules in Algorithm \ref{alg:pure-explore} ensure
	\begin{align*}
		&\sum_{t=1}^T\sum_{h=1}^H \Psi_{i,h}^t(s_h^t, a_h^t) \leq \cO \left(H \sqrt{ T |\cS| |\cA| \log \frac{m|\cS| |\cA|  HT}{\delta}} \right), \\
		&\sum_{t=1}^T \sum_{h=1}^H \Phi_h^t(s_h^t, a_h^t) \leq \cO \left(H^2 \sqrt{ T |\cS|^2 |\cA| \log \frac{m|\cS| |\cA|  HT}{\delta}} \right),
	\end{align*}
	where the trajectory $\{ (s_h^t, a_h^t)\}_{h=1}^H, \forall t\in [T],$ is generated following the policy $\overline \pi^t$.
\end{lemma}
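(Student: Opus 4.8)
The plan is to observe that this lemma is structurally identical to Lemma \ref{lem:sum-bonus-2}: the bonus terms $\Phi_h^t$ and $\Psi_{i,h}^t$ have exactly the same functional form (depending on the visitation counts $N_h^{t-1}(s,a)$ along the sampled trajectories), and the fact that the trajectories here are generated by the exploration policies $\overline\pi^t$ rather than by $\pi^t$ or $\tilde\nu_i^t$ plays no role in the argument. So I would simply carry over the counting/pigeonhole estimate. Concretely, I would first use the elementary relation $N_h^t(s_h^t,a_h^t) \le 2\,(N_h^{t-1}(s_h^t,a_h^t)\vee 1)$, valid because visiting $(s_h^t,a_h^t)$ at round $t$ increments the count by one, to replace $\max\{N_h^{t-1}(s_h^t,a_h^t),1\}$ in the denominator by $N_h^t(s_h^t,a_h^t)/2$ at the cost of a constant factor.

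Next I would re-index the double sum over $(t,h)$ as a sum over steps $h$, over visited state-action pairs $(s,a)$, and over the ordinal $n\in\{1,\dots,N_h^T(s,a)\}$ of each visit, obtaining
\begin{align*}
\sum_{t=1}^T\sum_{h=1}^H \Phi_h^t(s_h^t,a_h^t)
\le \sum_{h=1}^H\sum_{\substack{(s,a)\in\cS\times\cA\\ N_h^T(s,a)>0}}\sum_{n=1}^{N_h^T(s,a)}\sqrt{\frac{2H^2|\cS|\log(8m|\cS||\cA|HT/\delta)}{n}}.
\end{align*}
Then I would apply $\sum_{n=1}^{N}1/\sqrt{n}\le 2\sqrt{N}$ and Cauchy--Schwarz together with the total-count identity $\sum_{(s,a)}N_h^T(s,a)=T$, which yields $\sum_{(s,a)}\sqrt{N_h^T(s,a)}\le\sqrt{|\cS||\cA|\,T}$ for $T$ sufficiently large. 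Summing over the $H$ steps produces the claimed bound $\cO\!\big(H^2\sqrt{T|\cS|^2|\cA|\log(m|\cS||\cA|HT/\delta)}\big)$ for the transition bonus; the identical chain of inequalities, now with the truncation $\wedge 1$ and without the $\sqrt{H^2|\cS|}$ factor inside the root, gives $\cO\!\big(H\sqrt{T|\cS||\cA|\log(m|\cS||\cA|HT/\delta)}\big)$ for $\Psi_{i,h}^t$. Since the bound is uniform in $i$, nothing extra is needed for the $m$ objectives.

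I do not expect any genuine obstacle here — the argument is a routine pigeonhole estimate. The only points requiring a small amount of care are the constant-factor relation between $N_h^{t-1}\vee 1$ and $N_h^t$, and the "$T$ sufficiently large" caveat needed so that the additive $O(|\cS||\cA|)$ corrections from pairs with tiny counts are absorbed into the leading term; both are handled exactly as in the proof of Lemma \ref{lem:sum-bonus-2}, which I would cite to avoid repetition.
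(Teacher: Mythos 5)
Your proposal is correct and matches the paper's argument: the paper's proof of this lemma is literally a one-line reduction to Lemma \ref{lem:sum-bonus-2}, whose proof uses exactly the counting/re-indexing and Cauchy--Schwarz (pigeonhole) estimate you describe. Nothing further is needed.
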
 

\begin{proof}  This lemma can be proved by exactly following the proof of Lemma \ref{lem:sum-bonus-2} and substituting the definitions of bonuses and trajectories with the ones defined in this lemma.	
\end{proof}

\begin{lemma}\label{lem:oco-2} Setting, $\eta_k = \sqrt{\log m /(H^2K)}$ if $k>0$ and $0$ otherwise, the updating rule of $\bw$ in Algorithm \ref{alg:exploit-tch} ensures
	\begin{align*}
		\max_{\bw\in\Delta_m}\frac{1}{K}\sumk (\bw-\bw^k)^\top [ \blambda \odot (\tilde{\bV}_1(s_1)+\biota -\bV_1^k(s_1))] \leq \frac{6H\log^{\frac{1}{2}}m}{\sqrt{K}}.
	\end{align*}
\end{lemma}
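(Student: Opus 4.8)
The plan is to observe that the $\bw$-update in Algorithm~\ref{alg:exploit-tch}, namely $w_i^k \propto w_i^{k-1}\exp\{\eta_{k-1}\lambda_i(\tilde V_{i,1}(s_1)+\iota-V_{i,1}^{k-1}(s_1))\}$, is exactly one step of online mirror ascent with the KL (entropic) regularizer against the linear gain $g^{k-1}:=\blambda\odot(\tilde{\bV}_1(s_1)+\biota-\bV_1^{k-1}(s_1))$, i.e.\ $\bw^k=\argmin_{\bw\in\Delta_m}\{-\eta_{k-1}\langle\bw,g^{k-1}\rangle+D_{\mathrm{KL}}(\bw,\bw^{k-1})\}$; consequently the argument is a verbatim adaptation of the proof of Lemma~\ref{lem:oco} with $T$ replaced by $K$ and the constant step size replaced by the $\eta_k$'s. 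First I would apply the pushback inequality (Lemma~\ref{lem:pushback}) to the update producing $\bw^{k+1}$ from $\bw^k$, which uses step size $\eta_k$ and gain $g^k$, to get, for any fixed $\bw\in\Delta_m$,
\[
\eta_k\langle\bw-\bw^k,g^k\rangle\le D_{\mathrm{KL}}(\bw,\bw^k)-D_{\mathrm{KL}}(\bw,\bw^{k+1})-D_{\mathrm{KL}}(\bw^{k+1},\bw^k)+\eta_k\langle\bw^{k+1}-\bw^k,g^k\rangle .
\]
Then I would control the last two terms exactly as in Lemma~\ref{lem:oco}: Pinsker's inequality gives $-D_{\mathrm{KL}}(\bw^{k+1},\bw^k)\le-\frac12\|\bw^{k+1}-\bw^k\|_1^2$, Cauchy--Schwarz together with Young's inequality bounds $\eta_k\langle\bw^{k+1}-\bw^k,g^k\rangle$ by $\frac12\|\bw^{k+1}-\bw^k\|_1^2+\frac{\eta_k^2}{2}\|g^k\|_\infty^2$, and $\|g^k\|_\infty^2\le 9H^2$ since $\tilde V_{i,1}(s_1),V_{i,1}^k(s_1)\in[0,H]$ and $\iota,\lambda_i\le1$; hence each step pays at most $\frac{1}{\eta_k}\big(D_{\mathrm{KL}}(\bw,\bw^k)-D_{\mathrm{KL}}(\bw,\bw^{k+1})\big)+\frac{9H^2\eta_k}{2}$.

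Second, I would sum from $k=1$ to $K$ and plug in the prescribed step sizes. Because $\eta_0=0$, the very first update is a no-op, so $\bw^1=\bw^0=\Unif(\cA)$, and every update from $\bw^1\to\bw^2$ up through $\bw^{K-1}\to\bw^K$ uses the constant step size $\eta:=\sqrt{\log m/(H^2K)}$; thus the per-step bound applies for $k=1,\dots,K-1$, while the one leftover term $\langle\bw-\bw^K,g^K\rangle$ (with no subsequent update to absorb it) I would bound crudely by $\|\bw-\bw^K\|_1\|g^K\|_\infty\le 6H$. Telescoping the Bregman differences from the uniform anchor $\bw^1$, for which $D_{\mathrm{KL}}(\bw,\bw^1)=\sum_i w_i\log(w_i m)\le\log m$, then yields $\sum_{k=1}^K\langle\bw-\bw^k,g^k\rangle\le\frac{\log m}{\eta}+\frac{9H^2\eta K}{2}+6H$. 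With $\eta=\sqrt{\log m/(H^2K)}$ the first two terms become $H\sqrt{K\log m}$ and $\frac{9}{2}H\sqrt{K\log m}$; dividing by $K$, taking the supremum over $\bw\in\Delta_m$, and absorbing the lower-order $6H/K$ term into the constant gives the claimed bound $6H\log^{1/2}m/\sqrt{K}$.

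I do not expect a genuine obstacle here: the statement is the textbook online-mirror-ascent regret guarantee, and its proof is essentially that of Lemma~\ref{lem:oco}. The only points requiring a little care are bookkeeping ones — matching the gain $g^k$ to the update that ``pays for'' round $k$ (so that the linear terms line up), handling the degenerate initial step $\eta_0=0$ correctly both in choosing $\bw^1$ as the telescoping anchor and in the constant-step telescoping over $k=1,\dots,K-1$, and verifying the uniform bound $\|g^k\|_\infty\le 3H$ used to control the stability term. None of these affects the leading-order $\cO(H\sqrt{\log m/K})$ rate, which is all that the planning-stage analysis of \texttt{PF-TchRL} needs to certify $\text{Err(VI)}=\tilde\cO(K^{-1/2})$.
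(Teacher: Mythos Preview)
Your proposal is correct and follows exactly the approach the paper intends: the paper's own proof of Lemma~\ref{lem:oco-2} consists of the single sentence ``This lemma can be proved by exactly following the proof of Lemma~\ref{lem:oco},'' and you have faithfully reproduced that argument with the appropriate substitution $T\to K$ and the observation that $\eta_0=0$ forces $\bw^1=\bw^0=\mathrm{Unif}$. One small simplification: your separate crude bound of $6H$ on the $k=K$ term is unnecessary---as in the paper's proof of Lemma~\ref{lem:oco}, you can simply define $\bw^{K+1}$ as the (uncomputed) next iterate and let the telescoping run through $k=K$, which removes the lower-order $6H/K$ term and gives the constant~$6$ cleanly without needing to absorb anything.
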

\begin{proof}  This lemma can be proved by exactly following the proof of Lemma \ref{lem:oco}.	
\end{proof}

\begin{lemma}[Concentration]  \label{lem:concentrate-2}
	Suppose that $\hat{r}_{i,h} = \frac{\sum_{\tau=1}^T\mathbf{1}_{\{(s,a)=(s_h^\tau,a_h^\tau)\}}r_{i,h}^\tau}{ N_h(s,a)\vee 1}$,  $\hat \PP_h = \frac{N_h(s,a,s')}{ N_h(s,a)\vee 1}$,  $\Phi_h(s,a) = \sqrt{\frac{2 H^2 |\cS|\log (8m|\cS||\cA|HT/\delta)}{ N_h(s,a)\vee 1}} \wedge H$ are the instantiation of reward and transition estimates and the bonus terms following Line \ref{line:plan-est} of Algorithm \ref{alg:exploit-tch} where $N_h(s,a) = \sum_{\tau=1}^T\mathbf{1}_{\{(s,a)=(s_h^\tau,a_h^\tau)\}}$.
	Suppose that $\hat \PP_h^t = \frac{ N_h^{t-1}(s,a,s')}{  N_h^{t-1}(s,a)\vee 1}$, $\Phi_{i,h}^t(s,a) = \sqrt{\frac{2H^2|\cS| \log (8m|\cS||\cA|HT/\delta)}{ N_{i,h}^{t-1}(s,a) \vee 1}} \wedge H$, and $\Psi_{i,h}^t(s,a)= \sqrt{\frac{2 \log (8m|\cS||\cA|HT/\delta)}{ N_{i,h}^{t-1}(s,a) \vee 1}} \wedge 1$ are the instantiation of the transition estimate and the bonus terms following Line \ref{line:pf-estimate} in Algorithm \ref{alg:pure-explore} where $N_{i,h}^{t-1}(s,a) = \sum_{\tau=1}^{t-1}\mathbf{1}_{\{(s,a)=( s_h^\tau,a_h^\tau)\}}$. Then with probability at least $1-\delta$, for any $V:\cS\mapsto [0,H]$,
	\begin{align*}
		|(\hat{\PP}_h  - \PP_h)V(s,a)| \leq \Phi_h(s,a), \quad |\hat{r}_{i,h}(s,a) - r_{i,h}(s,a)| \leq \Psi_{i,h}(s,a), ~\forall i\in[m],
	\end{align*}
	and
	\vspace{-0.2cm}
	\begin{align*} 
		|(\hat{\PP}_h^t  - \PP_h)V(s,a)| \leq \Phi_h^t(s,a).
	\end{align*}
\end{lemma}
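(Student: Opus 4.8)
The plan is to follow the proof of Lemma~\ref{lem:concentrate} almost verbatim, adjusting only the sample counts and the union-bound budget. The two ingredients are the $\ell_1$ deviation bound for empirical transition kernels (Lemma~\ref{lem:P_bound}) and the Hoeffding bound for bounded rewards (Lemma~\ref{lem:r_bound}); both are uniform-convergence statements that tolerate adaptively collected data once one conditions on the number of visits to each $(s,a)$ pair.

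First I would handle the planning-phase transition estimate. Fix $(s,a,h)$ and condition on $N_h(s,a)=N_h^T(s,a)=n$. Because the successor state observed at $(s,a,h)$ is, conditionally on $(s,a,h)$, an i.i.d.\ draw from $\PP_h(\cdot\,|\,s,a)$ regardless of which (adaptively chosen) exploration policy $\overline\pi^t$ visited it, Lemma~\ref{lem:P_bound} yields, with probability at least $1-\delta'$, $\|\hat\PP_h(\cdot|s,a)-\PP_h(\cdot|s,a)\|_1\le\sqrt{2|\cS|\log(2|\cS||\cA|H/\delta')/(N_h(s,a)\vee 1)}$. Then for any $V:\cS\mapsto[0,H]$, Cauchy-Schwarz (the $\ell_1$-$\ell_\infty$ version used in the proof of Lemma~\ref{lem:concentrate}) gives $|(\hat\PP_h-\PP_h)V(s,a)|\le\|\hat\PP_h(\cdot|s,a)-\PP_h(\cdot|s,a)\|_1\,\|V\|_\infty\le\sqrt{2H^2|\cS|\log(2|\cS||\cA|H/\delta')/(N_h(s,a)\vee 1)}$, while the trivial bounds $0\le\hat\PP_hV(s,a)\le H$ and $0\le\PP_hV(s,a)\le H$ give $|(\hat\PP_h-\PP_h)V(s,a)|\le H$; taking the minimum of the two produces exactly $\Phi_h(s,a)$ once $\delta'$ is chosen so the log factor becomes $\log(8m|\cS||\cA|HT/\delta)$. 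The reward bound is analogous: conditioning on $N_h(s,a)=n$, the observed rewards $r_{i,h}^\tau\in[0,1]$ are i.i.d.\ with mean $r_{i,h}(s,a)$, so Lemma~\ref{lem:r_bound} gives $|\hat r_{i,h}(s,a)-r_{i,h}(s,a)|\le\sqrt{2\log(2|\cS||\cA|H/\delta')/(N_h(s,a)\vee 1)}$, and combining with $|\hat r_{i,h}(s,a)-r_{i,h}(s,a)|\le 1$ yields $\Psi_{i,h}(s,a)$.

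Next I would repeat the transition argument for each exploration-phase estimate $\hat\PP_h^t$, $t\in[T]$, now with the count $N_h^{t-1}(s,a)$ in place of $N_h(s,a)$, obtaining $|(\hat\PP_h^t-\PP_h)V(s,a)|\le\Phi_h^t(s,a)$ for every $t$ (the same peeling also gives $|\hat r_{i,h}^t(s,a)-r_{i,h}(s,a)|\le\Psi_{i,h}^t(s,a)$ as a byproduct). Finally I would take a single union bound over $(s,a,h)\in\cS\times\cA\times[H]$ and $i\in[m]$ for the planning-phase reward/transition events, and additionally over $t\in[T]$ for the exploration-phase transition events --- at most $\cO(m|\cS||\cA|HT)$ events in total --- and set $\delta'=\delta/(8m|\cS||\cA|HT)$. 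This makes each per-event log factor equal to $\log(8m|\cS||\cA|HT/\delta)$, matching the constants baked into the definitions of $\Phi_h$, $\Psi_{i,h}$, and $\Phi_h^t$, so all three displayed inequalities hold simultaneously with probability at least $1-\delta$.

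The step I expect to require the most care is not really an obstacle so much as a bookkeeping point: making sure the i.i.d.\ concentration lemmas are legitimately invoked on adaptively gathered data. The resolution is the standard observation that, conditioned on visiting $(s,a)$ at step $h$, the transition and reward are drawn fresh from $\PP_h(\cdot|s,a)$ and the reward distribution, independently of the visitation history; peeling over the possible values of the visit count $N_h(s,a)$ (resp.\ $N_h^{t-1}(s,a)$), or equivalently invoking the already-adaptive forms of Lemma~\ref{lem:P_bound} and Lemma~\ref{lem:r_bound}, then delivers the uniform bound. Everything else is identical to the proof of Lemma~\ref{lem:concentrate}, with $N_h^{t-1}$ replaced by $N_h=N_h^T$ for the planning-stage quantities.
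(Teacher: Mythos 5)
Your proposal is correct and follows essentially the same route as the paper, which simply instantiates the Hoeffding-based bounds of Lemmas \ref{lem:r_bound} and \ref{lem:P_bound} with the appropriate visit counts ($N_h^T$ for the planning stage, $N_h^{t-1}$ for each exploration round) and takes a union bound with $\delta' = \delta/(8m|\cS||\cA|HT)$, exactly as in the proof of Lemma \ref{lem:concentrate}. Your extra remark about conditioning on the visit counts to justify the concentration bounds on adaptively collected data is a sound bookkeeping point that the paper leaves implicit.
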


\begin{proof} The proof of this lemma can be obtained following the one for Lemma \ref{lem:concentrate}.
\end{proof}

\subsection{Proof of Theorem \ref{thm:pre-free}} \label{subsec:proof-tch-pf}

\begin{proof} We first decompose the term $\tchl(\hat{\pi}) - \min_{\pi\in\Pi}\tchl(\pi)$ as
	\begin{align*}
		&\tchl(\hat{\pi}) - \tchl(\pi_{\blambda}^*) \nonumber\\
		&\qquad = \max_{\bw \in\Delta_m}  \sum_{i=1}^m w_i\lambda_i \left(V_{i,1}^*(s_1) + \iota - \frac{1}{K}\sum_{k=1}^K V_{i,1}^{\pi^k}(s_1)\right) - \max_{\bw \in\Delta_m}  \sum_{i=1}^m w_i\lambda_i (V_{i,1}^*(s_1) + \iota - V_{i,1}^{\pi_{\blambda}^*}(s_1))  \nonumber\\
		&\qquad = \max_{\bw \in\Delta_m}  \sum_{i=1}^m w_i\lambda_i \left(V_{i,1}^* + \iota - \frac{1}{K}\sum_{k=1}^K V_{i,1}^{\pi^k}(s_1)\right) -  \frac{1}{K}\sum_{k=1}^K \sum_{i=1}^m w_i^t\lambda_i (V_{i,1}^{\tilde{\nu}_i}(s_1) + \iota - V_{i,1}^{\pi^k}(s_1)) \nonumber\\
		&\qquad\quad + \frac{1}{K}\sum_{k=1}^K \sum_{i=1}^m w_i^t\lambda_i (V_{i,1}^{\tilde{\nu}_i}(s_1) + \iota - V_{i,1}^{\pi^k}(s_1))  - \max_{\bw \in\Delta_m}\sum_{i=1}^m w_i\lambda_i (V_{i,1}^*(s_1) + \iota - V_{i,1}^{\pi_{\blambda}^*}(s_1)),  
	\end{align*}
	where the first equality is by the definition of the output $\hat{\pi}$ such that $V_{i,1}^{\hat{\pi}}=\frac{1}{K}\sum_{k=1}^K V_{i,1}^{\pi^k}$. Following the analysis from \eqref{eq:tch-decomp} to \eqref{eq:tch-decomp-sum} in Section \ref{subsec:proof-tch}, we similarly obtain
	\begin{align}
		\begin{aligned}\label{eq:tch-decomp-1}
			&\tchl(\hat{\pi}) - \tchl(\pi_{\blambda}^*)\\
			&\quad  \leq \underbrace{\max_{\bw\in\Delta_m}  \sumi w_i\lambda_i (V_{i,1}^*(s_1)  -  V_{i,1}^{\tilde{\nu}_i}(s_1))}_{\text{Term(I)}} + \underbrace{\max_{\bw \in\Delta_m} \frac{1}{K}\sum_{k=1}^K \sum_{i=1}^m (w_i-w_i^k)\lambda_i  (V_{i,1}^{\tilde{\nu}_i}(s_1) + \iota - V_{i,1}^{\pi^k}(s_1))}_{\text{Term(II)}}\\
			&\qquad  \quad + \underbrace{\frac{1}{K}\sum_{k=1}^K\sum_{i=1}^m w_i^k\lambda_i (V_{i,1}^{\pi_{\blambda}^*}(s_1) - V_{i,1}^{\pi^k}(s_1))}_{\text{Term(III)}}.
		\end{aligned}
	\end{align}
	Then, we turn to bounding the above three terms respectively based on the updating rules in Algorithm \ref{alg:pure-explore} and Algorithm \ref{alg:exploit-tch}. We first assume that the transition and reward functions can be estimated by certain procedures such that the true transition and reward functions satisfy the following conditions of bounded estimation errors.

	Let $\hat\PP_h^t$, $\hat r_{i,h}^t$, and $\hat{\PP}_h$ be the estimated reward and transitions via some estimation procedures in Algorithm \ref{alg:pure-explore} and Algorithm \ref{alg:exploit-tch} respectively. Suppose that there exist $\Phi_h^t$, $\Psi_{i,h}^t$, and $\Phi_h$ such that for any $V:\cS\mapsto[0,H]$,
	\vspace{-0.2cm}
	\begin{align} 
			&|(\hat \PP_h^t-  \PP_h)V(s,a)| \leq \Phi_h^t(s,a), ~\forall i\in[m]\label{eq:tch-cond-11}\\
			&\|(\hat{\PP}_h - \PP_h)V(s,a)| \leq \Phi_h(s,a), \quad | \hat{r}_{i,h}(s,a) - r_{i,h}(s,a)| \leq \Psi_{i,h}(s,a). \label{eq:tch-cond-22}	
	\end{align}
	Next, we give the upper bounds of Term(I), Term(II), and Term(III). For Term(I), we have
	\begin{align*}
		\text{Term(I)}&=\max_{\bw\in\Delta_m}  \sumi w_i\lambda_i (V_{i,1}^*(s_1)  -  V_{i,1}^{\tilde{\nu}_i}(s_1)) \\
		&\leq\sumi \lambda_i (V_{i,1}^*(s_1) -  V_{i,1}^{\tilde{\nu}_i}(s_1))\\
		&= \sumi \lambda_i(V_{i,1}^*(s_1) - \tilde{V}_{i,1}(s_1)  + \tilde{V}_{i,1}(s_1)  -  V_{i,1}^{\tilde{\nu}_i}(s_1)),
	\end{align*}
	where we use $1\geq w_i\geq 0$ and $V_{i,1}^*(s_1)  \geq  V_{i,1}^{\tilde{\nu}_i}(s_1)$.
	For $V_{i,1}^*(s_1) - \tilde{V}_{i,1}(s_1)$, by Lemma \ref{lem:v-decomp-3}, we have
	\begin{align}
		\begin{aligned}\label{eq:opt-opt-2}
			&V_{i,1}^*(s_1) - \tilde{V}_{i,1}(s_1) \\
			&\qquad \leq \sum_{h=1}^H \EE_{\nu_i^*, \PP} [ \tilde{\varsigma}_{i,h}(s_h, a_h) \given s_1 ] + \sum_{h=1}^H \EE_{\nu_i^*, \PP} [ \langle \nu_{i,h}^*(\cdot | s_h)-\tilde{\nu}_{i,h}(\cdot | s_h), \tilde Q_{i,h}(s_h,\cdot) \rangle_{\cA}  \given s_1 ] \leq 0,
		\end{aligned}
	\end{align}
	where $\tilde\varsigma_{i,h}(s,a) = r_{i,h}(s,a) +  \PP_h \tilde V_{i,h+1}(s, a) - \tilde Q_{i,h}(s,a)$, and the second inequality is by Lemma \ref{lem:v-opt-3} and $\tilde{\nu}_{i,h}=\argmax_{\nu_{i,h}} \langle \nu_{i,h}(\cdot|s), \tilde Q_{i,h}(s,\cdot) \rangle_\cA$ in Algorithm \ref{alg:exploit-tch} such that we have $\sum_{h=1}^H \EE_{\nu_i^*, \PP} [ \tilde{\varsigma}_{i,h}(s_h, a_h) \given s_1 ] \leq 0$, and $\sum_{h=1}^H \EE_{\nu_i^*, \PP} [ \langle \nu_{i,h}^*(\cdot | s_h)-\tilde{\nu}_{i,h}(\cdot | s_h), \tilde Q_{i,h}(s_h,\cdot) \rangle_{\cA}  \given s_1 ] \leq 0$.
	Thus we obtain
	\begin{align}
		\begin{aligned}\label{eq:I-last-2}
			\text{Term(I)}&\leq \sumi \lambda_i (\tilde{V}_{i,1}(s_1)  -  V_{i,1}^{\tilde{\nu}_i}(s_1)).
		\end{aligned} 
	\end{align}
	Next, we bound Term(II) as follows
	\begin{align*}
		&\text{Term(II)}\\
		& = \max_{\bw \in\Delta_m} \frac{1}{K}\sum_{k=1}^K \sum_{i=1}^m (w_i-w_i^k)\lambda_i  (V_{i,1}^{\tilde{\nu}_i}(s_1) - \tilde V_{i,1}(s_1) + \tilde V_{i,1}(s_1)+ \iota - V_{i,1}^{\pi^k}(s_1) + V_{i,1}^k(s_1) - V_{i,1}^k(s_1)) \\
		&\leq  \max_{\bw\in\Delta_m}\frac{1}{K}\sumk (\bw-\bw^k)^\top [ \blambda \odot (\tilde{\bV}_1(s_1)+\biota -\bV_1^k(s_1))]\\
		&\quad  + \max_{\bw \in\Delta_m} \frac{1}{K}\sum_{k=1}^K \sum_{i=1}^m w_i\lambda_i  (V_{i,1}^{\tilde{\nu}_i}(s_1) - \tilde V_{i,1}(s_1)  - V_{i,1}^{\pi^k}(s_1) + V_{i,1}^k(s_1))  \\
		&\quad - \frac{1}{K}\sum_{k=1}^K \sum_{i=1}^m w_i^k\lambda_i (V_{i,1}^{\tilde{\nu}_i}(s_1) - \tilde V_{i,1}(s_1)  - V_{i,1}^{\pi^k}(s_1) + V_{i,1}^k(s_1)),
	\end{align*}
	where the inequality is due to the fact that $\max$ is a convex function. By Lemma \ref{lem:oco-2}, we have
	\begin{align*}
		\max_{\bw\in\Delta_m}\frac{1}{K}\sumk (\bw-\bw^k)^\top [ \blambda \odot (\tilde{\bV}_1(s_1)+\biota -\bV_1^k(s_1))]\leq \frac{6H\log^{\frac{1}{2}}m}{\sqrt{K}}.
	\end{align*}
	Moreover, by Lemma \ref{lem:online-err-3}, we further obtain that
	\begin{align*}
		&\max_{\bw \in\Delta_m} \frac{1}{K}\sum_{k=1}^K \sum_{i=1}^m w_i\lambda_i  (V_{i,1}^{\tilde{\nu}_i} - \tilde V_{i,1}  - V_{i,1}^{\pi^k} + V_{i,1}^k) - \frac{1}{K}\sum_{k=1}^K \sum_{i=1}^m w_i^k\lambda_i (V_{i,1}^{\tilde{\nu}_i} - \tilde V_{i,1}  - V_{i,1}^{\pi^k} + V_{i,1}^k)\\
		&\qquad \leq \max_{\bw \in\Delta_m} \frac{1}{K}\sum_{k=1}^K \sum_{i=1}^m w_i\lambda_i  ( V_{i,1}^k - V_{i,1}^{\pi^k}) + \frac{1}{K}\sum_{k=1}^K \sum_{i=1}^m w_i^k\lambda_i (\tilde V_{i,1}-V_{i,1}^{\tilde{\nu}_i})\\
		&\qquad \leq \frac{1}{K}\sum_{k=1}^K \sum_{i=1}^m \lambda_i  ( V_{i,1}^k - V_{i,1}^{\pi^k}) +  \sum_{i=1}^m \lambda_i (\tilde V_{i,1}-V_{i,1}^{\tilde{\nu}_i}),
	\end{align*}
	where the first and the second inequalities are by \eqref{eq:opt-opt-2} such that $V_{i,1}^{\tilde{\nu}_i} - \tilde V_{i,1} \leq V_{i,1}^* - \tilde V_{i,1} \leq 0$ and also due to $0\leq w_i, w_i^k\leq 1$ and Lemmas \ref{lem:v-decomp-3} and \ref{lem:v-opt-3} such that
	\begin{align}
		\begin{aligned} \label{eq:inv-opt-pi-3}
			&V_{i,1}^{\pi^k}(s_1) - V_{i,1}^k(s_1) \\
			&\qquad =  \sum_{h=1}^H \EE_{\pi^k, \PP} [ \varsigma_{i,h}^k(s_h, a_h) \given s_1 ] + \sum_{h=1}^H \EE_{\pi^k, \PP} [ \langle \pi_h^k(\cdot | s_h)-\pi_h^k(\cdot | s_h), Q_{i,h}^k(s_h,\cdot) \rangle_{\cA}  \given s_1 ] \leq 0,
		\end{aligned}
	\end{align}	
	where $\varsigma_{i,h}^k(s,a) = r_{i,h}(s,a) +  \PP_h V_{i,h+1}^k(s, a) - Q_{i,h}^k(s,a)$.
	Combining the above results gives
	\begin{align}
		\text{Term(II)}& \leq \frac{1}{K}\sum_{k=1}^K \sum_{i=1}^m \lambda_i  ( V_{i,1}^k(s_1) - V_{i,1}^{\pi^k}(s_1)) +  \sum_{i=1}^m \lambda_i (\tilde V_{i,1}(s_1)-V_{i,1}^{\tilde{\nu}_i}(s_1)) + \frac{6H\log^{\frac{1}{2}}m}{\sqrt{K}}. \label{eq:II-last-2}
	\end{align}
	Finally, we will bound Term(III) as follows,
	\begin{align*}
		\text{Term(III)}
		&= \frac{1}{K}\sum_{k=1}^K\sum_{i=1}^m w_i^k\lambda_i (V_{i,1}^{\pi_{\blambda}^*}(s_1) - V_{i,1}^k(s_1) + V_{i,1}^k(s_1) - V_{i,1}^{\pi^k}(s_1))\\
		&= \frac{1}{K}\sum_{k=1}^K\sum_{i=1}^m w_i^k\lambda_i (V_{i,1}^{\pi_{\blambda}^*}(s_1) - V_{i,1}^k(s_1)) + \frac{1}{K}\sum_{k=1}^K\sum_{i=1}^m w_i^k\lambda_i (V_{i,1}^k(s_1) - V_{i,1}^{\pi^k}(s_1))\\
		&\leq \frac{1}{K}\sum_{k=1}^K\sum_{i=1}^m w_i^k\lambda_i (V_{i,1}^{\pi_{\blambda}^*}(s_1) - V_{i,1}^k(s_1)) + \frac{1}{K}\sum_{k=1}^K\sum_{i=1}^m \lambda_i (V_{i,1}^k(s_1) - V_{i,1}^{\pi^k}(s_1)),
	\end{align*}
	where the last inequality is by \eqref{eq:inv-opt-pi-3} and $w_i^k\in[0,1]$. For $\frac{1}{K}\sum_{k=1}^K\sum_{i=1}^m w_i^k\lambda_i (V_{i,1}^{\pi_{\blambda}^*}(s_1) - V_{i,1}^k(s_1))$, using Lemma \ref{lem:v-decomp-3} and optimism as in Lemma \ref{lem:v-opt-3}, we have
	\vspace{-0.1cm}
	\begin{align*}
		&\frac{1}{K}\sum_{k=1}^K\sum_{i=1}^m w_i^k\lambda_i (V_{i,1}^{\pi_{\blambda}^*}(s_1) - V_{i,1}^k(s_1)) \\
		&\qquad\leq 	\frac{1}{K}\sum_{k=1}^K\sum_{i=1}^m w_i^k\lambda_i \sum_{h=1}^H \EE_{\pi_{\blambda}^*, \PP} [ \varsigma_{i,h}^k(s_h, a_h) \given s_1 ] \\
		&\qquad\quad + \frac{1}{K}\sum_{k=1}^K\sum_{i=1}^m w_i^k\lambda_i\sum_{h=1}^H \EE_{\pi_{\blambda}^*, \PP} [ \langle \pi_{\blambda}^*(\cdot | s_h)-\pi_h^k(\cdot | s_h), Q_{i,h}^k(s_h,\cdot) \rangle_{\cA}  \given s_1 ]\\
		&\qquad\leq \frac{1}{K}\sum_{k=1}^K\sum_{h=1}^H \EE_{\pi_{\blambda}^*, \PP} [ \langle \pi_{\blambda}^*(\cdot | s_h)-\pi_h^k(\cdot | s_h), (\bw^k \odot \blambda)^\top\bQ_h^k(s_h,\cdot) \rangle_{\cA}  \given s_1 ]\leq 0,
	\end{align*}
	where the last inequality is due to the updating rule of $\pi^k$ in Algorithm \ref{alg:exploit-tch},i.e.,  $\pi_h^k=\argmax_{\pi_h}\langle (\bw^k\odot\blambda)^\top \bQ_h^k(\cdot, \cdot), \pi_h(\cdot|\cdot) \rangle_\cA$. Combining the above results for Term(III), we have
	\vspace{-0.1cm}
	\begin{align}
		\begin{aligned}\label{eq:III-last-2}
			\text{Term(III)} \leq \frac{1}{K}\sum_{k=1}^K\sum_{i=1}^m \lambda_i (V_{i,1}^k(s_1) - V_{i,1}^{\pi^k}(s_1)).
		\end{aligned}
	\end{align}
	Further combining \eqref{eq:I-last-2},\eqref{eq:II-last-2},  \eqref{eq:III-last-2}, and \eqref{eq:tch-decomp-1}, we have obtain
	\vspace{-0.1cm}
	\begin{align}
		\begin{aligned}
		\label{eq:tch-decomp-final-2}
		\tchl(\hat{\pi}) - \tchl(\pi_{\blambda}^*) &\leq  \frac{2}{K}\sum_{i=1}^m \sum_{k=1}^K  \lambda_i  ( V_{i,1}^k(s_1) - V_{i,1}^{\pi^k}(s_1)) \\
		&\quad +  2\sum_{i=1}^m \lambda_i (\tilde V_{i,1}(s_1)-V_{i,1}^{\tilde{\nu}_i}(s_1)) + \frac{6H\log^{\frac{1}{2}}m}{\sqrt{K}}.
	\end{aligned}
	\end{align}
	Next, we will bound the terms $1/K\cdot \sum_{i=1}^m \sum_{k=1}^K  \lambda_i  ( V_{i,1}^k - V_{i,1}^{\pi^k})$ and $\sum_{i=1}^m \lambda_i (\tilde V_{i,1}-V_{i,1}^{\tilde{\nu}_i})$ following the updating rules of the exploration phase in Algorithm \ref{alg:pure-explore}. Similar to \eqref{eq:inv-opt-pi-3}, by Lemma \ref{lem:v-decomp-3}, we have
	\vspace{-0.1cm}
	\begin{align*}
		\frac{1}{K}\sum_{i=1}^m \sum_{k=1}^K  \lambda_i  ( V_{i,1}^k(s_1) - V_{i,1}^{\pi^k}(s_1)) &=\frac{1}{K}\sumi \lambda_i \sumk\sum_{h=1}^H \EE_{\pi^k, \PP} [ -\varsigma_{i,h}^k(s_h, a_h) \given s_1 ] \\
		&\leq  2\sumi \lambda_i\max_{\pi\in\Pi} \EE_{\pi, \PP} \sumh[ \Phi_h(s,a)+\Psi_{i,h}(s,a)],
	\end{align*}
	where the last inequality is by Lemma \ref{lem:pred-err-bound-3}. In addition, by Lemma \ref{lem:v-decomp-3} and \ref{lem:pred-err-bound-3}, we also have
	\begin{align*}
		\sumi   \lambda_i  ( \tilde V_{i,1} - V_{i,1}^{\tilde\nu_i}) &=\sumi \lambda_i \sum_{h=1}^H \EE_{\tilde\nu_i, \PP} [ -\tilde\varsigma_{i,h}(s_h, a_h) \given s_1 ] \\
		&\leq  2\sumi \lambda_i\EE_{\tilde\nu_i, \PP} \left[\sum_{h=1}^H\big( \Phi_h(s_h,a_h)+\Psi_{i,h}(s_h,a_h)\big)\right]\\
		&\leq  2\sumi \lambda_i\max_{\pi\in\Pi} \EE_{\pi, \PP} \left[\sum_{h=1}^H\big( \Phi_h(s_h,a_h)+\Psi_{i,h}(s_h,a_h)\big)\right].
	\end{align*}
	Therefore, further with \eqref{eq:tch-decomp-final-2}, we have
	\vspace{-0.1cm}
	\begin{align*}
		\tchl(\hat{\pi}) - \tchl(\pi_{\blambda}^*) \leq  8\sumi \lambda_i\max_{\pi\in\Pi} \EE_{\pi, \PP} \left[\sum_{h=1}^H\big( \Phi_h(s_h,a_h)+\Psi_{i,h}(s_h,a_h)\big)\right] + \frac{6H\log^{\frac{1}{2}}m}{\sqrt{K}}.
	\end{align*}
	Employing Lemma \ref{lem:bonus-cmp-3} and Lemma \ref{lem:online-err-3}, we have with probability at least $1-\delta$,
	\vspace{-0.1cm}
	\begin{align*}
		&\max_{\pi\in\Pi} \EE_{\pi, \PP} \left[\sum_{h=1}^H\big( \Phi_h(s_h,a_h)+\Psi_{i,h}(s_h,a_h)\big)\right] \\
		&\qquad \leq \frac{1}{T} \sumt \max_{\pi\in\Pi} \EE_{\pi, \PP}\max_{\pi\in\Pi} \left[\sum_{h=1}^H\big( \Phi_h^t(s_h,a_h)+\Psi_{i,h}^t(s_h,a_h)\big)\right]\\
		&\qquad \leq \frac{2H}{T} \sumt \overline V_1^t(s_1) \leq\cO\bigg(\sqrt{\frac{H^5  \log (1/\delta')}{T}} \bigg)  + \frac{2H}{T} \sumt\sumh [\overline r_h^t(s_h^t, a_h^t) +\Phi_h^t(s_h^t, a_h^t)],
	\end{align*}
	which further gives
	\vspace{-0.1cm}
	\begin{align*}
		\tchl(\hat{\pi}) - \tchl(\pi_{\blambda}^*) \leq  \cO\bigg(\sqrt{\frac{H^5  \log (1/\delta')}{T}} + \sqrt{\frac{H^2\log m }{K}} \bigg)  + \frac{16H}{T} \sumt\sumh [\overline r_h^t(s_h^t, a_h^t) +\Phi_h^t(s_h^t, a_h^t)].
	\end{align*}
	According to Algorithm \ref{alg:pure-explore}, we know that $\overline r^t_h(\cdot,\cdot)=\max\{\Phi_h^t(\cdot,\cdot)/H, \Psi_{1,h}^t(\cdot,\cdot), \Psi_{2,h}^t(\cdot,\cdot), \cdot,\Psi_{m,h}^t(\cdot,\cdot)\}$ in option [I]  for the exploration reward construction. Moreover, by the definition of the bonus terms as in Line \ref{line:pf-estimate} of Algorithm \ref{alg:pure-explore}, we know that $r^t_h \leq \Phi_h^t(\cdot,\cdot)$. Thus, we further have
	\vspace{-0.1cm}
	\begin{align*}
		\tchl(\hat{\pi}) - \tchl(\pi_{\blambda}^*) \leq  \cO\bigg(\sqrt{\frac{H^5  \log (1/\delta')}{T}} + \sqrt{\frac{H^2\log m}{K}} \bigg)  + \frac{32H}{T} \sumt\sumh [\Phi_h^t(s_h^t, a_h^t)].
	\end{align*}
	By Lemma \ref{lem:concentrate-2}, we notice that the conditions \eqref{eq:tch-cond-11} and \eqref{eq:tch-cond-22} hold for the estimation methods applied in Algorithm \ref{alg:pure-explore} and Algorithm \ref{alg:exploit-tch} with probability at least $1-\delta'$. Therefore, further applying Lemma \ref{lem:sum-bonus-3} and by union bound, we have with probability at least $1-\delta$,
	\begin{align*}
			\tchl(\hat{\pi}) - \tchl(\pi_{\blambda}^*) &\leq  \cO\bigg( \sqrt{\frac{H^2\log m}{K}} \bigg)  + \cO \bigg( \sqrt{\frac{ H^6 |\cS|^2 |\cA| \log (m|\cS| |\cA|  HT/\delta)}{T}} \bigg).
	\end{align*}
	We now analyze the case that we adopt $\overline r^t_h(\cdot,\cdot)=\Phi_h^t(\cdot,\cdot)/H+ \sumi \Psi_{i,h}^t(\cdot,\cdot)$ in option [II]. Applying Lemmas \ref{lem:sum-bonus-3} and \ref{lem:concentrate-2}, following the above result, we have with probability at least $1-\delta$,
	\begin{align*}
		&\tchl(\hat{\pi}) - \tchl(\pi_{\blambda}^*) \\
		&\qquad\leq  \cO\bigg(\sqrt{\frac{H^5  \log (1/\delta')}{T}} + \sqrt{\frac{H^2\log m}{K}} \bigg)  + \frac{32H}{T} \sumt\sumh [2\Phi_h^t(s_h^t, a_h^t) + \sumi \Psi_{i,h}^t(s_h^t, a_h^t)]\\
		&\qquad\leq  \cO\bigg( \sqrt{\frac{H^2\log m}{K}} \bigg)  + \cO \bigg( \sqrt{\frac{ (H^2|\cS|+m) H^4 |\cS| |\cA| \log (m|\cS| |\cA|  HT/\delta)}{T}} \bigg).
	\end{align*}
	This completes the proof of this theorem.
\end{proof}

\section{Proofs for Section \ref{sec:stch}}

\subsection{Proof of Proposition \ref{prop:stch}} \label{sec:pr-prop-stch}

\begin{proof} 
	In this proof, we first show that for any policy class $\Pi$ (either stochastic or deterministic), we have $\{\pi~|~\pi\in\argmin_{\pi\in\Pi} \stchlm(\pi), \forall \blambda \in\Delta_m\}\subseteq \Pi_{\mathrm{W}}^*$ and $\{\pi~|~\pi\in\argmin_{\pi\in\Pi} \stchlm(\pi), \forall \blambda \in\Delta_m^o\} \subseteq \Pi_{\mathrm{P}}^*$ for $\mu>0$. In addition, we prove that for a deterministic policy class $\Pi$, there exists $\mu^*>0$ such that for $0<\mu<\mu^*$, $\Pi_{\mathrm{P}}^* \subseteq \{\pi~|~\pi\in\argmin_{\pi\in\Pi} \stchlm(\pi), \forall \blambda \in\Delta_m^o\}$.
	Then, we show that for a stochastic policy class $\Pi$, we have $\Pi_{\mathrm{W}}^* \subseteq \{\pi~|~\pi\in\argmin_{\pi\in\Pi} \stchlm(\pi), \forall \blambda \in\Delta_m\}$ and $\Pi_{\mathrm{P}}^* \subseteq \{\pi~|~\pi\in\argmin_{\pi\in\Pi} \stchlm(\pi), \forall \blambda \in\Delta_m^o\}$ for any $\mu>0$.  Finally, we provide an example to show that a Pareto optimal policy may not be the solution to $\min_{\pi\in\Pi} \stchlm(\pi)$ for any $\blambda \in\Delta_m$ and $\mu>0$ when $\Pi$ is a deterministic policy class.

\vspace{5pt}
\noindent\textbf{Part 1)} Prove $\{\pi~|~\pi\in\argmin_{\pi\in\Pi} \stchlm(\pi), \forall \blambda \in\Delta_m\}\subseteq \Pi_{\mathrm{W}}^*$ and $\{\pi~|~\pi\in\argmin_{\pi\in\Pi} \stchlm(\pi), \allowbreak \forall \blambda \in\Delta_m^o\} \subseteq \Pi_{\mathrm{P}}^*$ with $\mu>0$ for any policy class $\Pi$ (either stochastic or deterministic). The proof of this claim can be immediately obtained by the prior work \citet{lin2024smooth}.

\vspace{5pt}
\noindent \textbf{Part 2)} Prove that for a deterministic policy class $\Pi$, there exists $\mu^*>0$ such that for $0<\mu<\mu^*$, $\Pi_{\mathrm{P}}^* \subseteq \{\pi~|~\pi\in\argmin_{\pi\in\Pi} \stchlm(\pi), \forall \blambda \in\Delta_m^o\}$. We need to show that if a policy is Pareto optimal, there always exists a $\blambda$ such that this policy is a solution to $\pi\in\min_{\pi\in\Pi} \stchlm(\pi)$.

For any $\pi\in\Pi_{\mathrm{P}}^*$, we define its associated $\blambda$ as $\lambda_i:=\frac{(V_{i,1}^*(s_1)+\tau - V_{i,1}^\pi(s_1))^{-1}}{\sum_{j=1}^m (V_{j,1}^*(s_1)+\tau - V_{j,1}^\pi(s_1))^{-1}}>0$. Now we let $\breve\pi$ be an arbitrary policy in $\Pi_{\mathrm{P}}^*$. We need to find a problem-dependent $\mu^*$ such that when $\mu< \mu^*$, $\breve\pi$ is a solution to $\min_{\pi\in\Pi}\stchlm(\pi)$ with $\blambda$ defined above.

We have $\stchlm(\breve\pi)=\mu\log [m\exp(1/(\mu\sum_{j=1}^m (V_{j,1}^*(s_1)+\tau - V_{j,1}^{\breve\pi}(s_1))^{-1}))]$. For any other $\tilde\pi\neq\breve\pi$, to ensure $\stchlm(\breve\pi)\leq\stchlm(\tilde\pi)$, according to \eqref{eq:stch-err} that $\stchlm(\pi) - \mu \log m \leq  \tchl(\pi) \leq  \stchlm(\pi)$,
we require
\begin{align*}
	\stchlm(\breve\pi)\leq\stchlm(\tilde\pi),
\end{align*}
which can be guaranteed by 
\begin{align*}
	 \tchl(\breve \pi) + \mu \log m \leq \tchl(\tilde \pi).
\end{align*}
According to Proposition \ref{prop:tch-unique}, we have that $\breve\pi$ is the optimal solution to $\min_{\pi\in\Pi} \tchl(\pi)$ with $\blambda$ defined as above. We let $\tilde\pi$ be a policy such that $V_{i,1}^{\tilde{\pi}}(s_1) \neq V_{i,1}^{\tilde{\pi}}(s_1)$ for some $i\in[m]$ and $\stchlm(\tilde\pi)$ is the minimal value for policies satisfying the above condition. (If such a policy does not exist, then all policies lead to the same function value as $\stchlm(\breve\pi)$.) Thus, we can set $\mu^*=(\tchl(\tilde \pi)-\tchl(\breve \pi))/\log m$. This completes the proof of \textbf{Part 2)}.

\setlength{\abovecaptionskip}{-3pt}
\setlength{\belowcaptionskip}{-15pt}
\begin{figure}[t!] 
		\centering
		\includegraphics[height=2.2in]{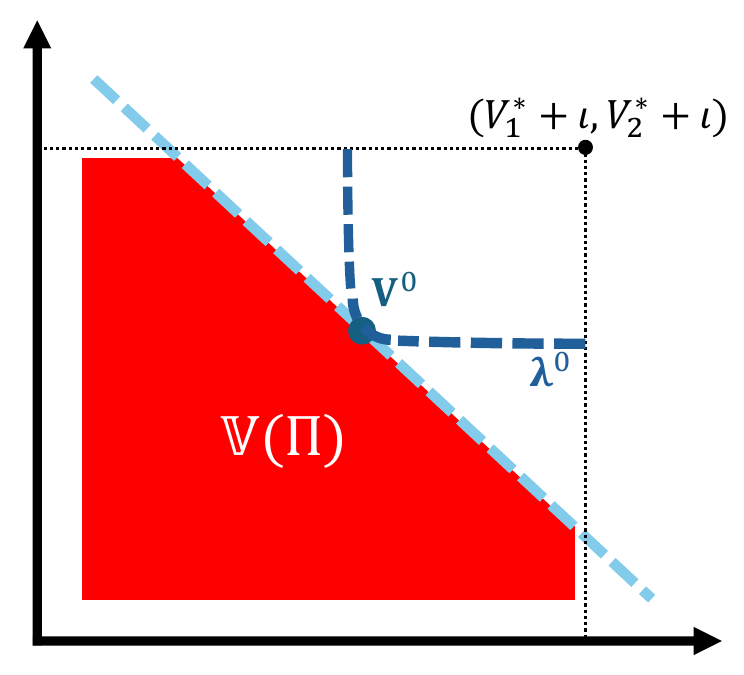}
	\caption{\small Illustration of \textbf{Part 3)}}
	\label{fig:proof2}
\end{figure}

\vspace{5pt}
\noindent \textbf{Part 3)} Prove that for a stochastic policy class $\Pi$, we have $\Pi_{\mathrm{W}}^* \subseteq \{\pi~|~\pi\in\argmin_{\pi\in\Pi} \stchlm(\pi), \forall \blambda \in\Delta_m\}$ and $\Pi_{\mathrm{P}}^* \subseteq \{\pi~|~\pi\in\argmin_{\pi\in\Pi} \stchlm(\pi), \forall \blambda \in\Delta_m^o\}$ for any $\mu>0$. According to Proposition \ref{prop:geometry}, we know that $\mathbb V(\Pi)$ is a convex polytope when $\Pi$ is a stochastic policy class.

As shown in the \textbf{Part 2)} of the proof of Proposition \ref{prop:linear-comb}, when $\mathbb V(\Pi)$ is a convex polytope, for any $\bm V^0$ on the Pareto front, one can find a hyperplane $\{\bm V\in\mathbb R^m|\sumi a_i V_i = C\}$ with $a_i>0, C > 0$ satisfying $\sumi a_i V^0_i = C$ and $\mathbb V(\Pi)\subseteq \{\bm V\in\mathbb R^m|\sumi a_i V_i \le C\}$. With a slight abuse of notation, we view $\stchlm$ as a function of $\bm V$ and denote $\stchlm(\bm V):=\mu \log(\sum_{i=1}^m \exp(\lambda_i(V_i^*+\iota-V_i)/\mu))$. For any $\mu>0$, if we can find positive $\lambda_i$'s such that $\bm V^0 \in \argmin_{\bV: \sumi a_i V_i =C}\stchlm(\bm V)$, then 
\begin{align*}
	\min_{\bm V\in \mathbb V(\Pi)}\stchlm(\bm V)\le \stchlm(\bm V^0) &= \min_{\bm V:\sumi a_i V_i = C}\stchlm(\bm V)\\
	&= \min_{\bm V:\sumi a_i V_i \le C}\stchlm(\bm V)\le \min_{\bm V\in \mathbb V(\Pi)}\stchlm(\bm V),
\end{align*}
which implies that $\bm V^0\in\argmin_{\bm V\in \mathbb V(\Pi)}\mathrm{STCH}_{\blambda}^\mu(\bm V^0)$. This leads to the desired result.
	
Consider the problem $\min_{\bm V:\sumi a_i V_i =C}\mathrm{STCH}_{\blambda}^\mu(\bm V)$. It is equivalent to  
\begin{align*}
\min_{\bm V:\sumi a_i V_i =C}\sumi \exp(\lambda_i(V_i^*+\iota-V_i)/\mu),	
\end{align*}
since $\log$ is a strictly increasing function. The method of Lagrange multipliers leads to the conditions 
\begin{align}\label{lemma:lambda:eqn1}
	\lambda_i \exp(\lambda_i(V_i^*+\iota-V_i)/\mu) = \mu w a_i,
\end{align}
where $w$ is the Lagrange multiplier. The LHS of \eqref{lemma:lambda:eqn1} is an increasing function of $\lambda_i$ which goes to $0$ when $\lambda_i \to 0+$ and tends to $\infty$ when $\lambda_i\to \infty$. Therefore, for $w>0$, as $a_i>0$, we can always find $\lambda_i^0$'s such that $ \lambda_i^0 \exp(\lambda_i^0(V_i^*+\iota-V_i^0)/\mu) = \mu w a_i$. Then, for $\blambda^0 = (\lambda_1^0,\dots,\lambda_m^0)$, we have $\bm V^0=\argmin_{\bm V\in \mathbb V(\Pi)}\mathrm{STCH}_{\blambda^0}^\mu(\bm V^0)$. One can adjust $w$ to make sure that $\sum_i\lambda_i^0=1$. The above construction is illustrated in Figure \ref{fig:proof2}. This completes the proof for $\Pi_{\mathrm{P}}^* \subseteq \{\pi~|~\pi\in\argmin_{\pi\in\Pi} \stchlm(\pi), \forall \blambda \in\Delta_m^o\}$. 
	
For any weak Pareto optimal policy, the only difference is that some $a_i$ might be 0. Set the corresponding $\lambda_i$ to be 0 completes the proof for $\Pi_{\mathrm{W}}^* \subseteq \{\pi~|~\pi\in\argmin_{\pi\in\Pi} \stchlm(\pi), \forall \blambda \in\Delta_m\}$.

\vspace{5pt}
\noindent \textbf{Part 4)}
Now we use a concrete example to illustrate that for a deterministic policy class $\Pi$, a weakly Pareto optimal policy may not be the solution of $\min_{\pi\in\Pi} \stchlm(\pi)$ for any $\blambda \in\Delta_m$. We consider a multi-objective multi-arm bandit problem, a simple and special MOMDP whose state space size $|\cS|=1$, episode length $H=1$, with a deterministic policy. Here we assume this bandit problem has $m=2$ reward functions $r_1$ and $r_2$ and $|\cA|=3$ actions. We define
\begin{align*}
	&r_1(a_1)=1, \quad r_1(a_2)=0.5,\quad r_1(a_3)=0.5,\\
	&r_2(a_1)=0.5, \quad r_2(a_2)=1,\quad r_2(a_3)=0.5,
\end{align*}
which are the reward values of the three actions for each reward function. Via the definition of the (weakly) Pareto optimal policy, we can find that $a_1$, $a_2$ are the Pareto optimal arms and that $a_3$ is a weakly Pareto optimal arm but not a Pareto optimal arm. Then, we have
\begin{align*}
	&\stchlm(a_1) = \mu\log (e^{\iota\lambda_1/\mu}+e^{(0.5+\iota)\lambda_2/\mu}),\\
	&\stchlm(a_2) = \mu\log (e^{(0.5+\iota)\lambda_1/\mu}+e^{\iota\lambda_2/\mu}),\\
	&\stchlm(a_3) = \mu\log (e^{(0.5+\iota)\lambda_1/\mu}+e^{(0.5+\iota)\lambda_2/\mu}),
\end{align*}
which leads to \textbf{(1)} $\stchlm(a_3)>\stchlm(a_1)$ and $\stchlm(a_3)>\stchlm(a_2)$ when $\lambda_1>0$ and $\lambda_2>0$, \textbf{(2)} $\stchlm(a_3)>\stchlm(a_2)$ when $\lambda_1=0$ and $\lambda_2=1$, and \textbf{(3)} $\stchlm(a_3)>\stchlm(a_1)$ when $\lambda_1=1$ and $\lambda_2=0$. This implies that $a_3$ is not a minimizer of  $\stchlm(\pi)$ for any $\blambda \in\Delta_m$ and $\mu>0$.
The proof of this proposition is completed.	
\end{proof}

\subsection{Proof of Proposition \ref{prop:stch-unique}}
\begin{proof}
	According to Proposition \ref{prop:stch}, when $\lambda_i>0$, the solution to $\min_{\pi\in\Pi}\stchlm(\pi)$ is Pareto optimal. 
	We now prove that all Pareto optimal policies associated with the same $\blambda$ have the same values on all objectives. 
	
	For the case where $\Pi$ is a stochastic policy class, assume that $\tilde\pi$ and $\breve\pi$ are two solutions to $\min_{\pi\in\Pi}\stchlm(\pi)$. Assume that there exists $i$ such that $V_{i,1}^{\tilde\pi}(s_1) \neq V_{i,1}^{\breve\pi}(s_1)$. We let $\mathbb{V}(\Pi)$ be the set of $(V_{1,1}^{\pi}(s_1),\cdots,V_{m,1}^{\pi}(s_1))$	for all $\pi\in\Pi$. Then we have $\frac{1}{2} [(V_{1,1}^{\breve\pi}(s_1),\cdots,V_{m,1}^{\breve\pi}(s_1))+ (V_{1,1}^{\tilde\pi}(s_1),\cdots,V_{m,1}^{\tilde\pi}(s_1))]\in\mathbb{V}(\Pi)$ since $\mathbb{V}(\Pi)$ is convex, which implies that there exists a policy $\pi'$ such that 
	\begin{align*}
	V_{i,1}^{\pi'}(s_1) = \frac{V_{1,1}^{\breve\pi}(s_1) + V_{1,1}^{\tilde\pi}(s_1)}{2}.	
	\end{align*}
	On the other hand, the log-sum-exp function $\log \sum_{i=1}^m \exp x_i$ is strictly convex along any direction except for the direction of $(1,1,\cdots,1)$. We note that if $\left(\frac{\lambda_1(V_{1,1}^*(s_1)+\tau-V_{1,1}^{\breve\pi}(s_1))}{\mu} ,\cdots,\frac{\lambda_m(V_{m,1}^*(s_1)+\tau-V_{m,1}^{\breve\pi}(s_1))}{\mu}\right ) \allowbreak - \left(\frac{\lambda_1(V_{1,1}^*(s_1)+\tau-V_{1,1}^{\tilde\pi}(s_1))}{\mu} ,\cdots,\frac{\lambda_m(V_{m,1}^*(s_1)+\tau-V_{m,1}^{\tilde\pi}(s_1))}{\mu}\right ) = \alpha \cdot  (1,1,\cdots,1)$ for some $\alpha \neq 0$, it contradicts that both $\breve\pi$ and $\tilde\pi$ are Pareto optimal. Therefore, for policies $\tilde\pi$ and $\breve\pi$ that do not satisfy the above condition, we can apply the strict convexity and obtain that
	\begin{align*}
	\stchlm(\pi') < \frac{ \stchlm(\breve\pi) + \stchlm(\tilde\pi)}{2} = \stchlm(\breve\pi),
	\end{align*}
	which additionally contradicts that $\breve\pi$ is a minimizer of $\stchlm(\pi)$. Thus, we must have $V_{i,1}^{\tilde\pi}(s_1) = V_{i,1}^{\breve\pi}(s_1)$ for any $i\in[m]$. This completes the first case.

	If $\Pi$ is a deterministic policy class, for any $0<\mu<\mu^*$, the Pareto optimal policies that are the solutions to $\min_{\pi\in\Pi}\stchlm(\pi)$ for all  $\blambda$ with $\lambda_i>0$ have the same values on all objectives.

	Next, we prove this proposition for the deterministic policy class. Following the proof in Section \ref{sec:pr-prop-stch}, when $\mu<\mu^*$, the optimal solution $\breve\pi$ to $\min_{\pi\in\Pi}\stchlm(\pi)$ with $\lambda_i:=\frac{(V_{i,1}^*(s_1)+\tau - V_{i,1}^{\breve\pi}(s_1))^{-1}}{\sum_{j=1}^m (V_{j,1}^*(s_1)+\tau - V_{j,1}^{\breve\pi}(s_1))^{-1}}$ has the same values $V_{i,1}^{\breve\pi}(s_1)$ for all $i\in[m]$ as any other optimal solutions to $\min_{\pi\in\Pi}\stchlm(\pi)$.		
	This completes the proof.
\end{proof}

\subsection{Proof of Proposition \ref{prop:eq-stch}}
\begin{proof} The equivalent form in this proposition is obtained by applying the Fenchel conjugate of the log-sum-exp function \citep{boyd2004convex}. Then, we will consider to solve the following problem for any $\btheta\in \RR^m$,
\begin{align}
    \max_{\bw \in\Delta_m} \langle \bw, \btheta\rangle-\sumi w_i\log w_i. \label{eq:eq-stch-init}
\end{align}
We note that $\langle \bw, \theta\rangle-\sumi w_i\log w_i$ is a concave function. The Lagrangian function of the above problem is 
\begin{align*}
    L(\bw;\upsilon,\bxi):=\langle \bw, \theta\rangle-\sumi w_i\log w_i + \upsilon \left(\sumi w_i - 1\right) + \sumi \xi_i w_i.
\end{align*}
Then, by KKT condition, we have
\begin{align*}
&\frac{\partial L(\bw^*;\upsilon^*,\bxi^*)}{\partial w_i} =  \theta_i - \log w_i^* - 1 +\upsilon^* + \xi_i^* = 0,\\
&\sumi w_i^* - 1 =0,\\
&\xi_i^* w_i^* =0, ~~~\xi_i^* \geq 0,
\end{align*}
where $(\bw^*,\upsilon^*,\bx^*)$ is the solution to $\min_{\upsilon,\bxi}\max_{\bw\in\Delta_m} L(\bw;\upsilon,\bxi)$. By the first two equations above, we obtain that $w_i^* = \frac{\exp(\theta_i + \xi_i^*)}{\sumi \exp(\theta_i + \xi_i^*)} > 0$, which, combined with the third equation $\xi_i^* w_i^* =0$, further implies $\xi_i^*=0$ such that 
\begin{align*}
    w_i^* = \frac{\exp(\theta_i)}{\sumi \exp(\theta_i)}.
\end{align*}
Therefore, plugging $\bw^*$ into \eqref{eq:eq-stch-init}, we have
\begin{align*}
    \max_{\bw \in\Delta_m} \langle \bw, \btheta\rangle-\sumi w_i\log w_i = \log\left[\sumi \exp(\theta_i)\right].
\end{align*}
Now we let $\theta_i = \frac{\lambda_i (V_{i,1}^*(s_1) + \iota - V_{i,1}^\pi(s_1))}{\mu}$. Then, multiplying both sides of the above equation by $\mu$ yields
\begin{align*}
   \stchlm(\pi)&=\mu\log\left[\sumi \exp\left(\frac{\lambda_i (V_{i,1}^*(s_1) + \iota - V_{i,1}^\pi(s_1))}{\mu}\right) \right] \\
   & =\max_{\bw \in\Delta_m} \sumi w_i \lambda_i (V_{i,1}^*(s_1) + \iota - V_{i,1}^\pi(s_1)) -\mu \sumi w_i\log w_i.
\end{align*}
Therefore, we eventually obtain that
\begin{align*}
    \min_{\pi\in\Pi} \stchlm(\pi)= \min_{\pi\in\Pi} \max_{\bw \in\Delta_m} \max_{\nu_i\in\Pi} \sumi w_i \lambda_i (V_{i,1}^{\nu_i}(s_1) + \iota - V_{i,1}^\pi(s_1)) -\mu \sumi w_i\log w_i.
\end{align*}
This completes the proof.
\end{proof}

\subsection{Lemmas for Theorem \ref{thm:stch} and Theorem \ref{thm:pf-stch}}

We note that the only differences between algorithms for Tchebycheff scalarization and algorithms for smooth Tchebycheff scalarization are the updating steps of  $\bw$. Therefore, to prove Theorem \ref{thm:stch} and Theorem \ref{thm:pf-stch}, we need to derive new lemmas for the updating steps of $\bw$ in Algorithm \ref{alg:online-stch} and Algorithm \ref{alg:exploit-stch}. We prove such lemmas in this section. Other lemmas for the proofs of Theorem \ref{thm:stch} and Theorem \ref{thm:pf-stch} remain the same as the ones in Section \ref{sec:lemma-online} and Section \ref{sec:lemma-pre-free}.

\begin{lemma}\label{lem:oco-stch} Setting $\eta_t = \frac{1}{\mu t}$ and $\alpha_t = \frac{1}{t^2}$ for $t\geq 1$, the updating rule of $\bw$ in Algorithm \ref{alg:online-stch} ensures
\begin{align*}
   & \max_{\bw \in\Delta_m} \frac{1}{T}\sumt  \left[\langle \bw-\bw^t, \blambda\odot (\tilde{\bV}_1^t(s_1) + \biota - \bV_1^t(s_1))\rangle -\mu  \langle \bw,\log \bw\rangle + \mu  \langle \bw^t,\log \bw^t\rangle  \right]\\
   &\qquad \leq \frac{18H^2\log T}{\mu T} + \frac{ 11\mu\log^3 ( mT)}{T},
\end{align*}
where we slightly abuse the logarithmic operator and let $\log \bw :=[\log w_1,\log w_2,\cdots,\log w_m]$ be an element-wise logarithmic operation for the vector $\bw$
\end{lemma}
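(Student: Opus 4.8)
The plan is to treat this as an online mirror descent / FTRL-type regret bound for the iterates $\bw^t$ produced by Line \ref{line:stch-w} of Algorithm \ref{alg:online-stch}, against the composite objective $f_t(\bw) = \langle \bw, \blambda\odot(\tilde\bV_1^t(s_1)+\biota-\bV_1^t(s_1))\rangle - \mu\langle \bw,\log\bw\rangle$, which is $\mu$-strongly concave in $\bw$ with respect to the KL/entropy geometry on $\Delta_m$. The key observation, following the standard analysis of strongly-convex online learning with a negative-entropy regularizer (cf. the cited \citet{wei2019online,qiu2023gradient}), is that the mixing step in Line \ref{line:stch-mix} — replacing $w_i^{t-1}$ by $\tilde w_i^{t-1} = (1-\alpha_{t-1})w_i^{t-1}+\alpha_{t-1}/m$ — forces $\tilde w_i^{t-1}\geq \alpha_{t-1}/m$, hence $|\log \tilde w_i^{t-1}|\leq \log(m/\alpha_{t-1}) = \log m + 2\log t$ with the choice $\alpha_{t-1} = 1/(t-1)^2$. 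This boundedness of $\log\tilde\bw^{t-1}$ is exactly what lets the $\mu\langle\bw,\log\bw\rangle$ term be handled as a bounded-gradient linear term inside the mirror-descent step, and it is the crux of getting an $\tilde\cO(1/(\mu T))$ rate rather than $\tilde\cO(1/\sqrt T)$.

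Concretely, I would first record that the update $w_i^t \propto (\tilde w_i^{t-1})^{1-\mu\eta_{t-1}}\exp\{\eta_{t-1}\lambda_i(\tilde V_{i,1}^{t-1}(s_1)+\iota - V_{i,1}^{t-1}(s_1))\}$ is the exact solution of the regularized maximization
\begin{align*}
\bw^t = \argmax_{\bw\in\Delta_m}\ \eta_{t-1}\big\langle \bw,\ \blambda\odot(\tilde\bV_1^{t-1}(s_1)+\biota-\bV_1^{t-1}(s_1)) - \mu\log\tilde\bw^{t-1}\big\rangle - D_{\mathrm{KL}}(\bw,\tilde\bw^{t-1}),
\end{align*}
as asserted in the main text. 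Applying the three-point (pushback) lemma (Lemma \ref{lem:pushback}) to this step, one gets for any comparator $\bw\in\Delta_m$ a per-round inequality of the form
\begin{align*}
\eta_{t-1}\big[f_{t-1}(\bw) - f_{t-1}(\bw^t)\big] \leq D_{\mathrm{KL}}(\bw,\tilde\bw^{t-1}) - D_{\mathrm{KL}}(\bw,\bw^t) - D_{\mathrm{KL}}(\bw^t,\tilde\bw^{t-1}) + (\text{linearization error}),
\end{align*}
where I must carefully split $f_{t-1}$ into its linear part and the $-\mu\langle\bw,\log\bw\rangle$ part: for the linear part I use $\langle\bw^t-\tilde\bw^{t-1},\cdot\rangle$ bounded by $\frac12\|\bw^t-\tilde\bw^{t-1}\|_1^2 + \frac{\eta_{t-1}^2}{2}\|\cdot\|_\infty^2$ with $\|\blambda\odot(\cdots)\|_\infty\leq 3H$ as in Lemma \ref{lem:oco}, absorbing the quadratic term against $-D_{\mathrm{KL}}(\bw^t,\tilde\bw^{t-1})\leq -\frac12\|\bw^t-\tilde\bw^{t-1}\|_1^2$ (Pinsker); for the entropic part I use the $\mu$-strong concavity to produce an extra $-\frac{\mu\eta_{t-1}}{2}\|\bw-\bw^t\|^2$ (or rather to convert the static regret into a telescoping one with coefficient shrinking like $1/\eta_t$), plus the bounded-gradient term $\mu\|\log\tilde\bw^{t-1}\|_\infty\lesssim \mu\log(mT)$ contributing $\eta_{t-1}\mu^2\log^2(mT)$ per round. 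Summing, with $\eta_{t-1}=1/(\mu(t-1))$, the strong-concavity-weighted KL terms telescope to $\cO(H^2\log T/\mu)$ after dividing by $T$, and the gradient-error terms sum to $\sum_t \eta_{t-1}\mu^2\log^2(mT)\lesssim \mu\log T\log^2(mT)/\mu \cdot$ wait — more precisely $\sum_{t}\frac{1}{\mu(t-1)}\mu^2\log^2(mT) = \mu\log^2(mT)\sum_t 1/(t-1) \lesssim \mu\log^3(mT)$, giving the $\mu\log^3(mT)/T$ term after division by $T$. I also need to control the discrepancy between comparing against $f_{t-1}$ at the mixed point $\tilde\bw^{t-1}$ versus $\bw^{t-1}$ (the mixing perturbs $\bw^{t-1}$ by $\ell_1$-distance $\lesssim\alpha_{t-1}=1/(t-1)^2$, which is summable and contributes only $\tilde\cO(1/T)$ after scaling by the $\cO(H)$ Lipschitz constant), and the shift of index from $t-1$ to $t$ in the value-function arguments, which matches exactly the algorithm's lagged updates.

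The main obstacle I anticipate is bookkeeping the interaction between the three moving pieces — the time-varying step size $\eta_{t-1}=1/(\mu(t-1))$, the mixing parameter $\alpha_{t-1}=1/(t-1)^2$, and the strong-concavity modulus $\mu$ — so that the telescoping of the $\frac{1}{\eta_{t-1}}D_{\mathrm{KL}}$ terms actually closes with the claimed constants ($18H^2\log T/(\mu T)$ and $11\mu\log^3(mT)/T$) rather than merely up to unspecified universal constants; in particular one must verify that $(1-\mu\eta_{t-1})$ stays nonnegative (i.e. $t-1\geq 1$, handled by $\eta_0=0$) so the update is well-defined, and that the $\mu$-strong-concavity bonus $-\frac{\mu}{2}\sum_t\|\bw-\bw^t\|^2$ is enough to cancel the growth of $\frac{1}{\eta_t}-\frac{1}{\eta_{t-1}}=\mu$ per step in the telescoped KL sum. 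A secondary subtlety is that $\|\log\tilde\bw^{t-1}\|_\infty$ bound requires the $1/m$ floor to survive the mirror step, i.e. it is $\tilde\bw^{t-1}$ and not $\bw^t$ that appears in the gradient of the regularizer, which is precisely why the mixing is inserted before the KL-prox; I would state this as a short preliminary claim. Everything else is the routine mirror-descent calculation already carried out in Lemma \ref{lem:oco}, specialized to the entropy-regularized objective.
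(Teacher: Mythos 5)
Your proposal follows essentially the same route as the paper's proof: recognize the update as a KL-prox step centered at the mixed iterate $\tilde\bw^{t-1}$, apply the pushback lemma, use the mixing floor $\tilde w_i^{t-1}\geq \alpha_{t-1}/m$ (via Lemma \ref{lem:mix}) to bound $\|\log\tilde\bw^{t-1}\|_\infty$ and to convert the entropic terms into a $(1-\mu\eta_t)$-weighted KL that telescopes under $\eta_t=1/(\mu t)$, and control the linearization error by Pinsker plus Cauchy–Schwarz with $\|\blambda\odot(\cdots)-\mu\log\tilde\bw^t\|_\infty\leq 3H+\mu\log(m/\alpha_t)$. Your accounting of the two resulting sums ($\tilde\cO(H^2\log T/(\mu T))$ and $\tilde\cO(\mu\log^3(mT)/T)$) matches the paper's.
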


\begin{proof} 
	The mirror ascent step at the $(t+1)$-th episode is equivalent to solving the following minimization problem
	\begin{align*}
		\minimize_{\bw\in\Delta_m} \quad &-\eta_t\langle \bw, \blambda \odot (\tilde{\bV}_1^t(s_1)+\biota -\bV_1^t(s_1))-\mu \log \tilde\bw^t \rangle +  D_{\mathrm{KL}}\big( \bw, \tilde\bw^t \big).
\end{align*}
	We let $\bw^{t+1}$ be the solution to the above optimization problem.
	Note that $\bw^{t+1}$ is guaranteed to stay in the relative interior of a probability simplex if we initialize $w_i^0 = 1 / m$. Thus, applying Lemma \ref{lem:pushback} gives
	\begin{align*}
		&\eta_t \langle \bw-\bw^{t+1}, \blambda \odot (\tilde{\bV}_1^t(s_1)+\biota -\bV_1^t(s_1)) -\mu \log \tilde \bw^t \rangle \\
		&\qquad \leq D_{\mathrm{KL}}\big( \bw, \tilde\bw^t \big) -  D_{\mathrm{KL}}\big( \bw, \bw^{t+1}\big) -  D_{\mathrm{KL}}\big( \bw^{t+1}, \tilde\bw^t \big),
	\end{align*}
	where $\bw$ is an arbitrary variable in $\Delta_m$. 
	Rearranging the terms leads to
	\begin{align}
		\begin{aligned} \label{eq:oco-11}
			&\eta_t \langle  \bw- \bw^t,  \blambda \odot (\tilde{\bV}_1^t(s_1)+\biota -\bV_1^t(s_1)) \rangle - \eta_t\mu\langle \bw, \log\bw\rangle +\eta_t \mu \langle \bw^t,  \log\bw^t \rangle   \\
			&\qquad \leq D_{\mathrm{KL}}\big( \bw, \tilde\bw^t\big) -  D_{\mathrm{KL}}\big( \bw, \bw^{t+1} \big) -  D_{\mathrm{KL}}\big( \bw^{t+1}, \tilde\bw^t \big) \\
			&\qquad \quad + \eta_t \langle  \bw^{t+1} -  \bw^t,  \blambda \odot (\tilde{\bV}_1^t(s_1)+\biota -\bV_1^t(s_1)) -\mu \log \tilde \bw^t  \rangle\\
            &\qquad \quad  - \eta_t\mu\langle \bw, \log\bw\rangle +\eta_t \mu \langle \bw^t,  \log\bw^t \rangle + \eta_t\mu \langle \bw - \bw^t,  \log\tilde \bw^t \rangle.
		\end{aligned}
	\end{align}
    For terms in RHS of \eqref{eq:oco-11}, by Lemma \ref{lem:mix} and the definition of KL divergence, we have
    \begin{align*}
    	&-D_{\mathrm{KL}}\big( \bw, \bw^{t+1}\big) \leq -D_{\mathrm{KL}}\big( \bw, \tilde \bw^{t+1}\big) + \alpha_t \log m,\\
    	&- \eta_t\mu\langle \bw, \log\bw\rangle +\eta_t \mu \langle \bw^t,  \log\bw^t \rangle + \eta_t\mu \langle \bw - \bw^t,  \log\tilde \bw^t \rangle \\
    	&\qquad = - \eta_t\mu D_{\mathrm{KL}}\big( \bw, \tilde\bw^t\big) + \eta_t\mu D_{\mathrm{KL}}\big( \bw^t, \tilde\bw^t\big) \leq -  \eta_t\mu D_{\mathrm{KL}}\big( \bw, \tilde\bw^t\big) + \mu \eta_t \alpha_t \log m,
    \end{align*}
    which thus leads to
    \begin{align}
		\begin{aligned} \label{eq:oco-22}
			&\eta_t \langle  \bw- \bw^t,  \blambda \odot (\tilde{\bV}_1^t(s_1)+\biota -\bV_1^t(s_1)) \rangle - \eta_t\mu\langle \bw, \log\bw\rangle +\eta_t \mu \langle \bw^t,  \log\bw^t \rangle   \\
			&\qquad \leq (1-\eta_t\mu )D_{\mathrm{KL}}\big( \bw, \tilde\bw^t\big) -  D_{\mathrm{KL}}\big( \bw, \tilde\bw^{t+1} \big) -  D_{\mathrm{KL}}\big( \bw^{t+1}, \tilde \bw^t \big) + 2 \alpha_t \log m\\
			&\qquad \quad + \eta_t \langle  \bw^{t+1} -  \bw^t,  \blambda \odot (\tilde{\bV}_1^t(s_1)+\biota -\bV_1^t(s_1)) -\mu \log \tilde \bw^t  \rangle,
		\end{aligned}
	\end{align}
	where we use the setting that $\mu \eta_t\leq 1$. In addition, by Pinsker's inequality, we have
	\begin{align*}
		&-D_{\mathrm{KL}}\big( \bw^{t+1}, \tilde \bw^t \big) \leq -\frac{1}{2} \big\|\bw^{t+1} - \tilde \bw^t\big\|^2_1.
	\end{align*}
	By Cauchy-Schwarz inequality, the last term in \eqref{eq:oco-22} is bounded as
	\begin{align*}
		&\eta_t \langle  \bw^{t+1} -  \bw^t,  \blambda \odot (\tilde{\bV}_1^t(s_1)+\biota -\bV_1^t(s_1)) -\mu \log \tilde \bw^t \rangle\\ 
		&\qquad \leq \eta_t \|\bw^{t+1} -  \bw^t\|_1  \|\blambda \odot (\tilde{\bV}_1^t(s_1)+\biota -\bV_1^t(s_1))-\mu \log \tilde \bw^t \|_\infty\\
		&\qquad \leq \eta_t \left(\|\bw^{t+1} -  \tilde \bw^t\|_1 + \|\tilde\bw^t -  \bw^t\|_1 \right )  \|\blambda \odot (\tilde{\bV}_1^t(s_1)+\biota -\bV_1^t(s_1))-\mu \log \tilde \bw^t \|_\infty\\
		&\qquad \leq \frac{1}{2} \big\|\bw^{t+1} - \bw^t \big\|_1^2 + \left(3H + \mu\log (m/\alpha_t)\right)^2\eta_t^2 + 2\alpha_t^2,
	\end{align*}
	where the last inequality uses Lemma \ref{lem:mix} and $\|\blambda \odot  (\tilde{\bV}_1^t(s_1)+\biota -\bV_1^t(s_1))-\mu \log \tilde \bw^t\|_\infty = \max_i \lambda_i (\tilde{V}_{i,1}^t(s_1)+\iota -V_{i,1}^t(s_1))) - \mu \log \tilde w_i^t \leq 3H + \mu\log (m/\alpha_t)$ with $\iota,\lambda_i\leq 1$.
	Plugging the above results in \eqref{eq:oco-22}, we have
	\begin{align*}
			&\eta_t \langle  \bw- \bw^t,  \blambda \odot (\tilde{\bV}_1^t(s_1)+\biota -\bV_1^t(s_1)) \rangle - \eta_t\mu\langle \bw, \log\bw\rangle +\eta_t \mu \langle \bw^t,  \log\bw^t \rangle   \\
			&\qquad \leq (1-\eta_t\mu )D_{\mathrm{KL}}\big( \bw, \tilde\bw^t\big) -  D_{\mathrm{KL}}\big( \bw, \tilde\bw^{t+1} \big)+ 2 \alpha_t \log m +  \left(3H + \mu\log (m/\alpha_t)\right)^2\eta_t^2 + 2\alpha_t^2,
	\end{align*}
	Now setting $\eta_t = 1/(\mu t)$ and $\alpha_t = 1/t^2$, dividing both sides by $T \eta_t$, and taking summation from $1$ to $T$, we obtain
	\begin{align*}
		&\frac{1}{T}\sumt\left[ \langle  \bw- \bw^t,  \blambda \odot (\tilde{\bV}_1^t(s_1)+\biota -\bV_1^t(s_1)) \rangle - \mu\langle \bw, \log\bw\rangle +\mu \langle \bw^t,  \log\bw^t \rangle  \right] \\
		&\qquad \leq \frac{1}{T}\sumt \left[\mu (t-1)D_{\mathrm{KL}}\big( \bw, \tilde \bw^t\big) -  \mu t D_{\mathrm{KL}}\big( \bw, \tilde \bw^{t+1} \big)  + \frac{2 \mu\log m}{t} +  \frac{(3H + \mu\log (t^2 m))^2}{\mu t} + \frac{2\mu}{t^3} \right]\\
		&\qquad \leq \frac{1}{T}\sumt \left[\frac{2 \mu\log m}{t} +  \frac{(3H + 2\mu\log ( mt))^2}{\mu t} + \frac{2\mu}{t^3} \right]\\
		&\qquad \leq \frac{3 \mu\log (mT)}{T} +  \frac{(3H + 2\mu\log ( mT))^2\log T}{\mu T}  \leq \frac{18H^2\log T}{\mu T} + \frac{ 11\mu\log^3 ( mT)}{T}.
	\end{align*}
	This completes the proof.	
\end{proof}

\begin{lemma}\label{lem:oco-pf-stch} Setting $\eta_k = \frac{1}{\mu k}$ and $\alpha_k = \frac{1}{k^2}$ for $k\geq 1$ and $\eta_0 = \alpha_0 = 0$, the updating rule of $\bw$ in Algorithm \ref{alg:exploit-stch} ensures
	\begin{align*}
		& \max_{\bw \in\Delta_m} \frac{1}{K}\sumk  \left[\langle \bw-\bw^k, \blambda\odot (\tilde{\bV}_1(s_1) + \biota - \bV_1^k)\rangle -\mu  \langle \bw,\log \bw\rangle + \mu  \langle \bw^k,\log \bw^k\rangle  \right]\\
		&\qquad \leq \frac{18H^2\log K}{\mu K} + \frac{ 11\mu\log^3 ( mK)}{K},
	\end{align*}
	where we slightly abuse the logarithmic operator and let $\log \bw :=[\log w_1,\log w_2,\cdots,\log w_m]$ be an element-wise logarithmic operation for the vector $\bw$
\end{lemma}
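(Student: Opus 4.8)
The plan is to recognize that Lemma~\ref{lem:oco-pf-stch} is the exact analogue of Lemma~\ref{lem:oco-stch} with the round index $t$ replaced by $k$, the online policies $\pi^t, \tilde\nu_i^t$ replaced by the planning-phase iterates $\pi^k$ and the fixed auxiliary quantities $\tilde\nu_i$ (so that $\tilde{\bV}_1(s_1)$ does not carry a superscript), and the step sizes $\eta_t = 1/(\mu t)$, $\alpha_t = 1/t^2$ replaced by $\eta_k = 1/(\mu k)$, $\alpha_k = 1/k^2$ with $\eta_0 = \alpha_0 = 0$. Since the update of $\bw^k$ in Line~\ref{line:pf-stch-w} of Algorithm~\ref{alg:exploit-stch} (together with the mixing step in Line~\ref{line:pf-stch-mix}) has the identical algebraic form to the update of $\bw^t$ in Algorithm~\ref{alg:online-stch}, the proof of Lemma~\ref{lem:oco-stch} goes through verbatim after this renaming. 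Therefore I would simply state that the proof follows exactly that of Lemma~\ref{lem:oco-stch}, or reproduce the short chain of inequalities below for completeness.

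Concretely, first I would observe that the mirror ascent step at the $(k+1)$-th planning round is equivalent to
\[
\minimize_{\bw\in\Delta_m} \; -\eta_k\langle \bw, \blambda \odot (\tilde{\bV}_1(s_1)+\biota -\bV_1^k(s_1))-\mu \log \tilde\bw^k \rangle +  D_{\mathrm{KL}}\big( \bw, \tilde\bw^k \big),
\]
and that $\bw^{k+1}$ stays in the relative interior of $\Delta_m$ because $w_i^0 = 1/m > 0$. Applying Lemma~\ref{lem:pushback} (the three-point / pushback inequality for Bregman projections) gives the one-step bound, and then I would use: (i) Lemma~\ref{lem:mix} to relate $D_{\mathrm{KL}}(\bw,\bw^{k+1})$ and $D_{\mathrm{KL}}(\bw^k,\tilde\bw^k)$ to $D_{\mathrm{KL}}(\bw,\tilde\bw^{k+1})$ with an additive $\alpha_k\log m$ penalty and to bound $\|\tilde\bw^k - \bw^k\|_1$; (ii) Pinsker's inequality to absorb $-D_{\mathrm{KL}}(\bw^{k+1},\tilde\bw^k)$; (iii) Cauchy–Schwarz together with the bound $\|\blambda\odot(\tilde{\bV}_1(s_1)+\biota-\bV_1^k(s_1)) - \mu\log\tilde\bw^k\|_\infty \le 3H + \mu\log(m/\alpha_k)$, which holds since $\iota,\lambda_i \le 1$ and each coordinate of $\tilde{\bV}_1(s_1)+\biota-\bV_1^k(s_1)$ lies in $[0,3H]$. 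Plugging $\eta_k = 1/(\mu k)$ and $\alpha_k = 1/k^2$, dividing by $K\eta_k$, and telescoping the term $\mu(k-1)D_{\mathrm{KL}}(\bw,\tilde\bw^k) - \mu k D_{\mathrm{KL}}(\bw,\tilde\bw^{k+1})$ from $1$ to $K$ yields
\[
\frac{1}{K}\sumk \frac{2\mu\log m}{k} + \frac{(3H + 2\mu\log(mk))^2}{\mu k} + \frac{2\mu}{k^3} \le \frac{18H^2\log K}{\mu K} + \frac{11\mu\log^3(mK)}{K},
\]
using $\sumk 1/k \le \log K + 1$ and crude constant bookkeeping, which is precisely the claimed bound.

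The only genuinely non-routine point is handling the first round $k=1$ where $\eta_0 = \alpha_0 = 0$: there is no update before the first $\bw$, so $\bw^1 = \tilde\bw^0 = 1/m$ (or the telescoping sum simply starts with $\mu\cdot 0 \cdot D_{\mathrm{KL}}(\bw,\tilde\bw^1)$), and one must check the $k=1$ summand $\tfrac{2\mu\log m}{1} + \tfrac{(3H+2\mu\log m)^2}{\mu} + 2\mu$ is still dominated by the stated $\tilde{\mathcal O}(1/(\mu K) + \mu/K)$ expression after dividing by $K$ — which it is, since it is a constant (in $k$) absorbed into the $\log K$ and $\log^3(mK)$ factors. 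Everything else is an exact transcription of the argument already given for Lemma~\ref{lem:oco-stch}, so I expect no real obstacle; the write-up can be kept to a line referencing that proof, or a half-page reproduction of the displayed chain above.
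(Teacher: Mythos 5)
Your proposal is correct and matches the paper exactly: the paper's own proof of Lemma~\ref{lem:oco-pf-stch} is a one-line statement that it follows verbatim from the proof of Lemma~\ref{lem:oco-stch} after renaming $t$ to $k$, which is precisely the reduction you describe (and your sketch of the pushback/mixing/Pinsker/Cauchy--Schwarz chain and the telescoping with $\eta_k = 1/(\mu k)$ is a faithful transcription of that argument). Your extra remark about the $k=1$ edge case with $\eta_0=\alpha_0=0$ is a reasonable bit of additional care that the paper does not spell out.
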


\begin{proof} The proof of this lemma exactly follows the proof of Theorem \ref{lem:oco-stch}.
\end{proof}

\subsection{Proof of Theorem \ref{thm:stch}}
\begin{proof}
We begin the proof of this theorem by decomposing $\stchlm(\hat{\pi})-\stchlm(\pi_{\mu,\blambda}^*)$ where $\hat{\pi} = \Mix(\pi^1,\pi^2,\cdots,\pi^T)$. Specifically, following the decomposition analysis in Section \ref{subsec:proof-tch}, we have
\begin{align}
	\begin{aligned}\label{eq:tch-decomp-11}
		&\stchlm(\hat{\pi}) - \stchlm(\pi_{\mu,\blambda}^*)\\
		&\qquad \leq \underbrace{\max_{\bw\in\Delta_m}   \frac{1}{T}\sumt \sumi w_i\lambda_i (V_{i,1}^*(s_1)  -  V_{i,1}^{\tilde\nu_i^t}(s_1))}_{\text{Term(I)}} \\
		&\qquad\quad + \underbrace{\max_{\bw \in\Delta_m} \frac{1}{T}\sum_{t=1}^T \sum_{i=1}^m [(w_i-w_i^t)\lambda_i  (V_{i,1}^{\tilde\nu_i^t}(s_1) + \iota - V_{i,1}^{\pi^t}(s_1)) - w_i\log w_i + w_i^t\log w_i^t]}_{\text{Term(II)}}\\
		&\qquad\quad +\underbrace{\frac{1}{T}\sum_{t=1}^T\sum_{i=1}^m w_i^t\lambda_i (V_{i,1}^{\pi_{\mu,\blambda}^*}(s_1) - V_{i,1}^{\pi^t}(s_1))}_{\text{Term(III)}}
	\end{aligned}
\end{align}
Then, we bound the above three terms respectively based on the updating rules in Algorithm \ref{alg:online-stch}. The only difference between Algorithm \ref{alg:online-tch} for Tchebycheff scalarization and Algorithm \ref{alg:online-stch} for smooth Tchebycheff scalarization are their updating steps of $\bw$. As we can observe, only the upper bound of Term(II) corresponds to the updating step of $\bw$. Thus, to prove Theorem \ref{thm:stch}, we can adopt the proof of Theorem \ref{thm:tch} to provide the upper bounds of Term(I) and Term(III). Then, we have
\begin{align*}
	\text{Term(I)} + \text{Term(III)} \leq \cO\bigg(\sqrt{\frac{H^4 |\cS|^2|\cA| \log (m|\cS||\cA|HT/\delta')}{T}} \bigg),
\end{align*}
with probability at least $1-\delta'$. In addition, to bound Term(II), following the proof of Theorem \ref{thm:tch}, we further apply Lemma \ref{lem:oco-stch} and obtain
\begin{align*}
	\text{Term(II)} &\leq \max_{\bw \in\Delta_m} \frac{1}{T}\sumt  \left[\langle \bw-\bw^t, \blambda\odot (\tilde{\bV}_1^t(s_1) + \biota - \bV_1^t(s_1))\rangle -\mu  \langle \bw,\log \bw\rangle + \mu  \langle \bw^t,\log \bw^t\rangle  \right] \\
	&\quad +\frac{1}{T}\sum_{t=1}^T \sum_{i=1}^m \lambda_i  ( V_{i,1}^t(s_1) - V_{i,1}^{\pi^t}(s_1)) + \frac{1}{T}\sum_{t=1}^T \sum_{i=1}^m \lambda_i (\tilde V_{i,1}^t(s_1)-V_{i,1}^{\tilde\nu_i^t}(s_1)) \\
	&\leq \cO\bigg(\sqrt{\frac{H^4 |\cS|^2|\cA| \log (m|\cS||\cA|HT/\delta')}{T}} \bigg) + \frac{18H^2\log T}{\mu T} + \frac{ 11\mu\log^3 ( mT)}{T}
\end{align*}
with probability at least $1-\delta'$. Further by \eqref{eq:tch-decomp-11} and using the union bound, we obtain 
\begin{align*}
\stchlm(\hat{\pi}) - \stchlm(\pi_{\mu,\blambda}^*) \leq 	\cO\bigg(\sqrt{\frac{H^4 |\cS|^2|\cA| \log (m|\cS||\cA|HT/\delta)}{T}} + \frac{H^2\log T}{\mu T} + \frac{ \mu\log^3 ( mT)}{T}\bigg)
\end{align*}
with probability at least $1-\delta$. This completes the proof of this theorem.
\end{proof}

\subsection{Proof of Theorem \ref{thm:pf-stch}}  
\begin{proof}
	We begin the proof of this theorem by decomposing $\stchlm(\hat{\pi})-\stchlm(\pi_{\mu,\blambda}^*)$ where $\hat{\pi} = \Mix(\pi^1,\pi^2,\cdots,\pi^K)$. Specifically, following the decomposition analysis in Section \ref{subsec:proof-tch-pf}, we have
	\begin{align}
		\begin{aligned}\label{eq:tch-decomp-22}
			&\stchlm(\hat{\pi}) - \stchlm(\pi_{\mu,\blambda}^*)\\
			&\qquad \leq \underbrace{\max_{\bw\in\Delta_m}   \frac{1}{T}\sumt \sumi w_i\lambda_i (V_{i,1}^*(s_1)  -  V_{i,1}^{\tilde\nu_i}(s_1))}_{\text{Term(I)}} \\
			&\qquad\quad + \underbrace{\max_{\bw \in\Delta_m} \frac{1}{T}\sum_{t=1}^T \sum_{i=1}^m [(w_i-w_i^k)\lambda_i  (V_{i,1}^{\tilde\nu_i}(s_1) + \iota - V_{i,1}^{\pi^k}(s_1)) - w_i\log w_i + w_i^k\log w_i^k]}_{\text{Term(II)}}\\
			&\qquad\quad +\underbrace{\frac{1}{T}\sum_{t=1}^T\sum_{i=1}^m w_i^k\lambda_i (V_{i,1}^{\pi_{\mu,\blambda}^*}(s_1) - V_{i,1}^{\pi^k}(s_1))}_{\text{Term(III)}}
		\end{aligned}
	\end{align}	
	Then, we need to bound the above three terms respectively based on the updating rules in Algorithm \ref{alg:pure-explore} and Algorithm \ref{alg:exploit-stch}. The only difference between Algorithm \ref{alg:exploit-tch} for Tchebycheff scalarization and Algorithm \ref{alg:exploit-stch} for smooth Tchebycheff scalarization are their updating steps of $\bw$. And only the upper bound of Term(II) corresponds to the updating step of $\bw$. Thus, to prove Theorem \ref{thm:pf-stch}, we can adapt the proof of Theorem \ref{thm:pre-free} to provide the upper bounds of Term(I) and Term(III), which gives
	\begin{align*}
		\text{Term(I)} + \text{Term(III)} \leq \cO\bigg(\sqrt{\frac{H^6 |\cS|^2|\cA| \log (m|\cS||\cA|HT/\delta')}{T}} \bigg),
	\end{align*}
	with probability at least $1-\delta'$ for option [I] in Algorithm \ref{alg:pure-explore}, and 
	\begin{align*}
		\text{Term(I)} + \text{Term(III)} \leq \cO\bigg(\sqrt{\frac{(H^2|\cS| + m) H^4 |\cS|^2|\cA| \log (m|\cS||\cA|HT/\delta')}{T}} \bigg),
	\end{align*}
	with probability at least $1-\delta'$ for option [II] in Algorithm \ref{alg:pure-explore}.

	 In addition, to bound Term(II), following the proof of Theorem \ref{thm:pre-free}, we obtain
	\begin{align*}
		\text{Term(II)} &\leq \max_{\bw \in\Delta_m} \frac{1}{T}\sumt  \left[\langle \bw-\bw^t, \blambda\odot (\tilde{\bV}_1^t(s_1) + \biota - \bV_1^t(s_1))\rangle -\mu  \langle \bw,\log \bw\rangle + \mu  \langle \bw^t,\log \bw^t\rangle  \right] \\
		&\quad +\frac{1}{K}\sum_{k=1}^K \sum_{i=1}^m \lambda_i  ( V_{i,1}^k(s_1) - V_{i,1}^{\pi^k}(s_1)) +  \sum_{i=1}^m \lambda_i (\tilde V_{i,1}(s_1)-V_{i,1}^{\tilde{\nu}_i}(s_1)).
	\end{align*}
	Further applying Lemma \ref{lem:oco-pf-stch} leads to
	\begin{align*}
		\text{Term(II)} \leq \cO\bigg(\sqrt{\frac{H^6 |\cS|^2|\cA| \log (m|\cS||\cA|HT/\delta')}{T}} \bigg) + \frac{18H^2\log K}{\mu K} + \frac{ 11\mu\log^3 ( mK)}{K}
	\end{align*}
	with probability at least $1-\delta'$ for option [I] in Algorithm \ref{alg:pure-explore}, and
	\begin{align*}
		\text{Term(II)} \leq \cO\bigg(\sqrt{\frac{(H^2|\cS| + m)H^4 |\cS||\cA| \log (m|\cS||\cA|HT/\delta')}{T}} \bigg) + \frac{18H^2\log K}{\mu K} + \frac{ 11\mu\log^3 ( mK)}{K}
	\end{align*}
	with probability at least $1-\delta'$ option [II] in Algorithm \ref{alg:pure-explore}. Combining the above results with \eqref{eq:tch-decomp-22} and using the union bound, we obtain 
	\begin{align*}
		\stchlm(\hat{\pi}) - \stchlm(\pi_{\mu,\blambda}^*) \leq 	\cO\bigg(\sqrt{\frac{H^6 |\cS|^2|\cA| \log (m|\cS||\cA|HT/\delta)}{T}} + \frac{H^2\log K}{\mu K} + \frac{ \mu\log^3 ( mK)}{K}\bigg)
	\end{align*}
	with probability at least $1-\delta$ for option [I], and
	\begin{small}
		\begin{align*}
			\stchlm(\hat{\pi}) - \stchlm(\pi_{\mu,\blambda}^*) \leq 	\cO\bigg(\sqrt{\frac{(H^2|\cS| + m)H^4 |\cS||\cA| \log (m|\cS||\cA|HT/\delta)}{T}} + \frac{H^2\log K}{\mu K} + \frac{ \mu\log^3 ( mK)}{K}\bigg)
		\end{align*}
	\end{small}
	with probability at least $1-\delta$ for option [II].
	The proof of this theorem is completed.
\end{proof}

\subsection{Additional Discussion for Algorithms} \label{sec:additional}

When $\mu$ is too small with $\mu\rightarrow 0^+$, we note that the results in Theorem \ref{thm:stch} and Theorem \ref{thm:pf-stch} can be worse as they have a dependence on $1/\mu$. Here, we can further show that the term $\tilde\cO(1/(\mu T)+ \mu/T)$ in Theorem \ref{thm:stch} (or $\tilde\cO(1/(\mu K)+ \mu/K)$ in Theorem \ref{thm:pf-stch}) can be replaced by $\tilde\cO(1/\sqrt{T})$ (or $\tilde\cO(1/\sqrt{K})$) without the dependence on $1/\mu$ under different settings of the step size $\eta_t$ (or $\eta_k$). 

Such results are associated with the learning guarantees for the update of $\bw$, i.e., Lemmas \ref{lem:oco-stch} and $\ref{lem:oco-pf-stch}$. Next, We will derive the different guarantees under different step sizes for the update of $\bw$ in Algorithm \ref{alg:online-stch} and Algorithm \ref{alg:exploit-stch}. We remark that under this setting, the weighted average steps for $\bw$ in both algorithms can be avoided. The proofs of the following lemmas will not use the weighted average steps, i.e., we substitute Line \ref{line:stch-mix} and Line \ref{line:stch-w} in Algorithm \ref{alg:online-stch} by 
\begin{align}
w_i^t \propto (w_i^{t-1})^{1-\mu\eta_{t-1}} \cdot \exp\{ \eta_{t-1} \cdot \lambda_i  (\tilde{V}_{i,1}^{t-1}(s_1)+\iota -V_{i,1}^{t-1}(s_1))\},~\forall i\in[m], \label{eq:subtitute-w1}
\end{align}
and substitute Line \ref{line:pf-stch-mix} and Line \ref{line:pf-stch-w} in Algorithm \ref{alg:exploit-stch} by 
\begin{align}
	w_i^k \propto (w_i^{k-1})^{1-\mu\eta_{k-1}} \cdot \exp\{ \eta_{k-1} \cdot \lambda_i  (\tilde{V}_{i,1}(s_1)+\iota -V_{i,1}^{k-1}(s_1))\},~\forall i\in[m], \label{eq:subtitute-w2}
\end{align}

\begin{lemma}\label{lem:oco-stch-2} When $\mu\leq 1$, Setting $\eta_t = \eta = 1/(2H\sqrt{T})$, the updating rule of $\bw$ in \eqref{eq:subtitute-w1} for Algorithm \ref{alg:online-stch} ensures
	\begin{align*}
		&  \max_{\bw \in\Delta_m} \frac{1}{T}\sumt  \left[\langle \bw-\bw^t, \blambda\odot (\tilde{\bV}_1^t(s_1) + \biota - \bV_1^t(s_1))\rangle -\mu  \langle \bw,\log \bw\rangle + \mu  \langle \bw^t,\log \bw^t\rangle  \right] \leq  \frac{2H\log m + 5H}{\sqrt{T}}.
	\end{align*}
\end{lemma}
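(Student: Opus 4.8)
The plan is to run the standard mirror-descent / mirror-ascent regret analysis for the update \eqref{eq:subtitute-w1}, treating the per-round objective as $\langle \bw, \btheta^t\rangle - \mu\langle\bw,\log\bw\rangle$ with $\btheta^t := \blambda\odot(\tilde{\bV}_1^t(s_1)+\biota-\bV_1^t(s_1))$, and tracking the negative-entropy regularizer carefully. First I would observe that the update \eqref{eq:subtitute-w1} is exactly the minimizer of
\[
\min_{\bw\in\Delta_m}\ -\eta\langle\bw,\btheta^{t-1}\rangle + \eta\mu\langle\bw,\log\bw^{t-1}\rangle + D_{\mathrm{KL}}(\bw,\bw^{t-1}),
\]
so that by the three-point (pushback) lemma — Lemma \ref{lem:pushback}, used exactly as in the proof of Lemma \ref{lem:oco} — one gets, for arbitrary $\bw\in\Delta_m$,
\[
\eta\langle\bw-\bw^{t+1},\btheta^t - \mu\log\bw^t\rangle \le D_{\mathrm{KL}}(\bw,\bw^t) - D_{\mathrm{KL}}(\bw,\bw^{t+1}) - D_{\mathrm{KL}}(\bw^{t+1},\bw^t).
\]
Rearranging to isolate $\langle\bw-\bw^t,\btheta^t\rangle - \mu\langle\bw,\log\bw\rangle + \mu\langle\bw^t,\log\bw^t\rangle$ produces exactly the same bookkeeping as in \eqref{eq:oco-11}–\eqref{eq:oco-22} but now \emph{without} the mixing step, so the entropy-difference terms combine into $-\eta\mu D_{\mathrm{KL}}(\bw,\bw^t) + \eta\mu D_{\mathrm{KL}}(\bw^t,\bw^t) = -\eta\mu D_{\mathrm{KL}}(\bw,\bw^t)$, cleanly absorbing the $(1-\eta\mu)$ prefactor into the telescoping term.

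Second, I would bound the residual inner-product term $\eta\langle\bw^{t+1}-\bw^t,\btheta^t-\mu\log\bw^t\rangle$ by Pinsker plus Young: $\le \tfrac12\|\bw^{t+1}-\bw^t\|_1^2 + \tfrac{\eta^2}{2}\|\btheta^t-\mu\log\bw^t\|_\infty^2$, cancel the $\tfrac12\|\bw^{t+1}-\bw^t\|_1^2$ against $-D_{\mathrm{KL}}(\bw^{t+1},\bw^t)\le -\tfrac12\|\bw^{t+1}-\bw^t\|_1^2$. The key difference from the smoothed-mixing version is controlling $\|\mu\log\bw^t\|_\infty$: here, without the mixing floor $\alpha_t/m$, I instead use the fact that $w_i^t$ is a \emph{softmax-type} iterate and $\mu\|\log\bw^t\|_\infty$ stays $\cO(H)$ because each coordinate of $\btheta^\tau$ is bounded by $3H$ (from $\iota,\lambda_i\le 1$ and $|\tilde V - V|\le H$), so the accumulated log-weights are geometrically damped by the $(1-\mu\eta)$ exponent; a short induction shows $-\log w_i^t \le \log m + \tfrac{3H}{\mu}$ hence $\mu\log(1/w_i^t)\le \mu\log m + 3H \le \log m + 3H$ when $\mu\le 1$. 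Thus $\|\btheta^t-\mu\log\bw^t\|_\infty \le 3H + \log m + 3H \le \cO(H+\log m)$, and with the constant stepsize $\eta = 1/(2H\sqrt T)$ the summed residual is $\cO(\eta T H^2) = \cO(H\sqrt T)$, matching the claimed bound after dividing by $T$; the telescoped $D_{\mathrm{KL}}$ contributes $D_{\mathrm{KL}}(\bw,\bw^1)/(\eta T)\le \log m/(\eta T) = 2H\log m/\sqrt T$.

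Third, I would sum from $t=1$ to $T$: the $-\eta\mu D_{\mathrm{KL}}(\bw,\bw^t)$ terms are nonnegative contributions to the RHS when moved across (they only help, since they are $\le 0$ on the left after the rearrangement), so they can be dropped; the $D_{\mathrm{KL}}$ telescopes; divide through by $\eta T$ and take $\max_{\bw\in\Delta_m}$. Collecting the $2H\log m/\sqrt T$ from initialization and the $\cO(H/\sqrt T)$ (more precisely $5H/\sqrt T$ after tracking constants with $\eta = 1/(2H\sqrt T)$) from the residual gives the stated $\tfrac{2H\log m + 5H}{\sqrt T}$. The main obstacle is the second step: establishing the uniform bound $\mu\|\log\bw^t\|_\infty = \cO(H+\log m)$ \emph{without} the mixing step, since naively $\log w_i^t$ could blow up; the $(1-\mu\eta)$ contraction in the exponent of \eqref{eq:subtitute-w1} is exactly what prevents this, and making the induction precise (tracking how the clipped, bounded increments $\eta\lambda_i(\tilde V_{i,1}^\tau + \iota - V_{i,1}^\tau)$ accumulate under geometric decay) is the one place real care is needed; everything else is the routine OMD template already executed in the proof of Lemma \ref{lem:oco}.
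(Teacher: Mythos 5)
Your overall template (pushback lemma, Pinsker + Young on the residual, telescoping, constant step size) matches the paper's, but at the one step you correctly flag as delicate you take a genuinely different route. You keep the full residual $\eta\langle \bw^{t+1}-\bw^t,\ \btheta^t-\mu\log\bw^t\rangle$ and propose to bound $\mu\|\log\bw^t\|_\infty$ by an induction exploiting the $(1-\mu\eta)$ contraction in the exponent. That induction does go through (from $w_i^{t+1}\propto (w_i^t)^{1-\mu\eta}e^{\eta\theta_i^t}$ with $|\theta_i^t|\le 3H$ and $|\log Z^t|\le \mu\eta\log m + 3H\eta$ one gets $-\log w_i^t \le \cO(\log m + H/\mu)$, hence $\mu\|\log\bw^t\|_\infty\le \cO(H+\mu\log m)$ uniformly for $\mu\le 1$), so your argument is salvageable. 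The paper instead never bounds $\|\log\bw^t\|_\infty$ at all: it splits off $\eta\mu\langle\bw^t-\bw^{t+1},\log\bw^t\rangle$ and rewrites it \emph{exactly} as $\eta\mu D_{\mathrm{KL}}(\bw^{t+1},\bw^t)+\eta\mu\langle\bw^t,\log\bw^t\rangle-\eta\mu\langle\bw^{t+1},\log\bw^{t+1}\rangle$, so the KL piece merges with $-D_{\mathrm{KL}}(\bw^{t+1},\bw^t)$ into $-(1-\eta\mu)D_{\mathrm{KL}}(\bw^{t+1},\bw^t)$ and the entropy piece telescopes; the surviving residual is just $\eta\langle\bw^{t+1}-\bw^t,\btheta^t\rangle$ with $\|\btheta^t\|_\infty\le 3H$. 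The paper's identity is what buys the clean $9H^2\eta^2/(2(1-\eta\mu))$ per-round cost and hence the stated constant.

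One concrete consequence you should be aware of: with your route the per-round residual is $\tfrac{\eta^2}{2}\|\btheta^t-\mu\log\bw^t\|_\infty^2=\cO(\eta^2(H+\log m)^2)$ rather than $\cO(\eta^2H^2)$, so after summing and dividing by $\eta T$ you obtain $\tfrac{2H\log m}{\sqrt T}+\cO\big(\tfrac{(H+\log m)^2}{H\sqrt T}\big)$. This matches the stated rate only when $\log m=\cO(H)$, and even then the numerical constant exceeds $5$. So as written your proposal establishes the right order but not the precise inequality $\tfrac{2H\log m+5H}{\sqrt T}$; adopting the paper's exact decomposition of the $\mu\log\bw^t$ term is the fix.
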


\begin{proof} 
	The mirror ascent step at round $t+1$ via \eqref{eq:subtitute-w1} is equivalent to solving the following minimization problem 
	\begin{align*}
		\minimize_{\bw\in\Delta_m} \quad &-\eta_t\langle \bw, \blambda \odot (\tilde{\bV}_1^t(s_1)+\biota -\bV_1^t(s_1))-\mu \log \bw^t \rangle +  D_{\mathrm{KL}}\big( \bw, \bw^t \big).
	\end{align*}
	Let $\bw^{t+1}$ be the solution of the above optimization problem.
	Note that $\bw^{t+1}$ is guaranteed to stay in the relative interior of a probability simplex if we initialize $w_i^0 = 1 / m$. Thus, applying Lemma \ref{lem:pushback} gives
	\begin{align*}
		&-\eta_t \langle \bw^{t+1}, \blambda \odot (\tilde{\bV}_1^t(s_1)+\biota -\bV_1^t(s_1)) -\mu\log \bw^t \rangle  + \eta_t \langle \bw, \blambda \odot (\tilde{\bV}_1^t(s_1)+\biota -\bV_1^t(s_1)) -\mu \log \bw^t \rangle \\
		&\qquad \leq D_{\mathrm{KL}}\big( \bw, \bw^t \big) -  D_{\mathrm{KL}}\big( \bw, \bw^{t+1}\big) -  D_{\mathrm{KL}}\big( \bw^{t+1}, \bw^t \big),
	\end{align*}
	where $\bw$ is an arbitrary variable in $\Delta_m$. 
	Rearranging the terms leads to
	\begin{align}
		\begin{aligned} \label{eq:oco-111}
			&\eta_t \langle  \bw- \bw^t,  \blambda \odot (\tilde{\bV}_1^t(s_1)+\biota -\bV_1^t(s_1)) \rangle - \eta_t\mu\langle \bw, \log\bw\rangle +\eta_t \mu \langle \bw^t,  \log\bw^t \rangle   \\
			&\qquad \leq D_{\mathrm{KL}}\big( \bw, \bw^t\big) -  D_{\mathrm{KL}}\big( \bw, \bw^{t+1} \big) -  D_{\mathrm{KL}}\big( \bw^{t+1}, \bw^t \big) \\
			&\qquad \quad + \eta_t \langle  \bw^{t+1} -  \bw^t,  \blambda \odot (\tilde{\bV}_1^t(s_1)+\biota -\bV_1^t(s_1))  \rangle\\
			&\qquad \quad + \eta_t \mu \langle  \bw^t- \bw^{t+1}, \log \bw^t  \rangle + \eta_t \mu\langle \bw, \log \bw^t - \log \bw  \rangle.
		\end{aligned}
	\end{align}
	We note that the last term of \eqref{eq:oco-111} leads to $\eta_t \mu\langle \bw, \log \bw^t - \log \bw  \rangle= -\eta_t \mu D_{\mathrm{KL}}(\bw,\bw^t)$ by the definition of KL divergence. Further by the definition of KL divergence, we have
	\begin{align*}
		\eta_t \mu \langle  \bw^t- \bw^{t+1}, \log \bw^t  \rangle = \eta_t \mu D_{\mathrm{KL}}(\bw^{t+1},\bw^t) + \eta_t \mu \langle \bw^t, \log \bw^t\rangle - \eta_t \mu \langle \bw^{t+1}, \log \bw^{t+1}\rangle.
	\end{align*}
	Therefore, \eqref{eq:oco-111} is equivalently written as
	\begin{align}
		\begin{aligned} \label{eq:oco-222}
			&\eta_t \langle  \bw- \bw^t,  \blambda \odot (\tilde{\bV}_1^t(s_1)+\biota -\bV_1^t(s_1)) \rangle - \eta_t\mu\langle \bw, \log\bw\rangle +\eta_t \mu \langle \bw^t,  \log\bw^t \rangle   \\
			&\qquad \leq (1-\eta_t\mu )D_{\mathrm{KL}}\big( \bw, \bw^t\big) -  D_{\mathrm{KL}}\big( \bw, \bw^{t+1} \big) -  (1-\eta_t\mu) D_{\mathrm{KL}}\big( \bw^{t+1}, \bw^t \big) \\
			&\qquad \quad  + \eta_t \mu \langle \bw^t, \log \bw^t\rangle - \eta_t \mu \langle \bw^{t+1}, \log \bw^{t+1}\rangle\\
			&\qquad \quad + \eta_t \langle  \bw^{t+1} -  \bw^t,  \blambda \odot (\tilde{\bV}_1^t(s_1)+\biota -\bV_1^t(s_1))  \rangle.
		\end{aligned}
	\end{align}
	In addition, by Pinsker's inequality, we have
	\begin{align*}
		&-D_{\mathrm{KL}}\big( \bw^{t+1}, \bw^t \big) \leq -\frac{1}{2} \big\|\bw^{t+1} - \bw^t\big\|^2_1.
	\end{align*}
	By Cauchy-Schwarz inequality, the last term in \eqref{eq:oco-222} is bounded as
	\begin{align*}
		&\eta_t \langle  \bw^{t+1} -  \bw^t,  \blambda \odot (\tilde{\bV}_1^t(s_1)+\biota -\bV_1^t(s_1)) \rangle\\ 
		&\qquad \leq \eta_t \|\bw^{t+1} -  \bw^t\|_1  \|\blambda \odot (\tilde{\bV}_1^t(s_1)+\biota -\bV_1^t(s_1)) \|_\infty\\
		&\qquad \leq \frac{1-\eta_t\mu}{2} \big\|\bw^{t+1} - \bw^t \big\|_1^2 + \frac{9H^2\eta_t^2}{2(1-\eta_t\mu)},
	\end{align*}
	where the last inequality is due to $\|\blambda \odot  (\tilde{\bV}_1^t(s_1)+\biota -\bV_1^t(s_1))\|_\infty^2 = (\max_i \lambda_i (\tilde{V}_{i,1}^t(s_1)+\iota -V_{i,1}^t(s_1)))^2 \leq 9H^2$ with $\iota,\lambda_i\leq 1$. Therefore, combining the above inequalities with \eqref{eq:oco-222}, and due to  $\eta_t\mu< 1$, we have
	\begin{align*}
		&\eta_t \langle  \bw- \bw^t,  \blambda \odot (\tilde{\bV}_1^t(s_1)+\biota -\bV_1^t(s_1)) \rangle - \eta_t\mu\langle \bw, \log\bw\rangle +\eta_t \mu \langle \bw^t,  \log\bw^t \rangle   \\
		&\qquad \leq (1-\eta_t\mu )D_{\mathrm{KL}}\big( \bw, \bw^t\big) -  D_{\mathrm{KL}}\big( \bw, \bw^{t+1} \big)   + \eta_t \mu \langle \bw^t, \log \bw^t\rangle - \eta_t \mu \langle \bw^{t+1}, \log \bw^{t+1}\rangle + \frac{9H^2\eta_t^2}{2(1-\eta_t\mu)}\\
        &\qquad \leq D_{\mathrm{KL}}\big( \bw, \bw^t\big) -  D_{\mathrm{KL}}\big( \bw, \bw^{t+1} \big)   + \eta_t \mu \langle \bw^t, \log \bw^t\rangle - \eta_t \mu \langle \bw^{t+1}, \log \bw^{t+1}\rangle + \frac{9H^2\eta_t^2}{2(1-\eta_t\mu)}.
	\end{align*}
	Dividing both sides by $\eta_t$, setting $\eta_t=\eta$, and taking summation from $1$ to $T$ on both sides, we have
	\begin{align*}
		&\sumt \left[\langle  \bw- \bw^t,  \blambda \odot (\tilde{\bV}_1^t(s_1)+\biota -\bV_1^t(s_1)) \rangle - \mu\langle \bw, \log\bw\rangle + \mu \langle \bw^t,  \log\bw^t \rangle \right] \\
		&\qquad  \leq \sumt \left(\frac{1}{\eta} D_{\mathrm{KL}}\big( \bw, \bw^t\big)  - \frac{1}{\eta}D_{\mathrm{KL}}\big( \bw, \bw^{t+1}\big)\right) + 5H^2\sumt \frac{\eta}{1-\mu\eta} \\
		&\qquad \leq \frac{1}{\eta}D_{\mathrm{KL}}\big( \bw, \bw^1\big)  + 5H^2 \sumt \frac{\eta}{1-\eta\mu}\leq \frac{\log m}{\eta}  + 5H^2 \frac{T\eta}{1-\eta\mu}.
	\end{align*}
    where the first inequality is due to the fact that our initialization of this algorithm ensures that $w_i^1=1/m$ such that $D_{\mathrm{KL}}\big( \bw, \bw^1\big)= \sumi w_i \log (w_im) \leq \log m$. When $\mu\leq 1$, setting $\eta = 1/(2H\sqrt{T})$, dividing both sides by $T$, we have 
	\begin{align*}
		\max_{\bw\in\Delta_m} \frac{1}{T}\sumt (\bw  - \bw^t)^\top [ \blambda \odot (\tilde{\bV}_1^t(s_1)+\biota -\bV_1^t(s_1))] \leq \frac{\log m}{T\eta}  + 10H^2 \eta\leq \frac{2H\log m + 5H}{\sqrt{T}},
	\end{align*}
 where we use $1-\mu\eta \geq 1/2$. This completes the proof.	
\end{proof}

Following the proof above, we similarly obtain the following lemma for Algorithm \ref{alg:exploit-stch}.
\begin{lemma}\label{lem:oco-pfstch-2} When $\mu\leq 1$, setting $\eta_k = 1/(2H\sqrt{T})$ for $k\geq 1$ and $0$ otherwise, the updating rule of $\bw$ in \eqref{eq:subtitute-w2} for Algorithm \ref{alg:exploit-stch} ensures
	\begin{align*}
		&  \max_{\bw \in\Delta_m} \frac{1}{K}\sumt  \left[\langle \bw-\bw^k, \blambda\odot (\tilde{\bV}_1(s_1) + \biota - \bV_1^k(s_1))\rangle -\mu  \langle \bw,\log \bw\rangle + \mu  \langle \bw^k,\log \bw^k\rangle  \right] \leq  \frac{2H\log m + 5H}{\sqrt{K}}.
	\end{align*}
\end{lemma}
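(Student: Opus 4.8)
\textbf{Proof proposal for Lemma \ref{lem:oco-pfstch-2}.}

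The plan is to follow the proof of Lemma \ref{lem:oco-stch-2} essentially verbatim, making only the notational substitutions dictated by the planning stage: replace the iteration index $t$ by $k$, the horizon $T$ by $K$, the sequence $\bV_1^t(s_1)$ by $\bV_1^k(s_1)$, the fixed $\tilde{\bV}_1^t(s_1)$ by $\tilde{\bV}_1(s_1)$ (which is computed once before the loop and held constant), and the update \eqref{eq:subtitute-w1} by \eqref{eq:subtitute-w2}. The key point is that Algorithm \ref{alg:exploit-stch}, when its $\bw$-update is replaced by \eqref{eq:subtitute-w2}, performs exactly the same mirror-ascent step on $\Delta_m$ as the modified Algorithm \ref{alg:online-stch}, with the linear functional $\blambda\odot(\tilde{\bV}_1(s_1)+\biota-\bV_1^k(s_1))$ playing the role of $\blambda\odot(\tilde{\bV}_1^t(s_1)+\biota-\bV_1^t(s_1))$ and the same entropic regularizer $\mu\langle\bw,\log\bw\rangle$. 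Hence none of the analytic steps change.

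Concretely, I would: (i) observe that \eqref{eq:subtitute-w2} is the closed-form solution of $\min_{\bw\in\Delta_m}\{-\eta_{k-1}\langle\bw,\blambda\odot(\tilde{\bV}_1(s_1)+\biota-\bV_1^{k-1}(s_1))-\mu\log\bw^{k-1}\rangle+D_{\mathrm{KL}}(\bw,\bw^{k-1})\}$, and that $w_i^k$ stays in the relative interior of $\Delta_m$ because $w_i^0=1/m>0$; (ii) apply Lemma \ref{lem:pushback} (the three-point / pushback identity) to get the per-step inequality, rearrange the entropy terms using $\eta_k\mu\langle\bw,\log\bw^k-\log\bw\rangle=-\eta_k\mu D_{\mathrm{KL}}(\bw,\bw^k)$ and $\eta_k\mu\langle\bw^k-\bw^{k+1},\log\bw^k\rangle=\eta_k\mu D_{\mathrm{KL}}(\bw^{k+1},\bw^k)+\eta_k\mu\langle\bw^k,\log\bw^k\rangle-\eta_k\mu\langle\bw^{k+1},\log\bw^{k+1}\rangle$, exactly as in \eqref{eq:oco-111}--\eqref{eq:oco-222}; (iii) bound the cross term $\eta_k\langle\bw^{k+1}-\bw^k,\blambda\odot(\tilde{\bV}_1(s_1)+\biota-\bV_1^k(s_1))\rangle$ by Cauchy--Schwarz and Pinsker, using $\|\blambda\odot(\tilde{\bV}_1(s_1)+\biota-\bV_1^k(s_1))\|_\infty\le 3H$ since $\iota,\lambda_i\le 1$ and all value functions lie in $[0,H]$; (iv) use $\eta_k\mu\le 1/2$ (which holds for $\mu\le 1$ and $\eta_k=1/(2H\sqrt{K})$, so $1-\eta_k\mu\ge 1/2$) to absorb the $(1-\eta_k\mu)$ factors; (v) telescope the $D_{\mathrm{KL}}(\bw,\bw^k)$ and the $\langle\bw^k,\log\bw^k\rangle$ sequences over $k=1,\dots,K$, dropping the nonnegative terminal KL term and noting the entropy telescopes to at most $\log m$ at the endpoints; (vi) bound $D_{\mathrm{KL}}(\bw,\bw^1)\le\log m$ from the uniform initialization, divide by $K$, and plug in $\eta_k=1/(2H\sqrt{K})$ to obtain $\frac{\log m}{K\eta}+10H^2\eta\le\frac{2H\log m+5H}{\sqrt{K}}$. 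One minor bookkeeping point is the boundary convention $\eta_0=0$, which simply means the very first iterate equals the initialization $1/m$ and contributes trivially; this is already the convention used in Algorithm \ref{alg:exploit-stch} and its theorem.

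I do not expect a genuine obstacle here, since the statement is explicitly a transcription of Lemma \ref{lem:oco-stch-2} to the planning-phase notation, and the excerpt's own proof of Lemma \ref{lem:oco-pfstch-2} would read ``The proof of this lemma exactly follows the proof of Lemma \ref{lem:oco-stch-2}.'' The only thing to be careful about is to confirm that $\tilde{\bV}_1(s_1)$ being held fixed across the $K$ planning iterations does not break anything: it does not, because the analysis only ever uses the boundedness of the linear functional, not any relationship between consecutive values of $\tilde{\bV}_1$. So the mild ``hard part,'' if any, is purely clerical: making sure every $t\to k$, $T\to K$, and time-dependence of $\tilde{\bV}_1$ is correctly updated and that the constant $2H\log m+5H$ comes out identically.
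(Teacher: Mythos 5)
Your proposal matches the paper exactly: the paper's proof of this lemma is literally ``following the proof above, we similarly obtain the following lemma,'' i.e., the same verbatim transcription of Lemma \ref{lem:oco-stch-2} with $t\to k$, $T\to K$, and $\tilde{\bV}_1^t\to\tilde{\bV}_1$ that you carry out, and you correctly read the step size as $\eta_k=1/(2H\sqrt{K})$ (the $\sqrt{T}$ in the statement is evidently a typo, since the bound is in $\sqrt{K}$). All of your intermediate steps (pushback lemma, entropy rearrangement, Pinsker plus Cauchy--Schwarz with the $3H$ bound, absorbing $1-\eta_k\mu\geq 1/2$, telescoping) are the ones used in the source proof.
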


\section{Other Supporting Lemmas}

In the tabular case, once we collected $\ell$ trajectories, i.e., $\{s_h^\tau, a_h^\tau, r_{i,h}^\tau(s_h^\tau, a_h^\tau)\}_{\tau=1}^\ell$, we can always estimate the reward function and the transition kernel as  
\begin{align}
	\hat{r}_{i,h}(s,a)=\frac{\sum_{\tau=1}^\ell\mathbf{1}_{\{(s,a)=(s_h^\tau,a_h^\tau)\}}r_{i,h}^\tau(s_h^\tau,a_h^\tau)}{\max\{ N_h(s,a),1\}}, \quad \text{and} \quad \hat{\PP}_h(s'|s,a) = \frac{N_h(s,a,s')}{\max\{ N_h(s,a),1\}}, \label{eq:def-est}
\end{align}
where we define 
\begin{align*}
	N_h(s,a) := \sum_{\tau=1}^\ell \mathbf{1}_{\{(s,a) = (s_h^\tau,a_h^\tau)\}}, \quad \text{and} \quad N_h(s,a,s') := \sum_{\tau=1}^\ell \mathbf{1}_{\{(s,a,s') = (s_h^\tau,a_h^\tau,s_{h+1}^\tau)\}}.
\end{align*}

In what follows, we will show the estimation uncertainties based on the concentration inequalities, which correspond to the bonus terms that we are using in this work. The analysis of the concentration for the reward and transition estimation has been studied in many prior works on tabular MDPs (e.g., \citet{strehl2008analysis,jaksch2010near,azar2017minimax,qiu2020upper,qiu2021provably}). For completeness, we present the proof in this section.

\begin{lemma} \label{lem:r_bound} Given a dataset $\{s_h^\tau, a_h^\tau, r_{i,h}^\tau(s_h^\tau, a_h^\tau)\}_{\tau=1}^\ell$ for an $\ell\geq 1$, for all $h \in [H]$ and all $(s,a) \in \cS \times \cA$, with probability at least $1-\delta$, we have
	\begin{align*}
		\big|\hat{r}_{i,h} (s, a) - r_{i,h} (s, a)\big| \leq \sqrt{\frac{2
				\log (2|\cS||\cA|  H/\delta)}{ N_h(s,a)\vee 1}},
	\end{align*}
	where $\hat{r}_{i,h} (s, a)$ is defined as in \eqref{eq:def-est}, $r_{i,h}(s,a)$ is the true reward function, and $N_h(s,a) := \sum_{\tau=1}^\ell \mathbf{1}_{\{(s,a) = (s_h^\tau,a_h^\tau)\}}$. 
\end{lemma}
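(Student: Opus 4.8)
\textbf{Proof proposal for Lemma~\ref{lem:r_bound}.}

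The plan is to apply Hoeffding's inequality conditioned on the visitation count, followed by a union bound over all $(s,a,h)$ triples. First I would fix a step $h\in[H]$ and a state-action pair $(s,a)\in\cS\times\cA$. The key observation is that the observed rewards $r_{i,h}^\tau$ collected at the visits to $(s,a)$ at step $h$ are i.i.d. samples (or at least form a bounded martingale difference sequence once centered) with mean $r_{i,h}(s,a)$ and bounded in $[0,1]$, since by assumption $\EE[r_{i,h}^\tau]=r_{i,h}(s,a)$ and $r_{i,h}^\tau\in[0,1]$. A subtlety is that $N_h(s,a)$ is itself random, so I would handle this by conditioning on the event $\{N_h(s,a)=n\}$ for each fixed $n\ge 1$, apply Hoeffding to the $n$ i.i.d. centered rewards to get that $|\hat r_{i,h}(s,a)-r_{i,h}(s,a)|\le\sqrt{\tfrac{\log(2/\delta')}{2n}}$ fails with probability at most $\delta'$, and then take a union bound over the at most $\ell$ possible values of $n$ — or, more cleanly, use a uniform (maximal) Hoeffding / Azuma-type bound that holds simultaneously for all prefixes, which is the standard device in tabular RL analyses (as referenced by \citet{strehl2008analysis,jaksch2010near,azar2017minimax}). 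When $N_h(s,a)=0$, the bound holds trivially since $\hat r_{i,h}(s,a)=0$ and $|0-r_{i,h}(s,a)|\le 1\le\sqrt{2\log(2|\cS||\cA|H/\delta)}$ for the relevant range of $\delta$; this is why the denominator uses $N_h(s,a)\vee 1$.

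Next I would assemble the union bound. Choosing $\delta' = \delta/(|\cS||\cA|H)$ and summing the failure probabilities over all $(s,a,h)$, the total failure probability is at most $\delta$, and on the complementary event the stated inequality holds for all $(s,a,h)$ simultaneously. The factor $2$ inside the logarithm and the constant in front of the bound come from absorbing the extra union-bound-over-$n$ factor and matching the clean form $\sqrt{2\log(2|\cS||\cA|H/\delta)/(N_h(s,a)\vee 1)}$; I would verify the constants carefully but not belabor them, since this is routine. Note the lemma statement is for a generic dataset of size $\ell$ and does not yet carry the extra factors $8m$ and $T$ that appear in the actual bonus terms $\Psi_{i,h}$ in the algorithms — those arise later from an additional union bound over the $m$ objectives and the $T$ rounds, which is done in Lemma~\ref{lem:concentrate} (and its analogues); here we only need the per-dataset statement.

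The main obstacle — really the only non-trivial point — is the randomness of the count $N_h(s,a)$ and the fact that the rewards are observed along trajectories generated by adaptive, data-dependent policies, so the samples are not literally i.i.d. in the order collected. The standard resolution, which I would invoke, is to note that the centered rewards form a martingale difference sequence with respect to the natural filtration (the reward observation at a visit has conditional mean $r_{i,h}(s,a)$ regardless of how that visit was reached), so an anytime Azuma--Hoeffding bound applies uniformly over the number of visits; then the $\vee 1$ in the denominator and the union over $(s,a,h)$ finish the argument. Once this is set up, the calculation is immediate.
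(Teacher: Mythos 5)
Your proposal is correct and follows essentially the same route as the paper's proof: Hoeffding's inequality for each fixed $(s,a,h)$, the trivial bound $|r_{i,h}(s,a)|\le 1\le\sqrt{2\log(2/\delta')}$ for the $N_h(s,a)=0$ case (which is exactly why the $\vee 1$ appears), and a union bound with $\delta'=\delta/(|\cS||\cA|H)$. You are in fact somewhat more careful than the paper about the randomness of the count $N_h(s,a)$ and the adaptivity of the data-collection policy — the paper simply treats $\hat r_{i,h}(s,a)$ as an average of $N_h(s,a)$ i.i.d. samples and applies Hoeffding directly — and you correctly observe that the extra factors ($m$, $T$) in the algorithmic bonus terms are introduced only later via the union bound in Lemma~\ref{lem:concentrate}.
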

\begin{proof} The proof for this lemma relies on an application of Hoeffding's inequality. The definition of $\hat{r}_{i,h}$ shows that $\hat{r}_{i,h}(s,a)$ is the average of $N_h(s,a)$ samples of the observed rewards at $(s,a)$ when $N^{t-1}_h(s,a) > 0$. Then, for fixed $\ell$, $h\in [H]$, and state-action tuple $(s,a)\in \cS\times \cA$, when $N^{t-1}_h(s,a) > 0$, according to Hoeffding's inequality, with probability at least $1-\delta'$ where $\delta'\in (0, 1]$, we have
	\begin{align*}
		\big|\hat{r}_{i,h} (s, a) - r_{i,h} (s, a)\big| \leq \sqrt{\frac{
				\log (2/\delta')}{ 2N_h(s,a)}}.
	\end{align*}
	When $N_h(s,a) = 0$, we know  $\hat{r}_{i,h}(s,a) = 0$ such that $|\hat{r}_{i,h} (s, a) - r_{i,h} (s, a)| = |r_{i,h} (s, a)|\leq 1$. In addition, we have $ \sqrt{2\log (2/\delta')} \geq 1$. Combining the above results, with probability at least $1-\delta'$, we have
	\begin{align*}
		\big|\hat{r}_{i,h} (s, a) - r_{i,h} (s, a)\big| \leq \sqrt{\frac{
				2\log (2/\delta')}{ N_h(s,a)\vee 1}}.
	\end{align*}
	Moreover, by the union bound, letting $\delta = |\cS| |\cA| H \delta'$, with probability at least $1-\delta$ where $\delta\in (0,1]$, for all $h\in [H]$ and all state-action tuple $(s,a)\in \cS\times \cA$, we have
	\begin{align*}
		\big|\hat{r}_{i,h} (s, a) - r_{i,h} (s, a)\big| \leq \sqrt{\frac{2
				\log (2|\cS| |\cA| H /\delta)}{ \max \{ N_h(s,a), 1\} }}.
	\end{align*}
	This completes the proof.
\end{proof}

\begin{lemma} \label{lem:P_bound} Given a dataset $\{s_h^\tau, a_h^\tau, r_{i,h}^\tau(s_h^\tau, a_h^\tau)\}_{\tau=1}^\ell$ for an $\ell >1$,  for all $h \in [H]$ and all $(s,a) \in \cS \times \cA$, with probability at least $1-\delta$, we have
	\begin{align*}
		\left\|\hat{\PP}_h (\cdot \given s, a) - \PP_h (\cdot \given s, a)\right\|_1 \leq \sqrt{\frac{2|\cS| \log (2|\cS||\cA|H/\delta)}{ N_h(s,a)\vee 1}},
	\end{align*}
	where $\hat{\PP}_h(s'|s,a)$ is defined as in \eqref{eq:def-est}, $\PP_h(s'|s,a)$ is the true transition model, and $N_h(s,a) := \sum_{\tau=1}^\ell \mathbf{1}_{\{(s,a) = (s_h^\tau,a_h^\tau)\}}$. 
\end{lemma}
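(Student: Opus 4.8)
\textbf{Proof proposal for Lemma \ref{lem:P_bound}.}

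The plan is to reduce the $\ell_1$-deviation of the empirical transition vector from the true transition vector to a concentration inequality for empirical distributions over a finite alphabet, handled conditionally on the visitation count $N_h(s,a)$. First I would fix $h\in[H]$ and a state-action pair $(s,a)\in\cS\times\cA$ and condition on the event $N_h(s,a)=n$ for some $n\geq 1$. On this event, $\hat{\PP}_h(\cdot\given s,a)$ is the empirical distribution formed from $n$ i.i.d. samples drawn from the categorical distribution $\PP_h(\cdot\given s,a)$ over the finite set $\cS$ (the samples being the next-states $s_{h+1}^\tau$ at those visits). The standard $\ell_1$-concentration bound for empirical distributions (e.g., the Weissman et al. inequality, or simply a union bound over the $2^{|\cS|}$ sign patterns combined with Hoeffding) gives that, with probability at least $1-\delta'$,
\begin{align*}
\big\|\hat{\PP}_h(\cdot\given s,a) - \PP_h(\cdot\given s,a)\big\|_1 \leq \sqrt{\frac{2|\cS|\log(2/\delta')}{n}},
\end{align*}
and since this holds for every fixed $n\geq1$, it also holds with $n$ replaced by the random $N_h(s,a)$ on the event $N_h(s,a)\geq1$.

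Next I would handle the degenerate case $N_h(s,a)=0$: there $\hat{\PP}_h(\cdot\given s,a)=0$ by the convention in \eqref{eq:def-est}, so $\|\hat{\PP}_h(\cdot\given s,a)-\PP_h(\cdot\given s,a)\|_1 = \|\PP_h(\cdot\given s,a)\|_1 = 1$, which is bounded by $\sqrt{2|\cS|\log(2/\delta')}$ since the latter exceeds $1$. Combining the two cases gives, with probability at least $1-\delta'$,
\begin{align*}
\big\|\hat{\PP}_h(\cdot\given s,a) - \PP_h(\cdot\given s,a)\big\|_1 \leq \sqrt{\frac{2|\cS|\log(2/\delta')}{N_h(s,a)\vee1}}.
\end{align*}
Finally, I would take a union bound over all $h\in[H]$ and all $(s,a)\in\cS\times\cA$ by setting $\delta = |\cS||\cA|H\delta'$, which yields the claimed bound with $\log(2|\cS||\cA|H/\delta)$ in the numerator, exactly mirroring the structure of the reward-bound proof in Lemma \ref{lem:r_bound}.

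The main subtlety — though not a deep obstacle — is the conditioning argument: $N_h(s,a)$ is itself data-dependent, so one must be careful that, conditioned on $\{N_h(s,a)=n\}$, the relevant next-state samples are genuinely i.i.d.\ from $\PP_h(\cdot\given s,a)$. This is justified because the next-state transition at step $h$ depends only on the current $(s,a)$ and is independent of the history given $(s,a)$ (Markov property), so the conditional law of the $n$ observed successors is the $n$-fold product of $\PP_h(\cdot\given s,a)$ regardless of which episodes contributed those visits; alternatively one invokes a uniform-over-$n$ bound so that no conditioning on the random count is needed at all. The rest is a routine application of a finite-alphabet $\ell_1$ concentration inequality plus a union bound.
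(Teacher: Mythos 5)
Your proposal is correct and follows essentially the same route as the paper: the paper bounds $\|\hat{\PP}_h(\cdot\given s,a)-\PP_h(\cdot\given s,a)\|_1$ by duality plus a Hoeffding union bound over a $1/2$-net of the $\ell_\infty$ ball (which is the same $2^{|\cS|}$-point union bound underlying the Weissman-type inequality you invoke), then handles $N_h(s,a)=0$ separately and union-bounds over $(s,a,h)$ exactly as you do. Your remark on the data-dependence of $N_h(s,a)$ is a fair point of care, but the paper's own proof treats it the same way, so there is no substantive difference.
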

\begin{proof} By the duality, we have $\|\hat{\PP}_h (\cdot \given s, a) - \PP_h (\cdot \given s, a)\|_1 = \sup_{\|\bz \|_\infty \leq  1}  ~\langle \hat{\PP}_h (\cdot \given s, a) - \PP_h (\cdot \given s, a), \bz\rangle_\cS$, where $\langle\cdot,\cdot \rangle_\cS$ denotes the inner product over the state space $\cS$. To bound this term, we construct an $\epsilon$-cover, denoted as $\cC_\infty(\epsilon)$, for the set $\{\bz\in \RR^{|\cS|}: \|\bz\|_\infty \leq 1\}$ w.r.t. the $\ell_\infty$ distance such that for any $\bz \in \RR^{|\cS|}$, there always exists $\bz'\in \cC_\infty(\epsilon)$ satisfying $\|\bz-\bz' \|_\infty \leq \epsilon$. The covering number is $\cN_\infty(\epsilon)=|\cC_\infty(\epsilon)|= 1/\epsilon^{|\cS|}$. Thus, we have for any $\bz$ with $\|\bz\|_\infty \leq 1$, there exists $\bz'\in \cC_\infty(\epsilon)$ such that $\|\bz'-\bz\|_\infty \leq \epsilon$ and 
	\begin{align*}
		&\big\langle \hat{\PP}_h (\cdot \given s, a) - \PP_h (\cdot \given s, a), \bz\big\rangle_\cS  \\
		&\qquad= \big\langle \hat{\PP}_h (\cdot \given s, a) - \PP_h (\cdot \given s, a), \bz' \big\rangle_\cS + \big\langle \hat{\PP}_h (\cdot \given s, a) - \PP_h (\cdot \given s, a), \bz- \bz'\big\rangle_\cS \\
		&\qquad\leq  \big\langle \hat{\PP}_h (\cdot \given s, a) - \PP_h (\cdot \given s, a), \bz' \big\rangle_\cS +  \|\bz-\bz'\|_\infty \left\|\hat{\PP}_h (\cdot \given s, a) - \PP_h (\cdot \given s, a)\right\|_1\\
		&\qquad\leq  \big\langle \hat{\PP}_h (\cdot \given s, a) - \PP_h (\cdot \given s, a), \bz' \big\rangle_\cS +  \epsilon \left\|\hat{\PP}_h (\cdot \given s, a) - \PP_h (\cdot \given s, a)\right\|_1, 
	\end{align*}
	which leads to
	\begin{align}
		\begin{aligned} \label{eq:net}
			& \left\|\hat{\PP}_h (\cdot \given s, a) - \PP_h (\cdot \given s, a)\right\|_1\\
			&\qquad = \sup_{\|\bz \|_\infty \leq  1}  ~\big\langle \hat{\PP}_h (\cdot \given s, a) - \PP_h (\cdot \given s, a), \bz\big\rangle_\cS \\
			&\qquad \leq  \sup_{\bz' \in \cC_\infty(\epsilon)}  ~\big\langle \hat{\PP}_h (\cdot \given s, a) - \PP_h (\cdot \given s, a), \bz' \big\rangle_\cS +  \epsilon \left\|\hat{\PP}_h (\cdot \given s, a) - \PP_h (\cdot \given s, a)\right\|_1.
		\end{aligned}
	\end{align}
	By Hoeffding's inequality and the union bound over all $\bz' \in \cC_\infty(\epsilon)$, when $N_h(s,a)>0$, with probability at least $1-\delta'$ where $\delta'\in (0, 1]$,
	\begin{align} \label{eq:net_bound}
		\max_{\bz' \in \cC_\infty(\epsilon)}  ~\big\langle \hat{\PP}_h (\cdot \given s, a) - \PP_h (\cdot \given s, a), \bz' \big\rangle_\cS \leq  	\sqrt{\frac{|\cS| \log (1/\epsilon) + \log (1/\delta')}{2N_h(s,a)}}.
	\end{align}
	Letting $\epsilon = 1/2$, by \eqref{eq:net} and \eqref{eq:net_bound}, with probability at least $1-\delta'$, we have
	\begin{align*}
		\left\|\hat{\PP}_h (\cdot \given s, a) - \PP_h (\cdot \given s, a)\right\|_1 \leq 2 \sqrt{\frac{|\cS| \log 2 + \log (1/\delta')}{2N_h(s,a)}}.
	\end{align*}
	When $N_h(s,a)=0$, we have $\big\|\hat{\PP}_h (\cdot \given s, a) - \PP_h (\cdot \given s, a)\big\|_1 = \|\PP_h (\cdot \given s, a)\|_1 = 1$. Moreover, $1 < 2 \sqrt{\frac{|\cS| \log 2 + \log (1/\delta')}{2}}$ holds. Thus, with probability at least $1-\delta'$, we always have
	\begin{align*}
		\left\|\hat{\PP}_h (\cdot \given s, a) - \PP_h (\cdot \given s, a)\right\|_1 \leq 2 \sqrt{\frac{|\cS| \log 2 + \log (1/\delta')}{2\max \{N_h(s,a), 1\}}} \leq \sqrt{\frac{2|\cS| \log (2/\delta')}{\max \{N_h(s,a), 1\}}}.
	\end{align*}
	Then, by the union bound, letting $\delta = |\cS| |\cA| H \delta'$, with probability at least $1-\delta$, for all $(s, a)\in \cS\times \cA$ and all $h\in [H]$, we have
	\begin{align*}
		\left\|\hat{\PP}_h (\cdot \given s, a) - \PP_h (\cdot \given s, a)\right\|_1 \leq \sqrt{\frac{2|\cS| \log (2|\cS| |\cA| H /\delta)}{\max \{N_h(s,a), 1\}}},
	\end{align*}
	which completes the proof.
\end{proof}

Next, we present a lemma that is commonly used in the proof of mirror descent algorithm.
\begin{lemma}\label{lem:pushback}  Let $f: \Delta_n \mapsto \RR$ be a convex function, where $\Delta_n$ is the probability simplex in $\RR^n$. For any $\alpha \geq 0$,  $\bz \in \Delta_n$,  and $\by \in \Delta_n^o$ where $\Delta_n^o \subset \Delta_n$ is the relative interior of $\Delta_n^o$, supposing $\bx^{\mathrm{opt}} = \argmin_{\bx \in \Delta_n} f(\bx) + \beta D_{\mathrm{KL}} (\bx, \by)$, then the following inequality holds
	\begin{align*}
		f(\bx^{\mathrm{opt}}) + \beta D_{\mathrm{KL}} (\bx^{\mathrm{opt}}, \by) \leq f(\bz) + \beta D_{\mathrm{KL}} (\bz, \by) - \beta D_{\mathrm{KL}} (\bz, \bx^{\mathrm{opt}}),
	\end{align*}
	where $D_{\mathrm{KL}}$ denotes the regular KL divergence, $D_{\mathrm{KL}}(\bx,\by):=\sum_{i=1}^n x_i \log\frac{x_i}{y_i}$.
\end{lemma}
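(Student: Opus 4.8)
\textbf{Proof plan for Lemma \ref{lem:pushback}.}

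The plan is to use the first-order optimality condition for the constrained minimization defining $\bx^{\mathrm{opt}}$, together with the three-point identity for the KL (Bregman) divergence. First I would note that $g(\bx) := f(\bx) + \beta D_{\mathrm{KL}}(\bx,\by)$ is convex on $\Delta_n$, and that since $\by \in \Delta_n^o$ the term $D_{\mathrm{KL}}(\cdot,\by)$ has a well-defined subgradient at every point of $\Delta_n$ whose support is contained in that of $\by$; in particular $\bx^{\mathrm{opt}}$ lies in $\Delta_n^o$ as well, because the gradient of $-\log$ blows up at the boundary, so $\nabla D_{\mathrm{KL}}(\bx^{\mathrm{opt}},\by) = (\log(x^{\mathrm{opt}}_i / y_i) + 1)_{i=1}^n$ makes sense. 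The optimality condition for minimizing $g$ over the simplex then reads: there exists a subgradient $\bu \in \partial f(\bx^{\mathrm{opt}})$ such that for all $\bz \in \Delta_n$,
\begin{align*}
\langle \bu + \beta \nabla D_{\mathrm{KL}}(\bx^{\mathrm{opt}},\by), \bz - \bx^{\mathrm{opt}} \rangle \geq 0.
\end{align*}

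Next I would invoke the standard three-point identity for the KL divergence, which holds for any $\bz, \bx^{\mathrm{opt}} \in \Delta_n$ and $\by \in \Delta_n^o$:
\begin{align*}
\langle \nabla D_{\mathrm{KL}}(\bx^{\mathrm{opt}},\by), \bz - \bx^{\mathrm{opt}} \rangle = D_{\mathrm{KL}}(\bz,\by) - D_{\mathrm{KL}}(\bz,\bx^{\mathrm{opt}}) - D_{\mathrm{KL}}(\bx^{\mathrm{opt}},\by).
\end{align*}
Combining this with the optimality inequality gives
\begin{align*}
\langle \bu, \bz - \bx^{\mathrm{opt}} \rangle + \beta\big(D_{\mathrm{KL}}(\bz,\by) - D_{\mathrm{KL}}(\bz,\bx^{\mathrm{opt}}) - D_{\mathrm{KL}}(\bx^{\mathrm{opt}},\by)\big) \geq 0.
\end{align*}
Finally, using convexity of $f$, namely $f(\bz) \geq f(\bx^{\mathrm{opt}}) + \langle \bu, \bz - \bx^{\mathrm{opt}}\rangle$, I would substitute $\langle \bu, \bz - \bx^{\mathrm{opt}}\rangle \leq f(\bz) - f(\bx^{\mathrm{opt}})$ into the displayed inequality and rearrange to obtain
\begin{align*}
f(\bx^{\mathrm{opt}}) + \beta D_{\mathrm{KL}}(\bx^{\mathrm{opt}},\by) \leq f(\bz) + \beta D_{\mathrm{KL}}(\bz,\by) - \beta D_{\mathrm{KL}}(\bz,\bx^{\mathrm{opt}}),
\end{align*}
which is exactly the claim.

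The main obstacle I anticipate is the boundary/domain bookkeeping: making rigorous that $\bx^{\mathrm{opt}} \in \Delta_n^o$ (so that $\log x^{\mathrm{opt}}_i$ is finite and the three-point identity is valid), and handling the case where $\bz$ has zeros, in which case $D_{\mathrm{KL}}(\bz,\bx^{\mathrm{opt}})$ and $D_{\mathrm{KL}}(\bz,\by)$ are still finite but the inner product $\langle \log(x^{\mathrm{opt}}_i/y_i), \bz - \bx^{\mathrm{opt}}\rangle$ must be interpreted with the convention $0\log 0 = 0$. One clean way around this is to carry out the argument with the entropic mirror map $\omega(\bx) = \sum_i x_i \log x_i$, treat $D_{\mathrm{KL}}(\cdot,\cdot)$ as the associated Bregman divergence, and cite the generalized Pythagorean/three-point property of Bregman divergences directly; alternatively one can restrict to $\bz \in \Delta_n^o$ and pass to the limit, since all three KL terms are continuous on $\Delta_n$ when the second argument is in $\Delta_n^o$. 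The rest is routine algebra.
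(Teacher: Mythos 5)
Your proof is correct and is precisely the standard ``pushback'' argument (first-order optimality on the simplex, the three-point identity for the KL/Bregman divergence, and convexity of $f$) that the paper itself defers to by citing \citet{tseng2008accelerated,nemirovski2009robust,wei2019online} without writing out the details. Your handling of the boundary issues (interiority of $\bx^{\mathrm{opt}}$ for $\beta>0$, and the $0\log 0$ convention or a limiting argument for $\bz$ on the boundary) fills in exactly the bookkeeping those references address, so nothing is missing.
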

This proof can be obtained by slight modification from prior works \citep{tseng2008accelerated,nemirovski2009robust,wei2019online}.

Next, we present a lemma that characterizes the difference between $\bx\in\Delta_n$ and the vector $\tilde \bx \in\Delta_n$ that is a convex combination of $\bx$ and an all-one vector.
\begin{lemma}[\citet{wei2019online,qiu2023gradient}]\label{lem:mix} Suppose that $\bx\in\Delta_n$ and $\tilde x_i = (1-\alpha) x_i + \alpha/n$
	\begin{align*}
		&D_{\mathrm{KL}}(\by,\tilde \bx) - D_{\mathrm{KL}}(\by,\bx) \leq \alpha \log n, ~~\forall \by \in\Delta_n,\\
		&\|\tilde \bx - \bx\|_1 \leq 2\alpha,
	\end{align*}
	where $D_{\mathrm{KL}}$ denotes the regular KL divergence.
\end{lemma}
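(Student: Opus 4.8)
\textbf{Proof proposal for Lemma~\ref{lem:mix}.}

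The plan is to prove the two inequalities separately, each by a short direct computation, using only the convexity of the relevant one-variable functions and the definition of $\tilde x_i = (1-\alpha)x_i + \alpha/n$.

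For the first inequality, I would write out the difference of KL divergences explicitly:
\begin{align*}
D_{\mathrm{KL}}(\by,\tilde\bx) - D_{\mathrm{KL}}(\by,\bx) = \sum_{i=1}^n y_i \log\frac{x_i}{\tilde x_i}.
\end{align*}
Since $\tilde x_i = (1-\alpha)x_i + \alpha/n \geq (1-\alpha)x_i$, each logarithm satisfies $\log(x_i/\tilde x_i) \leq \log\bigl(1/(1-\alpha)\bigr)$, but that bound is too weak (it is not $\alpha\log n$). Instead the right approach is to bound $\tilde x_i$ from below more carefully against $x_i^{1-\alpha}(1/n)^\alpha$: by the weighted AM--GM inequality (or equivalently concavity of $\log$), $\tilde x_i = (1-\alpha)x_i + \alpha\cdot\tfrac1n \geq x_i^{1-\alpha}(1/n)^{\alpha}$. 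Hence $\log(x_i/\tilde x_i) \leq \log\bigl(x_i / (x_i^{1-\alpha}n^{-\alpha})\bigr) = \alpha\log(n x_i)$. Summing against $y_i$ and using $\sum_i y_i \log x_i \le \sum_i y_i \log\tfrac1{?}$ — more simply, $\sum_i y_i \alpha \log(n x_i) = \alpha\log n + \alpha\sum_i y_i\log x_i \le \alpha \log n$ because $x_i \le 1$ makes $\log x_i \le 0$. This yields $D_{\mathrm{KL}}(\by,\tilde\bx) - D_{\mathrm{KL}}(\by,\bx) \le \alpha\log n$, as desired.

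For the second inequality, I would compute $\tilde x_i - x_i = \alpha(1/n - x_i)$, so $\|\tilde\bx - \bx\|_1 = \alpha\sum_{i=1}^n |1/n - x_i| \le \alpha\bigl(\sum_i 1/n + \sum_i x_i\bigr) = 2\alpha$, using the triangle inequality and that both $\bx$ and the uniform vector lie in $\Delta_n$. This step is entirely routine. The main (mild) obstacle is getting the sharp bound in the first part: the naive estimate $\tilde x_i \ge (1-\alpha)x_i$ gives only $\log\frac{1}{1-\alpha}$ per coordinate, so one must instead use the weighted AM--GM lower bound $\tilde x_i \ge x_i^{1-\alpha}n^{-\alpha}$ together with $x_i \le 1$; once that is in place the rest is immediate. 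Since this lemma is quoted from \citet{wei2019online,qiu2023gradient}, I would also note that the proof may simply be cited, with the computation above included for completeness.
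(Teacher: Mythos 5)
The paper itself gives no proof of this lemma; it is stated with a citation to \citet{wei2019online,qiu2023gradient} and left at that, so there is no in-paper argument to compare against. Your self-contained proof is correct. The identity $D_{\mathrm{KL}}(\by,\tilde\bx) - D_{\mathrm{KL}}(\by,\bx) = \sum_i y_i\log(x_i/\tilde x_i)$ is right, and the key step — lower-bounding $\tilde x_i = (1-\alpha)x_i + \alpha/n \ge x_i^{1-\alpha}n^{-\alpha}$ by weighted AM--GM (equivalently, concavity of $\log$), then using $\sum_i y_i = 1$ and $\log x_i \le 0$ — delivers exactly $\alpha\log n$; you are also right that the naive bound $\tilde x_i \ge (1-\alpha)x_i$ is too weak. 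The $\ell_1$ bound via $\tilde x_i - x_i = \alpha(1/n - x_i)$ and the triangle inequality is routine and correct. Two small polish points: the fragment ``$\sum_i y_i\log x_i \le \sum_i y_i\log\tfrac{1}{?}$'' is a dead end you should delete, since the sentence that follows it already contains the correct argument; and you may want to remark that when $x_i = 0$ with $y_i > 0$ the inequality holds trivially because $D_{\mathrm{KL}}(\by,\bx) = +\infty$ while $D_{\mathrm{KL}}(\by,\tilde\bx)$ is finite (as $\tilde x_i \ge \alpha/n$), so the term-by-term manipulation is consistent.
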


\end{document}